\theoremstyle{plain}
\newtheorem{assumption}{\protect\assumptionname}
\theoremstyle{plain}
\newtheorem{thm}{\protect\theoremname}
\theoremstyle{definition}
\newtheorem{defn}[thm]{\protect\definitionname}
\theoremstyle{remark}
\newtheorem{rem}[thm]{\protect\remarkname}
\theoremstyle{plain}
\newtheorem{prop}[thm]{\protect\propositionname}
\theoremstyle{plain}
\newtheorem{lem}[thm]{\protect\lemmaname}
\theoremstyle{plain}
\newtheorem{cor}[thm]{\protect\corollaryname}
\providecommand{\assumptionname}{Assumption}
\providecommand{\corollaryname}{Corollary}
\providecommand{\definitionname}{Definition}
\providecommand{\lemmaname}{Lemma}
\providecommand{\propositionname}{Proposition}
\providecommand{\remarkname}{Remark}
\providecommand{\theoremname}{Theorem}
\begin{document}
\title{Limiting fluctuation and trajectorial stability of multilayer neural
networks with mean field training}
\author{Huy Tuan Pham \\ Department of Mathematics, Stanford University \And 
Phan-Minh Nguyen \thanks{The author ordering is randomized.} \\ The
Voleon Group}
\maketitle
\begin{abstract}
The mean field theory of multilayer neural networks centers around
a particular infinite-width scaling, in which the learning dynamics
is shown to be closely tracked by the mean field limit. A random fluctuation
around this infinite-width limit is expected from a large-width expansion
to the next order. This fluctuation has been studied only in the case
of shallow networks, where previous works employ heavily technical
notions or additional formulation ideas amenable only to that case.
Treatment of the multilayer case has been missing, with the chief
difficulty in finding a formulation that must capture the stochastic
dependency across not only time but also depth.

In this work, we initiate the study of the fluctuation in the case
of multilayer networks, at any network depth. Leveraging on the neuronal
embedding framework recently introduced by Nguyen and Pham \cite{nguyen2020rigorous},
we systematically derive a system of dynamical equations, called the
\textsl{second-order mean field limit}, that captures the limiting
fluctuation distribution. We demonstrate through the framework the
complex interaction among neurons in this second-order mean field
limit, the stochasticity with cross-layer dependency and the nonlinear
time evolution inherent in the limiting fluctuation. A limit theorem
is proven to relate quantitatively this limit to the fluctuation realized
by large-width networks.

We apply the result to show a stability property of gradient descent
mean field training: in the large-width regime, along the training
trajectory, it progressively biases towards a solution with ``minimal
fluctuation'' (in fact, vanishing fluctuation) in the learned output
function, even after the network has been initialized at or has converged
(sufficiently fast) to a global optimum. This extends a similar phenomenon
previously shown only for shallow networks with a squared loss in
the empirical risk minimization setting, to multilayer networks with
a loss function that is not necessarily convex in a more general setting.
\end{abstract}

\section{Introduction\label{sec:Introduction}}

Recent literature has witnessed much interest and progresses in the
mean field theory of neural networks. In particular, it is shown that
under a suitable scaling, as the widths tend to infinity, the neural
network's learning dynamics converges to a nonlinear deterministic
limit, known as the mean field (MF) limit \cite{mei2018mean,nguyen2020rigorous}.
This line of works starts with analyses of the shallow case under
various settings and has led to a number of nontrivial exciting results
\cite{nitanda2017stochastic,mei2018mean,chizat2018,rotskoff2018neural,sirignano2018mean,javanmard2019analysis,rotskoff2019global,nitanda2020particle,wei2018margin,alex2019landscape,wojtowytsch2020convergence,lu2020mean,agazzi2020global,nguyen2021analysis}.
The generalization to multilayer neural networks, already much more
conceptually and technically challenging, has also been met with serious
efforts from different groups of authors, with various novel ideas
and insights \cite{nguyen2019mean,nguyen2020rigorous,pham2020note,araujo2019mean,sirignano2019mean,fang2020modeling}.

Since the MF limit is basically a first-order infinite-width approximation
of the neural network, it is natural to consider the next order term
in the expansion for a more faithful finite-width description. This
leads to the consideration of the fluctuation, up-scaled appropriately
with the widths, around the MF limit. On one hand, this fluctuation
should display MF interactions among neurons. On the other hand, it
is random due to the inherent stochasticity in the finite-width network
which, for instance, is randomly initialized and hence induces randomness
in the interactions among neurons.

For shallow networks, \cite{sirignano2020mean,rotskoff2018neural,chen2020dynamical}
has identified the limiting fluctuation in the form of a time-evolving
signed measure over weights, leading to a central limit theorem (CLT)
in the space of measures. In particular, \cite{sirignano2020mean}
pinpoints the fluctuation via a compactness argument in an appropriate
measure space. Avoiding the heavy technicality, \cite{chen2020dynamical}
realizes a neat trick, specific to shallow networks, where quantities
of interest are described via push-forward of the initialized values
of the weights.

There has been no similar attempt for the multilayer case, which faces
major obstacles in formulating the limiting fluctuation. Firstly,
existing formulations of the MF limit already move away from working
with measures over weights \cite{nguyen2019mean,nguyen2020rigorous,sirignano2019mean,fang2020modeling},
unless restricted conditions are assumed \cite{araujo2019mean}. This
is due to the complexity of MF interactions: the presence of middle
layers brings about simultaneous actions of multiple symmetry groups.
We face the same complexity when formulating the fluctuation. Secondly,
unlike shallow networks, the fluctuation in multilayer networks displays
stochastic dependency not only through time but also across layers.
Specifically the cross-layer dependency is propagated by both forward
and backward propagation signals, at any time instance.

\vspace{-3.5pt}

\paragraph*{Contributions.}

We tackle the challenge by leveraging on the neuronal embedding framework,
recently proposed by \cite{nguyen2020rigorous}. An important concept
that is supported by the neuronal embedding is the sampling of neurons.
We use this concept to formulate the limiting fluctuation via a decomposition
into two components with different roles in Section \ref{sec:2nd_order_MF}.
One component is a random process, which encodes CLT-like stochasticity
in the fluctuation of the sampled neurons around the MF ensemble.
The other component, named the \textsl{second-order mean field limit},
is a system of ordinary differential equations (ODEs), which displays
MF interactions in the deviation of the neural network under gradient
descent (GD) training from the sampled neurons.

This decomposition is an innovation of the work. Our formulation enjoys
the generality brought about by the neuronal embedding framework without
restrictive assumptions (e.g. i.i.d. initialization assumption made
in \cite{araujo2019mean,sirignano2019mean}) and faithfully describes
the expected stochastic dependency across time and depth. We prove
a limit theorem that establishes the connection with the fluctuation
in finite-sized networks \textbf{(Theorems \ref{thm:2nd_order_MF}
and \ref{thm:CLT})}. Unlike \cite{sirignano2020mean,chen2020dynamical},
this result is quantitative.

Using this formulation, in Section \ref{sec:Long-time-asymptotic-variance},
we show a trajectorial stability property in a large-width multilayer
network, particularly a variance reduction effect: GD training traverses
a solution trajectory that reduces and eventually eliminates the (width-corrected)
variance of the learned output function. That is, a bias towards ``minimal
fluctuation''. This holds even if the network is initialized at,
or if it is trained to converge sufficiently fast to, a global optimum
\textbf{(Theorems \ref{thm:variance-global-opt-init} and \ref{thm:variance-global-opt-fast})}.
The same phenomenon has been shown in the shallow case \cite{chen2020dynamical},
which requires a squared loss in the empirical risk minimization setup.
As demonstrated by our result, it is not an isolated phenomenon and
can indeed hold for a neural architecture where MF interactions are
more complex, a loss function that is not necessarily convex and a
training setup unrestricted to finitely many training data points.

\vspace{-3.5pt}

\paragraph*{Limitations and potential extensions.}

Let us finally mention a few limitations. Our work considers fully-connected
multilayer networks trained with GD, a setup much less general than
\cite{nguyen2020rigorous}. We expect certain extensions in this direction
are doable. We also do not treat stochastic GD and hence disregard
the stochasticity of data sampling, unlike \cite{sirignano2020mean}.
Given the broader literature on this subject (e.g. \cite{li2019stochastic}),
this extension is foreseeable. Here we focus instead on the stochasticity
that is inherent in the interactions among neurons, which is the more
interesting aspect of neural networks.

Our result on the output variance is specific to unregularized training,
unlike \cite{chen2020dynamical}, and also requires a sufficient global
convergence rate. While there have been several proven global convergence
guarantees for multilayer networks \cite{nguyen2020rigorous,pham2021global,pham2020note},
understanding of the convergence rate is still lacking. Even in the
shallow case, global convergence has bee\textcolor{black}{n studied
only for a type of sparsity-inducing regularization \cite{chizat2018,chizat2019sparse}.
Unless the convergence rate for multilayer networks is generally perilous
(an unlikely scenario in light of the experiments in \cite{nguyen2019mean}),
our result is expected to be relevant.}

\textcolor{black}{Our development is specific to the MF scaling. This
scaling allows for nonlinear feature learning, unlike the NTK scaling
\cite{jacot2018neural}. While there are other scalings that also
admit a certain sense of feature learning \cite{golikov2020dynamically,yang2021feature},
the standard parameterization in practice -- in the infinite-width
limit -- is known to degenerate into NTK-like behaviors, which are
not expected of practical finite-but-large-width neural networks \cite{mei2019mean,yang2021feature}.
In other words, }\textsl{\textcolor{black}{all}}\textcolor{black}{{}
infinite-width scalings that display feature learning are only proxies
of practical networks. This limitation motivates finite-width studies
as we pursue here.}

\paragraph*{\textcolor{black}{Notations.}}

\textcolor{black}{We use $K$ to denote a generic constant that may
change from line to line, and similarly $K_{u}$ for a finite constant
that may depend on some constant $u$. For simplicity, for $L$ the
network depth, we write $K$ in place of $K_{L}$. For $\mathbb{E}$
(resp. $\mathbf{E}$) being the expectation, we use $\mathbb{V}$
and $\mathbb{C}$ (resp. $\mathbf{V}$ and $\mathbf{C}$) for the
variance and covariance. We write $\mathbb{E}_{Z}$ for the expectation
w.r.t. data $Z=\left(X,Y\right)\sim{\cal P}$. We write $\partial_{i}f$
to denote the partial derivative w.r.t. the $i$-th variable of $f$.}

\section{Background on the MF limit for multilayer networks and assumptions\label{sec:Background}}

We describe the necessary backgrounds based on (and simplifying) the
work \cite{nguyen2020rigorous}. This section also introduces several
important notations, as well as the assumptions for our study of the
fluctuation.

\paragraph*{Multilayer network.}

Following \cite{nguyen2020rigorous}, we consider a $L$-layer fully-connected
neural network with widths $\left\{ N_{i}\right\} _{i\leq L}$ (for
$N_{L}=1$):
\begin{align}
\hat{{\bf y}}\left(t,x\right) & =\varphi_{L}\left(\mathbf{H}_{L}\left(t,1,x\right)\right),\label{eq:nn_FC_multilayer}\\
\mathbf{H}_{i}\left(t,j_{i},x\right) & =\mathbb{E}_{J}\left[{\bf w}_{i}\left(t,J_{i-1},j_{i}\right)\varphi_{i-1}\left({\bf H}_{i-1}\left(t,J_{i-1},x\right)\right)\right],\quad j_{i}\in\left[N_{i}\right],\;i=L,...,2,\nonumber \\
{\bf H}_{1}\left(t,j_{1},x\right) & =\left\langle {\bf w}_{1}\left(t,1,j_{1}\right),x\right\rangle ,\quad j_{1}\in\left[N_{1}\right],\nonumber 
\end{align}
in which $x\in\mathbb{X}=\mathbb{R}^{d}$ is the input, the weights
are ${\bf w}_{1}\left(t,1,j_{1}\right)\in\mathbb{W}_{1}=\mathbb{R}^{d}$
and ${\bf w}_{i}\left(t,j_{i-1},j_{i}\right)\in\mathbb{W}_{i}=\mathbb{R}$,
$\varphi_{i}:\;\mathbb{R}\to\mathbb{R}$ is the activation and $t\in\mathbb{T}=\mathbb{R}_{\geq0}$
the continuous time. Here we reserve the notation $J_{i}$ for the
random variable $J_{i}\sim{\rm Unif}\left(\left[N_{i}\right]\right)$
and we write $\mathbb{E}_{J}$ to denote the expectation w.r.t. these
random variables. For convenience, we take $j_{0}\in\left\{ 1\right\} $
and $N_{0}=1$.

Given an initialization $\mathbf{w}_{i}\left(0,\cdot,\cdot\right)$,
we train the network with gradient descent (GD) w.r.t. the loss ${\cal L}:\;\mathbb{R}\times\mathbb{R}\to\mathbb{R}_{\geq0}$
and the data $Z=\left(X,Y\right)\in\mathbb{X}\times\mathbb{R}$ drawn
from a training distribution ${\cal P}$:
\[
\partial_{t}{\bf w}_{i}\left(t,j_{i-1},j_{i}\right)=-\mathbb{E}_{Z}\left[\partial_{2}{\cal L}\left(Y,\hat{\mathbf{y}}\left(t,X\right)\right)\frac{\partial\hat{{\bf y}}\left(t,X\right)}{\partial{\bf w}_{i}\left(j_{i-1},j_{i}\right)}\right],\quad i\in\left[L\right],
\]
in which we define
\begin{align*}
\frac{\partial\hat{{\bf y}}\left(t,x\right)}{\partial{\bf H}_{i}\left(j_{i}\right)} & =\begin{cases}
\varphi_{L}'\left(\mathbf{H}_{L}\left(t,1,x\right)\right), & i=L,\\
\mathbb{E}_{J}\left[{\displaystyle \frac{\partial\hat{{\bf y}}\left(t,x\right)}{\partial{\bf H}_{i+1}\left(J_{i+1}\right)}}{\bf w}_{i+1}\left(t,j_{i},J_{i+1}\right)\varphi_{i}'\left({\bf H}_{i}\left(t,j_{i},x\right)\right)\right], & i<L,
\end{cases}\\
\frac{\partial\hat{{\bf y}}\left(t,x\right)}{\partial{\bf w}_{i}\left(j_{i-1},j_{i}\right)} & =\begin{cases}
{\displaystyle \frac{\partial\hat{{\bf y}}\left(t,x\right)}{\partial{\bf H}_{i}\left(j_{i}\right)}}\varphi_{i-1}\left({\bf H}_{i-1}\left(t,j_{i-1},x\right)\right), & i>1,\\
{\displaystyle \frac{\partial\hat{{\bf y}}\left(t,x\right)}{\partial{\bf H}_{1}\left(j_{1}\right)}}x, & i=1.
\end{cases}
\end{align*}
One may recognize that these ``derivative'' quantities are defined
in a perturbative fashion; for instance, $\frac{\partial\hat{{\bf y}}\left(t,x\right)}{\partial{\bf H}_{i}\left(j_{i}\right)}$
represents how $\hat{{\bf y}}\left(t,x\right)$ changes (rescaled
by widths) as one perturbs $\mathbf{H}_{i}\left(t,j_{i},x\right)$.

\paragraph*{Mean field limit.}

The MF limit is defined upon a given neuronal ensemble $\left(\Omega,{\cal F},P\right)=\prod_{i=0}^{L}\left(\Omega_{i},{\cal F}_{i},P_{i}\right)$
(in which $\Omega_{0}=\Omega_{L}=\left\{ 1\right\} $), which is a
product probability space. We reserve the notation $C_{i}$ for the
random variable $C_{i}\sim P_{i}$ and we write $\mathbb{E}_{C}$
to denote the expectation w.r.t. these random variables. The MF limit
that is associated with the network (\ref{eq:nn_FC_multilayer}) is
described by the evolution of $\left\{ w_{i}\left(t,\cdot,\cdot\right)\right\} _{i\in\left[L\right]}$,
given by the following MF ODEs:
\begin{align*}
\partial_{t}w_{i}\left(t,c_{i-1},c_{i}\right) & =-\mathbb{E}_{Z}\left[\partial_{2}{\cal L}\left(Y,\hat{y}\left(t,X\right)\right)\frac{\partial\hat{y}\left(t,X\right)}{\partial w_{i}\left(c_{i-1},c_{i}\right)}\right],\quad i\in\left[L\right],\;c_{i}\in\Omega_{i},\;c_{i-1}\in\Omega_{i-1},
\end{align*}
where $w_{i}:\,\mathbb{T}\times\Omega_{i-1}\times\Omega_{i}\to\mathbb{W}_{i}$
and $t\in\mathbb{T}$. Here we define the forward quantities:
\begin{align*}
\hat{y}\left(t,x\right) & =\varphi_{L}\left(H_{L}\left(t,1,x\right)\right),\\
H_{i}\left(t,c_{i},x\right) & =\mathbb{E}_{C}\left[w_{i}\left(t,C_{i-1},c_{i}\right)\varphi_{i-1}\left(H_{i-1}\left(t,C_{i-1},x\right)\right)\right],\qquad i=L,...,2,\\
H_{1}\left(t,c_{1},x\right) & =\left\langle w_{1}\left(t,c_{1}\right),x\right\rangle ,
\end{align*}
and the backward quantities:
\begin{align*}
\frac{\partial\hat{y}\left(t,x\right)}{\partial H_{i}\left(c_{i}\right)} & =\begin{cases}
\varphi_{L}'\left(H_{L}\left(t,1,x\right)\right), & i=L,\\
\mathbb{E}_{C}\left[{\displaystyle \frac{\partial\hat{y}\left(t,x\right)}{\partial H_{i+1}\left(C_{i+1}\right)}}w_{i+1}\left(t,c_{i},C_{i+1}\right)\varphi_{i}'\left(H_{i}\left(t,c_{i},x\right)\right)\right], & i<L,
\end{cases}\\
\frac{\partial\hat{y}\left(t,x\right)}{\partial w_{i}\left(c_{i-1},c_{i}\right)} & =\begin{cases}
{\displaystyle \frac{\partial\hat{y}\left(t,x\right)}{\partial H_{i}\left(c_{i}\right)}}\varphi_{i-1}\left(H_{i-1}\left(t,c_{i-1},x\right)\right), & i>1,\\
{\displaystyle \frac{\partial\hat{y}\left(t,x\right)}{\partial H_{1}\left(c_{1}\right)}}x, & i=1.
\end{cases}
\end{align*}
The evolution with time $t$ of the MF limit is generally complex
due to the nonlinear activations.

\paragraph*{Sampling of neurons, neuronal embedding, and the neural net-MF limit
connection.}

A priori there is no connection between the MF limit and the neural
network (\ref{eq:nn_FC_multilayer}); they are two separate self-contained
time-evolving systems, despite the similarity in their definitions.
To formalize their connection as in \cite{nguyen2020rigorous}, we
describe the neuron sampling procedure. In particular, we independently
sample $\left\{ C_{i}\left(j_{i}\right)\right\} _{j_{i}\in\left[N_{i}\right]}\sim_{{\rm i.i.d.}}P_{i}$
for $i=1,...,L$. The samples $C_{i}\left(j_{i}\right)$ should be
thought of as ``sampled neurons''. Let us use $\mathbf{E}$ to denote
the expectation w.r.t. the sampling randomness. Although \cite{nguyen2020rigorous}
considers more general sampling rule, this suffices for our purposes.

Now suppose that we are given functions $w_{i}^{0}:\;\Omega_{i-1}\times\Omega_{i}\to\mathbb{W}_{i}$
and set the initializations for the neural network and the MF limit:
\[
\mathbf{w}_{i}\left(0,j_{i-1},j_{i}\right)=w_{i}^{0}\left(C_{i-1}\left(j_{i-1}\right),C_{i}\left(j_{i}\right)\right),\quad w_{i}\left(0,\cdot,\cdot\right)=w_{i}^{0}\left(\cdot,\cdot\right)\quad\forall i\in\left[L\right].
\]
Then one lets them evolve according to their own respective dynamics.
The sampling of neurons hence allows to couple them on the basis of
the tuple $\big(\Omega,{\cal F},P,\left\{ w_{i}^{0}\right\} _{i\in\left[L\right]}\big)$
-- known as \textit{the neuronal embedding}, the key idea in \cite{nguyen2020rigorous}.
Note that this does not pose a major limitation on the type of initializations
for the neural network (\ref{eq:nn_FC_multilayer}); indeed it allows
both i.i.d. and non-i.i.d. initializations since we have freedom to
choose the neuronal embedding, as studied in \cite{nguyen2020rigorous,pham2020note}.

Once the coupling is done, \cite{nguyen2020rigorous} obtains the
following result, which realizes the connection in mathematical terms.
For any finite terminal time $T\in\mathbb{T}$, we have \footnote{While \cite{nguyen2020rigorous} treats stochastic GD in discrete
time, this result is implicit in the proof.}:
\begin{equation}
\max_{i\in\left[L\right]}\mathbb{E}_{J}\Big[\sup_{t\leq T}\left|\mathbf{w}_{i}\left(t,J_{i-1},J_{i}\right)-w_{i}\left(t,C_{i-1}\left(J_{i-1}\right),C_{i}\left(J_{i}\right)\right)\right|^{2}\Big]^{1/2}=\tilde{O}(N_{\min}^{-c_{*}})\label{eq:first_order_MF}
\end{equation}
with high probability, where $c_{*}>0$ is a universal constant, $N_{\min}=\min_{i\in\left[L-1\right]}N_{i}$,
$N_{\max}=\max_{i\in\left[L\right]}N_{i}$ and $\tilde{O}$ hides
the dependency on $T$, $L$ and $\log N_{\max}$. This result is
akin to the law of large numbers (LLN). In words, it says that for
large (ideally infinite) widths, the evolution of the network (\ref{eq:nn_FC_multilayer})
can be tracked closely by the MF limit. This result is fundamental
and gives a useful suggestion that is to study the neural network
via analyzing the MF limit. For example, \cite{nguyen2020rigorous}
uses it to study the optimization efficiency of the neural network.

The sampling of neurons is inspired by the propagation of chaos argument
\cite{sznitman1991topics}, but is not a mere proof device. Indeed,
in this work, we again make a crucial use of this sampling in the
formulation of the limiting fluctuation (Section \ref{subsec:Fluctuation-sampling}).

\paragraph*{Set of assumptions.}

We consider the following assumptions for the rest of the paper:
\begin{assumption}
\label{Assump:Assumption_1}We make the following assumptions:

\vspace{-7pt}
\begin{itemize}[wide, labelwidth=!, labelindent=0pt, noitemsep]
\item (Regularity) $\varphi_{i}$ is $K$-bounded for $i\in\left[L-1\right]$;
$\varphi_{i}'$ and $\varphi_{i}''$ are $K$-bounded and $K$-Lipschitz
for $i\in\left[L\right]$; $\partial_{2}{\cal L}$ and $\partial_{2}^{2}{\cal L}$
are $K$-bounded and $K$-Lipschitz in the second variable; $\left|X\right|\leq K$
almost surely (a.s.).
\item (Sub-Gaussian initialization) $\sup_{m\geq1}m^{-1/2}\mathbb{E}_{C}\big[\big|w_{i}^{0}\left(C_{i-1},C_{i}\right)\big|^{m}\big]^{1/m}\leq K$
for any $i\in\left[L\right]$.
\item (Measurability) $L^{2}(P_{i})$ is separable for any $i\in\left[L\right]$.
\item (Constant hidden widths) $N_{1}=...=N_{L-1}=N$ (with ideally $N\to\infty$).
\end{itemize}
\end{assumption}

An example for the regularity assumption is $\varphi_{L}\left(u\right)=u$
the identity, $\varphi_{i}=\tanh$ for $i\in\left[L-1\right]$ and
${\cal L}\left(u_{1},u_{2}\right)$ is a smooth version of the Huber
loss. It is noteworthy that ${\cal L}$ does not need to be convex.
The initialization assumption is also mild; it allows most common
i.i.d. initializations as well as non-i.i.d. schemes in \cite{nguyen2020rigorous,pham2020note}.
These assumptions satisfy the conditions in these works. Measurability
assumption is a technical condition needed for well-defined-ness of
the fluctuation; it is not conceptually restrictive and can accommodate
results in \cite{nguyen2020rigorous,pham2020note}.

The constant width assumption aligns with our interest in the infinite-width
regime. Our development can be extended easily to the proportional
scaling, where $N_{i}=\left\lfloor \rho_{i}N\right\rfloor $ for some
constants $\rho_{1},...,\rho_{L-1}>0$ with $N\to\infty$. These scalings
are relevant to practical setups and also to yield interesting results.
Specifically once some widths are much larger than others, we expect
the fluctuation to be dominated by a subset of layers.

\section{Limit system for the fluctuation around the MF limit\label{sec:2nd_order_MF}}

We describe our formulation for the fluctuation. As introduced, it
is composed of two components: a stochastic process that induces Gaussian
CLT-like behavior, and another process -- called the second-order
MF limit -- that signifies the MF interactions at the fluctuation
level. This is done on the basis of the sampling of neurons, on top
of the neuronal embedding, introduced in Section \ref{sec:Background}.

We introduce the first component in Section \ref{subsec:Gaussian}
and then the second-oder MF limit in \ref{subsec:Second-order-MF-limit}.
Together the two components are connected with the fluctuation around
the MF limit realized by a large-width network in Section \ref{subsec:Fluctuation-sampling},
to give firstly the analog of (\ref{eq:first_order_MF}) and secondly
a limit theorem on the fluctuation distribution. We end with a discussion
in Section \ref{subsec:A-heuristic-derivation}. Proofs are deferred
to the appendix.

\subsection{Gaussian component $\tilde{G}$\label{subsec:Gaussian}}

Recall the sampling of neurons described in Section \ref{sec:Background}.
Given the MF limit, we define the following random quantities:
\begin{align*}
\tilde{y}\left(t,x\right) & =\varphi_{L}(\tilde{H}_{L}\left(t,1,x\right)),\\
\tilde{H}_{i}\left(t,c_{i},x\right) & =\mathbb{E}_{J}\left[w_{i}\left(t,C_{i-1}\left(J_{i-1}\right),c_{i}\right)\varphi_{i-1}(\tilde{H}_{i-1}\left(t,C_{i-1}\left(J_{i-1}\right),x\right))\right],\qquad i=L,...,2,\\
\tilde{H}_{1}\left(t,c_{1},x\right) & =\left\langle w_{1}\left(t,c_{1}\right),x\right\rangle ,\\
\frac{\partial\tilde{y}\left(t,x\right)}{\partial\tilde{H}_{i}\left(c_{i}\right)} & =\begin{cases}
\varphi_{L}'(\tilde{H}_{L}\left(t,1,x\right)), & i=L,\\
\mathbb{E}_{J}\left[{\displaystyle \frac{\partial\tilde{y}\left(t,x\right)}{\partial\tilde{H}_{i+1}\left(C_{i+1}\left(J_{i+1}\right)\right)}}w_{i+1}\left(t,c_{i},C_{i+1}\left(J_{i+1}\right)\right)\varphi_{i}'(\tilde{H}_{i}\left(t,c_{i},x\right))\right], & i<L,
\end{cases}\\
\frac{\partial\tilde{y}\left(t,x\right)}{\partial w_{i}\left(c_{i-1},c_{i}\right)} & =\begin{cases}
{\displaystyle \frac{\partial\tilde{y}\left(t,x\right)}{\partial\tilde{H}_{i}\left(c_{i}\right)}}\varphi_{i-1}(\tilde{H}_{i-1}\left(t,c_{i-1},x\right)), & i>1,\\
{\displaystyle \frac{\partial\tilde{y}\left(t,x\right)}{\partial\tilde{H}_{1}\left(c_{1}\right)}}x, & i=1.
\end{cases}
\end{align*}
These quantities are analogues of the forward and backward MF quantities,
but with $\mathbb{E}_{C_{i}}$ being replaced by an empirical average
over $\left\{ C_{i}\left(j_{i}\right)\right\} _{j_{i}\in\left[N_{i}\right]}$.

Now we define 
\begin{align*}
\tilde{G}^{y}\left(t,x\right) & =\sqrt{N}\left(\tilde{y}\left(t,x\right)-\hat{y}\left(t,x\right)\right),\\
\tilde{G}_{i}^{w}\left(t,c_{i-1},c_{i},x\right) & =\sqrt{N}\left(\frac{\partial\tilde{y}\left(t,x\right)}{\partial w_{i}\left(c_{i-1},c_{i}\right)}-\frac{\partial\hat{y}\left(t,x\right)}{\partial w_{i}\left(c_{i-1},c_{i}\right)}\right),\quad i=1,...,L,
\end{align*}
and let $\tilde{G}$ denote the collection of these functions. Notice
that the randomness of $\tilde{G}$ is induced by the samples $C_{i}(j_{i})$;
thus we write the expectation w.r.t. $\tilde{G}$ via $\mathbf{E}$.
Intuitively $\tilde{G}$ is the fluctuation of the sampled neurons
around the MF limit and thus should converge to a Gaussian process.
We show that this is indeed the case in the following mode of convergence.
\begin{defn}
\label{def:conv-poly-moment}Given joint distributions over a sequence
(in $N$) of functions $\tilde{G}_{i,N}:\mathbb{T}\times\Omega_{i-1}\times\Omega_{i}\times\mathbb{X}\to\mathbb{W}_{i}$
for $i\in\left[L\right]$, we say that $\tilde{G}_{N}=\{\tilde{G}_{i,N}\}_{i\in\left[L\right]}$
\textit{converges $G$-polynomially in moment} (as $N\to\infty$)
to $\underline{G}$ if for any finite collection in $\ell$ of square-integrable
$f_{\ell}:\mathbb{T}\times\Omega_{i(\ell)-1}\times\Omega_{i(\ell)}\times\mathbb{X}\to\mathbb{W}_{i(\ell)}$
that is continuous in time and integers $\alpha_{\ell},\beta_{\ell}\geq1$,
\[
\sup_{t\left(\ell\right)\leq T\;\forall\ell}\Big|\mathbb{E}\Big[\sideset{}{_{\ell}}\prod\langle f_{\ell},\tilde{G}_{i(\ell),N}^{\alpha_{\ell}}\rangle_{t\left(\ell\right)}^{\beta_{\ell}}\Big]-\mathbb{E}\Big[\sideset{}{_{\ell}}\prod\langle f_{\ell},\underline{G}_{i(\ell)}^{\alpha_{\ell}}\rangle_{t\left(\ell\right)}^{\beta_{\ell}}\Big]\Big|=O_{D,T}\big(\sup_{t\leq T}\max_{\ell}\|f_{\ell}\|_{t}^{D}\big)N^{-1/8},
\]
for $D=\sum_{\ell}\alpha_{\ell}\beta_{\ell}$ and a constant terminal
time $T$. Here we define:
\[
\left\langle f,h\right\rangle _{t}=\mathbb{E}_{Z,C}\left[f\left(t,C_{i-1},C_{i},X\right)h\left(t,C_{i-1},C_{i},X\right)\right]
\]
for any two such functions $f$ and $h$, and $\left\Vert f\right\Vert _{t}^{2}=\left\langle f,f\right\rangle _{t}$.
As a convenient convention, if $f:\mathbb{T}\times\Omega_{0}\times\Omega_{1}\times\mathbb{X}\to\mathbb{W}_{1}$
is tested against $\tilde{G}_{1,N}$, we define $\langle f,\tilde{G}_{1,N}\rangle_{t}$
and $\left\Vert f\right\Vert _{t}$ using Euclidean inner product
and the Euclidean norm.
\end{defn}

Note that this mode of convergence gives a quantitative rate in $N$.
\begin{thm}[Gaussian component $\tilde{G}$]
\label{thm:G_tilde}For any terminal time $T\geq0$, $\tilde{G}$
converges $G$-polynomially in moment to $\underline{G}=\left\{ \underline{G}^{y},\;\underline{G}_{i}^{w},\;i\in\left[L\right]\right\} $,
which is a collection of centered Gaussian processes.
\end{thm}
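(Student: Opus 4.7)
The plan is to reduce the statement to recursions for intermediate fluctuation processes at the pre-activation level and then identify the Gaussian limit via joint-moment computations. Introduce $\tilde{G}_i^H(t,c_i,x)=\sqrt{N}\bigl(\tilde{H}_i(t,c_i,x)-H_i(t,c_i,x)\bigr)$ and $\tilde{G}_i^{\partial H}(t,c_i,x)=\sqrt{N}\bigl(\partial\tilde{y}(t,x)/\partial\tilde{H}_i(c_i)-\partial\hat{y}(t,x)/\partial H_i(c_i)\bigr)$. The target $\tilde{G}^y$ and $\tilde{G}_i^w$ are smooth functionals of these (via $\varphi_L$ and products with $\varphi_{i-1}$ or $x$), so it suffices to control their joint polynomial moments. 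Subtracting the definitions of $\tilde{H}_i$ and $H_i$ and Taylor-expanding $\varphi_{i-1}$ around $H_{i-1}$ yields, for $i\geq 2$,
\begin{align*}
\tilde{G}_i^H(t,c_i,x)&=\mathbb{E}_J\bigl[w_i(t,C_{i-1}(J_{i-1}),c_i)\,\varphi_{i-1}'(H_{i-1}(t,C_{i-1}(J_{i-1}),x))\,\tilde{G}_{i-1}^H(t,C_{i-1}(J_{i-1}),x)\bigr]\\
&\quad+\tilde{M}_i(t,c_i,x)+R_i(t,c_i,x),
\end{align*}
where $\tilde{M}_i$ is the $\sqrt{N}$-rescaled empirical-minus-population average of the bounded function $c_{i-1}\mapsto w_i(t,c_{i-1},c_i)\varphi_{i-1}(H_{i-1}(t,c_{i-1},x))$ over the i.i.d.\ samples $\{C_{i-1}(j_{i-1})\}$, and $R_i=O(N^{-1/2}|\tilde{G}_{i-1}^H|^2)$ is the Taylor remainder. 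An analogous backward recursion holds for $\tilde{G}_i^{\partial H}$, coupling it to $\tilde{G}_{i+1}^{\partial H}$ and $\tilde{G}_{i+1}^H$ through the definition of $\partial\tilde{y}/\partial\tilde{H}_i$. The base case $\tilde{G}_1^H\equiv 0$ is immediate since $\tilde{H}_1=H_1=\langle w_1(t,c_1),x\rangle$.

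The second step is uniform polynomial moment bounds. By the sub-Gaussian initialization and the regularity of $\varphi$ and $\cal L$, the weights $w_i(t,\cdot,\cdot)$ obey sub-Gaussian moment bounds uniformly in $t\leq T$ (using the GD ODE to control time evolution, much as in the first-order bound \eqref{eq:first_order_MF}). Hence the empirical martingale terms $\tilde{M}_i$ are $\sqrt{N}$-scaled i.i.d.\ averages of sub-Gaussian functions, giving $\mathbf{E}[\|\tilde{M}_i\|_t^{2k}]\leq K_{k,T}$. Combined with the recursion and Grönwall in both layer index $i$ and time $t$, this yields $\mathbf{E}[\sup_{t\leq T}\|\tilde{G}_i^H\|_t^{2k}]\leq K_{k,T}$ and similarly for $\tilde{G}_i^{\partial H}$, which also bounds the Taylor remainders $R_i=O_{L^{2k}}(N^{-1/2})$ away. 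The limit $\underline{G}$ is then constructed as the centered Gaussian process whose covariance is obtained by solving the linearized (deterministic-coefficient) version of the above recursion driven by the Gaussian noise with covariance $\mathbb{E}_Z\mathbb{C}_{C_{i-1}}[w_i(t,C_{i-1},c_i)\varphi_{i-1}(H_{i-1}),\;w_i(t',C_{i-1},c_i')\varphi_{i-1}(H_{i-1}')]$ (and the analogous backward piece).

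Convergence of joint moments, in the sense of Definition \ref{def:conv-poly-moment}, is proved by induction on the layer index. For each test collection $\{(f_\ell,\alpha_\ell,\beta_\ell,t(\ell))\}$, one substitutes the recursion for $\tilde{G}^H_{i(\ell)}$ inside each factor, conditions on the samples from layers $<i(\ell)$, and applies a multivariate Berry--Esseen/Wick expansion for $\sqrt{N}$-rescaled products of empirical averages of i.i.d.\ sub-Gaussian variables; only pairwise contractions survive in the limit and match the Gaussian moments of $\underline{G}$, while odd cumulants and higher-order coincidences contribute powers of $N^{-1/2}$. Aggregating the Taylor-remainder errors, the Berry--Esseen errors across all $D=\sum_\ell\alpha_\ell\beta_\ell$ factors, and the propagation of these errors across layers yields the polynomial-in-moment rate $N^{-1/8}$ (the $1/8$ coming from the weakest of the Hölder splits used to separate the remainders from the leading Gaussian contribution). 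The statement for $\tilde{G}^y$ and $\tilde{G}_i^w$ then follows by linearizing $\varphi_L$ and the forward/backward products via the same Taylor-and-Gr\"onwall scheme.

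The main obstacle is the combinatorial bookkeeping of the joint-moment expansion when many factors at different layers, times, and test functions are multiplied together: cross-layer dependencies are introduced by the shared samples $\{C_i(j_i)\}$, so conditioning must be carried out layer by layer, and one must verify that products of Taylor remainders $R_i$ across factors remain negligible after the $\sqrt{N}$ rescaling and after the factorization step in the Gaussian-moment matching. Controlling these cross-terms uniformly in $D$ — while keeping an explicit polynomial rate in $N$ rather than only a qualitative CLT — is what forces the use of quantitative sub-Gaussian concentration and is the source of the $N^{-1/8}$ rate in the definition.
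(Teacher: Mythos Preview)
Your approach is essentially the same as the paper's: reduce $\tilde{G}^y$ and $\tilde{G}_i^w$ to pre-activation fluctuations, derive forward/backward recursions with an i.i.d.\ empirical-minus-population ``noise'' term plus a Taylor remainder, establish uniform polynomial moment bounds via the sub-Gaussian weight control, and identify the limit by Gaussian moment matching (Wick pairing). Two points where the paper's execution is cleaner than your sketch and worth adopting: (i) rather than conditioning layer by layer, the paper first replaces the random coefficient $\mathbb{E}_J[w_i\varphi_{i-1}'(\cdot)\,\tilde{G}_{i-1}^H]$ in your recursion by $\mathbb{E}_C[\cdot]$ (a concentration step that needs care because the integrand depends on the samples through $\tilde{G}_{i-1}^H$), after which the full pre-activation fluctuation becomes a \emph{deterministic} linear functional of the basic noises $\tilde{M}_k$, $k\le i$, which are mutually independent since $\tilde{M}_k$ depends only on the layer-$(k{-}1)$ samples---this independence is what makes the joint-moment combinatorics factor cleanly; (ii) the paper computes moments directly by expanding $\prod_\ell\langle f_\ell,\tilde{M}_{i(\ell)}^{\alpha_\ell}\rangle^{\beta_\ell}$ and counting index coincidences (terms with any index appearing exactly once vanish, pairings give the Gaussian moment, higher coincidences are $O(N^{-1/2})$), which is more direct for the moment-based Definition~\ref{def:conv-poly-moment} than invoking Berry--Esseen and then converting distributional to moment closeness.
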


The covariance structure of $\underline{G}$ in Theorem \ref{thm:G_tilde}
is explicit but lengthy; we refer to the appendix for full details.
Here let us examine the variance of $\underline{G}^{y}$, given recursively
by:
\begin{align*}
v_{2}\left(t,c_{2},x\right) & =\mathbb{V}_{C}\left[w_{2}\left(t,C_{1},c_{2}\right)\varphi_{1}\left(H_{1}\left(t,C_{1},x\right)\right)\right],\\
v_{i}\left(t,c_{i},x\right) & =\mathbb{E}_{C}\big[\left|w_{i}\left(t,C_{i-1},c_{i}\right)\varphi_{i-1}'\left(H_{i-1}\left(t,C_{i-1},x\right)\right)\right|^{2}v_{i-1}\left(t,C_{i-1},x\right)\big]\\
 & \quad+\mathbb{V}_{C}\left[w_{i}\left(t,C_{i-1},c_{i}\right)\varphi_{i-1}\left(H_{i-1}\left(t,C_{i-1},x\right)\right)\right],\quad i=3,...,L,\\
\mathbf{E}\big[\left|\underline{G}^{y}\left(t,x\right)\right|^{2}\big] & =\left|\varphi_{L}'\left(H_{L}\left(t,1,x\right)\right)\right|^{2}v_{L}\left(t,1,x\right).
\end{align*}
The recursive structure shows that $\underline{G}^{y}$ compounds
the stochasticity propagated forwardly through the depth of the multilayer
network. From the appendix, a similar observation can be made in the
backward propagation direction.

\subsection{Second-order MF limit\label{subsec:Second-order-MF-limit}}

We introduce a system of ODE's for a dynamics that represents MF interactions
at the fluctuation level. As we shall see later, this system gives
a limiting description of the the deviation of the neural network
from the sampled neurons under GD training.

Let us denote by $G$ a collection of functions $G^{y}:\mathbb{T}\times\mathbb{X}\to\mathbb{R}$,
$G_{i}^{w}:\mathbb{T}\times\Omega_{i-1}\times\Omega_{i}\times\mathbb{X}\to\mathbb{W}_{i}$
for $i\in\left[L\right]$. For $p\geq1$, let 
\[
{\cal G}=\left\{ G:\;\left\Vert G\right\Vert _{T}<\infty\;\forall T\geq0\right\} ,\quad\left\Vert G\right\Vert _{T,2p}^{2p}=\sup_{t\le T}\mathbb{E}_{Z,C}\Big[|G^{y}\left(t,X\right)|^{2p}+\sideset{}{_{i=1}^{L}}\sum|G_{i}^{w}(t,C_{i-1},C_{i},X)|^{2p}\Big],
\]
with $\left\Vert G\right\Vert _{T}\equiv\left\Vert G\right\Vert _{T,2}$.
We define processes $R_{i}:{\cal G}\times\mathbb{T}\times\Omega_{i-1}\times\Omega_{i}\to\mathbb{W}_{i}$
for $i\in\left[L\right]$ as follows. The processes $R_{i}$ are initialized
at $R_{i}\left(\cdot,0,\cdot,\cdot\right)=0$ for all $i$, and for
each fixed $G\in{\cal G}$, solve the differential equations:
\begin{align}
 & \partial_{t}R_{i}\left(G,t,c_{i-1},c_{i}\right)=\nonumber \\
 & -\mathbb{E}_{Z}\bigg[\frac{\partial\hat{y}\left(t,X\right)}{\partial w_{i}\left(c_{i-1},c_{i}\right)}\partial_{2}^{2}{\cal L}\left(Y,\hat{y}\left(t,X\right)\right)\bigg(G^{y}\left(t,X\right)+\sum_{r=1}^{L}\mathbb{E}_{C}\bigg[R_{r}\left(G,t,C_{r-1},C_{r}\right)\frac{\partial\hat{y}\left(t,X\right)}{\partial w_{r}\left(C_{r-1},C_{r}\right)}\bigg]\bigg)\bigg]\nonumber \\
 & -\mathbb{E}_{Z}\bigg[\partial_{2}{\cal L}\left(Y,\hat{y}\left(t,X\right)\right)\mathbb{E}_{C}\bigg[\sum_{r=1}^{L}\bigg[R_{r}\left(G,t,a,b\right)\frac{\partial^{2}\hat{y}\left(t,X\right)}{\partial w_{r}\left(a,b\right)\partial w_{i}\left(c_{i-1},c_{i}\right)}\bigg]_{a\coloneqq C_{r-1},\;b\coloneqq C_{r}}\bigg]\bigg]\nonumber \\
 & -\mathbb{E}_{Z}\bigg[\partial_{2}{\cal L}\left(Y,\hat{y}\left(t,X\right)\right)G_{i}^{w}\left(t,c_{i-1},c_{i},X\right)\bigg].\label{eq:ODE_2ndMF}
\end{align}
Here $\frac{\partial^{2}\hat{y}\left(t,X\right)}{\partial w_{r}\left(a,b\right)\partial w_{i}\left(c_{i-1},c_{i}\right)}$
is defined in a perturbative fashion similar to those in Section \ref{sec:Background}
and is hence self-explanatory; we give the full explicit definitions
in the appendix. We shall write $R_{i}\left(G,t\right)=R_{i}\left(G,t,\cdot,\cdot\right)$
for brevity. Let $R$ denote the collection $\left\{ R_{i}\right\} _{i\in\left[L\right]}$.
\begin{thm}
\label{thm:well-posed}Under Assumption \ref{Assump:Assumption_1},
for any $\epsilon>0$ and $G\in{\cal G}$ with $\|G\|_{T,2+\epsilon}<\infty$,
there exists a unique solution $t\mapsto R_{i}\left(G,t,\cdot,\cdot\right)\in L^{2}\left(P_{i-1}\times P_{i}\right)$
which is continuous in time. Furthermore, for each $t$, $R\left(G,t\right)$
is a continuous linear functional in $G$.
\end{thm}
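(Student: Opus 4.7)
The plan is to treat the system \eqref{eq:ODE_2ndMF} as an affine linear evolution equation in $R$ on the product Banach space $\mathcal{H} = \bigoplus_{i=1}^{L} L^2(P_{i-1}\times P_i)$, in which the $G$-dependent term plays the role of an inhomogeneous forcing. Integrating in time, the right-hand side splits as $\mathcal{A}_t\,R(G,t) + \mathcal{B}_t\,G(t,\cdot)$, where $\mathcal{A}_t$ is linear in $R$ with coefficients built from the MF forward/backward quantities and the mixed partials $\partial^2\hat{y}/(\partial w_r\,\partial w_i)$, and $\mathcal{B}_t$ is linear in $G$. Existence, uniqueness, and the continuous-linear-functional conclusion then reduce to showing these two operator families are bounded on $[0,T]$ and applying Picard iteration together with Grönwall's inequality.

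First I would prove a uniform-in-$t$ boundedness lemma: on $[0,T]$, the MF signals $H_i(t,c_i,x)$, the backward weights $\partial\hat{y}/\partial H_i(c_i)$ and $\partial\hat{y}/\partial w_i(c_{i-1},c_i)$, and the second mixed partials $\partial^2\hat{y}/(\partial w_r(a,b)\partial w_i(c_{i-1},c_i))$ are bounded (pointwise in $x$ and in appropriate $L^p$ senses in the neuronal variables) by a constant $K_T$. This follows from Assumption \ref{Assump:Assumption_1} -- boundedness of $\varphi_i,\varphi_i',\varphi_i'',\partial_2\mathcal{L},\partial_2^2\mathcal{L}$ and of $X$ -- together with a Grönwall estimate along the MF ODE that propagates $L^p$ moments of $w_i(t,\cdot,\cdot)$ from the sub-Gaussian initialization. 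The mixed partial is defined by the same perturbative chain rule as the first-order objects and expands into a finite sum of products of such bounded factors, indexed by the layer path from $r$ to $i$.

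Given these bounds, I would verify that $\mathcal{A}_t : \mathcal{H}\to\mathcal{H}$ has operator norm $\le K_T$ uniformly on $[0,T]$, using Cauchy--Schwarz on the inner expectations $\mathbb{E}_C[R_r(G,t,C_{r-1},C_r)\,\partial\hat{y}/\partial w_r(C_{r-1},C_r)]$ and the analogous term with the mixed partial, to pair $R_r\in L^2(P_{r-1}\times P_r)$ against $L^\infty$ kernels. Similarly, $\mathcal{B}_t$ sends $G$ into $\mathcal{H}$ with $\|\mathcal{B}_t G\|_{\mathcal{H}}\le K_T\,\|G(t,\cdot)\|_{L^{2+\epsilon}}$; the slight excess integrability above $L^2$ provided by $\|G\|_{T,2+\epsilon}<\infty$ gives Hölder room to absorb the $G^y\cdot\partial_2^2\mathcal{L}\cdot\partial\hat{y}/\partial w_i$ and $\partial_2\mathcal{L}\cdot G_i^w$ pairings uniformly in $t$. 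With $\mathcal{A}_t$ and $\mathcal{B}_t$ bounded and continuous in $t$, a Picard iteration on $C([0,T];\mathcal{H})$ equipped with the exponentially weighted norm $\sup_{t\le T} e^{-\lambda t}\|R(t,\cdot)\|_{\mathcal{H}}$ for sufficiently large $\lambda$ converges to a unique continuous solution $R(G,\cdot)$, and Grönwall delivers the quantitative estimate $\sup_{t\le T}\|R(G,t)\|_{\mathcal{H}}\le K_T\,\|G\|_{T,2+\epsilon}$.

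Linearity of $G\mapsto R(G,\cdot)$ is then immediate from the linearity of $\mathcal{A}_t$ in $R$ and of $\mathcal{B}_t$ in $G$, combined with uniqueness: $\alpha R(G^{(1)},\cdot) + \beta R(G^{(2)},\cdot)$ solves \eqref{eq:ODE_2ndMF} with forcing $\alpha G^{(1)} + \beta G^{(2)}$, so it must coincide with $R(\alpha G^{(1)}+\beta G^{(2)},\cdot)$; and continuity of $R(\cdot,t)$ as a linear functional on $\mathcal{G}$ under $\|\cdot\|_{T,2+\epsilon}$ is exactly the Grönwall estimate above. I expect the main technical obstacle to be the explicit expansion and uniform bounding of the mixed second partials $\partial^2\hat{y}/(\partial w_r(a,b)\partial w_i(c_{i-1},c_i))$: these are not written out in the statement, and their form depends on the relative order of $r$ and $i$, producing different combinatorial path structures through the intervening layers with products of first- and second-derivative activation factors. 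Once this bookkeeping is carried out, the remaining estimates are routine consequences of Assumption \ref{Assump:Assumption_1}.
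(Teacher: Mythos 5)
Your proposal contains a genuine gap at the central step. You assert that the linear operator $\mathcal{A}_{t}:\mathcal{H}\to\mathcal{H}$ (the linear-in-$R$ part of the right-hand side) has operator norm bounded by $K_{T}$ uniformly on $[0,T]$, which you justify by claiming the kernels against which you pair $R_{r}\in L^{2}$ are $L^{\infty}$. They are not. The kernels involve the MF weights $w_{j}(t,\cdot,\cdot)$, which under Assumption~\ref{Assump:Assumption_1} are only sub-Gaussian, hence unbounded. Concretely, when Eq.~(\ref{eq:ODE_2ndMF}) is written out in its full form (\ref{eq:ODE_2ndMF-alt}), the $\partial_{*}$-type terms (where one of the coordinates $c_{i-1}$ or $c_{i}$ is held fixed rather than integrated out) act as multiplication operators in that held coordinate by factors such as $\mathbb{E}_{C_{i+1}}[w_{i+1}(t,c_{i},C_{i+1})\cdots]$, which is an $L^{p}$ function of $c_{i}$ for every finite $p$ but not in $L^{\infty}$. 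After Cauchy--Schwarz, bounding the $L^{2}(P_{i-1}\times P_{i})$-norm of $\mathcal{A}_{t}R$ requires controlling $\mathbb{E}_{C_{i}}\bigl[\,|R_{i}(C_{i})|_{t}^{2}\,|w_{i+1}(C_{i})|_{t}^{2}\,\bigr]$, which is not dominated by $K_{T}\|R\|_{L^{2}}^{2}$ because the two factors are not independent and $|w_{i+1}(\cdot)|_{t}$ is not bounded. Thus $\mathcal{A}_{t}$ is an unbounded operator on $\mathcal{H}$, and a straightforward Picard iteration on $C([0,T];\mathcal{H})$ with an exponentially weighted sup norm does not close. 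The paper flags exactly this as the chief technical difficulty and resolves it through a truncation scheme: one defines a family $R^{B}$ solving a version of the ODE with indicator cutoffs $\mathbb{B}^{B}$ on the conditional weight norms, establishes existence and uniqueness for each truncated system, derives conditional bounds on $|R_{i}^{B}(c_{i})|_{t}$ and $|R_{i+1}^{B}(c_{i})|_{t}$, shows the family is Cauchy as $B\to\infty$, and passes to the limit. The uniqueness argument for the limiting $R$ is also nonroutine: one first bootstraps that any $L^{2}$ solution actually has all moments finite, and only then applies a separate truncation comparison.

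Relatedly, your reading of where the hypothesis $\|G\|_{T,2+\epsilon}<\infty$ enters is off. You place it in the estimate of the forcing term $\mathcal{B}_{t}G$, but that term is in fact bounded $L^{2}\to L^{2}$ without excess integrability (the weight factors occur inside full expectations $\mathbb{E}_{C}$ there, so Lemma~\ref{lem:MF_a_priori} gives $\|H(G,t)\|\le K_{T}\|G\|_{t}$). The $2+\epsilon$ moment is instead needed precisely in the truncation argument: when comparing $R^{B}$ and $R^{B'}$ on the bad event $\mathbb{D}_{B,B'}$ where the weight norm exceeds the threshold, the extra $\epsilon$ moment of $G$ is what makes the contribution from that event decay like $\exp(-K_{T}\epsilon B^{2}/(2+\epsilon))$, allowing the cutoff to be removed. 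Your boundedness lemma for the mixed second partials would give $L^{p}$ control in the neuronal variables (correct) but not the pointwise control your iteration needs (which fails), so the structure of the argument has to change, not just the bookkeeping.
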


The process $R$ is called the \textsl{second-order MF limit} for
a reason we shall see in Section \ref{subsec:A-heuristic-derivation}.
\begin{rem}
Care should be taken in the derivation of the ``second derivative''
quantity in Eq. (\ref{eq:ODE_2ndMF}). To illustrate, consider a simplified
problem of taking a perturbation w.r.t. $w_{r}\left(a,b\right)$ of
the quantity
\[
g\left(w_{i}\left(c_{i-1},c_{i}\right),\mathbb{E}_{C}\left[f_{1}\left(w_{i}\left(C_{i-1},c_{i}\right)\right)\right],\mathbb{E}_{C}\left[f_{2}\left(w_{i}\left(c_{i-1},C_{i}\right)\right)\right],\mathbb{E}_{C}\left[f_{3}\left(w_{i}\left(C_{i-1},C_{i}\right)\right)\right]\right)\equiv g\left(c_{i-1},c_{i}\right).
\]
For instance, when $r=i$, we have:
\begin{align*}
 & \bigg[R_{i}\left(G,t,a,b\right)\frac{\partial g\left(c_{i-1},c_{i}\right)}{\partial w_{r}\left(a,b\right)}\bigg]_{a\coloneqq C_{i-1},\;b\coloneqq C_{i}}\\
 & =R_{i}\left(G,t,c_{i-1},c_{i}\right)\partial_{1}g\left(c_{i-1},c_{i}\right)+R_{i}\left(G,t,C_{i-1},c_{i}\right)\partial_{2}g\left(c_{i-1},c_{i}\right)\partial f_{1}\left(w_{i}\left(C_{i-1},c_{i}\right)\right)\\
 & \quad+R_{i}\left(G,t,c_{i-1},C_{i}\right)\partial_{3}g\left(c_{i-1},c_{i}\right)\partial f_{2}\left(w_{i}\left(c_{i-1},C_{i}\right)\right)+R_{i}\left(G,t,C_{i-1},C_{i}\right)\partial_{4}g\left(c_{i-1},c_{i}\right)\partial f_{3}\left(w_{i}\left(C_{i-1},C_{i}\right)\right).
\end{align*}
\end{rem}

\subsection{Fluctuation around the MF limit via sampling of neurons\label{subsec:Fluctuation-sampling}}

We are interested in a characterization of the following quantity,
which represents the deviation of the weights of the neural network
under GD training from the sampled neurons:
\[
{\bf R}_{i}\left(t,j_{i-1},j_{i}\right)=\sqrt{N}\left({\bf w}_{i}\left(t,j_{i-1},j_{i}\right)-w_{i}\left(t,C_{i-1}\left(j_{i-1}\right),C_{i}\left(j_{i}\right)\right)\right).
\]
Let $\mathbf{R}$ denote the collection $\left\{ \mathbf{R}_{i}\right\} _{i\in\left[L\right]}$.
We now state our first main result of the section.
\begin{thm}[Second-order MF]
\label{thm:2nd_order_MF}Under Assumption \ref{Assump:Assumption_1},
we have, for any $t\leq T$,
\[
\mathbf{E}\mathbb{E}_{J}\Big[\left|{\bf R}_{i}\left(t,J_{i-1},J_{i}\right)-R_{i}(\tilde{G},t,C_{i-1}\left(J_{i-1}\right),C_{i}\left(J_{i}\right))\right|^{2}\Big]\leq K_{T}/N.
\]
\end{thm}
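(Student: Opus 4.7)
The plan is to derive an evolution equation for $\mathbf{R}_i$ that, up to an $O(N^{-1/2})$ remainder in the appropriate $L^2$ norm, coincides with the linear ODE (\ref{eq:ODE_2ndMF}) satisfied by $R_i(\tilde{G},t)$, and then close the argument by Gronwall. Differentiating $\mathbf{R}_i = \sqrt{N}(\mathbf{w}_i - w_i)$ in $t$ and subtracting the MF ODE from the finite-$N$ GD flow expresses $\partial_t \mathbf{R}_i(t,j_{i-1},j_i)$ as $\sqrt{N}$ times the difference between the network gradient and the MF gradient. The task is then to expand this difference in powers of $N^{-1/2}$ and isolate the first-order contribution.

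The natural bridge for the expansion is the intermediate ``tilde'' system of Section \ref{subsec:Gaussian}, which keeps the MF weights $w_i$ but replaces the $\mathbb{E}_C$-expectations by empirical $\mathbb{E}_J$-averages over sampled neurons, so that the tilde objects differ from the MF ones by $N^{-1/2}\tilde{G}$. Writing $\mathbf{w}_i(t,j_{i-1},j_i) = w_i(t,C_{i-1}(j_{i-1}),C_i(j_i)) + N^{-1/2}\mathbf{R}_i(t,j_{i-1},j_i)$ and Taylor-expanding $\hat{\mathbf{y}}$, $\partial \hat{\mathbf{y}}/\partial \mathbf{w}_i$ and $\partial_2\mathcal{L}(Y,\hat{\mathbf{y}})$ to first order in $N^{-1/2}\mathbf{R}$ around their tilde counterparts, and then linearizing the tilde counterparts in $N^{-1/2}\tilde{G}$ around the MF quantities, should produce three first-order contributions matching exactly the three terms on the RHS of (\ref{eq:ODE_2ndMF}): (i) the scalar linearization of $\partial_2\mathcal{L}(Y,\hat{y})$ in $\hat{\mathbf{y}} - \hat{y}$ produces the $\partial_2^2\mathcal{L}\cdot(G^y + \sum_r \mathbb{E}_C[R_r\,\partial \hat{y}/\partial w_r])$ term; (ii) the expansion of $\partial \hat{\mathbf{y}}/\partial \mathbf{w}_i$ in $\mathbf{R}$ through the nested forward and backward recursions produces the mixed kernel $\partial^2\hat{y}/\partial w_r(a,b)\,\partial w_i(c_{i-1},c_i)$; (iii) the tilde-versus-MF linearization of that same derivative yields the direct forcing $G_i^w$. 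Since the forward and backward signals are defined by nested recursions in the layer index, the expansion must be carried out layer-by-layer, by induction on $i$ for forward signals and on $L - i$ for backward signals.

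Two auxiliary estimates make the remainder genuinely $O(N^{-1/2})$. First, a priori moment bounds $\sup_{t \le T}\mathbf{E}\mathbb{E}_J[|\mathbf{R}_i(t,J_{i-1},J_i)|^{2p}] \le K_{T,p}$ for every $p \ge 1$, bootstrapped from the first-order estimate (\ref{eq:first_order_MF}) via a self-contained Gronwall that uses only the Lipschitz and boundedness hypotheses of Assumption \ref{Assump:Assumption_1}. Second, concentration of empirical $\mathbb{E}_J$-averages of MF signals toward their $\mathbb{E}_C$-expectations at the CLT rate $N^{-1/2}$, which is already the backbone of Theorem \ref{thm:G_tilde}. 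Combined with the sub-Gaussian initialization, these ingredients control the quadratic-in-$(N^{-1/2}\mathbf{R},\,N^{-1/2}\tilde{G})$ Taylor remainders and the $\mathbb{E}_J$-to-$\mathbb{E}_C$ conversion errors beyond the leading $\tilde{G}$ scale, in $L^2$, at rate $K_T/\sqrt{N}$.

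Subtracting (\ref{eq:ODE_2ndMF}) with $G = \tilde{G}$ from the expanded $\mathbf{R}$-equation then gives, for $\Delta_i := \mathbf{R}_i - R_i(\tilde{G},\cdot)$, an evolution whose leading right-hand side is the same linear operator on $\mathcal{G}$ that appears on the RHS of (\ref{eq:ODE_2ndMF}), applied now to $\Delta$ itself, plus a remainder whose $\|\cdot\|_t$-norm is bounded by $K_T/\sqrt{N}$ uniformly in $t \le T$; the boundedness of this operator in the relevant norm is the continuity statement in Theorem \ref{thm:well-posed}. Squaring, taking $\mathbf{E}\mathbb{E}_J$, summing over $i$, and applying Gronwall in $t \in [0,T]$ then yield the announced $K_T/N$ bound. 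I expect the main obstacle to be the bookkeeping in the second step: verifying that the layer-by-layer linearization of the nested forward and backward $\mathbb{E}_J$-recursions actually reassembles into exactly the $R$-linear right-hand side of (\ref{eq:ODE_2ndMF})---in particular identifying the second-derivative kernel correctly in terms of perturbations of the forward and backward MF recursions, and accounting for every $\mathbb{E}_J$-to-$\mathbb{E}_C$ conversion error at the $\tilde{G}$ level rather than sweeping any of them into the remainder.
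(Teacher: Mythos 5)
Your high-level architecture matches the paper's proof: subtract the MF dynamics from the GD dynamics, use the tilde system as a bridge, linearize in $N^{-1/2}$, match the resulting terms to the RHS of Eq.~(\ref{eq:ODE_2ndMF}), and close by Gronwall. But two of the three ``auxiliary estimates'' you treat as routine are where the real difficulty lives, and the argument you sketch for them would not go through.

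First, the a priori moment bound $\sup_{t\le T}\mathbf{E}\mathbb{E}_J[|\mathbf{R}_i|^{2p}]\le K_{T,p}$ cannot be obtained by ``a self-contained Gronwall that uses only the Lipschitz and boundedness hypotheses.'' Differentiating $\|\mathbf{R}\|_{2p}$ in time produces an integrand multiplied by factors like $\|\tilde{w}_{i+1}(t,j_i)\|$, which are \emph{unbounded}: the linear operator controlling the growth of $\|\mathbf{R}\|$ has unbounded operator norm precisely because middle-layer weights in a multilayer network are sub-Gaussian, not bounded. A naive Gronwall on $\|\mathbf{R}\|_{2p}^{2p}$ therefore does not close. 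The paper resolves this (Lemma~\ref{lem:a-priori-R_bold}) with a truncation scheme indexed by a cutoff $B$; controlling the complement of the truncation event requires bounding the tail probability that the empirical $2k$-th moment of weights exceeds $B$ at a rate $\exp(-K B^2)$, and that in turn requires bounding moments of $\mathbf{w}_i(t,\cdot,\cdot)$ of order $\Theta(kL)$ — i.e.\ order growing with the parameters — with constants independent of $N$. This is the content of Lemma~\ref{lem:trunc-bound-w}, proved by a combinatorial estimate (via Finner's inequality) on $\mathbf{E}\,\mathbb{E}_J[\mathbf{w}_{j'}(0,J_{j'-1},J_{j'})^2]^{kL}$, counting labeled bipartite multigraphs by degree sequence. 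Without this input, your Step 1 has no uniform-in-$N$ moment bound to feed into Gronwall.

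Second, you treat the $\mathbb{E}_J$-to-$\mathbb{E}_C$ conversion as a repeat of the concentration ``already the backbone of Theorem~\ref{thm:G_tilde}.'' That concentration covers deterministic (or $\sigma(S')$-measurable) test functions, but the conversion you need in the key term of Eq.~(\ref{eq:ODE_2ndMF}) is
\[
\mathbb{E}_J\!\left[R_r(\tilde{G},t,C_{r-1}(J_{r-1}),C_r(J_r))\,\frac{\partial\hat y(t,x)}{\partial w_r(C_{r-1}(J_{r-1}),C_r(J_r))}\right]\;\longrightarrow\;\mathbb{E}_C\!\left[R_r(\tilde{G},t,C_{r-1},C_r)\,\frac{\partial\hat y(t,x)}{\partial w_r(C_{r-1},C_r)}\right],
\]
where the ``test function'' $R_r(\tilde{G},t,\cdot,\cdot)$ \emph{is itself a functional of all the sampled neurons} through $\tilde{G}$. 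This is not a standard empirical-mean concentration, and sweeping it into the remainder is not an option — it is a leading-order $O(1)$ term. The paper's Lemma~\ref{lem:R-JC} handles it via the McDiarmid-type Lemma~\ref{lem:second-moment-dependency}, whose verifiable hypothesis is a stability estimate: replacing one sampled neuron $C_{\mathbf{i}}(j_{\mathbf{i}})$ by an independent copy changes $R_r(\tilde{G},\cdot)$ by $O(N^{-1})$ in $L^4$. Proving that stability estimate uses, crucially, linearity of $G\mapsto R(G,t)$ (so one computes $R(\tilde G - \tilde G^{j_{\mathbf{i}}},t)$) together with the quantitative truncated bounds of Theorem~\ref{thm:exist-R}. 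Your proposal identifies this as ``the main obstacle'' at the level of bookkeeping, but the actual obstacle is conceptual: you need a dedicated perturbation-stability lemma for $R(\tilde G,\cdot)$ against single-coordinate resampling, not just the CLT-type concentration already used for the Gaussian component.

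With these two ingredients supplied, the rest of your plan — the Taylor expansion around the tilde system, matching (i), (ii), (iii) to the three lines of Eq.~(\ref{eq:ODE_2ndMF}), and the final Gronwall — is essentially the paper's argument.
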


Upon the correct choice $\tilde{G}$, one deduces from the theorem
the second-order expansion:
\[
{\bf w}_{i}\left(t,j_{i-1},j_{i}\right)\approx w_{i}\left(t,C_{i-1}\left(j_{i-1}\right),C_{i}\left(j_{i}\right)\right)+N^{-1/2}R_{i}(\tilde{G},t,C_{i-1}\left(j_{i-1}\right),C_{i}\left(j_{i}\right)),
\]
where the first-order term is directly from Eq. (\ref{eq:first_order_MF}).
Importantly this relation is on the basis of the sampled neurons $\left\{ C_{i}\left(j_{i}\right):\;j_{i}\in\left[N_{i}\right],\;i\in\left[L\right]\right\} $.
The dynamics of $R_{i}$ at the $i$-th layer is dependent on all
$\left\{ R_{k}\right\} _{k\in\left[L\right]}$ and $G^{y}$, exemplifying
the cross-layer interaction of fluctuations in multilayer networks.

Although Theorem \ref{thm:2nd_order_MF} involves $\tilde{G}$, as
shown in the derivation of Section \ref{subsec:A-heuristic-derivation},
we actually do not utilize the specific structure of $\tilde{G}$.
Incidentally we study the limiting structure of $\tilde{G}$ separately
in Theorem \ref{thm:G_tilde}. In other words, our decomposition of
the fluctuation via $\tilde{G}$ and $R$ allows relatively independent
treatments of the two components (which have different natures, to
be discussed in Section \ref{subsec:A-heuristic-derivation}).

Combining the two theorems, we obtain the following CLT for the limiting
output fluctuation.
\begin{thm}[CLT for output function]
\label{thm:CLT}Under Assumption \ref{Assump:Assumption_1}, the
fluctuation $\sqrt{N}(\hat{{\bf y}}(t,x)-\hat{y}(t,x))$ converges
weakly to the Gaussian process $\hat{G}$ indexed by $\mathbb{T}\times\mathbb{X}$:
\[
\hat{G}\left(t,x\right)=\sum_{i=1}^{L}\mathbb{E}_{C}\left[R_{i}(\underline{G},t,C_{i-1},C_{i})\frac{\partial\hat{y}(t,x)}{\partial w_{i}(C_{i-1},C_{i})}\right]+\underline{G}^{y}(t,x),
\]
where $\underline{G}$ is the Gaussian process described in Theorem
\ref{thm:G_tilde}. Specifically, for any integer $m\geq1$, $t\leq T$,
and $1$-bounded $h:\mathbb{X}\to\mathbb{R}$,
\[
\left|\mathbf{E}\mathbb{E}_{Z}\Big[h(X)\big(\sqrt{N}(\hat{{\bf y}}(t,X)-\hat{y}(t,X))\big)^{m}\Big]-\mathbf{E}\mathbb{E}_{Z}\Big[h(X)\big(\hat{G}\left(t,X\right)\big)^{m}\Big]\right|\leq K_{T,m}N^{-1/8+o(1)}.
\]
\end{thm}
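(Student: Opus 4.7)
The plan is to express the fluctuation $\sqrt{N}(\hat{\mathbf{y}}(t,x)-\hat{y}(t,x))$, up to errors that vanish in $L^{2m}$, as a polynomial functional of $\tilde{G}$ of total degree $m$, and then invoke the $G$-polynomial-in-moment convergence of Theorem \ref{thm:G_tilde} to replace $\tilde{G}$ by $\underline{G}$. I start with the splitting $\sqrt{N}(\hat{\mathbf{y}}(t,x)-\hat{y}(t,x)) = \sqrt{N}(\hat{\mathbf{y}}(t,x)-\tilde{y}(t,x)) + \tilde{G}^y(t,x)$, in which the second summand is already a component of $\tilde{G}$. For the first summand, view $\hat{\mathbf{y}}(t,x)$ as a smooth function of the $L$-tuple of weight arrays $\{\mathbf{w}_i(t,j_{i-1},j_i)\}$ and $\tilde{y}(t,x)$ as the same function evaluated at the sampled MF weights $\{w_i(t,C_{i-1}(j_{i-1}),C_i(j_i))\}$. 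A first-order Taylor expansion along the segment between these two tuples, using the perturbative derivatives from Section \ref{sec:Background}, gives $\sqrt{N}(\hat{\mathbf{y}}(t,x)-\tilde{y}(t,x)) = \sum_{i=1}^{L} \mathbb{E}_J[\mathbf{R}_i(t,J_{i-1},J_i)\,\tilde{D}_i(t,J_{i-1},J_i,x)] + \mathrm{Err}_1(t,x)$, where $\tilde{D}_i$ is the empirical-sampled analogue of $\partial \hat{y}/\partial w_i$ as in Section \ref{subsec:Gaussian}, and the quadratic Taylor remainder $\mathrm{Err}_1$ involves $N^{-1/2}\mathbf{R}_i\mathbf{R}_r$ times second derivatives that are bounded under Assumption \ref{Assump:Assumption_1}. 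Combining Theorem \ref{thm:2nd_order_MF} with the moment bounds on $R(\tilde{G},t)$ inherited from Theorem \ref{thm:well-posed} (linearity) and the sub-Gaussian moments of $\tilde{G}$ yields $\mathbf{E}\mathbb{E}_Z[|\mathrm{Err}_1|^{2m}]^{1/(2m)} = O(N^{-1/2})$, once one verifies uniform-in-$m$ moment bounds on $\mathbf{R}_i$ (e.g.\ by propagating the sub-Gaussianity hypothesis through the MF and second-order MF equations).

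I would then replace $\mathbf{R}_i$ by $R_i(\tilde{G},t,C_{i-1}(J_{i-1}),C_i(J_i))$ via Theorem \ref{thm:2nd_order_MF}, swap $\tilde{D}_i$ for the deterministic MF derivative $D_i(t,c_{i-1},c_i,x)=\partial \hat{y}(t,x)/\partial w_i(c_{i-1},c_i)$ using (\ref{eq:first_order_MF}) together with the Lipschitz/boundedness of $\varphi_i,\varphi_i'$, and replace the empirical $\mathbb{E}_J$ by $\mathbb{E}_C$ via a standard $L^2$ LLN (using independence of the samples $\{C_i(j_i)\}_{j_i}$ and the moment bounds on $R_i(\tilde{G},t)$). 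Each of these replacements costs $O(N^{-1/2})$ in $L^{2m}$. The net identity is $\sqrt{N}(\hat{\mathbf{y}}(t,x)-\hat{y}(t,x)) = \hat{G}[\tilde{G}](t,x) + \mathrm{Err}(t,x)$, with $\hat{G}[\tilde{G}](t,x) = \sum_{i=1}^{L} \mathbb{E}_C[R_i(\tilde{G},t,C_{i-1},C_i)\,D_i(t,C_{i-1},C_i,x)] + \tilde{G}^y(t,x)$ and $\|\mathrm{Err}\|_{L^{2m}(\mathbf{E}\mathbb{E}_Z)} = O(N^{-1/2})$.

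To conclude, I raise the identity to the $m$-th power, multiply by $h(X)$, and take $\mathbf{E}\mathbb{E}_Z$. Because $R$ is linear in its first argument (Theorem \ref{thm:well-posed}), each term $\mathbb{E}_C[R_i(\tilde{G},t,\cdot,\cdot)D_i(t,\cdot,\cdot,X)]$ is a linear functional of $\tilde{G}$ with a deterministic MF-derived kernel; hence $h(X)\hat{G}[\tilde{G}](t,X)^m$ expands into a binomial sum of terms matching the form in Definition \ref{def:conv-poly-moment} with total degree $D = m$ and test-function norms bounded uniformly in $t\leq T$ (from the boundedness of $D_i$ and the bounded linear-operator norm of $R_i(\cdot,t)$ on the relevant $L^2$ spaces). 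Since by definition $\hat{G}[\underline{G}]=\hat{G}$, Theorem \ref{thm:G_tilde} yields the replacement at rate $N^{-1/8}$, which combined with the $O(N^{-1/2})$ remainder and a crude binomial expansion of $(A+\mathrm{Err})^m$ gives the claimed bound $K_{T,m}N^{-1/8+o(1)}$.

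The main obstacle is the multilayer Taylor expansion: unlike the shallow case, $\hat{\mathbf{y}}$ depends nonlinearly through nested compositions on all $L$ matrices of weights, so controlling the linearization error requires a uniform-in-$t\leq T$ propagation of moment bounds on both forward signals $\mathbf{H}_i,\tilde{H}_i,H_i$ and backward signals $\partial\hat{\mathbf{y}}/\partial \mathbf{H}_i,\partial\tilde{y}/\partial \tilde{H}_i,\partial\hat{y}/\partial H_i$ through the whole depth, with moments high enough to absorb the $\sqrt{N}$ prefactor after the quadratic remainder is integrated over $J$. A related subtlety is that $R_i(\tilde{G},t,\cdot,\cdot)$ depends on $\tilde{G}$ through time-integrals of Volterra-type kernels obtained by solving (\ref{eq:ODE_2ndMF}), whereas Definition \ref{def:conv-poly-moment} is stated for single-time pairings $\langle f,\tilde{G}^{\alpha}\rangle_t$; one must either approximate these time integrals by Riemann sums (reducing to finitely many instances of the admissible pairings) or extend Theorem \ref{thm:G_tilde} to time-averaged test functionals, and verify that neither procedure degrades the $N^{-1/8}$ rate beyond the stated $o(1)$ in the exponent.
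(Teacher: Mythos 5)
Your decomposition and the overall route are the same as the paper's: split $\sqrt{N}(\hat{\mathbf{y}}-\hat{y})$ into $\sqrt{N}(\hat{\mathbf{y}}-\tilde{y})$ plus $\tilde{G}^y$, linearize the former via $\mathbf{R}$ (this is Lemma~\ref{lem:decomp-y}), substitute $R(\tilde{G},\cdot)$ for $\mathbf{R}$ via Theorem~\ref{thm:2nd_order_MF}, swap $\mathbb{E}_J$ for $\mathbb{E}_C$ (Lemma~\ref{lem:R-JC_NN}), then compare moments. The paper's error control at each stage is the $L^2$ estimate ${\bf O}_T(N^{-1/2})$ combined with the elementary bound $|a^m-b^m|\le m\max(|a|,|b|)^{m-1}|a-b|$ and Cauchy--Schwarz; you phrase this as a binomial expansion of $(A+\mathrm{Err})^m$, which is equivalent, though you implicitly ask for $L^{2m}$ control of the error where the paper only needs $L^2$ together with $L^{2m-2}$ a priori bounds on $U,V$.

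The genuine gap is the final step: passing from $R(\tilde{G},\cdot)$ to $R(\underline{G},\cdot)$ inside the moment $\mathbf{E}\mathbb{E}_Z[h(X)V(\cdot,Z)^m]$. You correctly flag the difficulty (that $R(\tilde{G},t)$ is a Volterra-type time integral of $\tilde{G}$, while Theorem~\ref{thm:G_tilde} gives only single-time pairings) and propose Riemann-sum approximation or an extension of Theorem~\ref{thm:G_tilde} to time-averaged test functionals. This is too vague and does not reproduce the mechanism the paper actually uses, which is stated in the main text precisely because it is the decisive technical ingredient: Definition~\ref{def:conv-joint-moment} and Proposition~\ref{prop:conv-G-implies-R}. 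The paper's resolution is not a direct time discretization; instead it exploits the operator structure established in Theorem~\ref{thm:well-posed}/\ref{thm:exist-R}: $R(\cdot,t)$ is approximated by bounded linear operators $\frak{L}_t^B$ with $\|\frak{L}_t^B\|\le\exp(K_T B)$ and approximation error $\exp(-c\epsilon B^2)$, and the adjoint $(\frak{L}_t^B)^*$ converts each $\langle h_{j'},\tilde{R}^B\rangle$ pairing into a $\langle(\frak{L}_t^B)^*h_{j'},\tilde{G}\rangle$ pairing to which the $G$-polynomial estimate applies, after which $B=c_0\sqrt{\log N}$ balances the two error sources. Without this (or an equivalent quantitative device), your argument cannot produce the $N^{-1/8+o(1)}$ rate: a naive Riemann discretization would need to quantify how fast the mesh can shrink relative to $N$ and show the mesh-dependent constants in the $G$-polynomial bound stay sub-polynomial, which is exactly the bookkeeping the truncation scheme is designed to replace. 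Beyond this, the uniform-in-$t$, all-moment bounds on $\mathbf{R}$ you invoke are nontrivial in the multilayer setting (the paper spends Lemmas~\ref{lem:trunc-bound-w} and \ref{lem:a-priori-R_bold} on them), but since those are established elsewhere in the paper they are ingredients you may cite rather than a gap in this proof.
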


We note in passing the technicality we devise to prove Theorem \ref{thm:CLT}.
As noted, Theorems \ref{thm:2nd_order_MF} and \ref{thm:G_tilde}
separate treatments of $R$ and $\tilde{G}$, but recall we are interested
in $R(\tilde{G},t)$. This necessitates a way to describe them jointly.
We do so via the following notion of convergence, extending Definition
\ref{def:conv-poly-moment}.
\begin{defn}
\label{def:conv-joint-moment}Suppose we are given a sequence (in
$N$) of functions $\tilde{G}_{i,N}:\mathbb{T}\times\Omega_{i-1}\times\Omega_{i}\times\mathbb{X}\to\mathbb{W}_{i}$
for $i\in\left[L\right]$ such that $\tilde{G}_{N}=\{\tilde{G}_{i,N}\}_{i\in\left[L\right]}$
converges $G$-polynomially in moment to $G$. Given further $\tilde{R}_{i,N}:\mathbb{T}\times\Omega_{i-1}\times\Omega_{i}\times\mathbb{X}\to\mathbb{W}_{i}$
defined jointly with $\tilde{G}_{i,N}$ and $\tilde{R}_{N}=\{\tilde{R}_{i,N}\}_{i\in\left[L\right]}$,
we say that $(\tilde{R}_{N},\tilde{G}_{N})$ \textit{converges $G$-polynomially
in moment and $R$-linearly in moment} (as $N\to\infty$) to $\left(R,G\right)$
if for any $f_{\ell}$, $h_{\ell}$ that are $\mathbb{T}\times\Omega_{i(\ell)-1}\times\Omega_{i(\ell)}\times\mathbb{X}\to\mathbb{W}_{i(\ell)}$
mappings and continuous in time and any integers $\alpha_{\ell},\beta_{\ell}\geq1$,
\begin{align*}
 & \sup_{t\left(\ell\right)\leq T\;\forall\ell}\Big|\mathbb{E}\Big[\sideset{}{_{\ell}}\prod\langle f_{\ell},\tilde{G}_{i(\ell),N}^{\alpha_{\ell}}\rangle_{t\left(\ell\right)}^{\beta_{\ell}}\sideset{}{_{\ell}}\prod\langle h_{\ell},\tilde{R}_{i(\ell),n}\rangle_{t\left(\ell\right)}^{\beta_{\ell}}\Big]-\mathbb{E}\Big[\sideset{}{_{\ell}}\prod\langle f_{\ell},G_{i(\ell)}^{\alpha_{\ell}}\rangle_{t\left(\ell\right)}^{\beta_{\ell}}\sideset{}{_{\ell}}\prod\langle h_{\ell},R_{i(\ell)}\rangle_{t\left(\ell\right)}^{\beta_{\ell}}\Big]\Big|\\
 & =O_{D}\big(\sup_{t\leq T}\max_{\ell}\|f_{\ell}\|_{t}^{D},\;\sup_{t\leq T}\max_{\ell}\|h_{\ell}\|_{t}^{D}\big)N^{-1/8+o(1)},
\end{align*}
for $D=\sum_{\ell}\beta_{\ell}\alpha_{\ell}+\sum_{\ell}\beta_{\ell}$.
\end{defn}

\begin{prop}
\label{prop:conv-G-implies-R}Assume that $\tilde{G}$ converges $G$-polynomially
in moment to $G=\left\{ G^{y},\;G_{i}^{w},\;i\in\left[L\right]\right\} $.
Then $(R(\tilde{G},\cdot),\tilde{G})$ converges $G$-polynomially
in moment and $R$-linearly in moment to $(R(G,\cdot),G)$.
\end{prop}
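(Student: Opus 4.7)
The plan is to exploit the linearity of $R(G,\cdot)$ established in Theorem \ref{thm:well-posed}, which reduces the joint convergence to the convergence already assumed for $\tilde{G}$. Since $R(G,t)$ is a continuous linear functional of $G$, each factor $\langle h_\ell, R_{i(\ell)}(\tilde{G}_N,t(\ell))\rangle_{t(\ell)}$ is itself a linear functional of $\tilde{G}_N$. Substituting this representation into the product on the left-hand side of the inequality in Definition \ref{def:conv-joint-moment}, one obtains a (finite sum of) products of linear inner products against $\tilde{G}_N$, to which the $G$-polynomial moment convergence of $\tilde{G}_N$ can be applied term-by-term.

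To make the linearity concrete, I would rewrite the ODE system (\ref{eq:ODE_2ndMF}) as a Volterra integral equation of the form
\[
R(G,t) = \Phi(G,t) + \int_0^t \mathcal{K}(t,s)\, R(G,s)\, ds,
\]
where $\Phi(G,t)$ collects the terms in (\ref{eq:ODE_2ndMF}) depending directly on $G^y$ and $G^w_i$ (linear in $G$), and $\mathcal{K}(t,s)$ is the linear integral operator built from the MF quantities $\hat{y}$, $H_i$, $w_i$, $\partial\hat y/\partial w_i$ and the second-derivative kernels. Under Assumption \ref{Assump:Assumption_1} the MF quantities are $K_T$-bounded on $[0,T]$, so the Neumann series $R(G,t)=\sum_{k\geq 0}\mathcal{K}^k\Phi(G,\cdot)(t)$ converges in $L^2(P_{i-1}\times P_i)$ uniformly on $[0,T]$. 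This yields an explicit linear representation
\[
\langle h_\ell, R_{i(\ell)}(\tilde{G}_N,t(\ell))\rangle_{t(\ell)} = \int_0^{t(\ell)}\!\!\Big(\mathbb{E}_Z[\psi_\ell(t(\ell),s,X)\tilde{G}^y_N(s,X)]+\sideset{}{_{r=1}^L}\sum\langle \varphi_\ell^{(r)}(t(\ell),s),\tilde{G}^w_{r,N}(s)\rangle_s\Big)ds,
\]
where the kernel functions $\psi_\ell$ and $\varphi_\ell^{(r)}$ are built from $h_\ell$ and the MF quantities only, and by the Neumann-series bound satisfy $\sup_{s\leq t\leq T}\|\psi_\ell\|_s+\sum_r\|\varphi_\ell^{(r)}\|_s\leq K_T\|h_\ell\|_T$. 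An identical identity holds for $\langle h_\ell, R_{i(\ell)}(G,t(\ell))\rangle$ with $\tilde{G}_N$ replaced by $G$.

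Inserting these two identities into the joint product and using Fubini, I would obtain
\[
\mathbb{E}\Big[\sideset{}{_\ell}\prod\langle f_\ell,\tilde{G}^{\alpha_\ell}_{i(\ell),N}\rangle^{\beta_\ell}_{t(\ell)}\sideset{}{_\ell}\prod\langle h_\ell,\tilde{R}_{i(\ell),N}\rangle^{\beta_\ell}_{t(\ell)}\Big]=\idotsint\mathbb{E}\Big[\sideset{}{_\ell}\prod\langle f_\ell,\tilde{G}^{\alpha_\ell}_{i(\ell),N}\rangle^{\beta_\ell}_{t(\ell)}\sideset{}{_{\ell,k}}\prod\langle \tilde f_{\ell,k},\tilde{G}_{\cdot,N}\rangle^{}_{s_{\ell,k}}\Big]\,ds,
\]
and analogously for the $(R,G)$-side. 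The integrand is exactly of the product-of-linear-inner-products form governed by the $G$-polynomial moment convergence of $\tilde{G}_N$: invoking Definition \ref{def:conv-poly-moment} with the merged family of test functions $\{f_\ell\}\cup\{\tilde f_{\ell,k}\}$ and total exponent $D'=\sum_\ell \alpha_\ell\beta_\ell+\sum_\ell \beta_\ell$ produces the bound $N^{-1/8}$ pointwise in $s$, with constants controlled by the supremum of $\|f_\ell\|_t^{D'}$ and $\|\tilde f_{\ell,k}\|_t^{D'}$. Integrating in $s$ over $[0,T]^{\sum_\ell\beta_\ell}$ costs only a $T$-dependent factor, and estimating $\|\tilde f_{\ell,k}\|_t\leq K_T\|h_\ell\|_T$ via the Neumann-series bound converts the result to the form required in Definition \ref{def:conv-joint-moment}.

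The main obstacle I anticipate is the bookkeeping for the second-derivative term $\partial^2\hat y/\partial w_r(a,b)\partial w_i(c_{i-1},c_i)$, which as the remark following (\ref{eq:ODE_2ndMF}) warns is not a naive partial derivative but a sum of several contributions coming from the various appearances of the weight in the MF evaluation tree. One must verify that each of these contributions, when unfolded through the Neumann expansion, produces an effective test function $\psi_\ell$ or $\varphi_\ell^{(r)}$ whose $\|\cdot\|_t$-norm is indeed controlled by $K_T\|h_\ell\|_T$, uniformly in the iteration depth $k$, so that the series converges and the resulting bound is polynomial in $T$ and linear in $h_\ell$. A second minor point is harmonizing the index spaces: $G^y$ is indexed by $\mathbb{T}\times\mathbb{X}$ whereas $G^w_i$ lives on $\mathbb{T}\times\Omega_{i-1}\times\Omega_i\times\mathbb{X}$, but this is handled by the convention adopted after Definition \ref{def:conv-poly-moment} for the $i=1$ case and extends routinely to the $G^y$ coordinate.
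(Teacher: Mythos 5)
The overall strategy you describe---use the linearity of $R(G,\cdot)$ in $G$ to move the operator onto the test functions and reduce everything to the $G$-polynomial moment convergence of $\tilde{G}$---is exactly the approach the paper takes. However, your execution has a genuine gap at the central step: you claim that because "the MF quantities are $K_T$-bounded on $[0,T]$," the Neumann series $R(G,t)=\sum_{k\geq 0}\mathcal{K}^k\Phi(G,\cdot)(t)$ converges in $L^2(P_{i-1}\times P_i)$ and gives kernels $\psi_\ell,\varphi_\ell^{(r)}$ with $\|\cdot\|_t$-norm bounded by $K_T\|h_\ell\|_T$. This is false. The linear operator appearing in \eqref{eq:ODE_2ndMF}--\eqref{eq:ODE_2ndMF-alt} is \emph{not} bounded on $L^2(P_{i-1}\times P_i)$: the second-derivative kernels and cross-layer terms multiply $R_i$ by the weights $w_{i}(t,c_{i-1},c_i)$, $w_{i+1}(t,c_i,\cdot)$, which under Assumption \ref{Assump:Assumption_1} are only sub-Gaussian, not bounded. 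Concretely, the proof of Theorem \ref{thm:well-posed} produces a pointwise bound $|\frak{A}_t(R)(c_{i-1},c_i)|\leq K_T|R|_t\,|w_{i+1}(c_i)|_t+K_T|R|_t\,|w_i(c_i)|_t|w_{i+1}(c_i)|_t+\cdots$, which does not translate into a uniform operator bound. The appendix explicitly flags this: "the weights $w_i(t,c_{i-1},c_i)$ could be unbounded, in which case the linear operator $R\mapsto\partial_t R$ is unbounded," and resolves it with a delicate truncation scheme.

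The paper's proof of Proposition \ref{prop:conv-G-implies-R} uses precisely this truncation machinery: Theorem \ref{thm:exist-R} provides a family of processes $\tilde{R}^B(t)=\frak{L}_t^B(\tilde{G})$ where $\frak{L}_t^B$ is a \emph{bounded} linear operator (hence the adjoint trick you envision actually works), but with operator norm $\exp(K_TB)$ growing in $B$, together with the approximation bound $\|\tilde R^B-\tilde R\|_{T,2}\leq\|\tilde G\|_{T,2+\epsilon}\exp(-c\epsilon B^2)$. Testing the truncated process against $\tilde G$ via the adjoint produces an error $\exp(K_TB)N^{-1/8}$, and the truncation error is $\exp(-c\epsilon B^2)$; choosing $B=c_0\sqrt{\log N}$ balances these into $N^{-1/8+o(1)}$. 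Note the $o(1)$ loss: your untruncated argument, if it were valid, would give the cleaner rate $N^{-1/8}$. That you arrive at a strictly better rate than the statement of Definition \ref{def:conv-joint-moment} calls for is a signal that the Neumann-series boundedness claim cannot be made unconditionally. You do anticipate an issue about controlling the kernels "uniformly in the iteration depth $k$," but that concern is downstream of a more basic failure: the individual iterate $\mathcal{K}\Phi$ is already not in $L^2$ with a norm bounded by $K_T\|h\|$, so the series never gets off the ground without first truncating the weight magnitude.
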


\subsection{A heuristic derivation and discussion\label{subsec:A-heuristic-derivation}}

\paragraph*{A heuristic argument.}

We wish to give a heuristic to derive the relation between ${\bf R}_{i}$
and $R_{i}(\tilde{G},t)$, i.e. Theorem \ref{thm:2nd_order_MF}. Suppose
we look at the $i$-th layer:
\begin{align}
 & \partial_{t}{\bf R}_{i}\left(t,j_{i-1},j_{i}\right)\nonumber \\
 & =\sqrt{N}\partial_{t}{\bf w}_{i}\left(t,j_{i-1},j_{i}\right)-\sqrt{N}\partial_{t}w_{i}\left(t,C_{i-1}\left(j_{i-1}\right),C_{i}\left(j_{i}\right)\right)\nonumber \\
 & =\sqrt{N}\mathbb{E}_{Z}\left[\partial_{2}{\cal L}\left(Y,\hat{y}\left(t,X\right)\right)\frac{\partial\hat{y}\left(t,X\right)}{\partial w_{i}\left(C_{i-1}\left(j_{i-1}\right),C_{i}\left(j_{i}\right)\right)}\right]-\sqrt{N}\mathbb{E}_{Z}\left[\partial_{2}{\cal L}\left(Y,\hat{\mathbf{y}}\left(t,X\right)\right)\frac{\partial\hat{{\bf y}}\left(t,X\right)}{\partial{\bf w}_{i}\left(j_{i-1},j_{i}\right)}\right]\nonumber \\
 & =\mathbb{E}_{Z}\bigg[\underbrace{\sqrt{N}\Big(\partial_{2}{\cal L}\left(Y,\hat{y}\left(t,X\right)\right)-\partial_{2}{\cal L}\left(Y,\hat{\mathbf{y}}\left(t,X\right)\right)\Big)}_{A_{1}}\cdot\frac{\partial\hat{y}\left(t,X\right)}{\partial w_{i}\left(C_{i-1}\left(j_{i-1}\right),C_{i}\left(j_{i}\right)\right)}\bigg]\nonumber \\
 & \quad+\mathbb{E}_{Z}\bigg[\partial_{2}{\cal L}\left(Y,\hat{\mathbf{y}}\left(t,X\right)\right)\cdot\sqrt{N}\bigg(\frac{\partial\hat{y}\left(t,X\right)}{\partial w_{i}\left(C_{i-1}\left(j_{i-1}\right),C_{i}\left(j_{i}\right)\right)}-\frac{\partial\hat{{\bf y}}\left(t,X\right)}{\partial{\bf w}_{i}\left(j_{i-1},j_{i}\right)}\bigg)\bigg].\label{eq:heuristic_partial_i_R}
\end{align}
Let us zoom into $A_{1}$, with some care not to eliminate the fluctuation
of interest:
\begin{align*}
A_{1} & \approx\sqrt{N}\partial_{2}^{2}{\cal L}\left(Y,\hat{y}\left(t,X\right)\right)\left(\hat{y}\left(t,X\right)-\hat{\mathbf{y}}\left(t,X\right)\right)\\
 & =\partial_{2}^{2}{\cal L}\left(Y,\hat{y}\left(t,X\right)\right)\left(-\tilde{G}^{y}\left(t,X\right)+\sqrt{N}\left(\tilde{y}\left(t,X\right)-\hat{\mathbf{y}}\left(t,X\right)\right)\right).
\end{align*}
Now observe that $\tilde{y}\left(t,X\right)-\hat{\mathbf{y}}\left(t,X\right)$
is a difference between the sampled neurons and the neural network,
and hence one should be able to express it in terms of ${\bf R}$:
\begin{align*}
 & \sqrt{N}\left(\tilde{y}\left(t,X\right)-\hat{\mathbf{y}}\left(t,X\right)\right)\\
 & \approx\sum_{r=1}^{L}\mathbb{E}_{J}\bigg[\frac{\partial\tilde{y}\left(t,X\right)}{\partial w_{r}\left(C_{r-1}\left(J_{r-1}\right),C_{r}\left(J_{r}\right)\right)}\cdot\sqrt{N}\left(w_{r}\left(t,C_{r-1}\left(J_{r-1}\right),C_{r}\left(J_{r}\right)\right)-{\bf w}_{r}\left(t,J_{r-1},J_{r}\right)\right)\bigg]\\
 & =-\sum_{r=1}^{L}\mathbb{E}_{J}\bigg[\frac{\partial\tilde{y}\left(t,X\right)}{\partial w_{r}\left(C_{r-1}\left(J_{r-1}\right),C_{r}\left(J_{r}\right)\right)}\cdot{\bf R}_{r}\left(t,J_{r-1},J_{r}\right)\bigg].
\end{align*}
Furthermore in the limit $N\to\infty$, we expect from the LLN, for
a test function $f$,
\begin{equation}
\mathbb{E}_{J}\left[f\left(C_{r-1}\left(J_{r-1}\right),C_{r}\left(J_{r}\right)\right)\right]\approx\mathbb{E}_{C}\left[f\left(C_{r-1},C_{r}\right)\right].\label{eq:heuristic_sketchy}
\end{equation}
As such, one can identify the term that involves $A_{1}$ in Eq. (\ref{eq:heuristic_partial_i_R})
for $\partial_{t}{\bf R}_{i}\left(t,j_{i-1},j_{i}\right)$ with the
first term in Eq. (\ref{eq:ODE_2ndMF}) for $\partial_{t}R_{i}\left(G,t,c_{i-1},c_{i}\right)$.
One can derive similarly for the rest of the terms. With these, one
arrives at $\partial_{t}{\bf R}_{i}\left(t,j_{i-1},j_{i}\right)\approx\partial_{t}R_{i}(\tilde{G},t,C_{i-1}\left(j_{i-1}\right),C_{i}\left(j_{i}\right))$.
Recalling that ${\bf R}_{i}\left(0,\cdot,\cdot\right)=0=R_{i}(\cdot,0,\cdot,\cdot)$,
one arrives at the conclusion of Theorem \ref{thm:2nd_order_MF}.

We make two comments. Firstly, the sampling of neurons makes transparent
the derivation of $R$ as the limit of $\mathbf{R}$: it allows to
compare one-to-one a ``neuron'' of the former to a neuron of the
latter. This demonstrates an advantage of the neuronal embedding framework,
which easily accommodates the sampling of neurons in the multilayer
setup. We also see that the only involvement of $\tilde{G}$ in this
derivation is via its definition.

Secondly, $R$ arises (by replacing $\mathbf{R}$) at steps that invoke
(\ref{eq:heuristic_sketchy}). This LLN-type nature signifies MF interactions
among neurons at the fluctuation level, and hence the name second-order
MF limit. In contrast, the component $\tilde{G}$ displays a Gaussian
CLT-type nature, evident from its definition.

\paragraph*{Technical difficulties.}

So far we have seen $R$ captures the difference between the neural
network and the sampled neurons. Recall the other component in the
fluctuation is the process $\tilde{G}$, which captures the $\sqrt{N}$-scaled
difference between the sampled neurons and the MF limit. Taking into
account the whole system of fluctuation, we note a major technical
subtlety: the aforementioned two differences captured by $R$ and
$\tilde{G}$ share the same source of randomness $\left\{ C_{i}\left(j_{i}\right):\;j_{i}\in\left[N_{i}\right],\;i\in\left[L\right]\right\} $.

We mention two particular complications that arise from this subtlety.
The first complication is how one should define the process $R$ in
relation with $\tilde{G}$, given that $R$ is meant to be the infinite-width
limit of $\mathbf{R}$ while $\mathbf{R}$ and $\tilde{G}$ are stochastically
coupled. Our solution is to let $t\mapsto R(G,t)$ defined for any
$G\in{\cal G}$, not restricted to only $\tilde{G}$. This streamlines
the definition of the second-order MF limit and allows to separate
our treatments of $R$ and $\tilde{G}$, as evident from Theorems
\ref{thm:2nd_order_MF} and \ref{thm:G_tilde}.

The second complication lies with Eq. (\ref{eq:heuristic_sketchy}).
In fact, to arrive at the desired conclusion, we require the function
$f$ to depend on $\tilde{G}$. This is because the main object of
interest is $R(\tilde{G},t)$, even though we have treated $R$ and
$\tilde{G}$ separately. In a nutshell, $f$ shares randomness with
$C_{r}\left(j_{r}\right)$ and $C_{r-1}\left(j_{r-1}\right)$, $j_{r}\in\left[N_{r}\right]$,
$j_{r-1}\in\left[N_{r-1}\right]$. The random variable on the left-hand
side of Eq. (\ref{eq:heuristic_sketchy}) is therefore complex, and
it is highly questionable whether this equation should hold. The analysis
becomes delicate; without taking into account this shared randomness,
the derivation would be a mere heuristic. As we present in the proof,
we verify this equation for a relevant set of functions $f$.

\section{Asymptotic variance of the output function\label{sec:Long-time-asymptotic-variance}}

We study the following width-scaled asymptotic variance quantity:
\[
V^{*}\left(t\right)=\lim_{N\to\infty}\mathbf{E}\mathbb{E}_{Z}\big[\big|\sqrt{N}(\hat{{\bf y}}(t,X)-\hat{y}(t,X))\big|^{2}\big]=\lim_{N\to\infty}\mathbf{E}\mathbb{E}_{Z}\big[\big|\sqrt{N}(\hat{{\bf y}}(t,X)-{\bf E}[\hat{{\bf y}}(t,X)])\big|^{2}\big].
\]
The second equality holds and the limits in $N$ exist by Theorems
\ref{thm:2nd_order_MF} and \ref{thm:CLT}. We would like to understand
$V^{*}(t)$ in the long-time horizon, and specifically in a situation
of considerable interest where the MF limit converges to a global
optimum as $t\to\infty$. To that end, we assume the following.
\begin{assumption}
\label{Assump:Assumption_2}We assume $\mathbb{E}_{Z}\left[\partial_{2}^{2}{\cal L}\left(Y,f\left(X\right)\right)\middle|X\right]=K$
a positive constant a.s. for any $f$ in which $\mathbb{E}_{Z}\left[\partial_{2}{\cal L}\left(Y,f\left(X\right)\right)\middle|X\right]=0$
a.s.
\end{assumption}

Note that convexity of the loss is not required. For example, the
assumption holds for ${\cal L}\left(y,y'\right)=\ell\left(y-y'\right)$
for any quasi-convex smooth function $\ell$ when $Y=y(X)$ is a deterministic
function of $X$. It is driven by our interest in what happens at
global convergence: \cite{nguyen2020rigorous,pham2020note} show convergence
to a global optimum is attainable (with suitable initialization strategies)
if either (case 1) ${\cal L}$ is convex in the second variable, or
(case 2) $\partial_{2}{\cal L}\left(y,y'\right)=0$ implies ${\cal L}\left(y,y'\right)=0$
(recalling ${\cal L}\geq0$) and $Y=y\left(X\right)$. In both cases,
one can find reasonable loss functions that satisfy Assumption \ref{Assump:Assumption_2}.
We also note in both cases, any $f$ with $\mathbb{E}_{Z}\left[\partial_{2}{\cal L}\left(Y,f\left(X\right)\right)\middle|X\right]=0$
a.s. is a global optimizer.

Our first result indicates that even if one initializes the network
at a global optimum and hence there is no evolution with time at the
MF limit level, GD training still helps by reducing the output variance
$V^{*}\left(t\right)$ with time at the fluctuation level, in the
large-width regime.
\begin{thm}
\label{thm:variance-global-opt-init}Suppose at initialization $\mathbb{E}_{Z}\left[\partial_{2}{\cal L}\left(Y,\hat{y}\left(0,X\right)\right)\middle|X\right]=0$
a.s. Under Assumptions \ref{Assump:Assumption_1} and \ref{Assump:Assumption_2},
$V^{*}\left(t\right)$ is non-increasing and $V^{*}\left(t\right)\to0$
as $t\to\infty$.
\end{thm}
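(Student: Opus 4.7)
The plan is to reduce the evolution of $V^{*}(t)$ to a linear gradient flow driven by a positive semi-definite kernel operator, obtain monotonicity from self-adjointness, and then show that the limiting Gaussian field lies almost surely in the range of that operator. First, observe that the MF limit is frozen at the global optimum: the hypothesis $\mathbb{E}_{Z}[\partial_{2}\mathcal{L}(Y,\hat{y}(0,X))\mid X]=0$ a.s.\ makes $w_{i}(0,\cdot,\cdot)$ an equilibrium of the MF ODE, so by uniqueness (standard under Assumption~\ref{Assump:Assumption_1}) we have $w_{i}(t,\cdot,\cdot)\equiv w_{i}(0,\cdot,\cdot)$, $\hat{y}(t,\cdot)\equiv\hat{y}(0,\cdot)$, and each feature map $\psi_{r}(x,c_{r-1},c_{r}):=\partial\hat{y}(x)/\partial w_{r}(c_{r-1},c_{r})$ as well as the Gaussian process $\underline{G}$ of Theorem~\ref{thm:G_tilde} become stationary in $t$. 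In~(\ref{eq:ODE_2ndMF}) the second and third lines vanish (each carries $\partial_{2}\mathcal{L}$ whose conditional expectation given $X$ is zero for all times), and the first line simplifies via Assumption~\ref{Assump:Assumption_2} to $\partial_{t}R_{i}(\underline{G},t,c_{i-1},c_{i})=-K\,\mathbb{E}_{X}[\psi_{i}(X,c_{i-1},c_{i})\hat{G}(t,X)]$. Plugging this into the expression for $\hat{G}$ in Theorem~\ref{thm:CLT} and differentiating in time yields the linear gradient flow
\[
\partial_{t}\hat{G}(t,\cdot)=-K\,\mathcal{T}\hat{G}(t,\cdot),\qquad \mathcal{T}\phi(x):=\mathbb{E}_{X'}[\mathcal{K}(x,X')\phi(X')],
\]
with $\mathcal{K}(x,x'):=\sum_{r=1}^{L}\mathbb{E}_{C}[\psi_{r}(x,C_{r-1},C_{r})\psi_{r}(x',C_{r-1},C_{r})]$ a PSD kernel.

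Under Assumption~\ref{Assump:Assumption_1} each $\psi_{r}$ is bounded, so $\mathcal{T}$ is a bounded, self-adjoint, Hilbert--Schmidt (hence compact) operator on $L^{2}(\mathrm{Law}(X))$. Then $\hat{G}(t,\cdot)=e^{-Kt\mathcal{T}}\underline{G}^{y}$ and
\[
\partial_{t}V^{*}(t)=-2K\,\mathbf{E}\langle\hat{G}(t,\cdot),\mathcal{T}\hat{G}(t,\cdot)\rangle_{L^{2}(\mathrm{Law}(X))}\leq 0,
\]
which is the monotonicity statement. The spectral theorem gives $e^{-Kt\mathcal{T}}\to P_{\ker\mathcal{T}}$ strongly as $t\to\infty$; combined with dominated convergence (using $\mathbf{E}\|\underline{G}^{y}\|_{L^{2}}^{2}<\infty$, immediate from the variance formula preceding Theorem~\ref{thm:G_tilde} and the boundedness of the MF quantities), we obtain $V^{*}(t)\to \mathbf{E}\|P_{\ker\mathcal{T}}\underline{G}^{y}\|_{L^{2}}^{2}$.

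The remaining and hardest step is to show $P_{\ker\mathcal{T}}\underline{G}^{y}=0$ a.s. Since $\underline{G}^{y}$ is a centered Gaussian field and $L^{2}(\mathrm{Law}(X))$ is separable, expanding in a countable orthonormal basis of $\ker\mathcal{T}$ reduces this to proving $\mathbf{V}(\langle\phi,\underline{G}^{y}\rangle_{L^{2}})=0$ for each fixed $\phi\in\ker\mathcal{T}$, i.e., each $\phi$ with $\mathbb{E}_{X}[\phi(X)\psi_{r}(X,c_{r-1},c_{r})]=0$ for all $r$ and $P_{r-1}\otimes P_{r}$-a.e.\ $(c_{r-1},c_{r})$. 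The recursive variance formula preceding Theorem~\ref{thm:G_tilde} (and its covariance counterpart in the appendix) lets us write $\underline{G}^{y}=\sum_{i=2}^{L}H^{(i)}$ as an \emph{independent} sum, where $H^{(i)}$ encodes the CLT fluctuation from the sampling of $\{C_{i-1}(j)\}$ propagated forward through layers $i,\ldots,L$; the independence follows from the independence of the per-layer samplings. The main delicate computation, which I expect to be the chief obstacle, is to unroll the backward recursion for $\partial\hat{y}/\partial H_{i}(c_{i})$ and establish
\[
\mathbf{C}(H^{(i)}(x),H^{(i)}(x'))=\mathbb{C}_{C_{i-1}}\bigl[\Theta_{i}(C_{i-1},x),\,\Theta_{i}(C_{i-1},x')\bigr],\qquad \Theta_{i}(c_{i-1},x):=\mathbb{E}_{C_{i}}[w_{i}(c_{i-1},C_{i})\psi_{i}(x,c_{i-1},C_{i})].
\]
Given this identity, Fubini gives
\[
\mathbf{V}(\langle\phi,\underline{G}^{y}\rangle)=\sum_{i=2}^{L}\mathbb{V}_{C_{i-1}}\Bigl[\mathbb{E}_{C_{i}}\bigl[w_{i}(C_{i-1},C_{i})\,\mathbb{E}_{X}[\phi(X)\psi_{i}(X,C_{i-1},C_{i})]\bigr]\Bigr],
\]
and the kernel condition on $\phi$ makes each inner $\mathbb{E}_{X}[\cdot]$ vanish a.s., forcing $V^{*}(\infty)=0$.
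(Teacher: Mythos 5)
Your proposal follows essentially the same route as the paper's proof: freeze the MF limit at the stationary point, reduce $\partial_t\hat{G}$ to a linear gradient flow under a self-adjoint compact PSD kernel operator (the paper's operator $\frak{A}$), and then establish that $\underline{G}^{y}$ lies a.s.\ in $\ker^\perp$ via exactly the layer-wise independent decomposition $\underline{G}^{y}=\sum_{i\ge 2}H^{(i)}$ with covariance $\mathbb{C}_{C_{i-1}}[\Theta_i(C_{i-1},x),\Theta_i(C_{i-1},x')]$, which is the content of the paper's Proposition~\ref{prop:variance-decomp}. One minor slip: each $\psi_r$ is not pointwise bounded (for $r<L$ it carries products of weights $w_{r+1},\dots,w_L$, which are only sub-Gaussian), but $\mathcal{K}(x,x')=\sum_r\mathbb{E}_C[\psi_r(x,\cdot)\psi_r(x',\cdot)]$ is still uniformly bounded in $(x,x')$ by Lemma~\ref{lem:MF_a_priori}, so the Hilbert--Schmidt/compactness conclusion you need does go through.
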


This variance reduction effect continues to hold in the long-time
horizon if, instead of initializing at a global optimum, we assume
to have global convergence at a sufficiently fast rate.
\begin{thm}
\label{thm:variance-global-opt-fast}Assume $\int_{0}^{\infty}t^{2+\delta}\mathbb{E}_{X}\left[\left|\mathbb{E}_{Z}[\partial_{2}{\cal L}(Y,\hat{y}(t,X))|X]\right|^{2}\right]^{1/2}dt<\infty$
for some $\delta>0$. Under Assumptions \ref{Assump:Assumption_1}
and \ref{Assump:Assumption_2}, $V^{*}\left(t\right)\to0$ as $t\to\infty$.
\end{thm}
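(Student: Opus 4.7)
My plan is to view Theorem \ref{thm:variance-global-opt-fast} as a perturbation of Theorem \ref{thm:variance-global-opt-init}: if the MF limit converges fast enough to a global optimum, the fluctuation dynamics inherit the favorable variance-reduction behavior, up to an error that is integrable in time. Writing $\mathcal{E}(t, X) = \mathbb{E}_Z[\partial_2\mathcal{L}(Y, \hat{y}(t, X))|X]$, the MF ODE gives $\|\partial_t w_i(t)\|_{L^2(P_{i-1}\times P_i)} \le C \|\mathcal{E}(t, \cdot)\|_{L^2(X)}$. The hypothesis forces $\int_0^\infty \|\mathcal{E}(t)\|_{L^2} dt < \infty$ with significant slack, so each $w_i(t) \to w_i^\infty$ in $L^2$, and thus $\hat{y}(t, \cdot) \to \hat{y}^\infty$ in $L^2(X)$. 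By Lipschitzness of $\partial_2\mathcal{L}$, $\hat{y}^\infty$ is a global optimum, and Assumption \ref{Assump:Assumption_2} together with Lipschitzness of $\partial_2^2\mathcal{L}$ gives $\mathbb{E}_Z[\partial_2^2\mathcal{L}(Y, \hat{y}(t, X))|X] = K + O(\|\mathcal{E}(t)\|_{L^2})$ in $L^2(X)$.

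Next I would derive the evolution of $V^*(t) = \mathbf{E}\mathbb{E}_X[|\hat{G}(t,X)|^2]$ from Theorem \ref{thm:CLT} and the ODE (\ref{eq:ODE_2ndMF}) for $R_i$. The leading term, coming from the first line of (\ref{eq:ODE_2ndMF}) (which already contains the combination equal to $\hat{G}(t, X)$), yields a dissipation
\[
D(t) = -2 \sum_{i=1}^L \mathbf{E}\mathbb{E}_C\Bigl[\Bigl(\mathbb{E}_X\bigl[\mathbb{E}_Z[\partial_2^2\mathcal{L}|X]\, \tfrac{\partial \hat{y}(t,X)}{\partial w_i}\, \hat{G}(t,X)\bigr]\Bigr)^2\Bigr] \le 0,
\]
which in view of the first step approaches $-2K\sum_i \mathbf{E}\mathbb{E}_C\bigl[\bigl|\mathbb{E}_X[\tfrac{\partial\hat{y}^\infty}{\partial w_i} \hat{G}^\infty]\bigr|^2\bigr]$ as $t \to \infty$. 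The remaining contributions to $\tfrac{d}{dt} V^*(t)$ — namely the second and third lines of (\ref{eq:ODE_2ndMF}), and the pieces coming from $\partial_t \underline{G}^y$ and $\partial_t(\partial \hat{y}/\partial w_i)$ — each carry a factor of $\partial_2\mathcal{L}$ or $\partial_t w$, hence are dominated by $C(1 + V^*(t))^{1/2} t^{q} \|\mathcal{E}(t)\|_{L^2}$ for some fixed exponent $q$ absorbing the polynomial a priori growth of $\|R_i(t)\|$. Thus $\tfrac{d}{dt} V^*(t) \le D(t) + \varepsilon(t)$ with $\int_0^\infty \varepsilon(t) dt < \infty$ by the weighted hypothesis.

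A Gronwall argument then yields $\sup_t V^*(t) < \infty$, and integrating gives $\int_0^\infty |D(t)| dt < \infty$. Combined with Lipschitz-in-$t$ regularity of $D$ inherited from the MF quantities, Barbalat's lemma forces $D(t) \to 0$. To upgrade this to $V^*(t) \to 0$, I would reuse the span/density argument underlying Theorem \ref{thm:variance-global-opt-init}: $\underline{G}^y(\infty, \cdot)$ lies in the $L^2(X)$-closure of $\mathrm{span}\{\partial \hat{y}^\infty/\partial w_i(c_{i-1}, c_i)\}_{i,(c_{i-1}, c_i)}$, which follows from the recursive structure of $\underline{G}^y$ as a functional CLT propagated through the MF layers (as reflected in the recursion for $v_i$). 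Since $\hat{G}(t) - \underline{G}^y(t)$ is, by its expression in Theorem \ref{thm:CLT}, already in that span, $D(t) \to 0$ then forces $\hat{G}(t, \cdot) \to 0$ in $L^2(X)$. The main obstacle is the second step: carefully controlling the polynomial-in-$t$ growth of the error terms so that the effective bound $t^q \|\mathcal{E}(t)\|_{L^2}$ is integrable against the weight $t^{2+\delta}$; the specific exponent $2+\delta$ in the hypothesis is tailored to accommodate exactly this growth, with the $\delta$ providing the slack needed for a clean Barbalat conclusion.
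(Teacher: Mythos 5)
Your high-level plan (treat Theorem \ref{thm:variance-global-opt-fast} as a perturbation of Theorem \ref{thm:variance-global-opt-init}, exploiting fast convergence of the MF limit) is in the right spirit, and your first two steps and the Gronwall-type boundedness argument are sound. However, there is a genuine gap in the final step, and the paper's route is in fact structurally different at exactly that point.

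The gap: you argue $D(t)\to 0$ via Barbalat, then claim that since $\hat{G}(t,\cdot)$ lies (asymptotically) in the $L^2$-closure of $\operatorname{span}\{\partial\hat{y}^\infty/\partial w_i(c_{i-1},c_i)\}=\ker(\overline{\frak{A}})^\perp$, the vanishing dissipation forces $\hat{G}(t,\cdot)\to 0$. This last implication is false in general. Writing $D(t)\sim 2\sum_j\lambda_j\tau_j(t)^2$ with $\lambda_j<0$ the eigenvalues of $\overline{\frak{A}}$ restricted to $\ker^\perp$ and $\tau_j(t)$ the spectral coefficients of $\hat{G}(t)$, the conditions $\sup_t\sum_j\tau_j(t)^2<\infty$ and $\sum_j|\lambda_j|\tau_j(t)^2\to 0$ do not imply $\sum_j\tau_j(t)^2\to 0$ unless $\overline{\frak{A}}$ has a spectral gap away from zero, which a compact operator with infinitely many eigenvalues cannot have. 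The proof of Theorem \ref{thm:variance-global-opt-init} does not rely on a dissipation argument of this type: it uses the \emph{exact} semigroup representation $f_t=\exp(t\overline{\frak{A}})f_0=\sum_j e^{t\lambda_j}\tau_j\phi_j$ and concludes $\|f_t\|^2\to 0$ by dominated convergence over the spectral expansion. That exact representation is not available for the true dynamics $\partial_t R=\frak{A}_t R+H_t$.

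The paper instead runs a comparison argument that preserves access to that exact semigroup. It introduces the limiting time-homogeneous system $\partial_t\overline{R}=\overline{\frak{A}}\,\overline{R}+\overline{H}$ driven by the infinite-time weights $\overline{w}$, proves uniform-in-time $L^{2p}$ bounds on $w_i(t)$ (Lemma \ref{lem:unif-bound-w}), a quantitative rate $w_i(t)\to\overline{w}_i$ (Corollary \ref{cor:conv-w}), and then controls $\|H_t-\overline{H}\|$, $\|(\frak{A}_t-\overline{\frak{A}})R(t)\|$, and ultimately $\mathbf{E}\|R(t)-\overline{R}(t)\|_2^2\le K_\delta\int_t^\infty(s^{2+\delta}-t^{2+\delta}+1)\mathbb{E}_Z[\partial_2\mathcal{L}^2]\,ds$ (Lemmas \ref{lem:bound-H}, \ref{lem:bound-AR}, and the final lemma). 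The weight $t^{2+\delta}$ in the hypothesis is precisely what is needed to integrate the difference-of-semigroups error, not to absorb polynomial growth of $\|R(t)\|$ — in fact the paper shows $\sup_t\mathbf{E}\|R(t)\|_2^2<\infty$ under the hypothesis, with no polynomial-in-$t$ growth. Having reduced to the idealized system, it then applies Theorem \ref{thm:variance-global-opt-init} directly to $\overline{R}$. You would need to replace your Barbalat step by a comparison of this kind to close the proof.
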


These results suggest that the variance reduction effect of GD in
MF training is a phenomenon more general than the case of shallow
networks with a squared loss and finitely many training points in
\cite{chen2020dynamical}. Specifically it is shown for multilayer
networks with a loss that is not necessarily convex with arbitrary
training data distribution ${\cal P}$. Though we discuss in the context
of a global optimum as our main interest, the theorems apply to any
stationary point where Assumption \ref{Assump:Assumption_2} holds.

Let us briefly discuss the proof. For simplicity, consider the context
of Theorem \ref{thm:variance-global-opt-init}. Recall from Theorem
\ref{thm:CLT} the Gaussian process $\hat{G}$ that the output fluctuation
converges to as $N\to\infty$; we thus would like to show $\mathbf{E}\mathbb{E}_{Z}\big[\big|\hat{G}\left(t,X\right)\big|^{2}\big]\to0$
as $t\to\infty$. In this case, one can show that for $\frak{A}:L^{2}({\cal P})\to L^{2}({\cal P})$
a linear operator defined by 
\[
(\frak{A}f)(x)=-c\sum_{i}\mathbb{E}_{Z,C}\left[\frac{\partial\hat{y}(0,x)}{\partial w_{i}(C_{i-1},C_{i})}\frac{\partial\hat{y}(0,X)}{\partial w_{i}(C_{i-1},C_{i})}f(X)\right],
\]
with a constant $c>0$, we have $\partial_{t}\hat{G}\left(t,\cdot\right)=\frak{A}\hat{G}\left(t,\cdot\right)$.
In particular, $\frak{A}$ is self-adjoint and has non-positive eigenvalues
and as such, if at initialization $\hat{G}\left(0,\cdot\right)$ lies
in the range of $\frak{A}$, we reach the desired conclusion. This
is simple in the shallow case. Indeed note that $\hat{G}\left(0,\cdot\right)=G^{y}\left(0,\cdot\right)$
by the assumption on initialization of Theorem \ref{thm:variance-global-opt-init}.
Assuming $\varphi_{L}\left(u\right)=u$ for simplicity, in the shallow
case where $\hat{y}\left(0,x\right)=\mathbb{E}_{C_{1}}\left[w_{2}\left(0,C_{1},1\right)\varphi_{1}\left(\left\langle w_{1}\left(0,1,C_{1}\right),x\right\rangle \right)\right]$,
we have $G^{y}\left(0,\cdot\right)$ is zero-mean Gaussian with the
covariance $\mathbf{E}[G^{y}\left(0,x\right)G^{y}\left(0,x'\right)]$
equal to
\[
\mathbb{C}_{C_{1}}\left[w_{2}\left(0,C_{1},1\right)\varphi_{1}\left(\left\langle w_{1}\left(0,1,C_{1}\right),x\right\rangle \right);\;w_{2}\left(0,C_{1},1\right)\varphi_{1}\left(\left\langle w_{1}\left(0,1,C_{1}\right),x'\right\rangle \right)\right].
\]
Immediately from this simple covariance structure, we see that $x\mapsto G^{y}\left(0,x\right)$
lies in the span of $\left\{ x\mapsto\varphi_{1}\left(\left\langle w_{1}\left(0,1,c_{1}\right),x\right\rangle \right):\;c_{1}\in{\rm supp}\left(P_{1}\right)\right\} $,
i.e. the desired conclusion since $\frac{\partial\hat{y}(0,x)}{\partial w_{2}(C_{1},1)}=\varphi_{1}\left(\left\langle w_{1}\left(0,1,C_{1}\right),x\right\rangle \right)$.
In the multilayer case, this is no longer obvious due to the complex
covariance structure as pointed out in Section \ref{sec:2nd_order_MF},
and yet interestingly we show that it still indeed holds.

The proof of Theorem \ref{thm:variance-global-opt-fast} leverages
on Theorem \ref{thm:variance-global-opt-init}. In particular, implicit
in this proof is the fact that the variance reduction effect takes
place mostly after a global optimum is reached. Here the technical
bulk lies in uniform control on weight movements, which is delicate
in the multilayer case.

\section{Numerical illustration\label{sec:Numerical-illustration}}

\begin{figure}
\begin{centering}
\subfloat[]{\begin{centering}
\includegraphics[width=0.19\columnwidth]{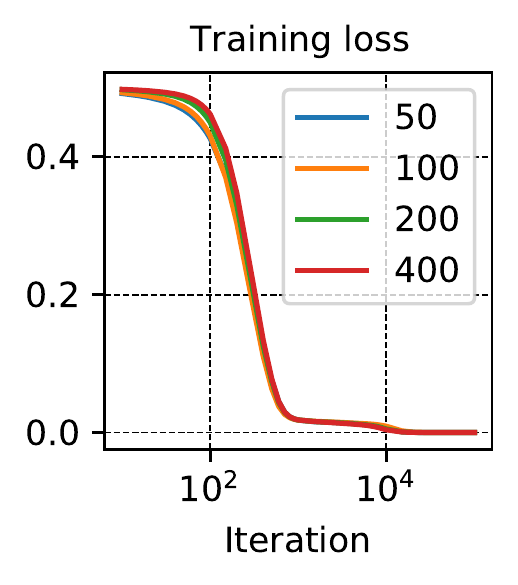}
\par\end{centering}
}\subfloat[]{\begin{centering}
\includegraphics[width=0.177\columnwidth]{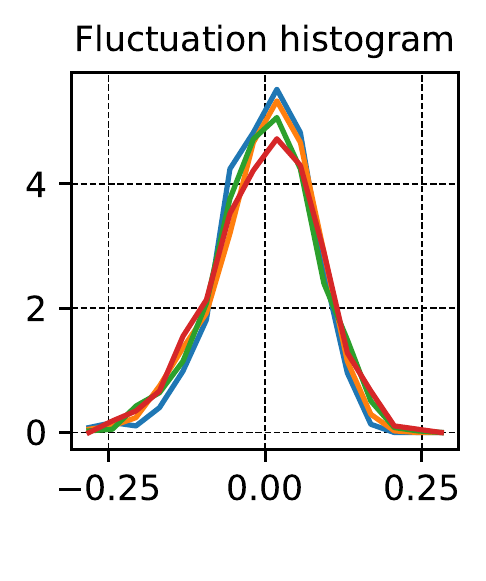}
\par\end{centering}
}\subfloat[]{\begin{centering}
\includegraphics[width=0.2\columnwidth]{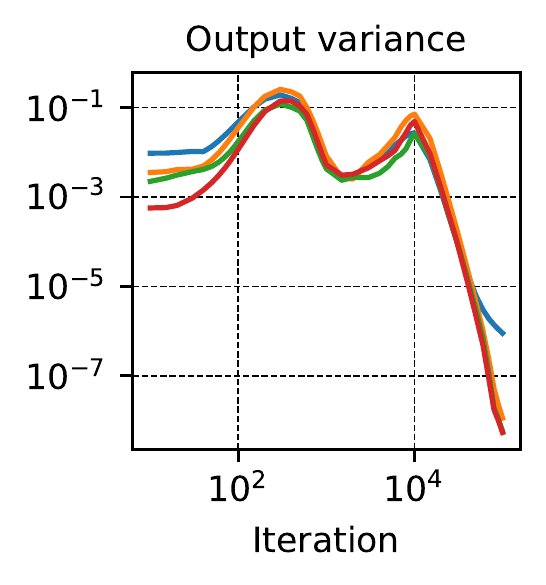}
\par\end{centering}
}\subfloat[]{\begin{centering}
\includegraphics[width=0.2\columnwidth]{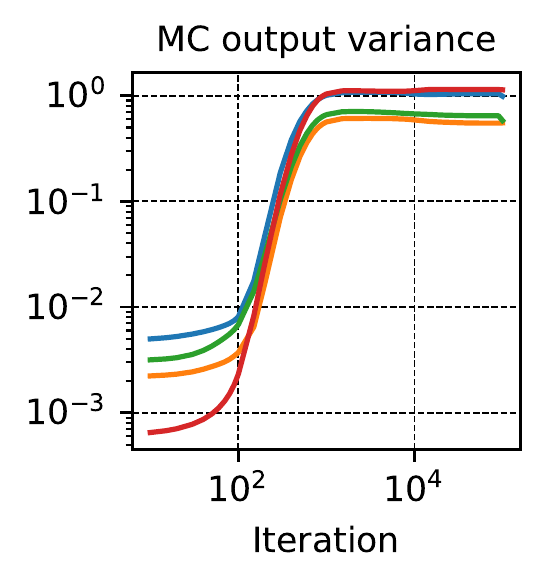}
\par\end{centering}
}\subfloat[]{\begin{centering}
\includegraphics[width=0.2\columnwidth]{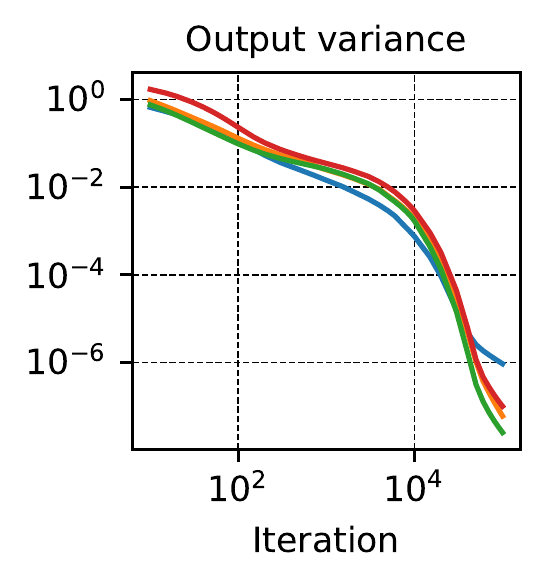}
\par\end{centering}
}
\par\end{centering}
\caption{MNIST classification of digits 0 and 4 versus 5 and 9, with full-batch
GD training on 100 images. The network has 3 layers, $\tanh$ activations
and a Huber loss. We vary the width $N\in\{50,100,200,400\}$. See
Appendix \ref{Appendix:experiments} for details. (a): The evolution
of the training loss, at a random initialization. (b): Histogram of
$\sqrt{N}(\hat{{\bf y}}(t,x)-\hat{y}(t,x))$ at iteration $10^{3}$
for a test image $x$. (c): Output variance $\mathbf{E}\mathbb{E}_{Z}\big[\big|\sqrt{N}(\hat{{\bf y}}(t,X)-\hat{y}(t,X))\big|^{2}\big]$.
(d): Monte-Carlo (MC) output variance, a.k.a. $\mathbf{E}\mathbb{E}_{Z}[|\tilde{G}^{y}\left(t,X\right)|^{2}]$.
(e): Similar to (c), but the network is initialized at a global optimum.}

\label{fig:illustration}
\end{figure}

We give several illustrations in Fig. \ref{fig:illustration} via
a simple experimental setup with MNIST \cite{lecun2010mnist} (see
the caption and also Appendix \ref{Appendix:experiments} for more
details). Fig. \ref{fig:illustration}(a) shows that the network converges
to a global optimum around iteration $10^{4}$ and displays a nonlinear
dynamics (which takes the shape of a superposition of two sigmoids).
The agreement of the different histogram plots of the output fluctuation
for varying widths $N$ in Fig \ref{fig:illustration}(b) verifies
the existence of a limiting Gaussian-like behavior, predicted by Theorems
\ref{thm:2nd_order_MF} and \ref{thm:CLT}. Fig \ref{fig:illustration}(c)
shows that the output variance $\mathbf{E}\mathbb{E}_{Z}\big[\big|\sqrt{N}(\hat{{\bf y}}(t,X)-\hat{y}(t,X))\big|^{2}\big]$
decreases with time quickly after iteration $10^{4}$, which is predicted
by Theorem \ref{thm:variance-global-opt-fast}. Fig \ref{fig:illustration}(d)
plots the Gaussian component $\tilde{G}^{y}$ in the output variance.
Note that after iteration $10^{4}$, this component no longer moves
since a global optimum is reached. Contrasting this plot with Fig
\ref{fig:illustration}(c), we see the central role of GD training
(in particular, the second-order MF limit component $R$) in reducing
the variance. As shown in Fig \ref{fig:illustration}(e), congruent
with Theorem \ref{thm:variance-global-opt-init}, when the network
is instead initialized at a global optimum, the output variance is
decreasing on the entire period. These plots also highlight an interesting
fact (previously mentioned in Section \ref{sec:Long-time-asymptotic-variance}):
the variance reduction effect takes place mostly after a global optimum
is reached.

\newpage{}

\section*{Acknowledgement}

The work of H. T. Pham is partially supported by a Two Sigma Fellowship.

\bibliographystyle{amsplain}
\bibliography{neurips}

\newpage{}

\appendix
\newgeometry{hmargin=.5in,vmargin=1in} 

Supplementary information for \textbf{``Limiting fluctuation and
trajectorial stability of multilayer neural networks with mean field
training''}:
\begin{itemize}
\item Appendix \ref{Appendix:Prelim} introduces several preliminaries:
new notations and known results from \cite{nguyen2020rigorous}.
\item Appendix \ref{Appendix:Gaussian} studies the Gaussian component $\tilde{G}$
and proves Theorem \ref{thm:G_tilde}.
\item Appendix \ref{Appendix:proof_well_posed} presents the proof of Theorem
\ref{thm:well-posed} for well-posedness of $R$.
\item Appendix \ref{Appendix:prop_chaos} proves Theorem \ref{thm:2nd_order_MF}
that connects the neural network with the second-order MF limit at
the fluctuation level.
\item Appendix \ref{Appendix:CLT_output} proves Theorem \ref{thm:CLT}
that establishes a central limit theorem for the output fluctuation.
\item Appendix \ref{Appendix:variance} studies the asymptotic width-scaled
output variance and proves that it eventually vanishes under different
conditions, i.e. Theorems \ref{thm:variance-global-opt-init} and
\ref{thm:variance-global-opt-fast}.
\item Appendix \ref{Appendix:experiments} describes the experimental details
for Section \ref{sec:Numerical-illustration}.
\end{itemize}

\section{Preliminaries\label{Appendix:Prelim}}

We introduce notations that are not in the main paper. For convenience,
for each index $i\in\left[L\right]$, we use $\mathbf{i}$ to refer
to either $i$ or the pair $\left(i-1,i\right)$, which depends on
the context, and we let $|\mathbf{i}|=1$ and $|\mathbf{i}|=2$ respectively
in each case. For instance, when we write $C_{\mathbf{i}}$, we refer
to either $C_{i}$ or $\left(C_{i-1},C_{i}\right)$. A statement that
is stated for $\mathbf{i}$ should hold in both cases.

For the MF limit:
\begin{align*}
\frac{\partial H_{i}(t,x,c_{i})}{\partial H_{j}(c_{j}')} & =\begin{cases}
0, & i\le j,\\
w_{i}(t,c_{j}',c_{i})\varphi_{j}'(H_{j}(t,x,c_{j}')), & i=j+1,\\
\mathbb{E}_{C_{j+1}'}\left[w_{j+1}(t,c_{j}',C_{j+1}')\varphi_{j}'(H_{j}(t,x,c_{j}')){\displaystyle \frac{\partial H_{i}(t,x,c_{i})}{\partial H_{j+1}(C_{j+1}')}}\right], & i>j+1,
\end{cases}\\
\frac{\partial H_{j}(t,x,c_{j})}{\partial_{*}H_{j}(c_{j})} & =1,\\
\frac{\partial H_{i}(t,x,c_{i})}{\partial w_{j}(c_{j-1}',c_{j}')} & =\begin{cases}
0, & i<j,\\
{\displaystyle \frac{\partial H_{i}(t,x,c_{i})}{\partial H_{j}(c_{j}')}}x, & 1=j<i,\\
{\displaystyle \frac{\partial H_{i}(t,x,c_{i})}{\partial H_{j}(c_{j}')}}\varphi_{j-1}(H_{j-1}(t,x,c_{j-1}')), & 1<j<i,
\end{cases}\\
\frac{\partial H_{j}(t,x,c_{j})}{\partial_{*}w_{j}(c_{j-1}',c_{j})} & =\begin{cases}
x, & j=1,\\
{\displaystyle \frac{\partial H_{j}(t,x,c_{j})}{\partial_{*}H_{j}(c_{j})}}\varphi_{j-1}(H_{j-1}(t,x,c_{j-1}')), & j>1,
\end{cases}\\
\frac{\partial^{2}\hat{y}(t,x)}{\partial w_{j}(c_{j-1}',c_{j}')\partial H_{L}(1)} & =\varphi_{L}''(H_{L}(t,x,1))\frac{\partial H_{L}(t,x,1)}{\partial w_{j}(c_{j-1}',c_{j}')},\\
\frac{\partial^{2}\hat{y}(t,x)}{\partial w_{j}(c_{j-1}',c_{j}')\partial H_{i-1}(c_{i-1})} & =\mathbb{E}_{C_{i}}\left[w_{i}(t,c_{i-1},C_{i})\varphi_{i-1}''(H_{i-1}(t,x,c_{i-1}))\frac{\partial H_{i-1}(t,x,c_{i-1})}{\partial w_{j}(c_{j-1}',c_{j}')}\frac{\partial\hat{y}(t,x)}{\partial H_{i}(C_{i})}\right]\\
 & \quad+\mathbb{E}_{C_{i}}\left[w_{i}(t,c_{i-1},C_{i})\varphi_{i-1}'(H_{i-1}(t,x,c_{i-1}))\frac{\partial^{2}\hat{y}(t,x)}{\partial w_{j}(c_{j-1}',c_{j}')\partial H_{i}(C_{i})}\right],\quad i\leq L,\\
\frac{\partial^{2}\hat{y}(t,x)}{\partial_{*}w_{i}(c_{i-1},c_{i}')\partial H_{i-1}(c_{i-1})} & =\varphi_{i-1}'(H_{i-1}(t,x,c_{i-1}))\frac{\partial\hat{y}(t,x)}{\partial H_{i}(c_{i}')},\\
\frac{\partial^{2}\hat{y}(t,x)}{\partial_{*}w_{i-1}(c_{i-2}',c_{i-1})\partial H_{i-1}(c_{i-1})} & =\begin{cases}
\mathbb{E}_{C_{i}}\left[w_{i}(t,c_{i-1},C_{i})\varphi_{i-1}''(H_{i-1}(t,x,c_{i-1}))\varphi_{i-2}(H_{i-2}(t,x,c_{i-2}')){\displaystyle \frac{\partial\hat{y}(t,x)}{\partial H_{i}(C_{i})}}\right], & 2<i\leq L,\\
\mathbb{E}_{C_{2}}\left[w_{2}(t,c_{1},C_{2})\varphi_{1}''(H_{1}(t,x,c_{1}))x{\displaystyle \frac{\partial\hat{y}(t,x)}{\partial H_{2}(C_{2})}}\right], & i=2,\\
{\displaystyle \frac{\partial^{2}\hat{y}(t,x)}{\partial w_{L}(c_{L-1}',1)\partial H_{L}(1)}}, & i=L+1,
\end{cases}\\
\frac{\partial^{2}\hat{y}(t,x)}{\partial w_{j}(c_{j-1}',c_{j}')\partial w_{i}(c_{i-1},c_{i})} & =\begin{cases}
{\displaystyle \frac{\partial^{2}\hat{y}(t,x)}{\partial w_{j}(c_{j-1}',c_{j}')\partial H_{i}(c_{i})}}\varphi_{i-1}(H_{i-1}(t,x,c_{i-1}))\\
\qquad+{\displaystyle \frac{\partial\hat{y}(t,x)}{\partial H_{i}(c_{i})}}\varphi_{i-1}'(H_{i-1}(t,x,c_{i-1})){\displaystyle \frac{\partial H_{i-1}(t,x,c_{i-1})}{\partial w_{j}(c_{j-1}',c_{j}')}}, & i>1,\\
{\displaystyle \frac{\partial^{2}\hat{y}(t,x)}{\partial w_{j}(c_{j-1}',c_{j}')\partial H_{1}(c_{1})}}x, & i=1,
\end{cases}\\
\frac{\partial^{2}\hat{y}(t,x)}{\partial_{*}w_{i-1}(c_{i-2}',c_{i-1})\partial w_{i}(c_{i-1},c_{i})} & ={\displaystyle \frac{\partial\hat{y}(t,x)}{\partial H_{i}(c_{i})}}\varphi_{i-1}'(H_{i-1}(t,x,c_{i-1})){\displaystyle \frac{\partial H_{i-1}(t,x,c_{i-1})}{\partial_{*}w_{i-1}(c_{i-2}',c_{i-1})}},\\
\frac{\partial^{2}\hat{y}(t,x)}{\partial_{*}w_{i}(c_{i-1}',c_{i})\partial w_{i}(c_{i-1},c_{i})} & =\begin{cases}
\varphi_{i-1}(H_{i-1}(t,x,c_{i-1})){\displaystyle \frac{\partial^{2}\hat{y}(t,x)}{\partial_{*}w_{i}(c_{i-1}',c_{i})\partial H_{i}(c_{i})}}, & i>1,\\
{\displaystyle \frac{\partial^{2}\hat{y}(t,x)}{\partial_{*}w_{1}(1,c_{1})\partial H_{1}(c_{1})}}x, & i=1
\end{cases}\\
\frac{\partial^{2}\hat{y}(t,x)}{\partial_{*}w_{i+1}(c_{i},c_{i+1}')\partial w_{i}(c_{i-1},c_{i})} & =\begin{cases}
\varphi_{i-1}(H_{i-1}(t,x,c_{i-1})){\displaystyle \frac{\partial^{2}\hat{y}(t,x)}{\partial_{*}w_{i+1}(c_{i},c_{i+1}')\partial H_{i}(c_{i})}}, & i>1,\\
{\displaystyle \frac{\partial^{2}\hat{y}(t,x)}{\partial_{*}w_{2}(c_{1},c_{2}')\partial H_{1}(c_{1})}}x, & i=1.
\end{cases}
\end{align*}
With this, we write the dynamics (\ref{eq:ODE_2ndMF}) for the second-order
MF limit $R$ in its complete form as follows:
\begin{align}
 & \partial_{t}R_{i}(G,t,c_{i-1},c_{i})=\nonumber \\
 & -\mathbb{E}_{Z}\bigg[\frac{\partial\hat{y}(t,X)}{\partial w_{i}(c_{i-1},c_{i})}\partial_{2}^{2}{\cal L}\left(Y,\hat{y}\left(t,X\right)\right)\bigg(G^{y}\left(t,X\right)+\sum_{r=1}^{L}\mathbb{E}_{C}\bigg[R_{r}\left(G,t,C_{r-1},C_{r}\right)\frac{\partial\hat{y}\left(t,X\right)}{\partial w_{r}\left(C_{r-1},C_{r}\right)}\bigg]\bigg)\bigg]\nonumber \\
 & -\mathbb{E}_{Z}\bigg[\partial_{2}{\cal L}\left(Y,\hat{y}\left(t,X\right)\right)\mathbb{E}_{C}\bigg[\sum_{r=1}^{L}R_{r}\left(G,t,C_{r-1},C_{r}\right)\frac{\partial^{2}\hat{y}\left(t,X\right)}{\partial w_{r}\left(C_{r-1},C_{r}\right)\partial w_{i}\left(c_{i-1},c_{i}\right)}\bigg]\bigg]\nonumber \\
 & -\mathbb{E}_{Z}\bigg[\partial_{2}{\cal L}\left(Y,\hat{y}\left(t,X\right)\right)\mathbb{E}_{C}\bigg[R_{i-1}\left(G,t,C_{i-2},c_{i-1}\right)\frac{\partial^{2}\hat{y}\left(t,X\right)}{\partial_{*}w_{i-1}\left(C_{i-2},c_{i-1}\right)\partial w_{i}\left(c_{i-1},c_{i}\right)}\bigg]\bigg]\nonumber \\
 & -\mathbb{E}_{Z}\bigg[\partial_{2}{\cal L}\left(Y,\hat{y}\left(t,X\right)\right)\mathbb{E}_{C}\bigg[R_{i}\left(G,t,C_{i-1},c_{i}\right)\frac{\partial^{2}\hat{y}\left(t,X\right)}{\partial_{*}w_{i}\left(C_{i-1},c_{i}\right)\partial w_{i}\left(c_{i-1},c_{i}\right)}\bigg]\bigg]\nonumber \\
 & -\mathbb{E}_{Z}\bigg[\partial_{2}{\cal L}\left(Y,\hat{y}\left(t,X\right)\right)\mathbb{E}_{C}\bigg[R_{i+1}\left(G,t,c_{i},C_{i+1}\right)\frac{\partial^{2}\hat{y}\left(t,X\right)}{\partial_{*}w_{i+1}\left(c_{i},C_{i+1}\right)\partial w_{i}\left(c_{i-1},c_{i}\right)}\bigg]\bigg]\nonumber \\
 & -\mathbb{E}_{Z}\bigg[\partial_{2}{\cal L}\left(Y,\hat{y}\left(t,X\right)\right)G_{i}^{w}\left(t,c_{i-1},c_{i},X\right)\bigg]\label{eq:ODE_2ndMF-alt}
\end{align}
where we take by convention that $R_{0}=R_{L+1}=0$. We also define
secondary quantities, similar to those in Section \ref{subsec:Gaussian}
e.g. $\frac{\partial^{2}\tilde{y}(t,x)}{\partial w_{j}(c_{j-1}',c_{j}')\partial w_{i}(c_{i-1},c_{i})}$,
in a similar fashion, by taking their MF counterparts and replacing
$\mathbb{E}_{C_{i}}$ being replaced by an empirical average over
$\left\{ C_{i}\left(j_{i}\right)\right\} _{j_{i}\in\left[N_{i}\right]}$.

We also recall the following bounds from \cite{nguyen2020rigorous}.
\begin{lem}
\label{lem:MF_a_priori}Under Assumption \ref{Assump:Assumption_1},
for any $T\geq0$,
\[
\max_{i\in\left[L\right]}m^{-1/2}\mathbb{E}_{C}\left[\sup_{t\leq T}\left|w_{i}\left(t,C_{i-1},C_{i}\right)\right|^{m}\right]^{1/m}\leq K_{T}.
\]
\end{lem}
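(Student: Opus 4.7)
The plan is a backward induction on the layer index $i$ from $L$ down to $1$. Integrating the MF ODE and using $|\partial_2 \mathcal{L}| \le K$, Minkowski's inequality yields
\[
\mathbb{E}_C\bigl[\sup_{t\le T}|w_i(t, C_{i-1}, C_i)|^m\bigr]^{1/m} \le \mathbb{E}_C[|w_i^0(C_{i-1}, C_i)|^m]^{1/m} + K \int_0^T \mathbb{E}_{Z,C}\Bigl[\Bigl|\tfrac{\partial \hat{y}(s, X)}{\partial w_i(C_{i-1}, C_i)}\Bigr|^m\Bigr]^{1/m} ds.
\]
The first term is $\le K m^{1/2}$ by the sub-Gaussian initialization assumption in Assumption \ref{Assump:Assumption_1}, so it suffices to bound the integrand by $K_T m^{1/2}$ uniformly in $s \le T$.

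At the top layer $i = L$, we have $|\partial \hat{y}/\partial H_L| = |\varphi_L'(H_L)| \le K$, and combining with $|\varphi_{L-1}| \le K$ gives $|\partial \hat{y}/\partial w_L(c_{L-1}, c_L)| \le K$ pointwise; the base case follows immediately. For $i < L$, we use $|\varphi_{i-1}(H_{i-1})| \le K$ when $i > 1$, or $|X| \le K$ when $i = 1$, to reduce the gradient bound to controlling the $L^m(P_i)$-norm of $c_i \mapsto |\partial \hat{y}/\partial H_i(c_i)|$. From the backward recursion
\[
\bigl|\tfrac{\partial \hat{y}(s, x)}{\partial H_i(c_i)}\bigr| \le K\, \mathbb{E}_{C_{i+1}}\bigl[|w_{i+1}(s, c_i, C_{i+1})| \cdot \bigl|\tfrac{\partial \hat{y}(s, x)}{\partial H_{i+1}(C_{i+1})}\bigr|\bigr],
\]
Jensen's inequality brings the $m$-th power inside the expectation, and a Hölder split on the independent $C_{i+1}$-average factors the resulting moment into a power of $\|w_{i+1}(s, \cdot, \cdot)\|_{L^{m'}}$ times a power of $\|\partial \hat{y}/\partial H_{i+1}(s, \cdot)\|_{L^{m''}}$. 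Iterating this step down to layer $L$ and invoking the inductive sub-Gaussian estimates on $w_j$ for $j > i$ yields the desired $K_T m^{1/2}$ bound, which upon integration over $s \le T$ closes the induction.

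A key structural observation is that the MF gradient at layer $i$ depends on lower-layer weights $w_j$, $j \le i-1$, only through the factor $\varphi_{i-1}(H_{i-1})$ (or $X$ for $i = 1$), which is uniformly bounded by $K$. Thus the induction never invokes itself on the same layer, and no Grönwall argument in the layer index is needed. The main obstacle is the bookkeeping of moment inflation: each application of Hölder's inequality at a nested expectation inflates the moment index $m \mapsto m'$ by a fixed factor, so after $L - i$ iterations one obtains $L^{m'}$-norms of the $w_j$ with $m' \asymp c_L m$. Since the sub-Gaussian scaling $K_T (m')^{1/2} \le K_T' m^{1/2}$ is stable under such polynomial inflation, the induction closes with constants depending only on $L$ and $T$, consistent with the paper's convention of absorbing $L$-dependence into $K_T$.
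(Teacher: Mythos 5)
Your skeleton --- integrating the MF ODE, Minkowski, the sub-Gaussian bound on the initialization, and a backward induction that bounds $\mathbb{E}_{Z,C}\bigl[|\partial\hat y(s,X)/\partial w_i(C_{i-1},C_i)|^m\bigr]^{1/m}$ uniformly in $s\le T$ --- is right, and so is the observation that lower layers enter only through the bounded factor $\varphi_{i-1}(H_{i-1})$ (or $x$). The gap is in the moment bound on the backward gradient. As you describe it, you first apply Jensen to push the $m$-th power inside $\mathbb{E}_{C_{i+1}}$, then H\"older to split off $\|w_{i+1}\|_{L^{m'}}$ from $\|\partial\hat y/\partial H_{i+1}\|_{L^{m''}}$, and iterate down to layer $L$; the result is a product of $L-i$ norms $\|w_j\|_{L^{m_j}}$ with $m_j\asymp c_L m$. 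You correctly note each such norm is $\le K_T' m^{1/2}$, but the product of $L-i$ of them is of order $m^{(L-i)/2}$, not $m^{1/2}$. Feeding $m^{(L-i)/2}$ into the ODE integral destroys the sub-Gaussian bound at layer $i$, and the exponent on $m$ then compounds further as the induction descends, so the induction as written does not close to the stated lemma.

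The fix is to apply Cauchy--Schwarz inside the $\mathbb{E}_{C_{i+1}}$-average \emph{before} raising to the $m$-th power:
\[
\Bigl|\tfrac{\partial\hat y(s,x)}{\partial H_i(c_i)}\Bigr|\le K\,\mathbb{E}_{C_{i+1}}\bigl[|w_{i+1}(s,c_i,C_{i+1})|^2\bigr]^{1/2}\cdot \mathbb{E}_{C_{i+1}}\bigl[\bigl|\tfrac{\partial\hat y(s,x)}{\partial H_{i+1}(C_{i+1})}\bigr|^2\bigr]^{1/2}.
\]
The second factor $a_{i+1}(s,x):=\mathbb{E}_{C_{i+1}}\bigl[|\partial\hat y(s,x)/\partial H_{i+1}(C_{i+1})|^2\bigr]^{1/2}$ is a deterministic number, not a random variable in $C$, and satisfies $\sup_{s\le T,\,|x|\le K}a_{j}(s,x)\le K_T^{L-j+1}$ by the same backward recursion using only the $m=2$ instance of the inductive hypothesis for $w_{j+1},\dots,w_L$; it therefore pops out of the $L^m(P_i)$-norm as a constant. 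Only the first factor is random in $c_i$, and since $m\ge 2$, Jensen gives $\mathbb{E}_{C_i}\bigl[\mathbb{E}_{C_{i+1}}[|w_{i+1}(s,C_i,C_{i+1})|^2]^{m/2}\bigr]^{1/m}\le \mathbb{E}_{C_{i-1},C_i,C_{i+1}}[|w_{i+1}(s,C_i,C_{i+1})|^m]^{1/m}\le K_T m^{1/2}$ by the inductive hypothesis at layer $i+1$. This leaves a single $m^{1/2}$ factor, closes the induction, and produces the stated $K_T$. The structural point your outline misses is that for $j\ge i+2$ only the \emph{second} moments of $w_j$ need to enter the bound at layer $i$; iterating Jensen and H\"older forces all of them into the $L^m$ regime and compounds the sub-Gaussian factor $L-i$ times.
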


Boundedness of moments of several other MF quantities at any time
$t\leq T$ are consequences of this lemma and Assumption \ref{Assump:Assumption_1}.
We omit the details.

\section{CLT for the Gaussian component $\tilde{G}$: Proof of Theorem \ref{thm:G_tilde}\label{Appendix:Gaussian}}

We recall the process $\tilde{G}$ defined in Section \ref{subsec:Gaussian}.
Recall that for each $j_{i}\in[N_{i}]$, we sample $C_{i}(j_{i})\in\Omega_{i}$
independently at random from $P_{i}$. Let $S_{i}=\{C_{i}(1),\dots,C_{i}(N_{i})\}$.
We denote by $\mathbf{E}_{i}$ for the expectation over the random
choice of $S_{i}$. We also recall that ${\bf E}$ is the expectation
over the random choice of $S_{i},\;i=1,\dots,L$.

We will use the following notation throughout. Let $\delta$ be a
function of $t\in\mathbb{T},z\in\mathbb{Z}$, $c_{1}\in\Omega_{1},c_{2}\in\Omega_{2},\dots,c_{L}\in\Omega_{L}$,
and $j_{1}\in[N_{1}],j_{2}\in[N_{2}],\dots,j_{L}\in[N_{L}]$ (where
$\delta$ may not necessarily depend on all those variables). For
$o_{N}$ decreasing in $N$, we write $\delta={\bf O}_{T}(o_{N})$
if for any $t\leq T$,
\[
\mathbf{E}\mathbb{E}_{J}\mathbb{E}_{Z,C}[\left|\delta(t,Z,C_{1},\dots,C_{L},J_{1},\dots,J_{L}\right|^{2}]\le K_{T}o_{N}^{2}
\]
for sufficiently large $N$. Thus, if we write 
\[
f(t,z,c_{1},c_{2},\dots,c_{L})=g(t,z,c_{1},c_{2},\dots,c_{L})+{\bf O}_{T}(o_{N})
\]
we mean
\[
\mathbf{E}\mathbb{E}_{J}\mathbb{E}_{Z,C}[\left|f(t,Z,C_{1},\dots,C_{L},J_{1},\dots,J_{L})-g(t,Z,C_{1},\dots,C_{L},J_{1},\dots,J_{L})\right|^{2}]\le K_{T}o_{N}^{2}
\]
for all $t\leq T$, for sufficiently large $N$.

We define additional processes as follows 
\begin{align*}
\tilde{G}_{i}^{\partial\tilde{H}}(t,c_{i},x) & =\sqrt{N}\left(\frac{\partial\tilde{y}(t,x)}{\partial\tilde{H}_{i}(c_{i})}-\frac{\partial\hat{y}(t,x)}{\partial H_{i}(c_{i})}\right),\\
\tilde{G}_{i-1}^{\partial H}(t,c_{i-1},x) & =\sqrt{N}\left(\mathbb{E}_{J_{i}}\left[w_{i}(t,c_{i-1},C_{i}(J_{i}))\frac{\partial\hat{y}(t,x)}{\partial H_{i}(C_{i}(J_{i}))}\right]-\mathbb{E}_{C_{i}}\left[w_{i}(t,c_{i-1},C_{i})\frac{\partial\hat{y}(t,x)}{\partial H_{i}(C_{i})}\right]\right)\varphi_{i-1}'\left(H_{i-1}(t,x,c_{i-1})\right),\\
\tilde{G}_{L}^{\partial H}(t,1,x) & =\tilde{G}_{L-1}^{\partial H}(t,c_{L-1},x)=0,\\
\tilde{G}_{i}^{\tilde{H}}(t,c_{i},x) & =\sqrt{N}\left(\tilde{H}_{i}(t,x,c_{i})-H_{i}(t,x,c_{i})\right),\\
\tilde{G}_{i}^{H}(t,c_{i},x) & =\sqrt{N}\left(\mathbb{E}_{J_{i-1}}[w_{i}(t,C_{i-1}(J_{i-1}),c_{i})\varphi_{i-1}(H_{i-1}(t,x,C_{i-1}(J_{i-1})))]-\mathbb{E}_{C_{i-1}}[w_{i}(t,C_{i-1},c_{i})\varphi_{i-1}(H_{i-1}(t,x,C_{i-1}))]\right),\\
\tilde{G}_{1}^{H}(t,c_{1},x) & =0.
\end{align*}

The next lemma is a key tool in the argument.
\begin{lem}
\label{lem:second-moment-dependency}For an index $\mathbf{i}$, let
$f(S,c_{{\bf i}},c_{{\bf i}'})$ be a function of $S=\{C_{\mathbf{i}}(j_{\mathbf{i}}):\;j_{\mathbf{i}}\in\left[N_{\mathbf{i}}\right]\}\cup\{S'\}$
and $c_{{\bf i}}$, $c_{{\bf i}'}$, where $S'$ denotes some random
variable that is independent of $\{C_{\mathbf{i}}(j_{\mathbf{i}}):\;j_{\mathbf{i}}\in\left[N_{\mathbf{i}}\right]\}$.
Assume that for some $\alpha>0$, 
\begin{align*}
{\bf E}\mathbb{E}_{C_{{\bf i}'}}\left[\left|f(S^{j_{\mathbf{i}}},C_{{\bf i}}(j_{{\bf i}}),C_{{\bf i}'})\right|^{2}\right] & \le K,\\
{\bf E}\mathbb{E}_{C_{{\bf i}'}}\left[\left|f(S,C_{{\bf i}}(j_{{\bf i}}),C_{{\bf i}'})-f(S^{j_{\mathbf{i}}},C_{{\bf i}}(j_{{\bf i}}),C_{{\bf i}'})\right|^{2}\right] & \le K/N^{\alpha},
\end{align*}
for all $j_{\mathbf{i}}$, where we define $S^{j_{\mathbf{i}}}$ similar
to $S$ except with $C_{{\bf i}}(j_{{\bf i}})$ in $S_{{\bf i}}$
replaced by an independent copy $C_{{\bf i}}'(j_{{\bf i}})$ and $\mathbf{E}$
denotes the expectation w.r.t $S$. Then we have 
\[
\mathbb{E}_{J_{{\bf i}}}[f(S,C_{{\bf i}}(J_{{\bf i}}),c_{{\bf i}'})]=\mathbb{E}_{C_{{\bf i}}}[f(S,C_{{\bf i}},c_{{\bf i}'})]+{\bf O}(N^{-\alpha/4}+N^{-\left|\mathbf{i}\right|/2}).
\]
\end{lem}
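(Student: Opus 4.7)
The plan is to write
\[
\Delta(S, c_{\mathbf{i}'}) := \mathbb{E}_{J_{\mathbf{i}}}\bigl[f(S, C_{\mathbf{i}}(J_{\mathbf{i}}), c_{\mathbf{i}'})\bigr] - \mathbb{E}_{C_{\mathbf{i}}}\bigl[f(S, C_{\mathbf{i}}, c_{\mathbf{i}'})\bigr]
\]
as a sum of three pieces whose $L^2$ norms under $\mathbf{E}\mathbb{E}_{C_{\mathbf{i}'}}$ I can control separately. Writing $T_{j_{\mathbf{i}}} := C_{\mathbf{i}}(j_{\mathbf{i}})$ and $\bar f(s, c_{\mathbf{i}'}) := \int f(s, t, c_{\mathbf{i}'})\, dP_{\mathbf{i}}(t)$, I decompose
\begin{align*}
\Delta &= \underbrace{\mathbb{E}_{J_{\mathbf{i}}}\bigl[f(S, T_{J_{\mathbf{i}}}, c_{\mathbf{i}'}) - f(S^{J_{\mathbf{i}}}, T_{J_{\mathbf{i}}}, c_{\mathbf{i}'})\bigr]}_{\Delta_1} \\
&\quad + \underbrace{\tfrac{1}{N_{\mathbf{i}}}\sum_{j_{\mathbf{i}}}\bigl[f(S^{j_{\mathbf{i}}}, T_{j_{\mathbf{i}}}, c_{\mathbf{i}'}) - \bar f(S^{j_{\mathbf{i}}}, c_{\mathbf{i}'})\bigr]}_{A} + \underbrace{\tfrac{1}{N_{\mathbf{i}}}\sum_{j_{\mathbf{i}}}\bigl[\bar f(S^{j_{\mathbf{i}}}, c_{\mathbf{i}'}) - \bar f(S, c_{\mathbf{i}'})\bigr]}_{B}.
\end{align*}
The motivation: $\Delta_1$ is directly in the form of the second hypothesis (same middle-arg $T_{J_{\mathbf{i}}}$ on both sides); in the remainder, $(S^{j_{\mathbf{i}}}, T_{j_{\mathbf{i}}})$ has independent components, making $A$ a conditionally centered sum amenable to a variance calculation, while $B$ probes the sensitivity of the averaged object $\bar f$ to a one-slot perturbation.

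The two easy pieces are $\Delta_1$ and $B$. For $\Delta_1$, Jensen over $J_{\mathbf{i}}$ combined with the second hypothesis gives $\mathbf{E}\mathbb{E}_{C_{\mathbf{i}'}}|\Delta_1|^2 \le K N^{-\alpha}$ immediately. For $B$, Jensen inside the defining integral of $\bar f$ yields the pointwise estimate $|\bar f(S^{j_{\mathbf{i}}}, c_{\mathbf{i}'}) - \bar f(S, c_{\mathbf{i}'})|^2 \le \int |f(S^{j_{\mathbf{i}}}, t, c_{\mathbf{i}'}) - f(S, t, c_{\mathbf{i}'})|^2\, dP_{\mathbf{i}}(t)$; one transports the bound of the second hypothesis (whose middle arg is $T_{j_{\mathbf{i}}}$) to this $t$-integral via the distributional symmetry $(S, S^{j_{\mathbf{i}}}, T_{j_{\mathbf{i}}}) \stackrel{d}{=} (S^{j_{\mathbf{i}}}, S, \tilde T_{j_{\mathbf{i}}})$, obtained by swapping the original and fresh slot samples. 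Cauchy--Schwarz on the average over $j_{\mathbf{i}}$ then yields $\mathbf{E}\mathbb{E}_{C_{\mathbf{i}'}}|B|^2 \le K N^{-\alpha}$. In sum, $\|\Delta_1\|_{L^2}, \|B\|_{L^2} \le K N^{-\alpha/2}$, safely inside the target.

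The principal work lies in bounding $A$. Its summand $X_{j_{\mathbf{i}}} := f(S^{j_{\mathbf{i}}}, T_{j_{\mathbf{i}}}, c_{\mathbf{i}'}) - \bar f(S^{j_{\mathbf{i}}}, c_{\mathbf{i}'})$ satisfies $\mathbb{E}[X_{j_{\mathbf{i}}} \mid S^{j_{\mathbf{i}}}] = 0$ (since $T_{j_{\mathbf{i}}} \perp S^{j_{\mathbf{i}}}$ with law $P_{\mathbf{i}}$) and $\mathbf{E}\mathbb{E}_{C_{\mathbf{i}'}}[X_{j_{\mathbf{i}}}^2] \le K$ from the first hypothesis, so the diagonal contribution to the variance of $A$ is $\le K / N_{\mathbf{i}} = K N^{-|\mathbf{i}|}$. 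The delicate point is the off-diagonal covariance $\mathbb{E}[X_{j_{\mathbf{i}}} X_{j'_{\mathbf{i}}}]$ for $j_{\mathbf{i}} \ne j'_{\mathbf{i}}$: $X_{j_{\mathbf{i}}}$ and $X_{j'_{\mathbf{i}}}$ are \emph{not} independent because $T_{j'_{\mathbf{i}}}$ sits inside $S^{j_{\mathbf{i}}}$ and vice versa. To decorrelate, I introduce the double replacement $S^{j_{\mathbf{i}}, j'_{\mathbf{i}}}$ (both slots replaced by fresh copies) and $\tilde X_{j_{\mathbf{i}}} := f(S^{j_{\mathbf{i}}, j'_{\mathbf{i}}}, T_{j_{\mathbf{i}}}, c_{\mathbf{i}'}) - \bar f(S^{j_{\mathbf{i}}, j'_{\mathbf{i}}}, c_{\mathbf{i}'})$, and likewise $\tilde X_{j'_{\mathbf{i}}}$. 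By distributional symmetry obtained by swapping $T_{j'_{\mathbf{i}}} \leftrightarrow \tilde T_{j'_{\mathbf{i}}}$ (which preserves the joint law while interchanging $S$ and $S^{j'_{\mathbf{i}}}$), the second hypothesis extends to $\mathbb{E}|X_{j_{\mathbf{i}}} - \tilde X_{j_{\mathbf{i}}}|^2 \le K N^{-\alpha}$. Meanwhile $\tilde X_{j_{\mathbf{i}}}$ and $\tilde X_{j'_{\mathbf{i}}}$ are conditionally centered and conditionally independent given $S^{j_{\mathbf{i}}, j'_{\mathbf{i}}}$, so $\mathbb{E}[\tilde X_{j_{\mathbf{i}}} \tilde X_{j'_{\mathbf{i}}}] = 0$. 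Cauchy--Schwarz on the three non-vanishing cross-terms then yields $|\mathbf{E}\mathbb{E}_{C_{\mathbf{i}'}}[X_{j_{\mathbf{i}}} X_{j'_{\mathbf{i}}}]| \le K N^{-\alpha/2}$, whence $\mathbf{E}\mathbb{E}_{C_{\mathbf{i}'}}|A|^2 \le K(N^{-|\mathbf{i}|} + N^{-\alpha/2})$, i.e.\ $\|A\|_{L^2} \le K(N^{-|\mathbf{i}|/2} + N^{-\alpha/4})$. Summing the three bounds closes the proof.

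The main obstacle is the off-diagonal decorrelation in $A$: the second hypothesis is stated only for replacing the same slot $j_{\mathbf{i}}$ whose sample coincides with the middle argument, so one must extend it, via the double-replacement construction together with the exchangeability of the $T_{j_{\mathbf{i}}}$'s, to control the covariance when a \emph{different} slot $j'_{\mathbf{i}}$ is perturbed. The same exchangeability device is also what lets the hypothesis cover the $B$ piece. Everything else reduces to standard Jensen and variance bookkeeping.
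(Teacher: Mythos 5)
Your proposal is correct and, at its core, takes the same approach as the paper. The key technical device in both is the Efron--Stein style resampling: to decorrelate the off-diagonal covariance of the centred empirical average, one replaces both slots $j_{\mathbf{i}},j_{\mathbf{i}}'$ by fresh copies, exploits the resulting conditional independence to kill the cross term, and controls the cost of the resampling via the second hypothesis together with exchangeability. Where you differ is organisational: the paper directly expands $\mathbf{E}\mathbb{E}_{C_{\mathbf{i}'}}|\Delta|^2$ as a double sum and performs the resampling inside each $(j_{\mathbf{i}},j_{\mathbf{i}}')$ term, whereas you first split $\Delta=\Delta_1+A+B$ and only need the resampling for the off-diagonal of $A$. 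Your decomposition is a bit more modular --- $\Delta_1$ and $B$ are dispatched directly by the hypothesis, and $A$ is cleanly conditionally centred, which isolates where the CLT-rate $N^{-|\mathbf{i}|/2}$ comes from --- but it does not change the key mechanism.

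One point worth tightening in the write-up. In the steps bounding $B$ and the off-diagonal of $A$, you invoke the second hypothesis with the middle argument of $f$ being either a fresh $P_{\mathbf{i}}$ draw (inside the $\bar f$ integral) or the slot variable $T_{j_{\mathbf{i}}}$ while replacing the \emph{other} slot $j_{\mathbf{i}}'$; the stated hypothesis only controls the single replacement at slot $j_{\mathbf{i}}$ with middle argument equal to that slot's own original content $T_{j_{\mathbf{i}}}$. The single swaps you cite ($T_{j_{\mathbf{i}}} \leftrightarrow \tilde T_{j_{\mathbf{i}}}$ or $T_{j_{\mathbf{i}}'} \leftrightarrow \tilde T_{j_{\mathbf{i}}'}$) exchange the roles of the two ensembles but do not by themselves turn the hypothesis's slot-aligned middle argument into an independent one. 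The correct way to close this is by a triangle inequality through an intermediate ensemble in which slot $j_{\mathbf{i}}'$ is replaced by the desired middle argument itself; each of the two resulting brackets is then exactly in the form of the hypothesis after an exchange of two fresh copies. The paper's own proof is equally terse on this step, so your proposal is at the level of rigor of the original, but a careful write-up should spell out this triangle-inequality-plus-swap chain rather than attributing the transport to a single distributional symmetry.
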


\begin{proof}
We have 
\begin{align*}
 & {\bf E}\mathbb{E}_{C_{{\bf i}'}}\left[\left|\mathbb{E}_{J_{{\bf i}}}[f(S,C_{{\bf i}}(J_{{\bf i}}),C_{{\bf i}'})]-\mathbb{E}_{C_{{\bf i}}}[f(S,C_{{\bf i}},C_{{\bf i}'})]\right|^{2}\right]\\
 & =\frac{1}{N^{2\left|\mathbf{i}\right|}}\sum_{j_{{\bf i}},j_{{\bf i}}'}{\bf E}\mathbb{E}_{C_{{\bf i}'}}\left[\left(f(S,C_{{\bf i}}(j_{{\bf i}}),C_{{\bf i}'})-\mathbb{E}_{C_{{\bf i}}}[f(S,C_{{\bf i}},C_{{\bf i}'})]\right)\cdot\left(f(S,C_{{\bf i}}(j_{{\bf i}}'),C_{{\bf i}'})-\mathbb{E}_{C_{{\bf i}}}[f(S,C_{{\bf i}},C_{{\bf i}'})]\right)\right].
\end{align*}
Define $S^{j_{\mathbf{i}}}$ -- with an abuse of notation -- similar
to $S$ but with the terms involving $C_{{\bf i}}(j_{{\bf i}})$ and
$C_{{\bf i}}(j_{{\bf i}}')$ replaced by independent copies $C_{{\bf i}}'(j_{{\bf i}})$
and $C_{{\bf i}}'(j_{{\bf i}}')$. (In particular, $S^{j_{\mathbf{i}}}$
has the same distribution as $S$.) We have:
\begin{align*}
 & \bigg|{\bf E}\left[\left(f(S,C_{{\bf i}}(j_{{\bf i}}),c_{{\bf i}'})-\mathbb{E}_{C_{{\bf i}}}[f(S,C_{{\bf i}},c_{{\bf i}'})]\right)\left(f(S,C_{{\bf i}}(j_{{\bf i}}'),c_{{\bf i}'})-\mathbb{E}_{C_{{\bf i}}}[f(S,C_{{\bf i}},c_{{\bf i}'})]\right)\right]\\
 & \qquad-{\bf E}\left[\left(f(S^{j_{\mathbf{i}}},C_{{\bf i}}(j_{{\bf i}}),c_{{\bf i}'})-\mathbb{E}_{C_{{\bf i}}}[f(S^{j_{\mathbf{i}}},C_{{\bf i}},c_{{\bf i}'})]\right)\left(f(S^{j_{\mathbf{i}}},C_{{\bf i}}(j_{{\bf i}}'),c_{{\bf i}'})-\mathbb{E}_{C_{{\bf i}}}[f(S^{j_{\mathbf{i}}},C_{{\bf i}},c_{{\bf i}'})]\right)\right]\bigg|\\
 & =\left|{\bf E}\left[f(S,C_{{\bf i}}(j_{{\bf i}}),c_{{\bf i}'})f(S,C_{{\bf i}}(j_{{\bf i}}'),c_{{\bf i}'})-f(S^{j_{\mathbf{i}}},C_{{\bf i}}(j_{{\bf i}}),c_{{\bf i}'})f(S^{j_{\mathbf{i}}},C_{{\bf i}}(j_{{\bf i}}'),c_{{\bf i}'})\right]\right|\\
 & \le K{\bf E}\left[\left|f(S,C_{{\bf i}}(j_{{\bf i}}),c_{{\bf i}'})\left(f(S,C_{{\bf i}}(j_{{\bf i}}'),c_{{\bf i}'})-f(S^{j_{\mathbf{i}}},C_{{\bf i}}(j_{{\bf i}}'),c_{{\bf i}'})\right)\right|\right]+K{\bf E}\left[\left|f(S^{j_{\mathbf{i}}},C_{{\bf i}}(j_{{\bf i}}'),c_{{\bf i}'})\left(f(S,C_{{\bf i}}(j_{{\bf i}}),c_{{\bf i}'})-f(S^{j_{\mathbf{i}}},C_{{\bf i}}(j_{{\bf i}}),c_{{\bf i}'})\right)\right|\right]\\
 & \le K{\bf E}\left[\left(f(S,C_{{\bf i}}(j_{{\bf i}}),c_{{\bf i}'})\right)^{2}\right]^{1/2}{\bf E}\left[\left(f(S,C_{{\bf i}}(j_{{\bf i}}'),c_{{\bf i}'})-f(S^{j_{\mathbf{i}}},C_{{\bf i}}(j_{{\bf i}}'),c_{{\bf i}'})\right)^{2}\right]^{1/2}\\
 & \qquad+K{\bf E}\left[\left(f(S^{j_{\mathbf{i}}},C_{{\bf i}}(j_{{\bf i}}'),c_{{\bf i}'})\right)^{2}\right]^{1/2}{\bf E}\left[\left(f(S,C_{{\bf i}}(j_{{\bf i}}),c_{{\bf i}'})-f(S^{j_{\mathbf{i}}},C_{{\bf i}}(j_{{\bf i}}),c_{{\bf i}'})\right)^{2}\right]^{1/2}\\
 & \le K{\bf E}\left[\left(f(S,C_{{\bf i}}(j_{{\bf i}}'),c_{{\bf i}'})-f(S^{j_{\mathbf{i}}},C_{{\bf i}}(j_{{\bf i}}'),c_{{\bf i}'})\right)^{2}\right]^{1/2}{\bf E}\left[\left(f(S,C_{{\bf i}}(j_{{\bf i}}),c_{{\bf i}'})-f(S^{j_{\mathbf{i}}},C_{{\bf i}}(j_{{\bf i}}),c_{{\bf i}'})\right)^{2}\right]^{1/2}\\
 & \qquad+K{\bf E}\left[\left(f(S,C_{{\bf i}}(j_{{\bf i}}'),c_{{\bf i}'})-f(S^{j_{\mathbf{i}}},C_{{\bf i}}(j_{{\bf i}}'),c_{{\bf i}'})\right)^{2}\right]^{1/2}{\bf E}\left[\left(f(S^{j_{\mathbf{i}}},C_{{\bf i}}(j_{{\bf i}}),c_{{\bf i}'})\right)^{2}\right]^{1/2}\\
 & \qquad+K{\bf E}\left[\left(f(S^{j_{\mathbf{i}}},C_{{\bf i}}(j_{{\bf i}}'),c_{{\bf i}'})\right)^{2}\right]^{1/2}{\bf E}\left[\left(f(S,C_{{\bf i}}(j_{{\bf i}}),c_{{\bf i}'})-f(S^{j_{\mathbf{i}}},C_{{\bf i}}(j_{{\bf i}}),c_{{\bf i}'})\right)^{2}\right]^{1/2}.
\end{align*}
Furthermore, 
\begin{align*}
 & \mathbb{E}_{C_{{\bf i}'}}\left[{\bf E}\left[\left(f(S,C_{{\bf i}}(j_{{\bf i}}'),C_{{\bf i}'})-f(S^{j_{\mathbf{i}}},C_{{\bf i}}(j_{{\bf i}}'),C_{{\bf i}'})\right)^{2}\right]^{1/2}{\bf E}\left[\left(f(S^{j_{\mathbf{i}}},C_{{\bf i}}(j_{{\bf i}}),C_{{\bf i}'})\right)^{2}\right]^{1/2}\right]\\
 & \le\mathbb{E}_{C_{{\bf i}'}}{\bf E}\left[\left(f(S,C_{{\bf i}}(j_{{\bf i}}'),C_{{\bf i}'})-f(S^{j_{\mathbf{i}}},C_{{\bf i}}(j_{{\bf i}}'),C_{{\bf i}'})\right)^{2}\right]^{1/2}\mathbb{E}_{C_{{\bf i}'}}{\bf E}\left[\left(f(S^{j_{\mathbf{i}}},C_{{\bf i}}(j_{{\bf i}}),C_{{\bf i}'})\right)^{2}\right]^{1/2}\\
 & \le K/N^{\alpha/2},
\end{align*}
and the other two terms can be bounded similarly. Now recall that
$S^{j_{\mathbf{i}}}$ is independent of $C_{{\bf i}}(j_{{\bf i}})$
and $C_{{\bf i}}(j_{{\bf i}}')$. Let ${\cal S}^{j_{\mathbf{i}}}$
be the $\sigma$-algebra generated by $S'$ and $C_{{\bf k}}(j_{{\bf k}})$
for all $j_{{\bf k}}$ except $j_{{\bf i}}$ and $j_{{\bf i}}'$.
Then if $j_{{\bf i}}\neq j_{{\bf i}}'$,
\begin{align*}
 & {\bf E}\left[\left(f(S^{j_{\mathbf{i}}},C_{{\bf i}}(j_{{\bf i}}),c_{{\bf i}'})-\mathbb{E}_{C_{{\bf i}}}[f(S^{j_{\mathbf{i}}},C_{{\bf i}},c_{{\bf i}'})]\right)\left(f(S^{j_{\mathbf{i}}},C_{{\bf i}}(j_{{\bf i}}'),c_{{\bf i}'})-\mathbb{E}_{C_{{\bf i}}}[f(S^{j_{\mathbf{i}}},C_{{\bf i}},c_{{\bf i}'})]\right)\right]\\
 & ={\bf E}_{{\cal S}}\left[{\bf E}_{C_{{\bf i}}(j_{{\bf i}}),C_{{\bf i}}(j_{{\bf i}}')}\left[\left(f(S^{j_{\mathbf{i}}},C_{{\bf i}}(j_{{\bf i}}),c_{{\bf i}'})-\mathbb{E}_{C_{{\bf i}}}[f(S^{j_{\mathbf{i}}},C_{{\bf i}},c_{{\bf i}'})]\right)\left(f(S^{j_{\mathbf{i}}},C_{{\bf i}}(j_{{\bf i}}'),c_{{\bf i}'})-\mathbb{E}_{C_{{\bf i}}}[f(S^{j_{\mathbf{i}}},C_{{\bf i}},c_{{\bf i}'})]\right)\right]\right]\\
 & =0.
\end{align*}
Thus under the assumptions in the lemma,
\[
{\bf E}\mathbb{E}_{C_{{\bf i}'}}\left[\left(\mathbb{E}_{J_{{\bf i}}}[f(S,C_{{\bf i}}(J_{{\bf i}}),C_{{\bf i}'})]-\mathbb{E}_{C_{{\bf i}}}[f(S,C_{{\bf i}},C_{{\bf i}'})]\right)^{2}\right]\le\frac{K}{N^{\alpha/2}}+\frac{K}{N^{\left|\mathbf{i}\right|}}.
\]
\end{proof}
We proceed in several sections.

\subsection{A priori moment estimates}
\begin{lem}
\label{lem:tildeH-moment}Under Assumption \ref{Assump:Assumption_1},
we have for $t\leq T$,
\[
{\bf E}\mathbb{E}_{C_{i}}\left[\left(\tilde{G}_{i}^{H}(t,x,C_{i})\right)^{2p}\right],\;{\bf E}\mathbb{E}_{J_{i}}\left[\left(\tilde{G}_{i}^{H}(t,x,C_{i}\left(J_{i}\right))\right)^{2p}\right],\;{\bf E}\mathbb{E}_{C_{i}}\left[\left(\tilde{G}_{i}^{\tilde{H}}(t,x,C_{i})\right)^{2p}\right],\;{\bf E}\mathbb{E}_{C_{i}}\left[\left(\tilde{G}_{i}^{\tilde{H}}(t,x,C_{i}\left(J_{i}\right))\right)^{2p}\right]\le K_{T,p}.
\]
\end{lem}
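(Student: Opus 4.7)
The plan is to handle $\tilde G_i^H$ first as a standard i.i.d.\ empirical-fluctuation object, and then deduce the bound for $\tilde G_i^{\tilde H}$ by induction on $i$ via the identity
\[
\tilde G_i^{\tilde H}(t, c_i, x) = \tilde G_i^H(t, c_i, x) + \sqrt{N}\,\mathbb{E}_{J_{i-1}}\bigl[w_i(t, C_{i-1}(J_{i-1}), c_i)\bigl(\varphi_{i-1}(\tilde H_{i-1}) - \varphi_{i-1}(H_{i-1})\bigr)\bigr],
\]
which I obtain by adding and subtracting $\mathbb{E}_{J_{i-1}}[w_i\,\varphi_{i-1}(H_{i-1})]$ inside $\sqrt{N}(\tilde H_i - H_i)$. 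This cleanly separates a CLT-type contribution from a recursive Lipschitz contribution.

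For $\tilde G_i^H$, I fix $c_i$ and view the quantity as the rescaled empirical mean of the i.i.d.\ zero-mean summands $g_i(C_{i-1}(j), c_i, x) - \mathbb{E}_{C_{i-1}}[g_i]$ with $g_i(c_{i-1}, c_i, x) = w_i(t, c_{i-1}, c_i)\,\varphi_{i-1}(H_{i-1}(t, x, c_{i-1}))$. Rosenthal's inequality then bounds its conditional $2p$-th moment by $K_p(\mathbb{E}_{C_{i-1}}[|g_i|^{2p}] + \mathbb{E}_{C_{i-1}}[g_i^2]^p)$, and integrating over $C_i\sim P_i$ combined with $K$-boundedness of $\varphi_{i-1}$ and Lemma \ref{lem:MF_a_priori} yields $\mathbb{E}_{C_i, C_{i-1}}[|g_i|^{2p}]\le K_{T,p}$ (with Jensen absorbing the variance term). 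The $\mathbb{E}_{J_i}$-version follows by exchangeability, since each $C_i(j_i)\sim P_i$ is independent of the $C_{i-1}(\cdot)$ samples, reducing to the same estimate.

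For $\tilde G_i^{\tilde H}$ I induct on $i$: the base case $i=1$ is immediate from $\tilde H_1 = H_1$. For $i\ge 2$, the Lipschitz bound $\sqrt{N}|\varphi_{i-1}(\tilde H_{i-1}) - \varphi_{i-1}(H_{i-1})|\le K|\tilde G_{i-1}^{\tilde H}|$ combined with Jensen in $J_{i-1}$ applied to the identity above yields
\[
|\tilde G_i^{\tilde H}|^{2p} \le K_p\,|\tilde G_i^H|^{2p} + K_p\,\mathbb{E}_{J_{i-1}}\bigl[|w_i(t, C_{i-1}(J_{i-1}), c_i)|^{2p}\,|\tilde G_{i-1}^{\tilde H}(t, x, C_{i-1}(J_{i-1}))|^{2p}\bigr].
\]
After taking $\mathbf{E}\mathbb{E}_{C_i}$, the first term is controlled by the previous step; for the second, I rewrite $\mathbb{E}_{J_{i-1}}$ as an average over $j_{i-1}$, use Cauchy--Schwarz under $\mathbf{E}\mathbb{E}_{C_i}$ to decouple $|w_i|^{2p}$ and $|\tilde G_{i-1}^{\tilde H}|^{2p}$ at exponent $4p$, bound the $w_i$-factor by Lemma \ref{lem:MF_a_priori}, and control the $\tilde G_{i-1}^{\tilde H}$-factor by the inductive hypothesis at $4p$ after invoking exchangeability of $C_{i-1}(j_{i-1})$ with an independent $C_{i-1}\sim P_{i-1}$.

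The main obstacle is the shared randomness between $w_i(t, C_{i-1}(J_{i-1}), C_i)$ and $\tilde G_{i-1}^{\tilde H}(t, x, C_{i-1}(J_{i-1}))$ under $\mathbf{E}$: both depend on the same sampled neuron $C_{i-1}(J_{i-1})$ and on other sampled neurons at lower layers. Cauchy--Schwarz decouples them at the cost of doubling the moment exponent, which is harmless since the induction is carried simultaneously over all $p\ge 1$. One must also note that Lemma \ref{lem:MF_a_priori} controls $w_i$ only after averaging jointly over $C_{i-1}$ and $C_i$, so it is essential to integrate over $C_i$ (or $J_i$) from the outset rather than attempt pointwise-in-$c_i$ moment bounds; the quantities as stated in the lemma are already in exactly this form.
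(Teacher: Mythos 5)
Your proposal follows the same route as the paper: the identical decomposition $\tilde G_i^{\tilde H} = \tilde G_i^H + \sqrt{N}\,\mathbb{E}_{J_{i-1}}[w_i\,(\varphi_{i-1}(\tilde H_{i-1}) - \varphi_{i-1}(H_{i-1}))]$, the exchangeability observation for the $J_i$-averaged versions, the Lipschitz bound to insert $\tilde G_{i-1}^{\tilde H}$, and Cauchy--Schwarz decoupling of $|w_i|$ from $\tilde G_{i-1}^{\tilde H}$ at exponent $4p$ with induction over $i$ carried uniformly in $p$. The only cosmetic difference is that you invoke Rosenthal's inequality by name for the $\tilde G_i^H$ moment, whereas the paper states the same $\mathbb{V}^p + O(1/N)$ bound without naming it.
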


\begin{proof}
Recall Lemma \ref{lem:MF_a_priori}. The base case $i=1$ easily follows.
Notice that
\begin{align*}
\mathbf{E}\mathbb{E}_{J_{i-1}}\left[\left(\tilde{G}_{i-1}^{H}(t,x,C_{i-1}(J_{i-1}))\right)^{2p}\right] & =\mathbf{E}\mathbb{E}_{C_{i-1}}\left[\left(\tilde{G}_{i-1}^{H}(t,x,C_{i-1})\right)^{2p}\right],\\
\mathbf{E}\mathbb{E}_{J_{i-1}}\left[\left(\tilde{G}_{i-1}^{\tilde{H}}(t,x,C_{i-1}(J_{i-1}))\right)^{2p}\right] & =\mathbf{E}\mathbb{E}_{C_{i-1}}\left[\left(\tilde{G}_{i-1}^{\tilde{H}}(t,x,C_{i-1})\right)^{2p}\right],
\end{align*}
where the second claim is because the randomness of $\tilde{G}_{i-1}^{\tilde{H}}(t,x,c_{i-1})$
comes from $\left\{ C_{k}\left(j_{k}\right):\;j_{k}\in\left[N_{k}\right],\;k<i-1\right\} $.

Note that 
\begin{align*}
 & \tilde{G}_{i}^{\tilde{H}}(t,c_{i},x)\\
 & =\sqrt{N}\left(\tilde{H}_{i}(t,x,c_{i})-H_{i}(t,x,c_{i})\right)\\
 & =\sqrt{N}\left(\mathbb{E}_{J_{i-1}}[w_{i}(t,C_{i-1}(J_{i-1}),c_{i})\varphi_{i-1}(\tilde{H}_{i-1}(t,x,C_{i-1}(J_{i-1})))]-\mathbb{E}_{C_{i-1}}[w_{i}(t,C_{i-1},c_{i})\varphi_{i-1}(H_{i-1}(t,x,C_{i-1}))]\right)\\
 & =\tilde{G}_{i}^{H}(t,x,c_{i})+\sqrt{N}\mathbb{E}_{J_{i-1}}[w_{i}(t,C_{i-1}(J_{i-1}),c_{i})\left(\varphi_{i-1}(\tilde{H}_{i-1}(t,x,C_{i-1}(J_{i-1})))-\varphi_{i-1}(H_{i-1}(t,x,C_{i-1}(J_{i-1})))\right)].
\end{align*}
We have 
\begin{align*}
 & {\bf E}\mathbb{E}_{C_{i}}\left[\left(\tilde{G}_{i}^{H}(t,x,C_{i})\right)^{2p}\right]\\
 & \leq{\bf E}\mathbb{E}_{C_{i}}\left[\left(\sqrt{N}\left(\mathbb{E}_{J_{i-1}}[w_{i}(t,C_{i-1}(J_{i-1}),C_{i})\varphi_{i-1}(H_{i-1}(t,x,C_{i-1}(J_{i-1})))]-\mathbb{E}_{C_{i-1}}[w_{i}(t,C_{i-1},C_{i})\varphi_{i-1}(H_{i-1}(t,x,C_{i-1}))]\right)\right)^{2p}\right]\\
 & \le\mathbb{E}_{C_{i}}\left[\mathbb{V}_{C_{i-1}}[w_{i}(t,C_{i-1},C_{i})\varphi_{i-1}(H_{i-1}(t,x,C_{i-1}))]^{p}\right]+\frac{K_{T,p}}{N}\\
 & \le K_{T,p}.
\end{align*}
We also have:
\begin{align*}
 & \mathbf{E}\mathbb{E}_{C_{i}}\left[\left(\sqrt{N}\mathbb{E}_{J_{i-1}}\left[w_{i}(t,C_{i-1}(J_{i-1}),C_{i})\left(\varphi_{i-1}(\tilde{H}_{i-1}(t,x,C_{i-1}(J_{i-1})))-\varphi_{i-1}(H_{i-1}(t,x,C_{i-1}(J_{i-1})))\right)\right]\right)^{2p}\right]\\
 & \leq\mathbf{E}\mathbb{E}_{C_{i}}\left[\left(\mathbb{E}_{J_{i-1}}\left[w_{i}(t,C_{i-1}(J_{i-1}),C_{i})\tilde{G}_{i-1}^{\tilde{H}}(t,x,C_{i-1}(J_{i-1}))\right]\right)^{2p}\right]\\
 & \leq\mathbf{E}\mathbb{E}_{C_{i},J_{i-1}}\left[\left|w_{i}(t,C_{i-1}(J_{i-1}),C_{i})\right|^{4p}\right]^{1/2}\mathbf{E}\mathbb{E}_{J_{i-1}}\left[\left(\tilde{G}_{i-1}^{\tilde{H}}(t,x,C_{i-1}(J_{i-1}))\right)^{4p}\right]^{1/2}\\
 & \leq K_{T,p}\mathbf{E}\mathbb{E}_{J_{i-1}}\left[\left(\tilde{G}_{i-1}^{\tilde{H}}(t,x,C_{i-1}(J_{i-1}))\right)^{4p}\right]^{1/2}.
\end{align*}
Hence, by the induction hypothesis, we obtain 
\[
{\bf E}\mathbb{E}_{C_{i}}\left[\left(\tilde{G}_{i}^{\tilde{H}}(t,x,C_{i})\right)^{2p}\right]\le K_{T,p}.
\]
\end{proof}
\begin{lem}
\label{lem:partial-tildeH-moment}Under Assumption \ref{Assump:Assumption_1},
we have for any $t\leq T$,
\[
\mathbf{E}\mathbb{E}_{C_{i}}\left[\left(\tilde{G}_{i}^{\partial H}(t,C_{i},x)\right)^{2p}\right],\;\mathbf{E}\mathbb{E}_{J_{i-1}}\left[\left(\tilde{G}_{i-1}^{\partial H}(t,C_{i-1}\left(J_{i-1}\right),x)\right)^{2p}\right]\leq K_{T,p},
\]
\[
\mathbf{E}\mathbb{E}_{C_{i}}\left[\left(\tilde{G}_{i}^{\partial\tilde{H}}(t,C_{i},x)\right)^{2p}\right],\;\mathbf{E}\mathbb{E}_{J_{i}}\left[\left(\tilde{G}_{i}^{\partial\tilde{H}}(t,C_{i}\left(J_{i}\right),x)\right)^{2p}\right]\leq K_{T,p}.
\]
\end{lem}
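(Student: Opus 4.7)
The plan is to prove both bounds simultaneously by reverse induction on $i$, descending from $i=L$ to $i=1$, with the bound on $\tilde G_i^{\partial H}$ handled non-recursively at each step and the bound on $\tilde G_i^{\partial \tilde H}$ handled via the backward recursion together with the inductive hypothesis at $i+1$. The $\mathbb{E}_{J}$-indexed versions will follow from the $\mathbb{E}_{C}$-indexed versions by exchangeability: for a fixed $c_i$, the random quantity $\tilde G_i^{\partial \tilde H}(t,c_i,x)$ is measurable with respect to the samples $\{C_k(j_k):k\neq i\}$, which are independent of the layer-$i$ samples; hence substituting $c_i\to C_i(J_i)$ and averaging simply re-draws $c_i$ according to $P_i$, and an analogous observation handles $\tilde G_{i-1}^{\partial H}$.

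For $\tilde G_{i-1}^{\partial H}(t,c_{i-1},x)$, observe that it equals $\varphi_{i-1}'(H_{i-1}(t,x,c_{i-1}))$ times $\sqrt N$ the deviation of the empirical mean of $g(c_i):=w_i(t,c_{i-1},c_i)\,\frac{\partial\hat y(t,x)}{\partial H_i(c_i)}$ over the i.i.d.\ samples $\{C_i(j_i)\}_{j_i=1}^N\sim_{\mathrm{i.i.d.}}P_i$ from its expectation. By Marcinkiewicz--Zygmund, $\mathbf E_{S_i}[|\sqrt N(\bar g-\mathbb{E}_{C_i}g)|^{2p}]\le K_p\,\mathbb{E}_{C_i}[|g(C_i)|^{2p}]$, and the right-hand side is bounded by Assumption \ref{Assump:Assumption_1} together with a routine reverse-induction extension of Lemma \ref{lem:MF_a_priori} to the backward MF quantities $\partial\hat y/\partial H_i$. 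The boundary cases $\tilde G_L^{\partial H}=\tilde G_{L-1}^{\partial H}=0$ are trivial.

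For $\tilde G_i^{\partial \tilde H}$, the base case $i=L$ yields $\tilde G_L^{\partial \tilde H}(t,1,x)=\sqrt N(\varphi_L'(\tilde H_L)-\varphi_L'(H_L))$, which by Lipschitzness of $\varphi_L'$ is bounded by $K|\tilde G_L^{\tilde H}|$ and hence has bounded moments by Lemma \ref{lem:tildeH-moment}. For the inductive step, I would substitute the recursion for $\partial\tilde y/\partial\tilde H_i(c_i)$ and its MF analogue, add and subtract, and split
\begin{align*}
\tilde G_i^{\partial \tilde H}(t,c_i,x)
&=\sqrt N\,(\varphi_i'(\tilde H_i)-\varphi_i'(H_i))\cdot\mathbb{E}_{J_{i+1}}\!\left[\tfrac{\partial\tilde y}{\partial\tilde H_{i+1}}(C_{i+1}(J_{i+1}))\,w_{i+1}(t,c_i,C_{i+1}(J_{i+1}))\right]\\
&\quad+\varphi_i'(H_i)\cdot\mathbb{E}_{J_{i+1}}\!\left[w_{i+1}(t,c_i,C_{i+1}(J_{i+1}))\,\tilde G_{i+1}^{\partial \tilde H}(t,C_{i+1}(J_{i+1}),x)\right]+\tilde G_i^{\partial H}(t,c_i,x),
\end{align*}
which I label $A$, $B$, $C$. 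Term $A$ is controlled by Lipschitzness of $\varphi_i'$, Lemma \ref{lem:tildeH-moment} for the prefactor, and a Jensen--Cauchy-Schwarz bound on the inner $\mathbb{E}_{J_{i+1}}$-average (using $\partial\tilde y/\partial\tilde H_{i+1}=\partial\hat y/\partial H_{i+1}+N^{-1/2}\tilde G_{i+1}^{\partial\tilde H}$ with the inductive hypothesis, and moments of $w_{i+1}$ from Lemma \ref{lem:MF_a_priori}). Term $B$ is controlled by first applying Jensen to bring $|\cdot|^{2p}$ inside $\mathbb{E}_{J_{i+1}}$, then Cauchy--Schwarz, and invoking the inductive hypothesis for $\tilde G_{i+1}^{\partial\tilde H}$ at moment $4p$ together with Lemma \ref{lem:MF_a_priori} for $w_{i+1}$. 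Term $C$ is handled by the bound on $\tilde G_i^{\partial H}$ above.

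The main obstacle is Term $B$: the factors $w_{i+1}(t,c_i,C_{i+1}(J_{i+1}))$ and $\tilde G_{i+1}^{\partial\tilde H}(t,C_{i+1}(J_{i+1}),x)$ share the random sample $C_{i+1}(J_{i+1})$, so they cannot be decoupled cleanly, and moreover $\tilde G_{i+1}^{\partial\tilde H}(\cdot,c_{i+1},\cdot)$ is itself random through the samples at layers different from $i+1$. I address this by Jensen inside $\mathbb{E}_{J_{i+1}}$ followed by Cauchy--Schwarz, paying the price of raising $2p$-th moments in the conclusion to $4p$-th moments in the inductive hypothesis, which is harmless since the hypothesis holds for all $p$. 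A second delicate point is the i.i.d.\ structure underlying the bound on $\tilde G_i^{\partial H}$: crucially, the summand $g(c_{i+1})$ depends only on the deterministic MF quantity $\partial\hat y/\partial H_{i+1}(c_{i+1})$ and the deterministic weight $w_{i+1}(t,c_i,c_{i+1})$, so the random variables $g(C_{i+1}(j_{i+1}))$ really are i.i.d.\ across $j_{i+1}$ once $c_i$ is fixed, allowing Marcinkiewicz--Zygmund to apply cleanly.
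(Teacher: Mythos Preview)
Your overall strategy matches the paper's: backward induction on $i$, the three-term decomposition of $\tilde G_{i}^{\partial\tilde H}$ into a $\tilde G^{\partial H}$ piece, a $(\varphi_i'(\tilde H_i)-\varphi_i'(H_i))$ piece, and a recursive piece, Marcinkiewicz--Zygmund for $\tilde G^{\partial H}$, and Jensen plus Cauchy--Schwarz with the inductive hypothesis at doubled moment order for the recursive term.

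There is one genuine gap. Your measurability claim for $\tilde G_i^{\partial\tilde H}(t,c_i,x)$ is false. Unrolling the backward recursion, $\partial\tilde y/\partial\tilde H_i(c_i)$ contains $\partial\tilde y/\partial\tilde H_{i+1}(c_{i+1})$, which carries the factor $\varphi_{i+1}'(\tilde H_{i+1}(t,c_{i+1},x))$, and
\[
\tilde H_{i+1}(t,c_{i+1},x)=\mathbb{E}_{J_i}\big[w_{i+1}(t,C_i(J_i),c_{i+1})\,\varphi_i(\tilde H_i(t,C_i(J_i),x))\big]
\]
\emph{does} depend on the layer-$i$ samples. Hence $\tilde G_i^{\partial\tilde H}(t,\cdot,x)$ is not independent of $\{C_i(j_i)\}$, and you cannot pass from the $\mathbf E\mathbb E_{C_i}$-bound to the $\mathbf E\mathbb E_{J_i}$-bound by your exchangeability shortcut. (The shortcut \emph{is} valid for $\tilde G_{i-1}^{\partial H}$, whose only randomness is layer-$i$ sampling, so that part of your argument is fine.) Note too that your own handling of Term~$B$ already needs the $\mathbb E_{J_{i+1}}$-form of the inductive hypothesis after Jensen, so the induction must supply it. The paper's remedy is simply to carry both the $\mathbb E_{C}$- and the $\mathbb E_{J}$-indexed bounds through the induction simultaneously: at each step it bounds each of the three terms in both the $c_{i-1}\sim P_{i-1}$ and $c_{i-1}=C_{i-1}(J_{i-1})$ forms directly. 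With that modification your argument goes through.
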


\begin{proof}
Recall that $\tilde{G}_{L}^{\partial H}=0$. For $\tilde{G}_{L}^{\partial\tilde{H}}$:
\begin{align*}
\mathbf{E}\left[\left(\tilde{G}_{L}^{\partial\tilde{H}}(t,1,x)\right)^{2p}\right] & =\mathbf{E}\left[\left(\sqrt{N}\left(\frac{\partial\tilde{y}(t,x)}{\partial\tilde{H}_{L}(1)}-\frac{\partial\hat{y}(t,x)}{\partial H_{L}(1)}\right)\right)^{2p}\right]\\
 & =\mathbf{E}\left[\left(\sqrt{N}\left(\varphi_{L}'\left(\tilde{H}_{L}\left(t,x,1\right)\right)-\varphi_{L}'\left(H_{L}\left(t,x,1\right)\right)\right)\right)^{2p}\right]\\
 & \leq K\mathbf{E}\left[\left(\tilde{G}_{L}^{\tilde{H}}\left(t,1,x\right)\right)^{2p}\right]\\
 & \leq K_{T,p}
\end{align*}
by Lemma \ref{lem:tildeH-moment}.

We note that 
\begin{align*}
 & \tilde{G}_{i-1}^{\partial\tilde{H}}(t,c_{i-1},x)\\
 & =\sqrt{N}\left(\frac{\partial\tilde{y}(t,x)}{\partial\tilde{H}_{i-1}(c_{i-1})}-\frac{\partial\hat{y}(t,x)}{\partial H_{i-1}(c_{i-1})}\right)\\
 & =\sqrt{N}\left(\mathbb{E}_{J_{i}}\left[w_{i}(t,c_{i-1},C_{i}(J_{i}))\varphi_{i-1}'(\tilde{H}_{i-1}(t,x,c_{i-1}))\frac{\partial\tilde{y}(t,x)}{\partial\tilde{H}_{i}(C_{i}(J_{i}))}\right]-\mathbb{E}_{C_{i}}\left[w_{i}(t,c_{i-1},C_{i})\varphi_{i-1}'(H_{i-1}(t,x,c_{i-1}))\frac{\partial\hat{y}(t,x)}{\partial H_{i}(C_{i})}\right]\right)\\
 & =\varphi_{i-1}'(H_{i-1}(t,x,c_{i-1}))\tilde{G}_{i-1}^{\partial H}(t,c_{i-1},x)\\
 & \quad+\sqrt{N}\left(\varphi_{i-1}'(\tilde{H}_{i-1}(t,x,c_{i-1}))-\varphi_{i-1}'(H_{i-1}(t,x,c_{i-1}))\right)\mathbb{E}_{J_{i}}\left[w_{i}(t,c_{i-1},C_{i}(J_{i}))\frac{\partial\hat{y}(t,x)}{\partial H_{i}(C_{i}(J_{i}))}\right]\\
 & \quad+\varphi_{i-1}'(\tilde{H}_{i-1}(t,x,c_{i-1}))\mathbb{E}_{J_{i}}\left[w_{i}(t,c_{i-1},C_{i}(J_{i}))\tilde{G}_{i}^{\partial\tilde{H}}(t,C_{i}(J_{i}),x)\right].
\end{align*}
We bound each term. We have for $i<L$,
\begin{align*}
 & \mathbf{E}\mathbb{E}_{C_{i-1}}\left[\left(\tilde{G}_{i-1}^{\partial H}(t,C_{i-1},x)\right)^{2p}\right]\\
 & \leq K_{p}\mathbf{E}\mathbb{E}_{C_{i-1}}\left[\left(\sqrt{N}\left(\mathbb{E}_{J_{i}}\left[w_{i}(t,C_{i-1},C_{i}(J_{i}))\frac{\partial\hat{y}(t,x)}{\partial H_{i}(C_{i}(J_{i}))}\right]-\mathbb{E}_{C_{i}}\left[w_{i}(t,C_{i-1},C_{i})\frac{\partial\hat{y}(t,x)}{\partial H_{i}(C_{i})}\right]\right)\right)^{2p}\right]\\
 & \leq K_{p}\mathbb{E}_{C_{i-1}}\left[\mathbb{V}_{C_{i}}\left[w_{i}(t,C_{i-1},C_{i})\frac{\partial\hat{y}(t,x)}{\partial H_{i}(C_{i})}\right]^{p}\right]+\frac{K_{T,p}}{N}\\
 & \leq K_{T,p},
\end{align*}
and similarly,
\[
\mathbf{E}\mathbb{E}_{J_{i-1}}\left[\left(\tilde{G}_{i-1}^{\partial H}(t,C_{i-1}\left(J_{i-1}\right),x)\right)^{2p}\right]\leq K_{T,p}.
\]
The same of course holds in the case $i=L$ since $\tilde{G}_{L-1}^{\partial H}(t,c_{L-1},x)=0$.
By Lemma \ref{lem:tildeH-moment},
\begin{align*}
 & \mathbf{E}\mathbb{E}_{C_{i-1}}\left[\left(\sqrt{N}\left(\varphi_{i-1}'(\tilde{H}_{i-1}(t,x,C_{i-1}))-\varphi_{i-1}'(H_{i-1}(t,x,C_{i-1}))\right)\mathbb{E}_{J_{i}}\left[w_{i}(t,C_{i-1},C_{i}(J_{i}))\frac{\partial\hat{y}(t,x)}{\partial H_{i}(C_{i}(J_{i}))}\right]\right)^{2p}\right]\\
 & \leq K\mathbf{E}\mathbb{E}_{C_{i-1}}\left[\left(\tilde{G}_{i-1}^{\tilde{H}}(t,C_{i-1},x)\mathbb{E}_{J_{i}}\left[w_{i}(t,C_{i-1},C_{i}(J_{i}))\frac{\partial\hat{y}(t,x)}{\partial H_{i}(C_{i}(J_{i}))}\right]\right)^{2p}\right]\\
 & \leq K\mathbf{E}\mathbb{E}_{C_{i-1}}\left[\left(\tilde{G}_{i-1}^{\tilde{H}}(t,C_{i-1},x)\right)^{4p}\right]^{1/2}\mathbf{E}\mathbb{E}_{C_{i-1}}\left[\mathbb{E}_{J_{i}}\left[w_{i}(t,C_{i-1},C_{i}(J_{i}))\frac{\partial\hat{y}(t,x)}{\partial H_{i}(C_{i}(J_{i}))}\right]^{4p}\right]^{1/2}\\
 & \leq K_{T,p},
\end{align*}
and by noticing that $\mathbf{E}\mathbb{E}_{J_{i-1}}\left[\left(\tilde{G}_{i-1}^{\tilde{H}}(t,C_{i-1}\left(J_{i-1}\right),x)\right)^{4p}\right]=\mathbf{E}\mathbb{E}_{C_{i-1}}\left[\left(\tilde{G}_{i-1}^{\tilde{H}}(t,C_{i-1},x)\right)^{4p}\right]$,
we have similarly:
\begin{align*}
 & \mathbf{E}\mathbb{E}_{J_{i-1}}\left[\left(\sqrt{N}\left(\varphi_{i-1}'(\tilde{H}_{i-1}(t,x,C_{i-1}\left(J_{i-1}\right)))-\varphi_{i-1}'(H_{i-1}(t,x,C_{i-1}\left(J_{i-1}\right)))\right)\mathbb{E}_{J_{i}}\left[w_{i}(t,C_{i-1}\left(J_{i-1}\right),C_{i}(J_{i}))\frac{\partial\hat{y}(t,x)}{\partial H_{i}(C_{i}(J_{i}))}\right]\right)^{2p}\right]\\
 & \leq K_{T,p}.
\end{align*}
We also have:
\begin{align*}
 & \mathbf{E}\mathbb{E}_{C_{i-1}}\left[\mathbb{E}_{J_{i}}\left[w_{i}(t,C_{i-1},C_{i}(J_{i}))\tilde{G}_{i}^{\partial\tilde{H}}(t,C_{i}(J_{i}),x)\right]^{2p}\right]\\
 & \leq\mathbf{E}\mathbb{E}_{C_{i-1}}\mathbb{E}_{J_{i}}\left[\left|w_{i}(t,C_{i-1},C_{i}(J_{i}))\tilde{G}_{i}^{\partial\tilde{H}}(t,C_{i}(J_{i}),x)\right|^{2p}\right]\\
 & \leq\mathbf{E}\mathbb{E}_{C_{i-1}}\mathbb{E}_{J_{i}}\left[\left|w_{i}(t,C_{i-1},C_{i}(J_{i}))\right|^{4p}\right]^{1/2}\mathbf{E}\mathbb{E}_{J_{i}}\left[\left|\tilde{G}_{i}^{\partial\tilde{H}}(t,C_{i}(J_{i}),x)\right|^{4p}\right]^{1/2}\\
 & \leq K_{T,p}
\end{align*}
by the induction hypothesis, and similarly,
\[
\mathbf{E}\mathbb{E}_{J_{i-1}}\left[\mathbb{E}_{J_{i}}\left[w_{i}(t,C_{i-1}\left(J_{i-1}\right),C_{i}(J_{i}))\tilde{G}_{i}^{\partial\tilde{H}}(t,C_{i}(J_{i}),x)\right]^{2p}\right]\leq K_{T,p}.
\]
This completes the proof.
\end{proof}

\subsection{Relation among $\tilde{G}$ quantities}
\begin{lem}
\label{lem:tildeH-recursion}Under Assumption \ref{Assump:Assumption_1},
we have for $t\leq T$,
\begin{align*}
\tilde{G}_{1}^{\tilde{H}}(t,c_{1},x) & =0,\\
\tilde{G}_{i}^{\tilde{H}}(t,c_{i},x) & =\tilde{G}_{i}^{H}(t,c_{i},x)+\mathbb{E}_{C_{i-1}}\left[w_{i}(t,C_{i-1},c_{i})\varphi_{i-1}'(H_{i-1}(t,x,C_{i-1}))\tilde{G}_{i-1}^{\tilde{H}}(t,x,C_{i-1})\right]+{\bf O}_{T}(N^{-1/2}).
\end{align*}
\end{lem}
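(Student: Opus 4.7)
The plan is to establish the base case by direct inspection and derive the recursion via a telescoping decomposition followed by a Taylor expansion plus a law-of-large-numbers substitution.

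For the base case $i=1$, note that $\tilde{H}_{1}(t,x,c_{1}) = \langle w_{1}(t,c_{1}),x\rangle = H_{1}(t,x,c_{1})$ by definition, since neither quantity involves any empirical average over sampled neurons. Hence $\tilde{G}_{1}^{\tilde{H}}(t,c_{1},x)=0$ identically.

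For the inductive step at $i\geq 2$, I would start from the identity
\[
\tilde{G}_{i}^{\tilde{H}}(t,c_{i},x) = \tilde{G}_{i}^{H}(t,c_{i},x) + \sqrt{N}\,\mathbb{E}_{J_{i-1}}\Big[w_{i}(t,C_{i-1}(J_{i-1}),c_{i})\big(\varphi_{i-1}(\tilde{H}_{i-1}(t,x,C_{i-1}(J_{i-1})))-\varphi_{i-1}(H_{i-1}(t,x,C_{i-1}(J_{i-1})))\big)\Big],
\]
which is obtained by adding and subtracting $\mathbb{E}_{J_{i-1}}[w_{i}(t,C_{i-1}(J_{i-1}),c_{i})\varphi_{i-1}(H_{i-1}(t,x,C_{i-1}(J_{i-1})))]$ inside the definition of $\tilde{G}_{i}^{\tilde{H}}$; this decomposition already appeared implicitly in the proof of Lemma \ref{lem:tildeH-moment}. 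A first-order Taylor expansion of $\varphi_{i-1}$ yields $\varphi_{i-1}(\tilde{H}_{i-1})-\varphi_{i-1}(H_{i-1}) = \varphi_{i-1}'(H_{i-1})(\tilde{H}_{i-1}-H_{i-1}) + \tfrac{1}{2}\varphi_{i-1}''(\xi)(\tilde{H}_{i-1}-H_{i-1})^{2}$ for some intermediate $\xi$. The remainder, once multiplied by $\sqrt{N}$, takes the form $N^{-1/2}\varphi_{i-1}''(\xi)(\tilde{G}_{i-1}^{\tilde{H}})^{2}/2$; bounding it in second moment uses $K$-boundedness of $\varphi_{i-1}''$, the moment bound on $w_{i}$ from Lemma \ref{lem:MF_a_priori}, and the fourth moment bound on $\tilde{G}_{i-1}^{\tilde{H}}$ from Lemma \ref{lem:tildeH-moment}, yielding an $\mathbf{O}_{T}(N^{-1/2})$ contribution.

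It remains to replace the empirical average $\mathbb{E}_{J_{i-1}}$ by $\mathbb{E}_{C_{i-1}}$ in the leading term
\[
\mathbb{E}_{J_{i-1}}\Big[w_{i}(t,C_{i-1}(J_{i-1}),c_{i})\,\varphi_{i-1}'(H_{i-1}(t,x,C_{i-1}(J_{i-1})))\,\tilde{G}_{i-1}^{\tilde{H}}(t,x,C_{i-1}(J_{i-1}))\Big].
\]
I would apply Lemma \ref{lem:second-moment-dependency} with $\mathbf{i}=i-1$, $S'=\{S_{k}:k\leq i-2\}$, and
\[
f(S,c_{i-1},c_{i}) = w_{i}(t,c_{i-1},c_{i})\,\varphi_{i-1}'(H_{i-1}(t,x,c_{i-1}))\,\tilde{G}_{i-1}^{\tilde{H}}(t,x,c_{i-1}).
\]
The crucial observation is that $f$ depends on $S_{i-1}$ only through its second argument $c_{i-1}$: the MF objects $w_{i}$ and $H_{i-1}$ are deterministic functions of their arguments, and $\tilde{H}_{i-1}(t,x,c_{i-1})$ — hence $\tilde{G}_{i-1}^{\tilde{H}}(t,x,c_{i-1})$ — only reads samples from layers $\leq i-2$. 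Consequently swapping one element of $S_{i-1}$ leaves $f(S,C_{i-1}(j_{i-1}),c_{i})$ unchanged, so the stability parameter $\alpha$ in Lemma \ref{lem:second-moment-dependency} can be taken arbitrarily large, and the second-moment boundedness hypothesis is verified by Cauchy–Schwarz together with Lemma \ref{lem:MF_a_priori} and the fourth-moment control from Lemma \ref{lem:tildeH-moment}. The lemma then gives the replacement with error $\mathbf{O}(N^{-|\mathbf{i}|/2}) = \mathbf{O}(N^{-1/2})$, which completes the desired recursion.

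The only nontrivial step is the precise accounting of measurability that justifies $f$'s independence of $S_{i-1}$ outside its second argument; once that is in place, every error term is $\mathbf{O}_{T}(N^{-1/2})$ by the a priori bounds already proven.
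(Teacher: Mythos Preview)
Your proposal is correct and follows essentially the same approach as the paper: identical base case, identical telescoping decomposition, a first-order Taylor/mean-value step whose remainder is controlled via Lemmas \ref{lem:MF_a_priori} and \ref{lem:tildeH-moment}, and the same invocation of Lemma \ref{lem:second-moment-dependency} with the key observation that $f(S,c_{i-1},c_i)$ does not depend on $C_{i-1}(j_{i-1})$. The only cosmetic difference is that the paper writes the remainder via the mean-value form $\varphi_{i-1}'(h_{i-1})-\varphi_{i-1}'(H_{i-1})$ and invokes Lipschitzness of $\varphi_{i-1}'$, whereas you use the Lagrange form $\tfrac12\varphi_{i-1}''(\xi)(\tilde H_{i-1}-H_{i-1})^2$ and boundedness of $\varphi_{i-1}''$; both are available under Assumption \ref{Assump:Assumption_1} and yield the same $\mathbf{O}_T(N^{-1/2})$ error.
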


\begin{proof}
We have 
\[
\tilde{G}_{1}^{\tilde{H}}(t,c_{1},x)=\sqrt{N}(\tilde{H}_{1}(t,x,c_{1})-H_{1}(t,x,c_{1}))=0,
\]
and
\begin{align*}
 & \tilde{G}_{i}^{\tilde{H}}(t,c_{i},x)\\
 & =\sqrt{N}\left(\tilde{H}_{i}(t,x,c_{i})-H_{i}(t,x,c_{i})\right)\\
 & =\sqrt{N}\left(\mathbb{E}_{J_{i-1}}[w_{i}(t,C_{i-1}(J_{i-1}),c_{i})\varphi_{i-1}(\tilde{H}_{i-1}(t,x,C_{i-1}(J_{i-1})))]-\mathbb{E}_{C_{i-1}}[w_{i}(t,C_{i-1},c_{i})\varphi_{i-1}(H_{i-1}(t,x,C_{i-1}))]\right)\\
 & =\sqrt{N}\tilde{G}_{i}^{H}(t,c_{i},x)\\
 & \quad+\sqrt{N}\mathbb{E}_{J_{i-1}}\left[w_{i}(t,C_{i-1}(J_{i-1}),c_{i})\left(\varphi_{i-1}(\tilde{H}_{i-1}(t,x,C_{i-1}(J_{i-1})))-\varphi_{i-1}(H_{i-1}(t,x,C_{i-1}(J_{i-1})))\right)\right].
\end{align*}
We note that by the mean value theorem, for some $h_{i-1}\left(t,x,C_{i-1}(J_{i-1})\right)$
between $H_{i-1}(t,x,C_{i-1}(J_{i-1}))$ and $\tilde{H}_{i-1}(t,x,C_{i-1}(J_{i-1}))$:
\begin{align*}
 & \mathbb{E}_{C_{i}}\mathbf{E}\bigg[\mathbb{E}_{J_{i-1}}\bigg[\sqrt{N}w_{i}(t,C_{i-1}(J_{i-1}),C_{i})\left(\varphi_{i-1}(\tilde{H}_{i-1}(t,x,C_{i-1}(J_{i-1})))-\varphi_{i-1}(H_{i-1}(t,x,C_{i-1}(J_{i-1})))\right)\\
 & \qquad-w_{i}(t,C_{i-1}(J_{i-1}),C_{i})\varphi_{i-1}'(H_{i-1}(t,x,C_{i-1}(J_{i-1})))\tilde{G}_{i-1}^{\tilde{H}}(t,x,C_{i-1}(J_{i-1}))\bigg]^{2}\bigg]\\
 & \leq\mathbb{E}_{C_{i}}\mathbf{E}\bigg[\mathbb{E}_{J_{i-1}}\bigg[w_{i}(t,C_{i-1}(J_{i-1}),C_{i})\left(\varphi_{i-1}'\left(h_{i-1}\left(t,x,C_{i-1}(J_{i-1})\right)\right)-\varphi_{i-1}'\left(H_{i-1}\left(t,x,C_{i-1}(J_{i-1})\right)\right)\right)\tilde{G}_{i-1}^{\tilde{H}}(t,x,C_{i-1}(J_{i-1}))\bigg]^{2}\bigg]\\
 & \leq K\mathbb{E}_{C_{i}}\mathbf{E}\bigg[\mathbb{E}_{J_{i-1}}\bigg[\left|w_{i}(t,C_{i-1}(J_{i-1}),C_{i})\right|\left|h_{i-1}\left(t,x,C_{i-1}(J_{i-1})\right)-H_{i-1}\left(t,x,C_{i-1}(J_{i-1})\right)\right|\left|\tilde{G}_{i-1}^{\tilde{H}}(t,x,C_{i-1}(J_{i-1}))\right|\bigg]^{2}\bigg]\\
 & \leq K\mathbb{E}_{C_{i}}\mathbf{E}\bigg[\mathbb{E}_{J_{i-1}}\bigg[\left|w_{i}(t,C_{i-1}(J_{i-1}),C_{i})\right|\left(\tilde{G}_{i-1}^{\tilde{H}}(t,x,C_{i-1}(J_{i-1}))\right)^{2}\bigg]^{2}\bigg]/N\\
 & \leq K_{T}/N,
\end{align*}
by Lemmas \ref{lem:MF_a_priori} and \ref{lem:tildeH-moment}. Thus,
it remains to show that 
\begin{align*}
 & \mathbb{E}_{C_{i}}{\bf E}\bigg[\bigg(\mathbb{E}_{J_{i-1}}\left[w_{i}(t,C_{i-1}(J_{i-1}),C_{i})\varphi_{i-1}'(H_{i-1}(t,x,C_{i-1}(J_{i-1})))\tilde{G}_{i-1}^{\tilde{H}}(t,x,C_{i-1}(J_{i-1}))\right]\\
 & \qquad-\mathbb{E}_{C_{i-1}}\left[w_{i}(t,C_{i-1},C_{i})\varphi_{i-1}'(H_{i-1}(t,x,C_{i-1}))\tilde{G}_{i-1}^{\tilde{H}}(t,x,C_{i-1})\right]\bigg)^{2}\bigg]=O_{T}(1/N).
\end{align*}
This is shown by applying Lemma \ref{lem:second-moment-dependency},
where we take $\mathbf{i}=i-1$, $\mathbf{i}'=i$, $S'=\left\{ C_{k}\left(j_{k}\right):\;j_{k}\in\left[N_{k}\right],\;k<i-1\right\} $
and 
\[
f\left(S,c_{i-1},c_{i}\right)=w_{i}(t,c_{i-1},c_{i})\varphi_{i-1}'(H_{i-1}(t,x,c_{i-1}))\tilde{G}_{i-1}^{\tilde{H}}(t,x,c_{i-1}).
\]
Observe that $f\left(S,c_{i-1},c_{i}\right)$ is independent of $C_{i-1}\left(j_{i-1}\right)$.
In addition:
\[
\mathbf{E}\mathbb{E}_{C_{i}}\left[\left(f\left(S^{j_{i-1}},C_{i-1}\left(j_{i-1}\right),C_{i}\right)\right)^{2}\right]\leq K\mathbf{E}\mathbb{E}_{C_{i}}\left[\left(w_{i}(t,C_{i-1}\left(j_{i-1}\right),C_{i})\right)^{4}\right]^{1/2}\mathbf{E}\left[\left(\tilde{G}_{i-1}^{\tilde{H}}(t,x,C_{i-1}(j_{i-1}))\right)^{4}\right]^{1/2}\leq K_{T}
\]
from Lemma \ref{lem:tildeH-moment}. This shows the claim.
\end{proof}
\begin{lem}
\label{lem:partial-tildeH-recursion}Under Assumption \ref{Assump:Assumption_1},
we have for any $t\leq T$,

\begin{align*}
\tilde{G}_{L}^{\partial\tilde{H}}(t,c_{L},x) & =\tilde{G}_{L}^{\tilde{H}}\left(t,x,c_{L}\right)\varphi_{L}''(H_{L}(t,x,c_{L}))+{\bf O}_{T}(N^{-1/4}),\\
\tilde{G}_{i-1}^{\partial\tilde{H}}(t,c_{i-1},x) & =\tilde{G}_{i-1}^{\partial H}(t,c_{i-1},x)+\tilde{G}_{i-1}^{\tilde{H}}(t,c_{i-1},x)\varphi_{i-1}''(H_{i-1}(t,x,c_{i-1}))\mathbb{E}_{C_{i}}\left[w_{i}(t,c_{i-1},C_{i})\frac{\partial\hat{y}(t,x)}{\partial H_{i}(C_{i})}\right]\\
 & \qquad+\varphi_{i-1}'(H_{i-1}(t,x,c_{i-1}))\mathbb{E}_{C_{i}}\left[w_{i}(t,c_{i-1},C_{i})\tilde{G}_{i}^{\partial\tilde{H}}(t,C_{i},x)\right]+{\bf O}_{T}(N^{-1/4}).
\end{align*}
\end{lem}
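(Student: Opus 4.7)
The plan is to derive both identities by first-order Taylor expansion of the outer activation derivative combined with an LLN-type replacement $\mathbb{E}_{J_i}\mapsto\mathbb{E}_{C_i}$, relying on the a priori moment bounds from Lemmas \ref{lem:tildeH-moment} and \ref{lem:partial-tildeH-moment} to quantify the remainders in $L^2$. In the inductive case $i<L$, the three-term decomposition already derived at the start of the proof of Lemma \ref{lem:partial-tildeH-moment} identifies the correct leading terms; what remains is to control the three error contributions.

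For the top layer $i=L$, we have $\tilde{G}_L^{\partial\tilde{H}}(t,1,x)=\sqrt{N}\bigl(\varphi_L'(\tilde{H}_L(t,x,1))-\varphi_L'(H_L(t,x,1))\bigr)$. The $K$-Lipschitz property of $\varphi_L''$ from Assumption \ref{Assump:Assumption_1} gives the pointwise Taylor bound $|\varphi_L'(\tilde{H})-\varphi_L'(H)-\varphi_L''(H)(\tilde{H}-H)|\le K(\tilde{H}-H)^2$, so the discrepancy between $\tilde{G}_L^{\partial\tilde{H}}(t,1,x)$ and $\varphi_L''(H_L(t,x,1))\tilde{G}_L^{\tilde{H}}(t,1,x)$ is pointwise bounded by $K\bigl(\tilde{G}_L^{\tilde{H}}(t,1,x)\bigr)^2/\sqrt{N}$. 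Taking second moments and invoking the fourth-moment bound on $\tilde{G}_L^{\tilde{H}}$ from Lemma \ref{lem:tildeH-moment} yields an ${\bf O}_T(N^{-1/2})$ remainder, which is stronger than the claimed ${\bf O}_T(N^{-1/4})$.

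For $i<L$, start from the decomposition $\tilde{G}_{i-1}^{\partial\tilde{H}}=\tilde{G}_{i-1}^{\partial H}+A_1+A_2$ with $A_1=\sqrt{N}\bigl(\varphi_{i-1}'(\tilde{H}_{i-1})-\varphi_{i-1}'(H_{i-1})\bigr)\mathbb{E}_{J_i}[w_i(t,c_{i-1},C_i(J_i))\partial\hat{y}/\partial H_i(C_i(J_i))]$ and $A_2=\varphi_{i-1}'(\tilde{H}_{i-1})\mathbb{E}_{J_i}[w_i(t,c_{i-1},C_i(J_i))\tilde{G}_i^{\partial\tilde{H}}(t,C_i(J_i),x)]$, as recorded in the proof of Lemma \ref{lem:partial-tildeH-moment}. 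For $A_1$, apply the same Taylor expansion to replace the $\sqrt{N}$-scaled difference by $\varphi_{i-1}''(H_{i-1})\tilde{G}_{i-1}^{\tilde{H}}$ (with remainder controlled exactly as above), and then swap $\mathbb{E}_{J_i}$ for $\mathbb{E}_{C_i}$ on the empirical average. Since that integrand does not involve any layer-$i$ samples, the swap is a direct LLN estimate (the trivial case of Lemma \ref{lem:second-moment-dependency}), of order $N^{-1/2}$ in $L^2$; Cauchy-Schwarz combined with the fourth-moment control on $\tilde{G}_{i-1}^{\tilde{H}}$ then yields the first claimed $A_1$-term plus an ${\bf O}_T(N^{-1/2})$ remainder.

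The main obstacle lies in $A_2$. Replacing $\varphi_{i-1}'(\tilde{H}_{i-1})$ by $\varphi_{i-1}'(H_{i-1})$ costs an extra factor $|\tilde{H}_{i-1}-H_{i-1}|=|\tilde{G}_{i-1}^{\tilde{H}}|/\sqrt{N}$ multiplying an $L^2$-bounded factor, and is ${\bf O}_T(N^{-1/2})$ by Lemmas \ref{lem:MF_a_priori}, \ref{lem:tildeH-moment}, and \ref{lem:partial-tildeH-moment}. The delicate step is then to justify
\[
\mathbb{E}_{J_i}\bigl[w_i(t,c_{i-1},C_i(J_i))\tilde{G}_i^{\partial\tilde{H}}(t,C_i(J_i),x)\bigr]=\mathbb{E}_{C_i}\bigl[w_i(t,c_{i-1},C_i)\tilde{G}_i^{\partial\tilde{H}}(t,C_i,x)\bigr]+{\bf O}_T(N^{-1/2}),
\]
because $\tilde{G}_i^{\partial\tilde{H}}(t,c_i,x)$ is a complicated measurable function of the full sample family $\{C_k(j_k)\}$. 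The key observation that unlocks Lemma \ref{lem:second-moment-dependency} is that $\tilde{G}_i^{\partial\tilde{H}}(t,c_i,x)$ depends only on samples from layers $\le i-1$ (through $\tilde{H}_i$) and layers $\ge i+1$ (through the backward empirical averages), but not on the layer-$i$ samples $\{C_i(j_i)\}$. Consequently the stability hypothesis of Lemma \ref{lem:second-moment-dependency} holds with zero swapping error, and after verifying the $L^2$ bound on $f(S,c_{i-1},c_i)=w_i(t,c_{i-1},c_i)\tilde{G}_i^{\partial\tilde{H}}(t,c_i,x)$ via Lemmas \ref{lem:MF_a_priori} and \ref{lem:partial-tildeH-moment}, we obtain the desired replacement. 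Assembling the three contributions yields the claimed recursion for $i<L$.
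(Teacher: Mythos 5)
There is a genuine gap in the treatment of the $A_{2}$ term. You assert that $\tilde{G}_{i}^{\partial\tilde{H}}(t,c_{i},x)$ ``depends only on samples from layers $\le i-1$ and layers $\ge i+1$, but not on the layer-$i$ samples,'' and conclude that the stability hypothesis of Lemma \ref{lem:second-moment-dependency} holds with zero swapping error. This is false. Unravelling the backward recursion, $\frac{\partial\tilde{y}(t,x)}{\partial\tilde{H}_{i}(c_{i})}$ involves $\frac{\partial\tilde{y}(t,x)}{\partial\tilde{H}_{i+1}(C_{i+1}(J_{i+1}))}$, which carries the factor $\varphi_{i+1}'\big(\tilde{H}_{i+1}(t,c_{i+1},x)\big)$, and $\tilde{H}_{i+1}(t,c_{i+1},x)=\mathbb{E}_{J_{i}}\big[w_{i+1}(t,C_{i}(J_{i}),c_{i+1})\varphi_{i}(\tilde{H}_{i}(t,C_{i}(J_{i}),x))\big]$ is an average over the layer-$i$ samples. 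The same dependence is propagated through all of $\tilde{H}_{i+1},\dots,\tilde{H}_{L}$. So $\tilde{G}_{i}^{\partial\tilde{H}}(t,c_{i},x)$ is in fact a function of $\{C_{i}(j_{i})\}_{j_{i}}$, and the swapping cost in Lemma \ref{lem:second-moment-dependency} cannot be taken to be zero.

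The correct way forward is to actually bound the swapping error. Replacing a single $C_{i}(j_{i})$ by an independent copy shifts each $\tilde{H}_{\ell}(t,c_{\ell},x)$, $\ell>i$, by $O(1/N)$ (one out of $N$ averaged terms), and since $\tilde{G}_{i}^{\partial\tilde{H}}$ is a $\sqrt{N}$-rescaled difference, the resulting change in $\tilde{G}_{i}^{\partial\tilde{H}}$ is $O(N^{-1/2})$ pointwise, hence $O(1/N)$ in squared $L^{2}$. This gives the hypothesis of Lemma \ref{lem:second-moment-dependency} with $\alpha=1$ (not $\alpha=\infty$), and with $|\mathbf{i}|=1$ the conclusion yields ${\bf O}_{T}(N^{-\alpha/4}+N^{-1/2})={\bf O}_{T}(N^{-1/4})$ for the $A_{2}$ replacement. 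This is exactly the $N^{-1/4}$ rate appearing in the lemma statement; the $N^{-1/2}$ you claim for this term would, if correct, make the stated rate suboptimal, which should have been a warning sign. Your handling of the top layer, of $A_{1}$ (where the test function genuinely is independent of the layer-$i$ samples because only MF backward quantities appear), and of the $\varphi_{i-1}'(\tilde{H}_{i-1})\mapsto\varphi_{i-1}'(H_{i-1})$ replacement in $A_{2}$, are all fine and match the paper's proof.
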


\begin{proof}
We have 
\begin{align*}
 & \tilde{G}_{i-1}^{\partial\tilde{H}}(t,c_{i-1},x)\\
 & =\sqrt{N}\left(\frac{\partial\tilde{y}(t,x)}{\partial\tilde{H}_{i-1}(c_{i-1})}-\frac{\partial\hat{y}(t,x)}{\partial H_{i-1}(c_{i-1})}\right)\\
 & =\sqrt{N}\left(\mathbb{E}_{J_{i}}\left[w_{i}(t,c_{i-1},C_{i}(J_{i}))\varphi_{i-1}'(\tilde{H}_{i-1}(t,x,c_{i-1}))\frac{\partial\tilde{y}(t,x)}{\partial\tilde{H}_{i}(C_{i}(J_{i}))}\right]-\mathbb{E}_{C_{i}}\left[w_{i}(t,c_{i-1},C_{i})\varphi_{i-1}'(H_{i-1}(t,x,c_{i-1}))\frac{\partial\hat{y}(t,x)}{\partial H_{i}(C_{i})}\right]\right)\\
 & =\tilde{G}_{i-1}^{\partial H}(t,c_{i-1},x)\\
 & \quad+\sqrt{N}\left(\varphi_{i-1}'(\tilde{H}_{i-1}(t,x,c_{i-1}))-\varphi_{i-1}'(H_{i-1}(t,x,c_{i-1}))\right)\mathbb{E}_{J_{i}}\left[w_{i}(t,c_{i-1},C_{i}(J_{i}))\frac{\partial\hat{y}(t,x)}{\partial H_{i}(C_{i}(J_{i}))}\right]\\
 & \quad+\varphi_{i-1}'(H_{i-1}(t,x,c_{i-1}))\mathbb{E}_{J_{i}}\left[w_{i}(t,c_{i-1},C_{i}(J_{i}))\tilde{G}_{i}^{\partial\tilde{H}}(t,C_{i}(J_{i}),x)\right]\\
 & \quad+\left(\varphi_{i-1}'(\tilde{H}_{i-1}(t,x,c_{i-1}))-\varphi_{i-1}'(H_{i-1}(t,x,c_{i-1}))\right)\mathbb{E}_{J_{i}}\left[w_{i}(t,c_{i-1},C_{i}(J_{i}))\tilde{G}_{i}^{\partial\tilde{H}}(t,C_{i}(J_{i}),x)\right].
\end{align*}
We proceed with analyzing each term.

\paragraph*{First estimate.}

Firstly we claim that 
\begin{align*}
 & {\bf E}\mathbb{E}_{C_{i-1}}\bigg[\bigg(\sqrt{N}\left(\varphi_{i-1}'(\tilde{H}_{i-1}(t,x,C_{i-1}))-\varphi_{i-1}'(H_{i-1}(t,x,C_{i-1}))\right)\mathbb{E}_{J_{i}}\left[w_{i}(t,C_{i-1},C_{i}(J_{i}))\frac{\partial\hat{y}(t,x)}{\partial H_{i}(C_{i}(J_{i}))}\right]\\
 & \qquad-\sqrt{N}\left(\varphi_{i-1}'(\tilde{H}_{i-1}(t,x,C_{i-1}))-\varphi_{i-1}'(H_{i-1}(t,x,C_{i-1}))\right)\mathbb{E}_{C_{i}}\left[w_{i}(t,C_{i-1},C_{i})\frac{\partial\hat{y}(t,x)}{\partial H_{i}(C_{i})}\right]\bigg)^{2}\bigg]\\
 & =O_{T}(1/N).
\end{align*}
Indeed, this is obvious for $i=L$, and for $i<L$, we verify the
condition of Lemma \ref{lem:second-moment-dependency} for ${\bf i}=i$,
${\bf i}'=i-1$, $S'=\left\{ C_{k}\left(j_{k}\right):\;j_{k}\in\left[N_{k}\right],\;k<i\right\} $
and 
\[
f(S,c_{i},c_{i-1})=\sqrt{N}\left(\varphi_{i-1}'(\tilde{H}_{i-1}(t,x,c_{i-1}))-\varphi_{i-1}'(H_{i-1}(t,x,c_{i-1}))\right)w_{i}(t,c_{i-1},c_{i})\frac{\partial\hat{y}(t,x)}{\partial H_{i}(c_{i})}.
\]
Observe that $f$ is independent of $C_{i}\left(j_{i}\right)$. In
this case, we only need to bound the following:
\begin{align*}
 & {\bf E}\mathbb{E}_{C_{i-1},C_{i}}\left[\left(\sqrt{N}\left(\varphi_{i-1}'(\tilde{H}_{i-1}(t,x,C_{i-1}))-\varphi_{i-1}'(H_{i-1}(t,x,C_{i-1}))\right)w_{i}(t,C_{i-1},C_{i})\frac{\partial\hat{y}(t,x)}{\partial H_{i}(C_{i})}\right)^{2}\right]\\
 & \le K{\bf E}\mathbb{E}_{C_{i-1}}\left[\left(\sqrt{N}(\tilde{H}_{i-1}(t,x,C_{i-1})-H_{i-1}(t,x,C_{i-1}))\right)^{4}\right]+K{\bf E}\mathbb{E}_{C_{i-1},C_{i}}\left[\left(w_{i}(t,C_{i-1},C_{i})\frac{\partial\hat{y}(t,x)}{\partial H_{i}(C_{i})}\right)^{4}\right]\\
 & \le K_{T},
\end{align*}
by Lemmas \ref{lem:MF_a_priori} and \ref{lem:tildeH-moment}. This
proves the claim.

Next, we extend this claim. In particular,
\begin{align*}
 & {\bf E}\mathbb{E}_{C_{i-1},C_{i}}\bigg[\bigg(\sqrt{N}\left(\varphi_{i-1}'(\tilde{H}_{i-1}(t,x,C_{i-1}))-\varphi_{i-1}'(H_{i-1}(t,x,C_{i-1}))\right)\mathbb{E}_{C_{i}}\left[w_{i}(t,C_{i-1},C_{i})\frac{\partial\hat{y}(t,x)}{\partial H_{i}(C_{i})}\right]\\
 & \qquad-\varphi_{i-1}''(H_{i-1}(t,x,C_{i-1}))\tilde{G}_{i-1}^{\tilde{H}}\left(t,x,C_{i-1}\right)\mathbb{E}_{C_{i}}\left[w_{i}(t,C_{i-1},C_{i})\frac{\partial\hat{y}(t,x)}{\partial H_{i}(C_{i})}\right]\bigg)^{2}\bigg]\\
 & \le\frac{K}{N}{\bf E}\mathbb{E}_{C_{i-1},C_{i}}\bigg[\bigg(\left(\tilde{G}_{i-1}^{\tilde{H}}\left(t,x,C_{i-1}\right)\right)^{2}\mathbb{E}_{C_{i}}\left[w_{i}(t,C_{i-1},C_{i})\frac{\partial\hat{y}(t,x)}{\partial H_{i}(C_{i})}\right]\bigg)^{2}\bigg]\\
 & \le\frac{K_{T}}{N},
\end{align*}
by Lemmas \ref{lem:MF_a_priori} and \ref{lem:tildeH-moment}. Therefore,
\begin{align*}
 & {\bf E}\mathbb{E}_{C_{i-1}}\bigg[\bigg(\sqrt{N}\left(\varphi_{i-1}'(\tilde{H}_{i-1}(t,x,C_{i-1}))-\varphi_{i-1}'(H_{i-1}(t,x,C_{i-1}))\right)\mathbb{E}_{J_{i}}\left[w_{i}(t,C_{i-1},C_{i}(J_{i}))\frac{\partial\hat{y}(t,x)}{\partial H_{i}(C_{i}(J_{i}))}\right]\\
 & \qquad-\varphi_{i-1}''(H_{i-1}(t,x,C_{i-1}))\tilde{G}_{i-1}^{\tilde{H}}\left(t,x,C_{i-1}\right)\mathbb{E}_{C_{i}}\left[w_{i}(t,C_{i-1},C_{i})\frac{\partial\hat{y}(t,x)}{\partial H_{i}(C_{i})}\right]\bigg)^{2}\bigg]\\
 & =O_{T}(1/N).
\end{align*}

\paragraph*{Second estimate.}

We have:
\begin{align*}
 & {\bf E}\mathbb{E}_{C_{i-1}}\bigg[\bigg(\varphi_{i-1}'(H_{i-1}(t,x,C_{i-1}))\mathbb{E}_{J_{i}}\left[w_{i}(t,C_{i-1},C_{i}(J_{i}))\tilde{G}_{i}^{\partial\tilde{H}}(t,C_{i}(J_{i}),x)\right]\\
 & \qquad-\varphi_{i-1}'(H_{i-1}(t,x,C_{i-1}))\mathbb{E}_{C_{i}}\left[w_{i}(t,C_{i-1},C_{i})\tilde{G}_{i}^{\partial\tilde{H}}(t,C_{i},x)\right]\bigg)^{2}\bigg]\\
 & =O(N^{-1/2}).
\end{align*}
This is again obvious for $i=L$. For $i<L$, by applying Lemma \ref{lem:second-moment-dependency}
for ${\bf i}=i$, ${\bf i}'=i-1$ and 
\[
f(S,c_{i},c_{i-1})=\varphi_{i-1}'(H_{i-1}(t,x,c_{i-1}))w_{i}(t,c_{i-1},c_{i})\tilde{G}_{i}^{\partial\tilde{H}}(t,c_{i},x),
\]
we have the claim since firstly it is easy to see that by Lemma \ref{lem:MF_a_priori},
\[
\mathbf{E}\mathbb{E}_{C_{i-1}}\left[\left(f\left(S,C_{i}\left(j_{i}\right),C_{i-1}\right)-f\left(S^{j_{i}},C_{i}\left(j_{i}\right),C_{i-1}\right)\right)^{2}\right]=O_{T}\left(1/N\right),
\]
and secondly
\begin{align*}
 & {\bf E}\mathbb{E}_{C_{i-1},C_{i}}\left[\left(\varphi_{i-1}'(\tilde{H}_{i-1}(t,x,C_{i-1}))w_{i}(t,C_{i-1},C_{i})\tilde{G}_{i}^{\partial\tilde{H}}(t,C_{i},x)\right)^{2}\right]\\
 & \le K{\bf E}\mathbb{E}_{C_{i-1},C_{i}}\left[\left(w_{i}(t,C_{i-1},C_{i})\right)^{4}\right]+K{\bf E}\mathbb{E}_{C_{i}}\left[\left(\tilde{G}_{i}^{\partial\tilde{H}}(t,C_{i},x)\right)^{4}\right]\\
 & \le K_{T}.
\end{align*}
by Lemmas \ref{lem:MF_a_priori} and \ref{lem:partial-tildeH-moment}.

\paragraph*{Last estimate.}

Finally we have:
\begin{align*}
 & \mathbf{E}\mathbb{E}_{C_{i-1}}\left[\left(\varphi_{i-1}'(\tilde{H}_{i-1}(t,x,C_{i-1}))-\varphi_{i-1}'(H_{i-1}(t,x,C_{i-1}))\right)^{2}\mathbb{E}_{J_{i}}\left[w_{i}(t,C_{i-1},C_{i}(J_{i}))\tilde{G}_{i}^{\partial\tilde{H}}(t,C_{i}(J_{i}),x)\right]^{2}\right]\\
 & \leq K\mathbf{E}\mathbb{E}_{C_{i-1}}\left[\left(\tilde{G}_{i-1}^{\tilde{H}}(t,x,C_{i-1})\right)^{2}\mathbb{E}_{J_{i}}\left[w_{i}(t,C_{i-1},C_{i}(J_{i}))\tilde{G}_{i}^{\partial\tilde{H}}(t,C_{i}(J_{i}),x)\right]^{2}\right]/N\\
 & \leq K_{T}/N.
\end{align*}
by Lemmas \ref{lem:MF_a_priori}, \ref{lem:tildeH-moment} and \ref{lem:partial-tildeH-moment}.

\paragraph*{Combining estimates.}

Combining all above estimates, we obtain 
\begin{align*}
\tilde{G}_{i-1}^{\partial\tilde{H}}(t,c_{i-1},x) & =\tilde{G}_{i-1}^{\partial H}(t,c_{i-1},x)+\tilde{G}_{i-1}^{\tilde{H}}(t,c_{i-1},x)\varphi_{i-1}''(H_{i-1}(t,x,c_{i-1}))\mathbb{E}_{C_{i}}\left[w_{i}(t,c_{i-1},C_{i})\frac{\partial\hat{y}(t,x)}{\partial H_{i}(C_{i})}\right]\\
 & \quad+\varphi_{i-1}'(H_{i-1}(t,x,c_{i-1}))\mathbb{E}_{C_{i}}\left[w_{i}(t,c_{i-1},C_{i})\tilde{G}_{i}^{\partial\tilde{H}}(t,C_{i},x)\right]+{\bf O}_{T}(N^{-1/4}).
\end{align*}
Note that the claim for $\tilde{G}_{L}^{\partial\tilde{H}}$ follows
by the same argument.
\end{proof}
\begin{lem}
\label{lem:G-w}Under Assumption \ref{Assump:Assumption_1}, we have
for any $t\leq T$,
\begin{align*}
\tilde{G}_{1}^{w}(t,1,c_{1},x) & =\tilde{G}_{1}^{\partial\tilde{H}}(t,c_{1},x)x+{\bf O}_{T}(N^{-1/2}),\\
\tilde{G}_{i}^{w}(t,c_{i-1},c_{i},x) & =\tilde{G}_{i}^{\partial\tilde{H}}(t,c_{i},x)\varphi_{i-1}(H_{i-1}(t,x,c_{i-1}))+\varphi_{i-1}'(H_{i-1}(t,x,c_{i-1}))\frac{\partial\hat{y}(t,x)}{\partial H_{i}(c_{i})}\tilde{G}_{i-1}^{\tilde{H}}(t,c_{i-1},x)+{\bf O}_{T}(N^{-1/2}),\\
\tilde{G}^{y}\left(t,x\right) & =\varphi_{L}'\left(H_{L}\left(t,1,x\right)\right)\tilde{G}_{L}^{\tilde{H}}\left(t,1,x\right)+{\bf O}_{T}(N^{-1/2}).
\end{align*}
\end{lem}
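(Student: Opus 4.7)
}
The plan is to expand each of $\tilde{G}^y$, $\tilde{G}_i^w$ in Taylor form around the MF quantities and match them to $\tilde{G}^{\tilde{H}}$ and $\tilde{G}^{\partial\tilde{H}}$, controlling the remainders with the a priori moment bounds in Lemmas \ref{lem:tildeH-moment} and \ref{lem:partial-tildeH-moment}. No sampling-dependent cancellation (\`a la Lemma \ref{lem:second-moment-dependency}) is needed here because the identities are local in $c_{i-1},c_i,x$; the work is algebraic, combined with second-order Taylor remainder bounds.

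First I would handle $\tilde{G}^y$. Writing $\tilde{y}(t,x)-\hat{y}(t,x)=\varphi_L(\tilde{H}_L(t,1,x))-\varphi_L(H_L(t,1,x))$ and Taylor expanding with the $K$-Lipschitz $\varphi_L'$, I would obtain
\[
\tilde{G}^y(t,x)=\varphi_L'(H_L(t,1,x))\tilde{G}_L^{\tilde{H}}(t,1,x)+E_1,
\]
with $|E_1|\le K(\tilde{G}_L^{\tilde{H}}(t,1,x))^2/\sqrt{N}$. The $\mathbf{O}_T(N^{-1/2})$ bound on $E_1$ then follows from $\mathbf{E}[(\tilde{G}_L^{\tilde{H}})^4]\le K_T$ in Lemma \ref{lem:tildeH-moment}. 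For the case $i=1$ the identity $\tilde{G}_1^w(t,1,c_1,x)=\tilde{G}_1^{\partial\tilde{H}}(t,c_1,x)\,x$ is exact, since $\partial\tilde{y}/\partial w_1(1,c_1)=(\partial\tilde{y}/\partial\tilde{H}_1(c_1))\,x$ and analogously for $\hat{y}$.

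The main (still routine) step is $i>1$. Here I would use
\[
\frac{\partial\tilde{y}}{\partial w_i(c_{i-1},c_i)}=\frac{\partial\tilde{y}}{\partial\tilde{H}_i(c_i)}\varphi_{i-1}(\tilde{H}_{i-1}(t,x,c_{i-1})),
\qquad
\frac{\partial\hat{y}}{\partial w_i(c_{i-1},c_i)}=\frac{\partial\hat{y}}{\partial H_i(c_i)}\varphi_{i-1}(H_{i-1}(t,x,c_{i-1})),
\]
and insert the mixing term $\frac{\partial\tilde{y}}{\partial\tilde{H}_i(c_i)}\varphi_{i-1}(H_{i-1})$ to split
\[
\tilde{G}_i^w=\underbrace{\sqrt{N}\Bigl(\tfrac{\partial\tilde{y}}{\partial\tilde{H}_i(c_i)}-\tfrac{\partial\hat{y}}{\partial H_i(c_i)}\Bigr)\varphi_{i-1}(H_{i-1})}_{A}+\underbrace{\tfrac{\partial\tilde{y}}{\partial\tilde{H}_i(c_i)}\sqrt{N}\bigl(\varphi_{i-1}(\tilde{H}_{i-1})-\varphi_{i-1}(H_{i-1})\bigr)}_{B}.
\]
The term $A$ is exactly $\tilde{G}_i^{\partial\tilde{H}}(t,c_i,x)\varphi_{i-1}(H_{i-1})$. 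For $B$, a first-order Taylor expansion of $\varphi_{i-1}$ gives $B=\frac{\partial\tilde{y}}{\partial\tilde{H}_i(c_i)}\varphi_{i-1}'(H_{i-1})\tilde{G}_{i-1}^{\tilde{H}}+E_2$ with the remainder $E_2$ bounded by $K(\tilde{G}_{i-1}^{\tilde{H}})^2\frac{|\partial\tilde{y}/\partial\tilde{H}_i(c_i)|}{\sqrt N}$. Finally I replace $\frac{\partial\tilde{y}}{\partial\tilde{H}_i(c_i)}$ by $\frac{\partial\hat{y}}{\partial H_i(c_i)}$, incurring the cost $\tilde{G}_i^{\partial\tilde{H}}\varphi_{i-1}'(H_{i-1})\tilde{G}_{i-1}^{\tilde{H}}/\sqrt N$. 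Combining everything, Cauchy--Schwarz together with the moment bounds of Lemmas \ref{lem:MF_a_priori}, \ref{lem:tildeH-moment}, \ref{lem:partial-tildeH-moment} (and the boundedness of $\varphi_{i-1}$, $\varphi_{i-1}'$) show that the collected remainder is $\mathbf{O}_T(N^{-1/2})$, yielding the stated expression. There is no substantive obstacle beyond carefully tracking which factors are bounded in sup-norm and which carry only fourth moments.
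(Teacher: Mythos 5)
Your proof is correct and follows essentially the same route as the paper's: you factor $\partial\tilde{y}/\partial w_i$ and $\partial\hat{y}/\partial w_i$ into products, make an algebraic decomposition of the difference, Taylor-expand $\varphi_{i-1}(\tilde{H}_{i-1})-\varphi_{i-1}(H_{i-1})$ using the Lipschitz bound on $\varphi_{i-1}'$, and control the quadratic remainders with the fourth-moment bounds of Lemmas \ref{lem:tildeH-moment} and \ref{lem:partial-tildeH-moment}. The only cosmetic difference is that the paper uses the three-term identity $fg-f_0g_0=(f-f_0)g_0+f_0(g-g_0)+(f-f_0)(g-g_0)$ whereas you group the last two terms as $f(g-g_0)$ and perform a swap $f\mapsto f_0$ afterwards — these are algebraically equivalent and lead to the same error estimates (and your observation that the $i=1$ identity is in fact exact is a mild strengthening of the stated $\mathbf{O}_T(N^{-1/2})$).
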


\begin{proof}
We prove the second statement; the claims for $\tilde{G}_{1}^{w}$
and $\tilde{G}^{y}$ can be proven similarly. We have:
\begin{align*}
 & \tilde{G}_{i}^{w}(t,c_{i-1},c_{i},x)\\
 & =\sqrt{N}\left(\frac{\partial\tilde{y}(t,x)}{\partial w_{i}(c_{i-1},c_{i})}-\frac{\partial\hat{y}(t,x)}{\partial w_{i}(c_{i-1},c_{i})}\right)\\
 & =\sqrt{N}\left(\frac{\partial\tilde{y}(t,x)}{\partial\tilde{H}_{i}(c_{i})}-\frac{\partial\hat{y}(t,x)}{\partial H_{i}(c_{i})}\right)\varphi_{i-1}(H_{i-1}(t,x,c_{i-1}))+\sqrt{N}\left(\varphi_{i-1}(\tilde{H}_{i-1}(t,x,c_{i-1}))-\varphi_{i-1}(H_{i-1}(t,x,c_{i-1}))\right)\frac{\partial\hat{y}(t,x)}{\partial H_{i}(c_{i})}\\
 & \qquad+\sqrt{N}\left(\frac{\partial\tilde{y}(t,x)}{\partial\tilde{H}_{i}(c_{i})}-\frac{\partial\hat{y}(t,x)}{\partial H_{i}(c_{i})}\right)\left(\varphi_{i-1}(\tilde{H}_{i-1}(t,x,c_{i-1}))-\varphi_{i-1}(H_{i-1}(t,x,c_{i-1}))\right)\\
 & =\tilde{G}_{i}^{\partial\tilde{H}}(t,c_{i},x)\varphi_{i-1}(H_{i-1}(t,x,c_{i-1}))+\varphi_{i-1}'(H_{i-1}(t,x,c_{i-1}))\frac{\partial\hat{y}(t,x)}{\partial H_{i}(c_{i})}\tilde{G}_{i-1}^{\tilde{H}}(t,c_{i-1},x)+{\bf O}_{T}(N^{-1/2}).
\end{align*}
Here, we have used that by Lemmas \ref{lem:MF_a_priori} and \ref{lem:tildeH-moment},
\begin{align*}
 & {\bf E}\mathbb{E}_{C_{i-1},C_{i}}\bigg[\bigg(\sqrt{N}\left(\varphi_{i-1}(\tilde{H}_{i-1}(t,x,C_{i-1}))-\varphi_{i-1}(H_{i-1}(t,x,C_{i-1}))\right)\frac{\partial\hat{y}(t,x)}{\partial H_{i}(C_{i})}\\
 & \qquad-\varphi_{i-1}'(H_{i-1}(t,x,C_{i-1}))\frac{\partial\hat{y}(t,x)}{\partial H_{i}(C_{i})}\tilde{G}_{i-1}^{\tilde{H}}(t,C_{i-1},x)\bigg)^{2}\bigg]\\
 & \le\frac{K}{N}{\bf E}\mathbb{E}_{C_{i-1},C_{i}}\bigg[\left(\tilde{G}_{i-1}^{\tilde{H}}(t,C_{i-1},x)\frac{\partial\hat{y}(t,x)}{\partial H_{i}(C_{i})}\right)^{2}\bigg]\\
 & \le\frac{K_{T}}{N},
\end{align*}
 and that by Lemmas \ref{lem:tildeH-moment} and \ref{lem:partial-tildeH-moment},
\begin{align*}
 & {\bf E}\mathbb{E}_{C_{i-1},C_{i}}\bigg[\bigg(\sqrt{N}\left(\frac{\partial\tilde{y}(t,x)}{\partial\tilde{H}_{i}(C_{i})}-\frac{\partial\hat{y}(t,x)}{\partial H_{i}(C_{i})}\right)\left(\varphi_{i-1}(\tilde{H}_{i-1}(t,x,C_{i-1}))-\varphi_{i-1}(H_{i-1}(t,x,C_{i-1}))\right)\bigg)^{2}\bigg]\\
 & \le\frac{K}{N}{\bf E}\mathbb{E}_{C_{i-1},C_{i}}\left[\left(\tilde{G}_{i-1}^{\tilde{H}}(t,C_{i-1},x)\tilde{G}_{i}^{\partial\tilde{H}}(t,x,C_{i})\right)^{2}\right]\\
 & \le\frac{K_{T}}{N}.
\end{align*}
\end{proof}

\subsection{Structure of the limiting Gaussian process and the proof of Theorem
\ref{thm:G_tilde}}

Each process $\tilde{G}_{i}^{\partial H},\;\tilde{G}_{i}^{H},\;\tilde{G}_{i}^{\partial\tilde{H}},\;\tilde{G}_{i}^{\tilde{H}},\;\tilde{G}_{i}^{w},\;\tilde{G}^{y}$
on $t\leq T$ can be viewed as an element of ${\cal G}_{T}=C([0,T],L^{2}(\mathbb{X}\times\tilde{P}))$
where $\tilde{P}$ is $P_{i}$ or $P_{i-1}\times P_{i}$. Equip ${\cal G}_{T}$
with the norm 
\[
\|g\|_{T}=\sup_{t\in[0,T]}\mathbb{E}\left[\left\Vert g\left(t\right)\right\Vert ^{2}\right]^{1/2}.
\]
We define the following Gaussian processes $\underline{G}^{\partial H},\;\underline{G}^{H}$
with zero mean and covariance given by 
\begin{align*}
 & \mathbf{E}\left[\underline{G}_{i}^{H}(t,c_{i},x)\underline{G}_{j}^{H}(t',c_{j}',x')\right]\\
 & =\begin{cases}
\delta_{ij}\mathbb{C}_{C_{i-1}}\left[w_{i}(t,C_{i-1},c_{i})\varphi_{i-1}(H_{i-1}(t,C_{i-1},x));\;w_{j}(t',C_{i-1},c'_{j})\varphi_{i-1}(H_{i-1}(t',C_{i-1},x'))\right], & i>1,\\
0, & {\rm otherwise,}
\end{cases}\\
 & \mathbf{E}\left[\underline{G}_{i}^{\partial H}(t,c_{i},x)\underline{G}_{j}^{\partial H}(t',c_{j}',x')\right]\\
 & =\begin{cases}
\delta_{ij}\mathbb{C}_{C_{i+1}}\left[\varphi_{i}'(H_{i}(t,x,c_{i}))w_{i}(t,c_{i},C_{i+1}){\displaystyle \frac{\partial\hat{y}(t,x)}{\partial H_{i+1}(C_{i+1})}};\;\varphi_{i}'(H_{i}(t',x',c_{j}'))w_{i}(t',c_{j}',C_{i+1}){\displaystyle \frac{\partial\hat{y}(t',x')}{\partial H_{i+1}(C_{i+1})}}\right], & i<L,\\
0, & {\rm otherwise,}
\end{cases}\\
 & \mathbf{E}\left[\underline{G}_{i}^{H}(t,c_{i},x)\underline{G}_{j}^{\partial H}(t',c_{j}',x')\right]\\
 & =\begin{cases}
\delta_{i,j+2}\mathbb{C}_{C_{i-1}}\left[\varphi_{i-2}'(H_{i-2}(t,x,c_{j}'))w_{i-1}(t,c_{j}',C_{i-1}){\displaystyle \frac{\partial\hat{y}(t,x)}{\partial H_{i-1}(C_{i-1})}};\;w_{i}(t,C_{i-1},c_{i})\varphi_{i-1}(H_{i-1}(t,x,C_{i-1}))\right], & i>2,\\
0, & {\rm otherwise.}
\end{cases}
\end{align*}
As a side remark, note that the covariance of $\underline{G}^{\partial H}$
and $\underline{G}^{H}$ can be recognized through the following simple
rule: for functions $f(C_{i})$ and $g(C_{j})$, we have 
\begin{equation}
{\bf E}\left[\sqrt{N}(\mathbb{E}_{J_{i}}[f(C_{i}(J_{i}))]-\mathbb{E}_{C_{i}}[f(C_{i})])\cdot\sqrt{N}(\mathbb{E}_{J_{j}}[g(C_{j}(J_{j}))]-\mathbb{E}_{C_{j}}[g(C_{j})])\right]=\delta_{ij}(\mathbb{E}_{C_{i}}[f(C_{i})g(C_{i})]-\mathbb{E}_{C_{i}}[f(C_{i})]\mathbb{E}_{C_{i}}[g(C_{i})]).\label{eq:rule-covariance}
\end{equation}
Define $\underline{G}^{\tilde{H}},\underline{G}^{\partial\tilde{H}},\underline{G}^{w},\underline{G}^{y}$
by the following linear combination of $\underline{G}^{H}$ and $\underline{G}^{\partial H}$:
\begin{align*}
\underline{G}_{1}^{\tilde{H}}(t,c_{1},x) & =0,\\
\underline{G}_{i}^{\tilde{H}}(t,c_{i},x) & =\underline{G}_{i}^{H}(t,c_{i},x)+\mathbb{E}_{C_{i-1}}\left[w_{i}(t,C_{i-1},c_{i})\varphi_{i-1}'(H_{i-1}(t,x,C_{i-1}))\underline{G}_{i-1}^{\tilde{H}}(t,x,C_{i-1})\right],\\
\underline{G}_{L}^{\partial\tilde{H}}(t,c_{L},x) & =\varphi_{L}''(H_{L}(t,x,c_{L}))\underline{G}_{L}^{\tilde{H}}(t,x,c_{L}),\\
\underline{G}_{i-1}^{\partial\tilde{H}}(t,c_{i-1},x) & =\underline{G}_{i-1}^{\partial H}(t,c_{i-1},x)+\underline{G}_{i-1}^{\tilde{H}}(t,c_{i-1},x)\varphi_{i-1}''(H_{i-1}(t,x,c_{i-1}))\mathbb{E}_{C_{i}}\left[w_{i}(t,c_{i-1},C_{i})\frac{\partial\hat{y}(t,x)}{\partial H_{i}(C_{i})}\right]\\
 & \qquad+\varphi_{i-1}'(H_{i-1}(t,x,c_{i-1}))\mathbb{E}_{C_{i}}\left[w_{i}(t,c_{i-1},C_{i}(J_{i}))\underline{G}_{i}^{\partial\tilde{H}}(t,C_{i}(J_{i}),x)\right],\\
\underline{G}_{1}^{w}(t,1,c_{1},x) & =\underline{G}_{1}^{\partial\tilde{H}}\left(t,c_{1},x\right)x,\\
\underline{G}_{i}^{w}(t,c_{i-1},c_{i},x) & =\underline{G}_{i}^{\partial\tilde{H}}(t,c_{i},x)\varphi_{i-1}(H_{i-1}(t,x,c_{i-1}))+\varphi_{i-1}'(H_{i-1}(t,x,c_{i-1}))\frac{\partial\hat{y}(t,x)}{\partial H_{i}(c_{i})}\underline{G}_{i-1}^{\tilde{H}}(t,c_{i-1},x),\\
\underline{G}^{y}(t,x) & =\varphi_{L}'(H_{L}(t,x,1))\underline{G}_{L}^{\tilde{H}}(t,x,1).
\end{align*}

Recall that we say $\tilde{G}$ converges $G$-polynomially in moment
to a Gaussian process $\underline{G}$ if for any square-integrable
$f_{j}:\;\mathbb{T}\times\Omega_{i(j)-1}\times\Omega_{i(j)}\times\mathbb{X}\to\mathbb{R}$
which is continuous in time,
\[
\sup_{t_{j}\leq T}\bigg|\mathbf{E}\bigg[\prod_{j}\langle f_{j},\tilde{G}_{i(j)}^{\alpha_{j}}\rangle_{t_{j}}^{\beta_{j}}\bigg]-\mathbf{E}\bigg[\prod_{j}\langle f_{j},\underline{G}_{i(j)}^{\alpha_{j}}\rangle_{t_{j}}^{\beta_{j}}\bigg]\bigg|=O_{D,T}\big(\sup_{t\leq T}\max_{j}\|f_{j}\|_{t}^{D}\big)\cdot N^{-1/8},
\]
where $D=\sum_{j}\alpha_{j}\beta_{j}$. (For $\tilde{G}^{H}$, $\tilde{G}^{\tilde{H}}$,
$\tilde{G}^{\partial H}$and $\tilde{G}^{\partial\tilde{H}}$, we
take $f_{j}:\mathbb{T}\times\Omega_{i(j)}\times\mathbb{X}\to\mathbb{R}$.)
We restate and prove Theorem \ref{thm:G_tilde}.

\textsl{}
\begin{thm}[Theorem \ref{thm:G_tilde} restated]
\label{thm:clt-G}We have $\tilde{G}$ converges $G$-polynomially
in moment to $\underline{G}$, where $\underline{G}$ is the Gaussian
process with mean and covariance structure as given above.
\end{thm}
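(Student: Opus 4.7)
The plan is to reduce the convergence of the full collection $\tilde{G}$ to a joint CLT for the two ``primitive'' processes $\tilde{G}^{H}$ and $\tilde{G}^{\partial H}$, and then to transfer convergence to the remaining components $\tilde{G}^{\tilde{H}}$, $\tilde{G}^{\partial\tilde{H}}$, $\tilde{G}^{w}$, $\tilde{G}^{y}$ via the linear recursions already recorded in Lemmas \ref{lem:tildeH-recursion}--\ref{lem:G-w}. The key observation justifying the reduction is that, up to bounded MF-valued factors, $\tilde{G}_{i}^{H}(t,c_{i},x)$ and $\tilde{G}_{i-1}^{\partial H}(t,c_{i-1},x)$ are each precisely $\sqrt{N}$ times a centered empirical average over the i.i.d.\ samples $\{C_{i-1}(j_{i-1})\}$ (resp.\ $\{C_{i}(j_{i})\}$) of a summand whose moments are controlled by Lemma \ref{lem:MF_a_priori} and Assumption \ref{Assump:Assumption_1}. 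Consequently, evaluating any polynomial statistic $\prod_{\ell}\langle f_{\ell},\tilde{G}_{i(\ell)}^{\alpha_{\ell}}\rangle_{t_{\ell}}^{\beta_{\ell}}$ against them reduces to computing mixed moments of normalized independent sums, and a standard moment-method CLT (combinatorial cumulant expansion plus Wick pairing) yields convergence to the corresponding Gaussian moments at polynomial-in-$N$ rate. Independence across layers $i\neq j$ follows from the independent layerwise sampling, and the covariance of $\underline{G}^{H},\underline{G}^{\partial H}$ is read off from the elementary rule (\ref{eq:rule-covariance}).

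The second step is the transfer. Iterating Lemma \ref{lem:tildeH-recursion} represents $\tilde{G}_{i}^{\tilde{H}}$ as a linear functional of $\{\tilde{G}_{k}^{H}\}_{k\le i}$ whose coefficients are nested $P$-expectations of products of MF weights and $\varphi_{\cdot}'$ factors, plus an accumulated $\mathbf{O}_{T}(N^{-1/2})$ remainder. A parallel iteration of Lemma \ref{lem:partial-tildeH-recursion} gives the analogous representation for $\tilde{G}_{i}^{\partial\tilde{H}}$ in terms of both primitives (but with an $\mathbf{O}_{T}(N^{-1/4})$ remainder), and Lemma \ref{lem:G-w} then renders $\tilde{G}_{i}^{w}$ and $\tilde{G}^{y}$ as explicit affine combinations of these with $\mathbf{O}_{T}(N^{-1/2})$ remainders. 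Matching these expansions term-by-term against the definitions of $\underline{G}^{\tilde{H}},\underline{G}^{\partial\tilde{H}},\underline{G}^{w},\underline{G}^{y}$ identifies the candidate Gaussian limit as exactly the $\underline{G}$ described in the statement.

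For the quantitative polynomial-moment bound, substitute each $\tilde{G}_{i(\ell)}$ in the product $\prod_{\ell}\langle f_{\ell},\tilde{G}_{i(\ell)}^{\alpha_{\ell}}\rangle_{t_{\ell}}^{\beta_{\ell}}$ by the appropriate linear expansion, absorbing the MF-valued coefficient kernels into reweighted test functions $\tilde{f}_{\ell}$ whose norms satisfy $\|\tilde{f}_{\ell}\|_{t_{\ell}}\le K_{T}\|f_{\ell}\|_{t_{\ell}}$ (by Lemma \ref{lem:MF_a_priori}). This decomposes the product into a main term -- to which the base CLT of the first step applies directly -- and a sum of remainder terms each carrying at least one $\mathbf{O}_{T}(N^{-\gamma})$ factor with $\gamma\in\{1/4,1/2\}$. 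The remainders are controlled by repeated Cauchy--Schwarz against the a priori $L^{2p}$ bounds of Lemmas \ref{lem:tildeH-moment}--\ref{lem:partial-tildeH-moment}, and an interpolation between those $L^{2p}$ bounds and the slower $L^{2}$-rate in $N$ of the recursion remainders produces the stated rate $N^{-1/8}$, with the polynomial factor $\sup_{\ell}\|f_{\ell}\|_{t_{\ell}}^{D}$ (where $D=\sum_{\ell}\alpha_{\ell}\beta_{\ell}$) entering by H\"older.

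I anticipate the main obstacle to be the bookkeeping in this final step. When several $\tilde{G}_{i(\ell)}$ factors are expanded simultaneously, the remainder terms multiply, and one must ensure that each cross-term retains both the correct $N$-rate and the correct homogeneity in $\|f_{\ell}\|$; the slower $\mathbf{O}_{T}(N^{-1/4})$ remainder from Lemma \ref{lem:partial-tildeH-recursion} is precisely what forces the final $N^{-1/8}$ after interpolation. A related subtlety, already flagged in Section \ref{subsec:A-heuristic-derivation}, is that the primitive CLT and the recursion errors share the randomness $\{C_{i}(j_{i})\}_{i,j_{i}}$, so replacing empirical averages by $P$-expectations -- via Lemma \ref{lem:second-moment-dependency} or a product analog thereof -- requires re-verifying the swap-stability condition after each substitution. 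Managing these shared-randomness couplings uniformly in the number of factors, while retaining the polynomial dependence on $\|f_{\ell}\|_{t_{\ell}}$, is the delicate part.
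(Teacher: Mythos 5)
Your proposal follows the paper's own proof essentially step for step: a moment-method CLT for the primitive pair $(\tilde{G}^{H},\tilde{G}^{\partial H})$ (mean-zero empirical sums, Wick pairing, stray-index terms bounded by the a priori moment lemmas), followed by a linear transfer to the remaining components via Lemmas \ref{lem:tildeH-recursion}--\ref{lem:G-w}. The only difference is that you spell out the bookkeeping in the transfer step (interpolation of the $\mathbf{O}_{T}(N^{-1/4})$ remainders against the $L^{2p}$ bounds to yield $N^{-1/8}$, and the shared-randomness caveat handled by Lemma \ref{lem:second-moment-dependency}), whereas the paper leaves that last verification implicit; your account of where the $N^{-1/8}$ rate comes from is consistent with the lemmas' error rates.
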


\begin{proof}
First, we show the following:
\[
\left|\mathbf{E}\left[\prod_{\ell}\left\langle f_{\ell},\left(\tilde{G}_{i(\ell)}^{\tau(\ell)}\right)^{\alpha_{\ell}}\right\rangle _{t_{\ell}}^{\beta_{\ell}}\right]-\mathbf{E}\left[\prod_{\ell}\left\langle f_{\ell},\left(\underline{G}_{i(\ell)}^{\tau(\ell)}\right)^{\alpha_{\ell}}\right\rangle _{t_{\ell}}^{\beta_{\ell}}\right]\right|=O_{D}(\max_{\ell}\|f_{\ell}\|_{t_{\ell}}^{D})\cdot N^{-1/2},
\]
where for each $\ell$, $\tau(\ell)$ is either $H$ or $\partial H$.
By independence of $C_{i}(j_{i})$ for distinct $i$'s, it suffices
to consider the case where $i(\ell)=i$ for all $\ell$.

Consider the case $\tau(\ell)=H$ and $i\left(\ell\right)=i$, we
have 
\begin{align*}
\left\langle f_{\ell},\left(\tilde{G}_{i}^{H}\right)^{\alpha_{\ell}}\right\rangle _{t_{\ell}} & =\mathbb{E}_{Z,C_{i}}\left[f_{\ell}(t_{\ell},C_{i},X)\left(\sqrt{N}\mathbb{E}_{J_{i-1}}\left[Z_{i}\left(t_{\ell},J_{i-1},C_{i},X\right)\right]\right)^{\alpha_{\ell}}\right],
\end{align*}
where we denote
\[
Z_{i}(t_{\ell},j_{i-1},c_{i},x)=w_{i}(t_{\ell},C_{i-1}(j_{i-1}),c_{i})\varphi_{i-1}(H_{i-1}(t_{\ell},x,C_{i-1}(j_{i-1})))-\mathbb{E}_{C_{i-1}}[w_{i}(t_{\ell},C_{i-1},c_{i})\varphi_{i-1}(H_{i-1}(t_{\ell},x,C_{i-1}))].
\]
One can write $\prod_{\ell}\left\langle f_{\ell},\left(\tilde{G}_{i}^{H}\right)^{\alpha_{\ell}}\right\rangle _{t_{\ell}}^{\beta_{\ell}}$
as a sum of terms of the form:
\[
N^{-D/2}\prod_{\ell}\prod_{r\leq\beta_{\ell}}\mathbb{E}_{Z,C_{i}}\bigg[f_{\ell}(t_{\ell},C_{i},X)\prod_{h\le\alpha_{\ell}}Z_{i}(t_{\ell},j_{\ell,r,h},C_{i},X)\bigg].
\]
Note that ${\bf E}\mathbb{E}_{J_{i-1}}\left[Z_{i}(\cdot,J_{i-1},C_{i},\cdot)\right]=0$.
Thus, if there is any index $j_{i-1}$ for which $j_{i-1}$ appears
exactly once among $j_{\ell,r,h}$, then the above term after taking
$\mathbf{E}$ vanishes. The number of terms with each $j_{i-1}$ appearing
either zero or exactly twice among $j_{\ell,r,h}$ is $\Theta(N^{D/2})$.
Notice that these terms make up the quantity $\mathbf{E}\bigg[\prod_{\ell}\Big<f_{\ell},\left(\underline{G}_{i(\ell)}^{\tau(\ell)}\right)^{\alpha_{\ell}}\Big>_{t_{\ell}}^{\beta_{\ell}}\bigg]$,
which is to be subtracted away. The number of remaining terms where
no $j_{i-1}$ appears exactly once is at most $O(N^{(D-1)/2})$. Thus
we are done upon bounding
\[
{\bf E}\bigg[\bigg|\prod_{\ell}\prod_{r\leq\beta_{\ell}}\mathbb{E}_{Z,C_{i}}\bigg[f_{\ell}(t_{\ell},C_{i},X)\prod_{h\le\alpha_{\ell}}Z_{i}(t_{\ell},j_{\ell,r,h},C_{i},X)\bigg]\bigg|\bigg].
\]
By Lemma \ref{lem:MF_a_priori}, it is easy to see that
\[
{\bf E}\mathbb{E}_{X,C_{i}}\left[|Z_{i}(t_{\ell},j_{\ell,r,h},C_{i},X)|^{D}\right]\leq K_{T,D}.
\]
Then:
\begin{align*}
 & {\bf E}\bigg[\bigg|\prod_{\ell}\prod_{r\leq\beta_{\ell}}\mathbb{E}_{Z,C_{i}}\bigg[f_{\ell}\left(t_{\ell},C_{i},X\right)\prod_{h\le\alpha_{\ell}}Z_{i}\left(t_{\ell},j_{\ell,r,h},C_{i},X\right)\bigg]\bigg|\bigg]\\
 & \le\prod_{\ell}\prod_{r\leq\beta_{\ell}}\mathbb{E}_{Z,C_{i}}\bigg[\Big|f_{\ell}\left(t_{\ell},C_{i},X\right)\Big|^{2}\bigg]^{1/2}{\bf E}\bigg[\mathbb{E}_{Z,C_{i}}\bigg[\prod_{h\le\alpha_{\ell}}\bigg|Z_{i}\left(t_{\ell},j_{\ell,r,h},C_{i},X\right)\bigg|^{2}\bigg]^{1/2}\bigg]\\
 & \leq\max_{\ell}\|f_{\ell}\|_{t_{\ell}}^{D}\prod_{\ell}\prod_{r\leq\beta_{\ell}}{\bf E}\bigg[\mathbb{E}_{Z,C_{i}}\bigg[\prod_{h\le\alpha_{\ell}}\bigg|Z_{i}\left(t_{\ell},j_{\ell,r,h},C_{i},X\right)\bigg|^{2}\bigg]^{1/2}\bigg]\\
 & \leq\max_{\ell}\|f_{\ell}\|_{t_{\ell}}^{D}\prod_{\ell}\prod_{r\leq\beta_{\ell}}{\bf E}\bigg[\prod_{h\le\alpha_{\ell}}\mathbb{E}_{Z,C_{i}}\bigg[\bigg|Z_{i}\left(t_{\ell},j_{\ell,r,h},C_{i},X\right)\bigg|^{2\alpha_{\ell}}\bigg]^{1/\left(2\alpha_{\ell}\right)}\bigg]\\
 & \leq\max_{\ell}\|f_{\ell}\|_{t_{\ell}}^{D}\prod_{\ell}\prod_{r\leq\beta_{\ell}}\prod_{h\le\alpha_{\ell}}{\bf E}\mathbb{E}_{Z,C_{i}}\bigg[\bigg|Z_{i}\left(t_{\ell},j_{\ell,r,h},C_{i},X\right)\bigg|^{2\alpha_{\ell}}\bigg]^{1/\left(2\alpha_{\ell}\right)}\\
 & \leq\max_{\ell}\|f_{\ell}\|_{t_{\ell}}^{D}\cdot K_{T,D}.
\end{align*}
This completes the proof for the case $\tau(\ell)=H$ and $i\left(\ell\right)=i$.

The involvement of terms with $\tau(\ell)=\partial H$ can be dealt
with similarly. To deal with the case $\tau\left(\ell\right)\in\left\{ H,\partial H\right\} $,
we note that the same argument holds with an appropriate modification
of the definition of $Z_{i}$; in particular, this function now depends
on $\ell$ via $\tau\left(\ell\right)$, but this detail does not
affect the argument since all we need is that ${\bf E}\mathbb{E}_{J_{i-1}}\left[Z_{i}(\cdot,J_{i-1},C_{i},\cdot)\right]=0$
and ${\bf E}\mathbb{E}_{X,C_{i}}\left[|Z_{i}(t,\cdot,C_{i},X)|^{D}\right]\leq K_{T,D}.$
This completes the argument that $(\tilde{G}^{\partial H},\tilde{G}^{H})\to(\underline{G}^{\partial H},\underline{G}^{H})$
in $G$-polynomial moment.

Finally, observe that $\tilde{G}_{i}^{\tilde{H}},\;\tilde{G}_{i}^{\partial\tilde{H}},\;\tilde{G}_{i}^{w},\;\tilde{G}^{y}$
can be written as a linear combination of $\tilde{G}_{i}^{H}$ and
$\tilde{G}_{i}^{\partial H}$ by Lemmas \ref{lem:tildeH-recursion},
\ref{lem:partial-tildeH-recursion} and \ref{lem:G-w}, from which
it is easy to verify that $\tilde{G}\to\underline{G}$ in $G$-polynomial
moment.
\end{proof}

\subsection{Dynamical form of the limiting Gaussian process}

We also have the following property of the limiting process $\underline{G}$
that will be useful later. Define further auxiliary processes: 
\begin{align*}
 & \tilde{G}_{i-1}^{\partial_{t}(\partial H)}(t,c_{i-1},x)\\
 & =\sqrt{N}\left(\mathbb{E}_{J_{i}}\left[w_{i}(t,c_{i-1},C_{i}(J_{i}))\frac{\partial\hat{y}(t,x)}{\partial H_{i}(C_{i}(J_{i}))}\right]-\mathbb{E}_{C_{i}}\left[w_{i}(t,c_{i-1},C_{i})\frac{\partial\hat{y}(t,x)}{\partial H_{i}(C_{i})}\right]\right)\varphi_{i-1}''(H_{i-1}(t,x,c_{i-1}))\partial_{t}H_{i-1}(t,x,c_{i-1}),\\
 & \quad+\sqrt{N}\left(\mathbb{E}_{J_{i}}\left[\partial_{t}w_{i}(t,c_{i-1},C_{i}(J_{i}))\frac{\partial\hat{y}(t,x)}{\partial H_{i}(C_{i}(J_{i}))}\right]-\mathbb{E}_{C_{i}}\left[\partial_{t}w_{i}(t,c_{i-1},C_{i})\frac{\partial\hat{y}(t,x)}{\partial H_{i}(C_{i})}\right]\right)\varphi_{i-1}'(H_{i-1}(t,x,c_{i-1}))\\
 & \quad+\sqrt{N}\left(\mathbb{E}_{J_{i}}\left[\partial_{t}w_{i}(t,c_{i-1},C_{i}(J_{i}))\partial_{t}\left(\frac{\partial\hat{y}(t,x)}{\partial H_{i}(C_{i}(J_{i}))}\right)\right]-\mathbb{E}_{C_{i}}\left[\partial_{t}w_{i}(t,c_{i-1},C_{i})\partial_{t}\left(\frac{\partial\hat{y}(t,x)}{\partial H_{i}(C_{i})}\right)\right]\right)\varphi_{i-1}'(H_{i-1}(t,x,c_{i-1})),
\end{align*}
and
\begin{align*}
 & \tilde{G}_{i}^{\partial_{t}H}(t,c_{i},x)\\
 & =\sqrt{N}\left(\mathbb{E}_{J_{i-1}}[\partial_{t}w_{i}(t,C_{i-1}(J_{i-1}),c_{i})\varphi_{i-1}(H_{i-1}(t,x,C_{i-1}(J_{i-1})))]-\mathbb{E}_{C_{i-1}}[\partial_{t}w_{i}(t,C_{i-1},c_{i})\varphi_{i-1}(H_{i-1}(t,x,C_{i-1}))]\right)\\
 & \quad+\sqrt{N}\Big(\mathbb{E}_{J_{i-1}}[w_{i}(t,C_{i-1}(J_{i-1}),c_{i})\varphi_{i-1}'(H_{i-1}(t,x,C_{i-1}(J_{i-1})))\partial_{t}H_{i-1}(t,x,C_{i-1}(J_{i-1}))]\\
 & \qquad\qquad-\mathbb{E}_{C_{i-1}}[w_{i}(t,C_{i-1},c_{i})\varphi_{i-1}'(H_{i-1}(t,x,C_{i-1}))\partial_{t}H_{i-1}(t,x,C_{i-1})]\Big),
\end{align*}
with $\tilde{G}_{L-1}^{\partial_{t}(\partial H)}=\tilde{G}_{L}^{\partial_{t}(\partial H)}=\tilde{G}_{1}^{\partial_{t}H}=0$.
As above, we can show that jointly with previously defined processes
$\tilde{G}^{H},\tilde{G}^{\partial H}$, $\tilde{G}^{\partial_{t}(\partial H)},\tilde{G}^{\partial_{t}H}$
converges to Gaussian processes $\underline{G}^{\partial_{t}(\partial H)},\underline{G}^{\partial_{t}H}$
whose covariance structure (jointly with the previously defined processes
$\underline{G}$) can be deduced by following the rule (\ref{eq:rule-covariance}).
\begin{thm}
\label{thm:clt-partial_t-G}The limiting Gaussian process $(\underline{G}^{H},\underline{G}^{\partial H})$
is the solution to the ODE 
\begin{align*}
\partial_{t}\underline{G}_{i}^{H}(t,c_{i},x) & =\underline{G}_{i}^{\partial_{t}H}(t,c_{i},x),\\
\partial_{t}\underline{G}_{i}^{\partial H}(t,c_{i},x) & =\underline{G}_{i}^{\partial_{t}(\partial H)}(t,c_{i},x).
\end{align*}
Similarly we can express $\partial_{t}\underline{G}_{i}^{\tilde{H}}$
and $\partial_{t}\underline{G}_{i}^{\partial\tilde{H}}$ in terms
of $\partial_{t}G_{i}^{H},\partial_{t}G_{i}^{\partial H}$ using the
expression of $G^{\tilde{H}}$ and $G^{\partial\tilde{H}}$ in terms
of $G^{H}$ and $G^{\partial H}$. Finally, we have 
\[
\partial_{t}\underline{G}^{y}(t,x)=\varphi_{L}'(H_{L}(t,1,x))\partial_{t}\underline{G}_{L}^{\tilde{H}}(t,x)+\partial_{t}H_{L}(t,1,x)\varphi_{L}''(H_{L}(t,1,x))\underline{G}_{L}^{\tilde{H}}(t,x).
\]
\end{thm}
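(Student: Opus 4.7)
The plan is to reduce the identities in the limit to the corresponding deterministic identities at finite $N$, then pass to the limit using the $G$-polynomial moment convergence machinery already developed. First, I would observe that at finite $N$, the identities $\partial_t \tilde{G}_i^H = \tilde{G}_i^{\partial_t H}$ and $\partial_t \tilde{G}_i^{\partial H} = \tilde{G}_i^{\partial_t(\partial H)}$ hold pathwise as a direct consequence of the product and chain rules applied to their defining expressions, using that $\partial_t w_i$, $\partial_t H_{i-1}$, and $\partial_t(\partial \hat y/\partial H_i)$ all exist and are continuous under Assumption \ref{Assump:Assumption_1} (and using Lemma \ref{lem:MF_a_priori} to control moments uniformly on $[0,T]$). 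For instance, differentiating
\[
\tilde{G}_i^H(t,c_i,x)=\sqrt{N}\Bigl(\mathbb{E}_{J_{i-1}}[w_i(t,C_{i-1}(J_{i-1}),c_i)\varphi_{i-1}(H_{i-1}(t,x,C_{i-1}(J_{i-1})))]-\mathbb{E}_{C_{i-1}}[\cdots]\Bigr)
\]
in $t$ produces exactly the two empirical-average-minus-expectation terms that make up $\tilde{G}_i^{\partial_t H}$, and similarly for $\tilde{G}_i^{\partial H}$ one verifies (by the product rule on the outer $\varphi'_{i-1}(H_{i-1})$ factor and on the inner $\sqrt{N}(\mathbb{E}_{J_i}-\mathbb{E}_{C_i})$ quantity) that the three terms in the definition of $\tilde{G}_{i-1}^{\partial_t(\partial H)}$ are exactly what arises.

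Next, I would extend the argument of Theorem \ref{thm:clt-G} to the enlarged collection $(\tilde{G}^H,\tilde{G}^{\partial H},\tilde{G}^{\partial_t H},\tilde{G}^{\partial_t(\partial H)})$ to obtain joint $G$-polynomial-moment convergence to $(\underline{G}^H,\underline{G}^{\partial H},\underline{G}^{\partial_t H},\underline{G}^{\partial_t(\partial H)})$. This is essentially the same argument: each of these four quantities is of the form $\sqrt{N}$ times a centered empirical average over the i.i.d. samples $\{C_{i-1}(j_{i-1})\}$ or $\{C_i(j_i)\}$ of a function with a priori bounded moments, so the same combinatorial count of off-diagonal vs.\ on-diagonal index contributions yields joint convergence with covariances given by rule (\ref{eq:rule-covariance}).

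Then to pass the finite-$N$ identities to the limit, I would rewrite them in integral form,
\[
\tilde{G}_i^H(t,c_i,x)-\tilde{G}_i^H(0,c_i,x)=\int_0^t \tilde{G}_i^{\partial_t H}(s,c_i,x)\,ds,
\]
and likewise for $\tilde{G}_i^{\partial H}$. Using the joint moment convergence, one tests against an arbitrary $f(s,c_i,x)$ to deduce
\[
\langle f,\underline{G}_i^H\rangle_t - \langle f,\underline{G}_i^H\rangle_0 = \int_0^t \langle f,\underline{G}_i^{\partial_t H}\rangle_s\,ds,
\]
from which $\partial_t \underline{G}_i^H = \underline{G}_i^{\partial_t H}$ follows (in the $L^2(P_i\times\mathbb X)$-valued sense), and similarly for $\underline{G}_i^{\partial H}$. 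For the derived processes $\underline{G}^{\tilde H}$ and $\underline{G}^{\partial \tilde H}$, the claim follows by differentiating the (deterministic, time-dependent) linear expressions given in Lemmas \ref{lem:tildeH-recursion} and \ref{lem:partial-tildeH-recursion} via the product rule and substituting the already-established ODEs. Finally, $\partial_t \underline{G}^y$ is obtained by applying the product rule to $\underline{G}^y(t,x)=\varphi_L'(H_L(t,1,x))\underline{G}_L^{\tilde H}(t,x,1)$, which gives precisely the two claimed terms.

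The only real subtlety is the justification of the interchange of $\partial_t$ and the $N\to\infty$ limit, which I would handle via the integral-form argument above so that only moment convergence (rather than a stronger mode) is required; the uniform-in-$t$ moment bounds from Lemmas \ref{lem:tildeH-moment} and \ref{lem:partial-tildeH-moment}, together with continuity in $t$ of all ingredients, make this step routine. No new technical obstacle arises beyond verifying that $\tilde{G}^{\partial_t H}$ and $\tilde{G}^{\partial_t(\partial H)}$ enjoy the same a priori bounds and centered-empirical-average structure as $\tilde G^H$ and $\tilde G^{\partial H}$, which follows from Lemma \ref{lem:MF_a_priori} applied to $\partial_t w_i$ (controlled via the GD dynamics and Assumption \ref{Assump:Assumption_1}).
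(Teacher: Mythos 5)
Your proof is correct, but it takes a genuinely different route than the paper's. The paper's proof works entirely at the level of the limiting Gaussian processes: it defines $\overline{G}^{H}_i,\overline{G}^{\partial H}_i$ as the solutions of the claimed ODEs (driven by the already-constructed $\underline{G}^{\partial_t H},\underline{G}^{\partial_t(\partial H)}$, with initial data $\underline{G}^H(0),\underline{G}^{\partial H}(0)$), then checks by direct computation -- using the explicit covariance formulas and rule (\ref{eq:rule-covariance}) -- that $\overline{G}$ and $\underline{G}$ have the same joint covariance structure, whence they coincide almost surely since all processes in play are jointly centered Gaussian. In contrast, you argue from the prelimit: you observe that $\partial_t\tilde{G}^H=\tilde{G}^{\partial_t H}$ and $\partial_t\tilde{G}^{\partial H}=\tilde{G}^{\partial_t(\partial H)}$ hold pathwise at every finite $N$ by the product rule, extend the combinatorial argument of Theorem \ref{thm:clt-G} to get joint $G$-polynomial-moment convergence of the four-tuple (which the paper has in fact already set up in the paragraph preceding the theorem), and then pass the identity in integral form to the limit by testing second moments.

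Both approaches are sound. What the paper's route buys is brevity: it sidesteps the need to relate $\partial_t\tilde{G}$ to $\partial_t\underline{G}$, which requires the integral-form bookkeeping and uniform-in-time moment control that occupy the middle part of your argument. What your route buys is conceptual transparency and a stronger conclusion -- namely that $(\partial_t\tilde{G}^H,\partial_t\tilde{G}^{\partial H})$ converges $G$-polynomially in moment jointly to $(\partial_t\underline{G}^H,\partial_t\underline{G}^{\partial H})$ -- which is precisely the fact the paper acknowledges in the remark immediately following the theorem but chooses not to prove because it is not needed downstream. One small point of care in your passage-to-the-limit step: $\langle f,\cdot\rangle_t$ evaluates $f$ at time $t$, so when you write the integral identity for $\langle f,\tilde{G}^H_i\rangle_t-\langle f,\tilde{G}^H_i\rangle_0$ you should either take $f$ time-independent or carry the extra $\langle\partial_s f,\tilde{G}^H_i\rangle_s$ term; and to deduce the almost-sure identity for the limiting process you need to invoke the moment convergence at the level of \emph{second} moments of the residual $\langle f,\underline{G}^H_i\rangle_t-\langle f,\underline{G}^H_i\rangle_0-\int_0^t\langle f,\underline{G}^{\partial_t H}_i\rangle_s\,ds$. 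These are routine given what you have, but worth making explicit.
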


\begin{proof}
Defining $\overline{G}_{i}^{H}(t,c_{i},x),\overline{G}_{i}^{\partial H}(t,c_{i},x)$
as solutions to 
\begin{align*}
\partial_{t}\overline{G}_{i}^{H}(t,c_{i},x) & =G_{i}^{\partial_{t}H}(t,c_{i},x),\\
\partial_{t}\overline{G}_{i}^{\partial H}(t,c_{i},x) & =G_{i}^{\partial_{t}(\partial H)}(t,c_{i},x),
\end{align*}
it is straightforward to verify that $\overline{G}$ has the same
covariance structure as $G$. Since $G^{\tilde{H}},G^{\partial\tilde{H}},G^{w},G^{y}$
are linear combinations of $G^{H},G^{\partial H}$, the conclusions
for $\partial_{t}G$ easily follow.
\end{proof}
\begin{rem}
We can in fact show that jointly with $\tilde{G}^{H}$ and $\tilde{G}^{\partial H}$,
we have $(\partial_{t}\tilde{G}^{H},\partial_{t}\tilde{G}^{\partial H})$
converges in $G$-polynomial moment to $(\partial_{t}G^{H},\partial_{t}G^{\partial H})$,
and this extends by linearity to $\partial_{t}\tilde{G}^{\tilde{H}},\partial_{t}\tilde{G}^{\partial\tilde{H}},\partial_{t}\tilde{G}^{w},\partial_{t}\tilde{G}^{y}$,
by following the procedure leading to Theorem \ref{thm:clt-G}. Since
we will not need to use this fact, we omit the details of the proof.
\end{rem}

\section{Existence and uniqueness of $R$: Proof of Theorem \ref{thm:well-posed}\label{Appendix:proof_well_posed}}

For $G\in{\cal G}$ and $p\ge1$, define 
\[
\|G\|_{T,2p}^{2p}=\sup_{t\le T}\mathbb{E}_{Z,C}\left[\left|G^{y}(t,X)\right|^{2p}+\sum_{i=1}^{L}\left|G_{i}^{w}(t,C_{i-1},C_{i},X)\right|^{2p}\right].
\]

\begin{thm}
\label{thm:exist-R}Under Assumption \ref{Assump:Assumption_1}, for
any $\epsilon>0$ and all $G\in{\cal G}$ with $\|G\|_{T,2+\epsilon}<\infty$,
there exists a unique solution $R_{i}(G,\cdot,\cdot,\cdot)\in L^{2}(P_{i-1}\times P_{i})$
which is continuous in time. Furthermore, for each $t\le T$, $R(G,t)$
is a continuous linear functional in $G$ and $\|R\left(G\right)\|_{T,2}\le K_{T,\epsilon}\|G\|_{T,2+\epsilon}$.
In fact, for each $B$ sufficiently large in $T$, there exists a
sequence in $B$ of processes $R^{B}$ which is a continuous linear
functional in $G$ with 
\[
\|R^{B}(G)\|_{T,2}^{2}\le\exp(K_{T}B)\|G\|_{T,2}^{2}
\]
for all $G$ with $\|G\|_{T,2}<\infty$, and for $G$ with $\|G\|_{T,2+\epsilon}<\infty$,
\[
\|R(G)-R^{B}(G)\|_{T,2}^{2}\le\|G\|_{T,2+\epsilon}^{2}\exp(-K_{T}\epsilon B^{2}/\left(2+\epsilon\right)).
\]
Here we define
\[
\left\Vert R\left(G\right)\right\Vert _{T,2}^{2}=\sup_{t\leq T}\mathbb{E}_{C}\left[\sum_{i=1}^{L}\left|R_{i}\left(G,t,C_{i-1},C_{i}\right)\right|^{2}\right],
\]
and similar for $\|R(G)-R^{B}(G)\|_{T,2}^{2}$.
\end{thm}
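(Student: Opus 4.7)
The plan is to recast the dynamics \eqref{eq:ODE_2ndMF-alt} as a linear integral equation
\[
R(G,t) = \int_0^t \bigl[\mathcal{A}(s)R(G,s) + \mathcal{B}(s)G(s)\bigr]\,ds
\]
on the Hilbert space $\mathcal{H} = \prod_{i=1}^L L^2(P_{i-1}\times P_i)$, where $\mathcal{A}(s)$ and $\mathcal{B}(s)$ are time-dependent linear operators whose kernels are products of the MF weights $w_j(s,\cdot,\cdot)$ and the $K$-bounded activation/loss derivatives from Assumption \ref{Assump:Assumption_1}. By Lemma \ref{lem:MF_a_priori}, every such kernel lies in $L^p$ for all finite $p$ but generically not in $L^\infty$, which motivates building the solution via a truncated approximation rather than a direct Banach fixed point in $\mathcal{H}$.

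For the truncated solution $R^B$: for each $B\ge 1$, replace every weight factor $w_j(s,\cdot,\cdot)$ appearing in the kernels by $w_j(s,\cdot,\cdot)\,\mathbb{1}\{|w_j(s,\cdot,\cdot)|\le B\}$, giving truncated operators $\mathcal{A}^B(s),\mathcal{B}^B(s)$ whose operator norms on $\mathcal{H}$ are bounded by a fixed power of $B$ (absorbed into the shorthand $K_T B$ matching the statement). A Banach fixed point on $C([0,T],\mathcal{H})$ with an exponentially weighted time-norm then yields a unique continuous-in-time solution $R^B(G,\cdot)$; linearity in $G$ is inherited from that of $\mathcal{A}^B,\mathcal{B}^B$, and Grönwall immediately gives $\|R^B(G)\|_{T,2}^2 \le \exp(K_T B)\|G\|_{T,2}^2$.

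For $B' > B$ and $G$ with $\|G\|_{T,2+\epsilon} < \infty$, the difference $\Delta = R^{B'} - R^B$ satisfies
\[
\partial_t\Delta = \mathcal{A}^B(t)\Delta + (\mathcal{A}^{B'} - \mathcal{A}^B)(t)\,R^{B'}(G,t) + (\mathcal{B}^{B'} - \mathcal{B}^B)(t)\,G(t),
\]
where the source terms are supported on the events $\{|w_j(t,\cdot,\cdot)| > B\}$. Hölder's inequality,
\[
\mathbb{E}\bigl[|\mathrm{coef}|^2\,\mathbb{1}\{|w_j|>B\}\bigr] \le \mathbb{E}\bigl[|\mathrm{coef}|^{2+\epsilon}\bigr]^{2/(2+\epsilon)}\,\mathbb{P}(|w_j|>B)^{\epsilon/(2+\epsilon)},
\]
combined with the sub-Gaussian tail $\mathbb{P}(|w_j|>B)\le \exp(-K_T B^2)$ from Lemma \ref{lem:MF_a_priori}, bounds the source-term contributions by $\|G\|_{T,2+\epsilon}^2\,\exp(-K_T\epsilon B^2/(2+\epsilon))$. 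Applying Grönwall to the $\Delta$-equation, and noting that the resulting prefactor $\exp(K_T B)$ is dominated by the Gaussian decay for $B$ sufficiently large in $T$, yields the claimed bound $\|R^{B'}-R^B\|_{T,2}^2 \le \|G\|_{T,2+\epsilon}^2\,\exp(-K_T\epsilon B^2/(2+\epsilon))$. The sequence $\{R^B(G)\}$ is thus Cauchy in $C([0,T],\mathcal{H})$; its limit $R(G,\cdot)$ solves the untruncated integral equation by passing $B'\to\infty$, inherits linearity and time-continuity, and uniqueness follows since any other continuous $L^2$-valued solution must coincide with this limit by the same Grönwall argument.

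The main obstacle is the careful expansion and bookkeeping of all terms in \eqref{eq:ODE_2ndMF-alt}---in particular the cross-layer and mixed ``$\partial_*$''-derivative contributions, which couple $R$ across layers in a non-symmetric way. For each kernel one must verify that the truncated version is bounded by the advertised power of $B$ and that the untruncated version has finite $(2+\epsilon)$-moments via Lemma \ref{lem:MF_a_priori}; once this bookkeeping is complete, the Picard--Grönwall--Hölder combination above proceeds routinely.
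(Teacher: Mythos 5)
Your overall strategy (truncate, Picard--Gr\"onwall on the truncated system, compare truncations, pass $B\to\infty$) matches the paper's high-level plan, and the pointwise truncation of the weights is a plausible variant of the paper's truncation via conditional $L^2$ norms $|w_{i+1}(c_i)|_t$. However there are two substantive gaps.

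\textbf{Controlling $(\frak{A}^{B'}-\frak{A}^B)R^{B'}$.} When you bound the source term $(\mathcal{A}^{B'}-\mathcal{A}^B)R^{B'}$ via H\"older against a tail event, the ``coefficient'' in your H\"older inequality is not a fixed weight quantity: it contains $R^{B'}$ itself. So you need $(2+\epsilon)$-moment control on $R^{B'}$ (or, as the paper does, deterministic conditional-norm bounds such as $|R_i^{B'}(c_i)|_t + |R_{i+1}^{B'}(c_i)|_t \le \exp(K_T B')\cdot(\|G\|_{T,2} + \int_0^t |G^w_i(c_i)|_s + \cdots\,ds)$). This is exactly Step~2 of the paper's proof and is where the $\|G\|_{T,2+\epsilon}$ dependence enters: after H\"older-ing the indicator, the residual moment falls on the conditional norms of $G$, not on $R^{B'}$. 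Your proposal does not produce any $L^{2+\epsilon}$-type bound on $R^{B'}$, only the $L^2$ Gr\"onwall bound $\|R^{B'}\|_{T,2}^2 \le \exp(K_T B')\|G\|_{T,2}^2$, which is insufficient to absorb into $\|G\|_{T,2+\epsilon}^2$ after H\"older. You need the intermediate bootstrapping of conditional norms, and the proposal skips it.

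\textbf{Uniqueness.} The claim that ``uniqueness follows ... by the same Gr\"onwall argument'' is wrong. If $R'$ is another continuous $L^2$ solution, then $U = R - R'$ satisfies $\partial_t U = -\Delta F^{(1)}(U)$ with the \emph{untruncated} operator, which is unbounded on $\mathcal{H}$; Gr\"onwall cannot be applied directly because one cannot bound $|\Delta F^{(1)}(U)|_t$ by $K|U|_t$. The paper's uniqueness argument is a separate delicate step: it derives pointwise a priori bounds of the form $|U_i(c_i)|_t \le K_{\kappa,T}\exp(K_{\kappa,T}|w_{i+1}(c_i)|_t)(1+\cdots)^2$ layer by layer, then uses these together with the sub-Gaussian tails of $w$ and a second truncation to conclude $\sup_{t\le T}|U|_t^2 \le \exp(K(1+B)T)\exp(K_{\kappa,T}B - cB^2)$, finally sending $B\to\infty$. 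This cannot be replaced by a direct Gr\"onwall bound, and is roughly half the work in the paper's proof of the theorem.
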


The main technical difficulty in the following proof lies in the fact
that the weights $w_{i}(t,c_{i-1},c_{i})$ could be unbounded, in
which case the linear operator $R\mapsto\partial_{t}R$ is unbounded.
Our proof of both existence and uniqueness follows from a delicate
truncation scheme. This scheme requires careful treatment: there is
no a priori bound on $R$ again due to the unboundedness problem,
and as such, usual truncation argument does not work. We remark that
this problem is unique to the multilayer structure; we do not encounter
the same problem for shallow networks.
\begin{proof}[Proof of Theorem \ref{thm:exist-R}]
Let us define:
\begin{align*}
\Delta F_{i}^{\left(1\right)}(R)(G,t,c_{i-1},c_{i}) & =\mathbb{E}_{Z}\bigg[\frac{\partial\hat{y}(t,X)}{\partial w_{i}(c_{i-1},c_{i})}\partial_{2}^{2}{\cal L}\left(Y,\hat{y}\left(t,X\right)\right)\sum_{r=1}^{L}\mathbb{E}_{C}\bigg[R_{r}\left(G,t,C_{r-1},C_{r}\right)\frac{\partial\hat{y}\left(t,X\right)}{\partial w_{r}\left(C_{r-1},C_{r}\right)}\bigg]\bigg]\\
 & \quad+\mathbb{E}_{Z}\bigg[\partial_{2}{\cal L}\left(Y,\hat{y}\left(t,X\right)\right)\mathbb{E}_{C}\bigg[\sum_{r=1}^{L}R_{r}\left(G,t,C_{r-1},C_{r}\right)\frac{\partial^{2}\hat{y}\left(t,X\right)}{\partial w_{r}\left(C_{r-1},C_{r}\right)\partial w_{i}\left(c_{i-1},c_{i}\right)}\bigg]\bigg]\\
 & \quad+\mathbb{E}_{Z}\bigg[\partial_{2}{\cal L}\left(Y,\hat{y}\left(t,X\right)\right)\mathbb{E}_{C}\bigg[R_{i-1}\left(G,t,C_{i-2},c_{i-1}\right)\frac{\partial^{2}\hat{y}\left(t,X\right)}{\partial_{*}w_{i-1}\left(C_{i-2},c_{i-1}\right)\partial w_{i}\left(c_{i-1},c_{i}\right)}\bigg]\bigg]\\
 & \quad+\mathbb{E}_{Z}\bigg[\partial_{2}{\cal L}\left(Y,\hat{y}\left(t,X\right)\right)\mathbb{E}_{C}\bigg[R_{i}\left(G,t,C_{i-1},c_{i}\right)\frac{\partial^{2}\hat{y}\left(t,X\right)}{\partial_{*}w_{i}\left(C_{i-1},c_{i}\right)\partial w_{i}\left(c_{i-1},c_{i}\right)}\bigg]\bigg]\\
 & \quad+\mathbb{E}_{Z}\bigg[\partial_{2}{\cal L}\left(Y,\hat{y}\left(t,X\right)\right)\mathbb{E}_{C}\bigg[R_{i+1}\left(G,t,c_{i},C_{i+1}\right)\frac{\partial^{2}\hat{y}\left(t,X\right)}{\partial_{*}w_{i+1}\left(c_{i},C_{i+1}\right)\partial w_{i}\left(c_{i-1},c_{i}\right)}\bigg]\bigg],\\
\Delta F_{i}^{\left(2\right)}(R)(G,t,c_{i-1},c_{i}) & =\mathbb{E}_{Z}\bigg[\frac{\partial\hat{y}(t,X)}{\partial w_{i}(c_{i-1},c_{i})}\partial_{2}^{2}{\cal L}\left(Y,\hat{y}\left(t,X\right)\right)G^{y}\left(t,X\right)\bigg]\\
 & \quad+\mathbb{E}_{Z}\bigg[\partial_{2}{\cal L}\left(Y,\hat{y}\left(t,X\right)\right)G_{i}^{w}\left(t,c_{i-1},c_{i},X\right)\bigg],\\
\Delta F_{i}(R)(G,t,c_{i-1},c_{i}) & =\Delta F_{i}^{\left(1\right)}(R)(G,t,c_{i-1},c_{i})+\Delta F_{i}^{\left(2\right)}(R)(G,t,c_{i-1},c_{i}).
\end{align*}
In this notation, $\partial_{t}R_{i}(G,t,c_{i-1},c_{i})=-\Delta F_{i}(R)(G,t,c_{i-1},c_{i})$.
Let 
\begin{align*}
|R|_{t,2p}^{2p} & =\sum_{j=1}^{L}\mathbb{E}_{C}[\left|R_{j}(G,t,C_{j-1},C_{j})\right|^{2p}],\\
|R_{i+1}(c_{i})|_{t,2p}^{2p} & =\mathbb{E}_{C_{i+1}}[\left|R_{i+1}(G,t,c_{i},C_{i+1})\right|^{2p}],\\
|R_{i}(c_{i})|_{t,2p}^{2p} & =\mathbb{E}_{C_{i-1}}[\left|R_{i}(G,t,C_{i-1},c_{i})\right|{}^{2p}].
\end{align*}
We define $|w_{i+1}(c_{i})|_{t,2p},\;|w_{i}(c_{i})|_{t,2p},\;|G_{i}^{w}(c_{i})|_{t,2p},\;|G_{i}^{w}(c_{i-1})|_{t,2p},\;|G_{i}^{y}|_{t,2p}$
similarly (where the last three should include $\mathbb{E}_{Z}$ in
addition). We define
\[
\left|G_{i}^{w}\right|_{t,2p}^{2p}=\mathbb{E}_{C}\left[\left|G_{i}^{w}\left(C_{i}\right)\right|_{t,2p}^{2p}+\left|G_{i}^{w}\left(C_{i-1}\right)\right|_{t,2p}^{2p}\right],\quad|G|_{t,2p}^{2p}=\left|G^{y}\right|_{t,2p}^{2p}+\sum_{i=1}^{L}\left|G_{i}^{w}\right|_{t,2p}^{2p}.
\]
When we drop the subscript $2p$, we implicitly assume that $2p=2$.

\paragraph*{Step 1: truncated process $R^{B}$.}

For each threshold $B>1$, let 
\begin{align*}
\mathbb{B}^{B}(t,c_{i-1},c_{i}) & =\mathbb{I}(\max(|w_{i+1}(c_{i})|_{t},|w_{i}(c_{i})||_{t},|w_{i}(c_{i-1})|_{t},|w_{i-1}(c_{i-1})|_{t})\le B)\\
\mathbb{B}^{B}(t,c_{i}) & =\mathbb{I}(\max(|w_{i+1}(c_{i})|_{t},|w_{i}(c_{i})|_{t})\le B).
\end{align*}
We define $R^{B}$ as the solution to 
\[
\partial_{t}R_{i}^{B}(G,t,c_{i-1},c_{i})]=-\Delta F_{i}(R^{B})(G,t,c_{i-1},c_{i})\cdot\mathbb{B}^{B}(t,c_{i-1},c_{i}).
\]
We can rewrite -- with an abuse of notations -- in the following
form:
\[
\partial_{t}R^{B}(G,t,\cdot,\cdot)=\frak{A}_{t}^{B}(R^{B}(G,t))+H(G,t),
\]
for $\frak{A}_{t}^{B}$ a linear operator. (One may easily recognize
that the first term corresponds to $\Delta F_{i}^{\left(1\right)}$
and the second one corresponds to $\Delta F_{i}^{\left(2\right)}$.)

By Lemma \ref{lem:MF_a_priori}, we have the bound: 
\begin{align*}
\left|\frak{A}_{t}^{B}(R^{B}(G,t))(c_{i-1},c_{i})\right| & \le K_{T}|R^{B}|_{t}|w_{i+1}(c_{i})|_{t}+K_{T}|R^{B}|_{t}|w_{i}(c_{i})|_{t}|w_{i+1}(c_{i})|_{t}\\
 & \qquad+K_{T}|R_{i}^{B}(c_{i})|_{t}|w_{i+1}(c_{i})|_{t}+K_{T}|R_{i+1}^{B}(c_{i})|_{t}+K_{T}|R_{i-1}^{B}(c_{i-1})|_{t}|w_{i}(c_{i})|_{t}.
\end{align*}
Note that by Lemma \ref{lem:MF_a_priori},
\[
\mathbb{E}_{C_{i}}\left[|w_{i}(C_{i})|_{t}^{2}|w_{i+1}(C_{i})|_{t}^{2}\right]\le K_{T}.
\]
Thus, 
\[
\mathbb{E}_{C}\left[\left|\frak{A}_{t}^{B}(R^{B}(G,t))(C_{i-1},C_{i})\right|^{2}\right]\le K_{T}B^{2}|R^{B}|_{t}^{2}.
\]
Existence and uniqueness of $R^{B}$ follows immediately from boundedness
of $\frak{A}^{B}$. Also $R^{B}$ is a bounded linear functional in
$G$.

\paragraph*{Step 2: bounds for $R^{B}$.}

We obtain a bound on $|R^{B}|_{t}$. Observe that
\[
\left|H\left(G,t\right)\left(c_{i-1},c_{i}\right)\right|\leq K_{T}|w_{i+1}\left(c_{i}\right)|_{t}\left|G^{y}\right|_{t}+K\mathbb{E}_{Z}\left[\left|G_{i}^{w}\left(t,c_{i-1},c_{i},X\right)\right|\right]
\]
and as such,
\begin{align*}
\partial_{t}(|R_{i}^{B}|_{t}^{2}) & =\partial_{t}\mathbb{E}_{C}\left[\left|R_{i}^{B}(G,t,C_{i-1},C_{i})\right|^{2}\right]\\
 & \leq2|R_{i}^{B}|_{t}\mathbb{E}_{C}\left[\left|\partial R_{i}^{B}(G,t,C_{i-1},C_{i})\right|^{2}\right]^{1/2}\\
 & \le K_{T}|R^{B}|_{t}^{2}+K_{T}B|R_{i}^{B}|_{t}^{2}+K_{T}|G|_{t}^{2}.
\end{align*}
This implies 
\[
\partial_{t}(|R^{B}|_{t}^{2})\le K_{T}B|R^{B}|_{t}^{2}+K_{T}|G|_{t}^{2},
\]
so by Gronwall's lemma, for any $t\leq T$,
\[
|R^{B}|_{t}\le\exp(K_{T}B)\sup_{s\leq T}|G|_{s}=\exp(K_{T}B)\left\Vert G\right\Vert _{T,2}.
\]

Now let us zoom into $R_{i+1}^{B}(c_{i})$ and $R_{i}^{B}(c_{i})$.
From the bounds on $\left|\frak{A}_{t}^{B}(R^{B}(G,t))(c_{i},c_{i+1})\right|$
and $\left|H\left(G,t\right)\left(c_{i},c_{i+1}\right)\right|$, we
deduce:
\begin{align*}
\partial_{t}(|R_{i+1}^{B}(c_{i})|_{t}^{2}) & =2\mathbb{E}_{C_{i+1}}\left[R_{i+1}^{B}(G,t,c_{i},C_{i+1})\partial_{t}R_{i+1}^{B}(G,t,c_{i},C_{i+1})\right]\\
 & \le K_{T}|R_{i+1}^{B}(c_{i})|_{t}\left(|R^{B}|_{t}+|R_{i}^{B}(c_{i})|_{t}+\left|G^{y}\right|_{t}+|G_{i+1}^{w}(c_{i})|_{t}\right),
\end{align*}
which yields
\[
\partial_{t}|R_{i+1}^{B}(c_{i})|_{t}\le K_{T}\left(\exp(K_{T}B)\left\Vert G\right\Vert _{T,2}+|R_{i}^{B}(c_{i})|_{t}+\left|G^{y}\right|_{t}+|G_{i+1}^{w}(c_{i})|_{t}\right).
\]
Similarly:
\begin{align*}
\partial_{t}(|R_{i}^{B}(c_{i})|_{t}^{2}) & =2\mathbb{E}_{C_{i-1}}\left[R_{i}^{B}(G,t,C_{i-1},c_{i})\partial_{t}R_{i}^{B}(G,t,C_{i-1},c_{i})\right]\\
 & \le K_{T}|R_{i}^{B}(c_{i})|_{t}\Big((1+|w_{i}(c_{i})|_{t})(1+|w_{i+1}(c_{i})|_{t})|R^{B}|_{t}+|w_{i+1}(c_{i})|_{t}|R_{i}^{B}(c_{i})|_{t}\\
 & \qquad+|R_{i+1}^{B}(c_{i})|_{t}+|G_{i}^{w}(c_{i})|_{t}+|w_{i+1}(c_{i})|_{t}|G^{y}|_{t}\Big)\\
 & \le K_{T}|R_{i}^{B}(c_{i})|_{t}\left(B^{2}|R^{B}|_{t}+B|R_{i}^{B}(c_{i})|_{t}+|R_{i+1}^{B}(c_{i})|_{t}+|G_{i}^{w}(c_{i})|_{t}+|w_{i+1}(c_{i})|_{t}|G^{y}|_{t}\right),
\end{align*}
which yields
\begin{align*}
\partial_{t}|R_{i}^{B}(c_{i})|_{t} & \leq K_{T}\left(\exp(K_{T}B)\left\Vert G\right\Vert _{T,2}+B|R_{i}^{B}(c_{i})|_{t}+|R_{i+1}^{B}(c_{i})|_{t}+|G_{i}^{w}(c_{i})|_{t}+B|G^{y}|_{t}\right).
\end{align*}
Therefore,
\begin{align*}
\partial_{t}\left(|R_{i}^{B}(c_{i})|_{t}+|R_{i+1}^{B}(c_{i})|_{t}\right) & \leq K_{T}\left(\exp(K_{T}B)\left\Vert G\right\Vert _{T,2}+B\left(|R_{i}^{B}(c_{i})|_{t}+|R_{i+1}^{B}(c_{i})|_{t}\right)+|G_{i}^{w}(c_{i})|_{t}+|G_{i+1}^{w}(c_{i})|_{t}+B|G^{y}|_{t}\right).
\end{align*}
By Gronwall's lemma,
\[
|R_{i}^{B}(c_{i})|_{t}+|R_{i+1}^{B}(c_{i})|_{t}\leq\left(\left\Vert G\right\Vert _{T,2}+\int_{0}^{t}\left(|G_{i}^{w}(c_{i})|_{s}+|G_{i+1}^{w}(c_{i})|_{s}+|G^{y}|_{s}\right)ds\right)\exp\left(K_{T}B\right).
\]

\paragraph*{Step 3: taking $B\to\infty$.}

Next we compare $R^{B}$ and $R^{B'}$ for $B'>B$. Let us define
\[
\Delta R_{i}(G,t,c_{i-1},c_{i})=R_{i}^{B'}(G,t,c_{i-1},c_{i})-R_{i}^{B}(G,t,c_{i-1},c_{i}).
\]
We have that if $\mathbb{B}^{B}(t,c_{i-1},c_{i})=1$, then 
\[
\partial_{t}\Delta R_{i}(G,t,c_{i-1},c_{i})=-\Delta F_{i}^{\left(1\right)}(\Delta R)(G,t,c_{i-1},c_{i}).
\]
If $\mathbb{B}^{B'}(t,c_{i-1},c_{i})=1$ but $\mathbb{B}^{B}(t,c_{i-1},c_{i})=0$,
we have 
\[
\partial_{t}\Delta R_{i}(G,t,c_{i-1},c_{i})=-\Delta F_{i}(R^{B'})(G,t,c_{i-1},c_{i}).
\]
Finally, if $\mathbb{B}^{B'}(t,c_{i-1},c_{i})=0$, then 
\begin{align*}
\partial_{t}\Delta R_{i}(G,t,c_{i-1},c_{i}) & =0.
\end{align*}
With $\mathbb{D}_{B,B'}(t,c_{i-1},c_{i})=\mathbb{I}(\mathbb{B}^{B'}(t,c_{i-1},c_{i})=1,\;\mathbb{B}^{B}(t,c_{i-1},c_{i})=0)$,
we obtain:
\begin{align*}
 & \partial_{t}\mathbb{E}_{C}\left[\left|\Delta R_{i}(G,t,C_{i-1},C_{i})\right|^{2}\mathbb{D}_{B,B'}(t,C_{i-1},C_{i})\right]\\
 & \le K\mathbb{E}_{C}\left[\Delta R_{i}(G,t,C_{i-1},C_{i})\mathbb{D}_{B,B'}(t,C_{i-1},C_{i})\cdot\partial_{t}\Delta R_{i}(G,t,C_{i-1},C_{i})\mathbb{D}_{B,B'}(t,C_{i-1},C_{i})\right]\\
 & \le K\mathbb{E}_{C}\left[\left|\Delta R_{i}(G,t,C_{i-1},C_{i})\right|^{2}\mathbb{D}_{B,B'}(t,C_{i-1},C_{i})\right]^{1/2}\mathbb{E}_{C}\left[\left|\partial_{t}\Delta R_{i}(G,t,C_{i-1},C_{i})\right|^{2}\mathbb{D}_{B,B'}(t,C_{i-1},C_{i})\right]^{1/2}\\
 & \le K_{T}\mathbb{E}_{C}\left[\left|\Delta R_{i}(G,t,C_{i-1},C_{i})\right|^{2}\mathbb{D}_{B,B'}(t,C_{i-1},C_{i})\right]^{1/2}\cdot\mathbb{E}_{C}\bigg[\mathbb{D}_{B,B'}(t,C_{i-1},C_{i})\Big(|R^{B'}|_{t}^{2}|w_{i+1}(C_{i})|_{t}^{2}+|R^{B'}|_{t}^{2}|w_{i}(C_{i})|_{t}^{2}|w_{i+1}(C_{i})|_{t}^{2}\\
 & \qquad+|R_{i}^{B'}(C_{i})|_{t}^{2}|w_{i+1}(C_{i})|_{t}^{2}+|R_{i+1}^{B'}(C_{i})|_{t}^{2}+|R_{i}^{B'}(C_{i-1})|_{t}^{2}|w_{i}(C_{i})|_{t}^{2}+|w_{i+1}(C_{i})|_{t}^{2}|G^{y}|_{t}^{2}+|G_{i}^{w}(C_{i-1},C_{i}|_{t}^{2}\Big)\bigg]^{1/2}\\
 & \le K_{T}\mathbb{E}_{C}\left[\left|\Delta R_{i}(G,t,C_{i-1},C_{i})\right|^{2}\mathbb{D}_{B,B'}(t,C_{i-1},C_{i})\right]^{1/2}\cdot\mathbb{E}_{C}\bigg[\mathbb{D}_{B,B'}(t,C_{i-1},C_{i})\bigg((B'^{2}+B'^{4})|R^{B'}|_{t}^{2}\\
 & \qquad+B'^{2}|R_{i}^{B'}(C_{i})|_{t}^{2}+|R_{i+1}^{B'}(C_{i})|_{t}^{2}+B'^{2}|R_{i}^{B'}(C_{i-1})|_{t}^{2}+B'^{2}|G^{y}|_{t}^{2}+|G_{i}^{w}(C_{i-1},C_{i}|_{t}^{2}\bigg)\bigg]^{1/2}.
\end{align*}
Using Lemma \ref{lem:MF_a_priori} and the bounds in Step 2, we get:
\[
\mathbb{E}_{C}\left[\mathbb{D}_{B,B'}(t,C_{i-1},C_{i})\left(|R^{B'}|_{t}^{2}(B'^{2}+B'^{4})\right)\right]\le\left\Vert G\right\Vert _{T,2}^{2}\exp\left(K_{T}B'-K_{T}B^{2}\right).
\]
Similarly,
\begin{align*}
 & \mathbb{E}_{C}\left[\mathbb{D}_{B,B'}(t,C_{i-1},C_{i})\left(B'^{2}|R_{i}^{B'}(C_{i})|_{t}^{2}+\|R_{i+1}^{B'}(C_{i})\|^{2}+B'^{2}|R_{i}^{B'}(C_{i-1})|_{t}^{2}\right)\right]\\
 & \le\exp\left(K_{T}B'\right)\mathbb{E}_{C}\left[\mathbb{D}_{B,B'}(t,C_{i-1},C_{i})\left(\left\Vert G\right\Vert _{T,2}^{2}+\int_{0}^{t}\left(|G_{i}^{w}(C_{i})|_{s}^{2}+|G_{i+1}^{w}(C_{i})|_{s}^{2}+|G_{i-1}^{w}(C_{i-1})|_{s}^{2}+|G_{i}^{w}(C_{i-1})|_{s}^{2}+|G^{y}|_{s}^{2}\right)ds\right)\right]\\
 & \leq\exp\left(K_{T}B'-K_{T}\frac{\epsilon B^{2}}{2+\epsilon}\right)\left\Vert G\right\Vert _{T,2+\epsilon}^{2},\\
 & \mathbb{E}_{C}\left[\mathbb{D}_{B,B'}(t,C_{i-1},C_{i})\left(B'^{2}|G^{y}|_{t}^{2}+|G_{i}^{w}(C_{i-1},C_{i}|_{t}^{2}\right)\right]\\
 & \leq\exp\left(K_{T}B'-K_{T}\frac{\epsilon B^{2}}{2+\epsilon}\right)\left\Vert G\right\Vert _{T,2+\epsilon}^{2}.
\end{align*}
We thus get:
\[
\partial_{t}\mathbb{E}_{C}\left[\left|\Delta R_{i}(G,t,C_{i-1},C_{i})\right|^{2}\mathbb{D}_{B,B'}(t,C_{i-1},C_{i})\right]\le\left\Vert G\right\Vert _{T,2+\epsilon}^{2}\exp\left(K_{T}B'-K_{T}\frac{\epsilon B^{2}}{2+\epsilon}\right).
\]
The term where $\mathbb{B}^{B}(t,c_{i-1},c_{i})=1$ can be bounded
as in the bound for $R^{B}$:
\[
\partial_{t}\mathbb{E}_{C}\left[\left|\Delta R_{i}(G,t,C_{i-1},C_{i})\right|^{2}\mathbb{B}^{B}(t,C_{i-1},C_{i})\right]\le K_{T}B|\Delta R|_{t}^{2}.
\]
The last two displays give:
\[
\partial_{t}|\Delta R|_{t}^{2}\le K_{T}B|\Delta R|_{t}^{2}+\left\Vert G\right\Vert _{T,2+\epsilon}^{2}\exp\left(K_{T}B'-K_{T}\frac{\epsilon B^{2}}{2+\epsilon}\right).
\]
Hence, 
\[
|\Delta R|_{t}^{2}\le\left\Vert G\right\Vert _{T,2+\epsilon}^{2}\exp\left(K_{T}BB'-K_{T}\frac{\epsilon B^{3}}{2+\epsilon}\right).
\]
In particular, for all $B$ sufficiently large, we have $|\Delta R|_{t}^{2}\le\left\Vert G\right\Vert _{T,2+\epsilon}^{2}\exp(-K_{T}\epsilon B^{2}/(2+\epsilon))$
for all $B'\le2B$. Thus, we can easily deduce that $R^{B}$ converges
in $L^{2}$ to a limit, which is the process $R$, as $B\to\infty$.
Since $|\Delta R|_{t}^{2}$ decays exponentially with $B^{2}$ while
$|R^{B}|_{t}\le\exp(K_{T}B)\left\Vert G\right\Vert _{T,2}$, we deduce
that:
\[
\sup_{t\leq T}|R|_{t}^{2}\le K_{T,\epsilon}\|G\|_{T,2+\epsilon}^{2}.
\]
We can also deduce for fixed $c_{i-1}$, $c_{i}$, $R^{B}(G,t,c_{i-1},c_{i})$
converges, as $B\to\infty$, to $R(G,t,c_{i-1},c_{i})$, and $R$
satisfies Eq. (\ref{eq:ODE_2ndMF-alt}).

\paragraph*{Step 4: uniqueness.}

Next, we show uniqueness of $R$. Assume that $R,R'$ are two solutions
to the equation in $L^{2}$, we have $U=R-R'$ satisfies 
\[
\partial_{t}U_{i}(G,t,c_{i-1},c_{i})=-\Delta F_{i}^{\left(1\right)}(U)(G,t,c_{i-1},c_{i})
\]
We then have 
\begin{align*}
\left|\partial_{t}U_{i}(G,t,c_{i-1},c_{i})\right| & \le K|U|_{t}|w_{i+1}(c_{i})|_{t}+K|U|_{t}|w_{i}(c_{i})|_{t}w_{i+1}(c_{i})|_{t}\\
 & \qquad+K|U_{i}(c_{i})|_{t}|w_{i+1}(c_{i})|_{t}+K|U_{i+1}(c_{i})|_{t}+K|U_{i-1}(c_{i-1})|_{t}|w_{i}(c_{i})|_{t}.
\end{align*}
Let $\kappa=\max_{t\le T}(|R|_{t},|R'|_{t})<\infty$. Then 
\[
\partial_{t}|U_{i}(c_{i-1})|_{t}^{2}\le K|U_{i}(c_{i-1})|_{t}\left(\kappa+|U_{i-1}(c_{i-1})|_{t}+\mathbb{E}_{C}[|U_{i}(C_{i})|_{t}^{2}|w_{i+1}(C_{i})|_{t}^{2}]^{1/2}\right),
\]
\[
\partial_{t}|U_{i}(c_{i})|_{t}^{2}\le K|U_{i}(c_{i})|_{t}\left(\kappa(1+\max(|w_{i+1}(c_{i})|_{t},|w_{i}(c_{i})|_{t}))^{2}+|w_{i+1}(c_{i})|_{t}|U_{i}(c_{i})|_{t}+|U_{i+1}(c_{i})|_{t}\right).
\]
In particular 
\[
\partial_{t}|U_{L}(c_{L})|_{t}^{2}\le K|U_{L}(c_{L})|_{t}\left(\kappa(1+|w_{L}(c_{L})|_{t})+|U_{L}(c_{L})|_{t}\right),
\]
from which we obtain $|U_{L}(c_{L})|_{t}\le K_{\kappa,T}(1+\sup_{s\le T}|w_{L}(c_{L})|_{s})$
for all $t\le T$ and $c_{L}$. Note that $\mathbb{E}_{C_{L}}[K_{\kappa,T}(1+\sup_{s\le T}|w_{L}(C_{L})|_{s})^{2}]<\infty$.
Assume the bound on $|U_{i+1}(c_{i+1})|_{t}$ such that $\mathbb{E}_{C_{i+1}}\left[|U_{i+1}(C_{i+1})|_{t}^{2}|w_{i+2}(C_{i+1})|_{t}^{2}\right]<\infty$,
we have 
\[
\partial_{t}|U_{i}(c_{i})|_{t}^{2}\le K|U_{i}(c_{i})|_{t}\left(\kappa(1+\max(|w_{i+1}(c_{i})|_{t},|w_{i}(c_{i})|_{t}))^{2}+|w_{i+1}(c_{i})|_{t}|U_{i}(c_{i})|_{t}+|U_{i+1}(c_{i})|_{t}\right),
\]
\begin{align*}
\partial_{t}|U_{i+1}(c_{i})|_{t}^{2} & \le K|U_{i+1}(c_{i})|_{t}\left(\kappa+|U_{i}(c_{i})|_{t}+\mathbb{E}[|U_{i+1}(C_{i+1})|_{t}^{2}|w_{i+2}(C_{i+1})|_{t}^{2}]^{1/2}\right)\\
 & \le K|U_{i+1}(c_{i})|_{t}\left(\kappa+|U_{i}(c_{i})|_{t}+K_{\kappa,T}\right).
\end{align*}
Thus, $|U_{i+1}(c_{i})|_{t}\le K_{\kappa,T}(\int_{0}^{t}|U_{i}(c_{i})|_{s}ds+1)$.
Hence, 
\[
\partial_{t}|U_{i}(c_{i})|_{t}\le\kappa\left(1+\max(|w_{i+1}(c_{i})|_{t},|w_{i}(c_{i})|_{t}))^{2}+|w_{i+1}(c_{i})|_{t}|U_{i}(c_{i})|_{t}+\int_{0}^{t}|U_{i}(c_{i})|_{s}ds\right).
\]
From this we obtain
\[
|U_{i}(c_{i})|_{t}\le K_{\kappa,T}\exp(K_{\kappa,T}|w_{i+1}(c_{i})|_{t})(1+\max(|w_{i+1}(c_{i})|_{t},|w_{i}(c_{i})|_{t}))^{2}).
\]
Note that we still have 
\begin{align*}
\mathbb{E}_{C_{i}}\left[|U_{i}(C_{i})|_{t}^{2}|w_{i}(C_{i})|_{t}^{2}\right] & \le K_{\kappa,T}\mathbb{E}_{C_{i}}\left[\exp(K_{\kappa,T}|w_{i+1}(C_{i})|_{t})(1+\max(|w_{i+1}(c_{i})|_{t},|w_{i}(c_{i})|_{t}))^{2})^{4}\right]\\
 & <\infty,
\end{align*}
by Lemma \ref{lem:MF_a_priori}. By induction, we obtain bounds 
\[
\max_{t\le T}\max(|U_{i+1}(c_{i})|_{t},|U_{i}(c_{i})|_{t})\le\exp\left(K_{\kappa,T}(1+\max(|w_{i+1}(c_{i})|_{t},|w_{i}(c_{i})|_{t}))\right),
\]
for all $i$ and $c_{i}$. From here we can easily obtain a similar
bound on $U_{i}(t,c_{i-1},c_{i})$, 
\[
\max_{t\le T}|U_{i}(t,c_{i-1},c_{i})|\le\exp\left(K_{\kappa,T}(1+\max(|w_{i}(c_{i-1})|_{t},|w_{i-1}(c_{i-1})|_{t},|w_{i+1}(c_{i})|_{t},|w_{i}(c_{i})|_{t}))\right).
\]
This also implies all moments of $U$ are finite again by Lemma \ref{lem:MF_a_priori}.

Using those bounds, we have, for any $B>0$, that 
\begin{align*}
\partial_{t}|U|_{t}^{2} & \le K|U|_{t}^{2}+K|U|_{t}\sum_{i}\left(\mathbb{E}_{C_{i}}\left[|U_{i}(C_{i})|_{t}^{2}|w_{i+1}(C_{i})|_{t}^{2}\right]\right)^{1/2}\\
 & \le K(1+B)|U|_{t}^{2}+K|U|_{t}\sum_{i}\left(\mathbb{E}_{C_{i}}\left[|U_{i}(C_{i})|_{t}^{2}|w_{i+1}(C_{i})|_{t}^{2}\mathbb{I}(|w_{i+1}(C_{i})|_{t}\ge B)\right]\right)^{1/2}.\\
 & \le K(1+B)|U|_{t}^{2}+K\sum_{i}\mathbb{E}_{C_{i}}\left[|U_{i}(C_{i})|_{t}^{2}|w_{i+1}(C_{i})|_{t}^{2}\mathbb{I}(|w_{i+1}(C_{i})|_{t}\ge B)\right].
\end{align*}
Our bound on $|U_{i}(C_{i})|_{t}$ together with Lemma \ref{lem:MF_a_priori}
gives 
\[
\mathbb{E}_{C_{i}}\left[|U_{i}(C_{i})|_{t}^{2}|w_{i+1}(C_{i})|_{t}^{2}\mathbb{I}(|w_{i+1}(C_{i})|_{t}\ge B)\right]\le\exp(K_{\kappa,T}B-cB^{2}).
\]
This implies 
\[
\partial_{t}|U|_{t}^{2}\le K(1+B)|U|_{t}^{2}+\exp(K_{\kappa,T}B-cB^{2}).
\]
Thus, 
\[
\sup_{t\le T}|U|_{t}^{2}\le\exp(K(1+B)T)\exp(K_{\kappa,T}B-cB^{2}).
\]
Sending $B$ to infinity, we immediately obtain that $|U|_{t}=0$
for all $t\le T$. Thus, the solution $R$ is unique.
\end{proof}
Repeating the proof of Theorem \ref{thm:exist-R} identically gives
the following.
\begin{lem}
\label{lem:lp-R}If $\|G\|_{T,2p+\epsilon}<\infty$, then $R_{i}(G,t,\cdot,\cdot)\in L^{2p}(\Omega_{i-1}\times\Omega_{i})$,
and 
\[
\|R(G)\|_{T,2p}^{2p}\le K_{T,\epsilon}\|G\|_{T,2p+\epsilon}^{2p}.
\]
\end{lem}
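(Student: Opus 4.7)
}

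The plan is to repeat the four-step truncation argument of Theorem \ref{thm:exist-R} verbatim, but with every $L^2$-type norm over $(C_{i-1},C_i)$ replaced by the $L^{2p}$-type norm $|\cdot|_{t,2p}$. The truncated process $R^B$ defined by $\partial_t R_i^B=-\Delta F_i(R^B)\cdot\mathbb{B}^B$ is exactly the same object, and since it was already shown to exist and be a continuous linear functional of $G$ in Step~1 of Theorem \ref{thm:exist-R}, I would only need to redo Steps~2 and~3 (a priori $L^{2p}$ bounds and Cauchy-in-$B$) and then pass to the limit; uniqueness from Step~4 still applies since the $L^{2p}$ solution is in particular an $L^2$ solution, hence must agree with $R$.

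For the analogue of Step~2 I would use the chain rule
\begin{equation*}
\partial_t |R_i^B|_{t,2p}^{2p}\le 2p\,\mathbb{E}_C\bigl[|R_i^B(G,t,C_{i-1},C_i)|^{2p-1}\,|\partial_t R_i^B(G,t,C_{i-1},C_i)|\bigr],
\end{equation*}
apply Hölder with exponents $2p/(2p-1)$ and $2p$, and then reuse the pointwise bound on $\partial_t R_i^B$ obtained in Step~2 of Theorem \ref{thm:exist-R}. Every cross term of the form $|R^B|_t\,|w_{i+1}(c_i)|_t\,|w_i(c_i)|_t$ is handled by Hölder in the $C$-variables, using Lemma \ref{lem:MF_a_priori} to control arbitrary polynomial moments of the weights. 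This yields, in complete analogy with the $L^2$ argument,
\begin{equation*}
|R^B|_{t,2p}\le\exp(K_T B)\,\|G\|_{T,2p},\qquad |R_i^B(c_i)|_{t,2p}+|R_{i+1}^B(c_i)|_{t,2p}\le\exp(K_T B)\Bigl(\|G\|_{T,2p}+\textstyle\int_0^t\!(\cdots)\,ds\Bigr),
\end{equation*}
where the integrand involves the $L^{2p}$-norms of the relevant slices of $G$.

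For Step~3, the decomposition of $\Delta R_i=R_i^{B'}-R_i^B$ according to the indicators $\mathbb{B}^B(t,c_{i-1},c_i)$ and $\mathbb{D}_{B,B'}(t,c_{i-1},c_i)$ goes through unchanged. On $\{\mathbb{B}^B=1\}$ one gets $\partial_t|\Delta R|_{t,2p}^{2p}\le K_T B\,|\Delta R|_{t,2p}^{2p}$ by the same Hölder manipulation. On $\{\mathbb{D}_{B,B'}=1\}$ I would apply Hölder with conjugate exponents $(2p+\epsilon)/(2p)$ and $(2p+\epsilon)/\epsilon$ to separate the $L^{2p}$-powered $R^{B'}$-contributions (bounded in terms of $\|G\|_{T,2p+\epsilon}$ via the Step~2 bounds upgraded to $L^{2p+\epsilon}$) from the indicator $\mathbb{D}_{B,B'}$, whose probability decays like $\exp(-cB^2)$ by the sub-Gaussian weight assumption in Assumption \ref{Assump:Assumption_1}. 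This produces an estimate of the shape
\begin{equation*}
|\Delta R|_{t,2p}^{2p}\le\|G\|_{T,2p+\epsilon}^{2p}\,\exp\Bigl(K_T B B'-K_T\tfrac{\epsilon B^2}{2p+\epsilon}\Bigr),
\end{equation*}
so $R^B$ is Cauchy in the $|\cdot|_{t,2p}$-norm and its limit must coincide with $R$. Taking $B\to\infty$ with $B'\le 2B$ as in the original argument then gives $\|R(G)\|_{T,2p}^{2p}\le K_{T,\epsilon}\|G\|_{T,2p+\epsilon}^{2p}$.

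The main obstacle is precisely the one already overcome in Theorem \ref{thm:exist-R}: the ODE generator is unbounded because the weights $w_i$ are unbounded, so a direct Gronwall in $L^{2p}$ is unavailable and the truncation is indispensable. The small gap $\epsilon$ between the hypothesis $\|G\|_{T,2p+\epsilon}$ and the conclusion $\|R\|_{T,2p}$ is exactly what the Hölder split needs in order to trade the $\exp(K_T B')$ blow-up of the $R^{B'}$-bounds against the sub-Gaussian tail $\exp(-cB^2)$ of the exceptional event. Apart from replacing Cauchy--Schwarz by Hölder at exponent $2p$ (respectively $(2p+\epsilon)/(2p)$), no new idea enters, which is why the authors assert that the proof is obtained by ``repeating the proof of Theorem \ref{thm:exist-R} identically.''
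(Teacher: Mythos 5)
Your proposal is correct and is exactly what the paper means by its one-line proof ``Repeating the proof of Theorem \ref{thm:exist-R} identically gives the following'': the same four-step truncation scheme, with $L^2$ norms over $(C_{i-1},C_i)$ promoted to $L^{2p}$, Cauchy--Schwarz replaced by H\"older at exponents $2p/(2p-1)$ (Step~2) and $(2p+\epsilon)/(2p)$ versus $(2p+\epsilon)/\epsilon$ (Step~3), and the $\epsilon$-gap again traded against the $\exp(-cB^2)$ tail of the truncation indicator from Lemma \ref{lem:trunc-bound-w}. Your observation that uniqueness is inherited from the $L^2$ case, and that $|R^B|_{t,2}\le|R^B|_{t,2p}$ lets the pointwise bounds on $\partial_t R_i^B$ carry over without change, closes the remaining details.
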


\section{Connecting finite-width neural network fluctuations with the limit
system: Proof of Theorem \ref{thm:2nd_order_MF}\label{Appendix:prop_chaos}}

Let us recall that
\[
{\bf R}_{i}(t,j_{i-1},j_{i})=\sqrt{N}({\bf w}_{i}(t,j_{i-1},j_{i})-w_{i}(t,C_{i-1}(j_{i-1}),C_{i}(j_{i})).
\]
Our goal is to prove Theorem \ref{thm:2nd_order_MF}. A major technical
difficulty here lies in establishing suitable a priori moment bounds
for $\mathbf{R}$ that are independent of $N$. This task suffers
from similar unboundedness issues that are encountered in the proof
of Theorem \ref{thm:exist-R}. Again this is a problem unique to the
multilayer structure. Yet it requires a delicate argument that is
different from the one in the proof of Theorem \ref{thm:exist-R}.
In particular, Lemma \ref{lem:trunc-bound-w} below plays a crucial
role and computes very high moments (of order increasing with $N$)
of the neural network's weights under GD evolution and randomized
initialization. It is also a result of independent interest, which
we expect to have applications beyond the current pursuit.

In the following, we denote 
\[
\|{\bf R}_{i}(t)-R_{i}(t)\|_{2p}^{2p}=\sum_{i=1}^{L}\mathbb{E}_{J}\left[\left|{\bf R}_{i}(t,J_{i-1},J_{i})-R_{i}(\tilde{G},t,C_{i-1}(J_{i-1}),C_{i}(J_{i}))\right|^{2p}\right].
\]
For brevity, let us write
\[
\tilde{w}_{i}(t,j_{i-1},j_{i})=w_{i}(t,C_{i-1}(j_{i-1}),C_{i}(j_{i})).
\]
Similar to the development in Appendix \ref{Appendix:Gaussian}, define
\begin{align*}
\|\tilde{w}_{i}(t,j_{i})\|_{2p}^{2p} & =\mathbb{E}_{J_{i-1}}\left[\left|w_{i}(t,C_{i-1}(J_{i-1}),C_{i}(j_{i}))\right|^{2p}\right],\\
\|{\bf w}_{i}(t,j_{i})\|_{2p}^{2p} & =\mathbb{E}_{J_{i-1}}\left[\left|{\bf w}_{i}(t,J_{i-1},j_{i})\right|^{2p}\right],\\
\|\tilde{w}_{i+1}(t,j_{i})\|_{2p}^{2p} & =\mathbb{E}_{J_{i+1}}\left[\left|w_{i}(t,C_{i}(j_{i}),C_{i+1}(J_{i+1}))\right|^{2p}\right],\\
\|{\bf w}_{i+1}(t,j_{i})\|_{2p}^{2p} & =\mathbb{E}_{J_{i+1}}\left[\left|{\bf w}_{i}(t,j_{i},J_{i+1})\right|^{2p}\right],\\
\|\tilde{w}_{i}(t)\|_{2p}^{2p} & =\mathbb{E}_{J_{i}}\left[\|\tilde{w}_{i}(t,J_{i})\|_{2p}^{2p}\right],\\
\|{\bf w}_{i}(t)\|_{2p}^{2p} & =\mathbb{E}_{J_{i}}\left[\|\mathbf{w}_{i}(t,J_{i})\|_{2p}^{2p}\right].
\end{align*}
We define similarly $\|{\bf R}_{i}(t,j_{i})\|_{2p}$, $\|R_{i}(t,j_{i})\|_{2p}$,
$\|{\bf R}_{i+1}(t,j_{i+1})\|_{2p}$, $\|R_{i+1}(t,j_{i+1})\|_{2p}$,
$\|{\bf R}_{i}(t)\|_{2p}$, $\|R_{i}(t)\|_{2p}$ (where quantities
involving $R$ should be computed w.r.t. $R(\tilde{G},t)$). We also
let
\[
\|{\bf R}(t)\|_{2p}^{2p}=\sum_{i=1}^{L}\|{\bf R}_{i}(t)\|_{2p}^{2p},\qquad\|R(t)\|_{2p}^{2p}=\sum_{i=1}^{L}\|R_{i}(t)\|_{2p}^{2p}.
\]
In all of these, when we drop the subscript $2p$, we implicitly take
$2p=2$. For $B>0$, define
\begin{align*}
\tilde{\mathbb{B}}_{i}^{B,2k}(t,j_{i}) & =\mathbb{I}(\|\tilde{w}_{i}(t,j_{i})\|_{2k},\|\tilde{w}_{i+1}(t,j_{i})\|_{2k}\le B),\\
{\bf B}_{i}^{B,2k}(t,j_{i}) & =\mathbb{I}(\|{\bf w}_{i}(t,j_{i})\|_{2k},\|{\bf w}_{i+1}(t,j_{i})\|_{2k}\le B),\\
\mathbb{B}^{B,2k}(t,j_{i}) & =\tilde{\mathbb{B}}_{i}^{B,2k}(t,j_{i}){\bf B}_{i}^{B,2k}(t,j_{i}),\\
\mathbb{B}^{B,2k}(t,j_{i-1},j_{i}) & =\mathbb{B}^{B,2k}(t,j_{i-1})\mathbb{B}^{B,2k}(t,j_{i}).
\end{align*}
We drop the superscripts $B$ and $2k$ if they are clear from context.
Note that these are random variables due to the randomness of sampling
$\{C_{i}(j_{i}):\;j_{i}\in[N_{i}],\;i\in[L]\}$, whose expectation
is denoted by $\mathbf{E}$.

\subsection{High moments of neural network's weights under GD}

We obtain the following estimate, which is important for our development.
\begin{lem}
\label{lem:trunc-bound-w}For $k\ge1$ and $B>0$, we have:
\[
{\bf E}\mathbb{E}_{J}\left[\left|w_{i}(t,C_{i-1}(J_{i-1}),C_{i}(J_{i}))\right|^{k}\right]=\mathbb{E}_{C}\left[\left|w_{i}(t,C_{i-1},C_{i})\right|^{k}\right],
\]
and for all $N\ge(kL)^{6}$, 
\[
{\bf E}\mathbb{E}_{J}\left[\left|{\bf w}_{i}(t,J_{i-1},J_{i})\right|^{k}\right]\le k^{k/2}K_{T}^{kL}.
\]
As an immediate corollary, for a fixed constant $C$ and $B\le N^{K_{C}}$,
\[
{\bf E}\left[{\bf B}_{i}^{B,2C}(t,j_{i})\right]\le\exp(-K_{T,C}B^{2}).
\]
\end{lem}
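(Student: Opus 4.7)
The first identity is immediate from Fubini: under the joint law of the sampling and the uniform indices, $(C_{i-1}(J_{i-1}), C_i(J_i))$ has distribution $P_{i-1}\otimes P_i$ by i.i.d.\ sampling and independence of $J_{i-1}, J_i$.

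For the substantive bound $\mathbf{E}\mathbb{E}_J[|\mathbf{w}_i(t,J_{i-1},J_i)|^k]\le k^{k/2}K_T^{kL}$, let $\tilde w_i(t,j_{i-1},j_i) := w_i(t,C_{i-1}(j_{i-1}),C_i(j_i))$ be the MF proxy and apply the triangle inequality
\[
\mathbf{E}\mathbb{E}_J[|\mathbf{w}_i(t)|^k]^{1/k} \le \mathbf{E}\mathbb{E}_J[|\tilde w_i(t)|^k]^{1/k} + \mathbf{E}\mathbb{E}_J[|\mathbf{w}_i(t)-\tilde w_i(t)|^k]^{1/k}.
\]
The first identity combined with Lemma~\ref{lem:MF_a_priori} bounds the first term by $K_T\sqrt{k}$, already carrying the target $\sqrt{k}$ scaling. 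For the second term, the plan is to strengthen the $L^2$ first-order MF estimate~(\ref{eq:first_order_MF}) of \cite{nguyen2020rigorous} to the $L^k$ statement
\[
\mathbf{E}\mathbb{E}_J[|\mathbf{w}_i(t) - \tilde w_i(t)|^k]^{1/k} \le K_T^L\sqrt{k}\,N^{-c_*},
\]
valid whenever $k$ lies below a polynomial-in-$N$ threshold---precisely what the hypothesis $N\ge(kL)^6$ is engineered to deliver. The triangle inequality then yields the desired sub-Gaussian bound after absorbing constants into $K_T^{kL}$.

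I expect the main obstacle to be the $L^k$ strengthening itself, which I would prove by a Gr\"onwall induction on the layer index mimicking the $L^2$ argument of \cite{nguyen2020rigorous}, with $L^k$ norms replacing $L^2$ throughout. At each layer, the induction step involves averages of the form $\mathbb{E}_{J_{l+1}}[|\mathbf{w}_{l+1}(t,j_l,J_{l+1})|\cdot|\ldots|]$ over $N$ weakly-correlated terms. The delicate point is to avoid naive Cauchy-Schwarz on the raw product, which would propagate $k^k$-type scaling (products of sub-Gaussians are not sub-Gaussian) and destroy the target estimate; instead, one must exploit the averaging via Jensen's inequality together with inductively-available sub-Gaussian bounds on the individual factors, so that the average itself retains $\sqrt{k}$-type scaling. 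Compounding the polynomial losses across the $L$ layers produces the $K_T^{kL}$ constant and explains the $N\ge(kL)^6$ threshold.

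The corollary then follows from the main bound by Markov's inequality at moment order $m=2kC$. For fixed $j_i$, exchangeability gives $\mathbf{E}\mathbb{E}_{J_{i-1}}[|\mathbf{w}_i(t,J_{i-1},j_i)|^m]=\mathbf{E}\mathbb{E}_J[|\mathbf{w}_i(t)|^m]\le m^{m/2}K_T^{mL}$, hence
\[
\mathbf{P}\big(\|\mathbf{w}_i(t,j_i)\|_{2C}>B\big) \le B^{-2kC}(2kC)^{kC}K_T^{2kCL};
\]
optimizing $k \asymp B^2/K_T^{2L}$ yields the Gaussian-type tail $\exp(-K_{T,C}B^2)$, and a union bound over the two norms in the definition of $\mathbf{B}_i^{B,2C}$ completes the proof.
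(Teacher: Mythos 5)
Your first identity and the corollary derivation are both correct. For the corollary, Markov at moment order $2kC$ and optimizing $k\asymp B^{2}/K_{T}^{2L}$ (checking that the resulting $k$ keeps $N\ge(kL)^{6}$ under the hypothesis $B\le N^{K_{C}}$) does give the Gaussian tail; this matches the paper's intent.

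The central bound, however, is attacked by a route the paper does not take, and your route has a genuine gap. You split $\mathbf{w}_{i}=\tilde w_{i}+(\mathbf{w}_{i}-\tilde w_{i})$ and propose to control $\mathbf{E}\mathbb{E}_{J}[|\mathbf{w}_{i}-\tilde w_{i}|^{k}]^{1/k}$ by an $L^{k}$ strengthening of the first-order MF estimate~(\ref{eq:first_order_MF}) with uniform $\sqrt{k}$-type constants up to a polynomial-in-$N$ threshold. That estimate is not available, and there are two concrete obstructions. First, the paper's actual $L^{2k}$ bound on $\mathbf{R}=\sqrt{N}(\mathbf{w}-\tilde w)$ is Lemma~\ref{lem:a-priori-R_bold}, whose proof already \emph{invokes} Lemma~\ref{lem:trunc-bound-w}; your strategy would therefore be circular unless you can re-derive the propagation-of-chaos bound from scratch without it. Second, even granting a non-circular derivation, Lemma~\ref{lem:a-priori-R_bold} gives constants $K_{k,T}$ with uncontrolled $k$-dependence, whereas you need $k^{k/2}K_{T}^{kL}$; the Gr\"onwall-over-layers recursion for $\mathbf{w}_{i}-\tilde w_{i}$ multiplies by the unbounded sub-Gaussian weights $\tilde w_{j}$ at every layer, and the ``Jensen plus inductive sub-Gaussian bounds'' fix you gesture at does not actually resolve the sub-Gaussian-product blow-up you yourself flag (the offending factors are not an average of weakly-correlated i.i.d.\ terms but a deterministic multiplicative Lipschitz constant in the dynamics).

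The paper avoids all of this by \emph{not} comparing to the MF limit at all. It differentiates $\mathbb{E}_{J}[|\mathbf{w}_{i}(t)|^{k}]$ directly along the GD flow, getting a closed system of differential inequalities in $i$ (with $\partial_{t}|\mathbf{w}_{i}|^{k}$ controlled by moments of $\mathbf{w}_{j}$, $j\ge i+1$), so the only quantities that survive after backward induction and integration are moments of the weights at $t=0$. At $t=0$ the weights are i.i.d.\ under $\mathbf{E}$, and the high moment $\mathbf{E}\big[\mathbb{E}_{J}[\mathbf{w}_{j'}(0,J_{j'-1},J_{j'})^{2}]^{kL}\big]$ is controlled by a bipartite-graph counting argument combined with Finner's inequality; this is exactly where the constraint $N\ge(kL)^{6}$ enters, and why the constant comes out as $k^{k/2}K_{T}^{kL}$ rather than involving a propagation-of-chaos rate. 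The moral: the hypothesis $N\ge(kL)^{6}$ is not a threshold for an $L^{k}$ law-of-large-numbers to kick in; it is a purely combinatorial condition on moment growth at initialization.
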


\begin{proof}
The first equality is trivial. For the second bound, we consider $i\geq2$;
the case $i=1$ can be done identically. We have 
\begin{align*}
\partial_{t}\mathbb{E}_{J}\left[\left|{\bf w}_{i}(t,J_{i-1},J_{i})\right|^{k}\right] & =k\mathbb{E}_{J}\left[{\bf w}_{i}(t,J_{i-1},J_{i})^{k-1}\partial_{t}{\bf w}_{i}(t,J_{i-1},J_{i})\right]\\
 & \le k\mathbb{E}_{J}\left[\left|{\bf w}_{i}(t,J_{i-1},J_{i})\right|^{k}\right]^{(k-1)/k}\mathbb{E}_{J}\left[\left|\partial_{t}{\bf w}_{i}(t,J_{i-1},J_{i})\right|^{k}\right]^{1/k}.
\end{align*}
Note that 
\begin{align*}
\mathbb{E}_{J}\left[\left|\partial_{t}{\bf w}_{i}(t,J_{i-1},J_{i})\right|^{k}\right] & =\mathbb{E}_{J}\left[\left|\mathbb{E}_{Z}\left[\partial_{2}{\cal L}(Y,{\bf y})\frac{\partial\hat{{\bf y}}(t,X)}{\partial{\bf w}_{i}(J_{i-1},J_{i})}\right]\right|^{k}\right]\\
 & \le K\mathbb{E}_{J}\left[\mathbb{E}_{Z}\left[\left|\frac{\partial\hat{{\bf y}}(t,X)}{\partial{\bf w}_{i}(J_{i-1},J_{i})}\right|^{2}\right]^{k/2}\right]\\
 & \le K^{k}\mathbb{E}_{J}\left[\left|{\bf w}_{i+1}(t,J_{i},J_{i+1})\right|{}^{k}\right]\prod_{j\ge i+2}\mathbb{E}_{J}\left[{\bf w}_{j}(t,J_{j-1},J_{j})^{2}\right]^{k/2}.
\end{align*}
Thus,
\begin{align*}
\partial_{t}\mathbb{E}_{J}\left[\left|{\bf w}_{i}(t,J_{i-1},J_{i})\right|{}^{k}\right] & \le Kk\mathbb{E}_{J}\left[\left|{\bf w}_{i}(t,J_{i-1},J_{i})\right|^{k}\right]^{(k-1)/k}\mathbb{E}_{J}\left[\left|{\bf w}_{i+1}(t,J_{i},J_{i+1})\right|^{k}\right]^{1/k}\prod_{j\ge i+2}\mathbb{E}_{J}\left[{\bf w}_{j}(t,J_{j-1},J_{j})^{2}\right]^{1/2}.
\end{align*}
In particular,
\[
\partial_{t}\mathbb{E}_{J}[\left|{\bf w}_{L}(t,J_{L-1},J_{L})\right|^{k}]^{1/k}\le K_{T},
\]
which implies
\[
\mathbb{E}_{J}[\left|{\bf w}_{L}(t,J_{L-1},J_{L})\right|^{k}]^{1/k}\le\mathbb{E}_{J}[\left|{\bf w}_{L}(0,J_{L-1},J_{L})\right|^{k}]^{1/k}+Kt.
\]
Then inductively, we obtain that 
\begin{align*}
 & \sup_{t\le T}\mathbb{E}_{J}[\left|{\bf w}_{i}(t,J_{i-1},J_{i})\right|{}^{k}]^{1/k}\\
 & \le\mathbb{E}_{J}[\left|{\bf w}_{i}(0,J_{i-1},J_{i})\right|{}^{k}]^{1/k}+p_{i}\Big(\left(\mathbb{E}_{J}\left[{\bf w}_{j}(0,J_{j-1},J_{j})^{2}\right]^{1/2}\right)_{j\ge i+1},\left(\mathbb{E}_{J}\left[|{\bf w}_{j}(0,J_{j-1},J_{j})|^{k}\right]^{1/k}\right)_{j\ge i+1},T\Big),
\end{align*}
where $p_{i}$ is a polynomial of degree at most $L-i$, and furthermore,
in each monomial in $p_{i}$ there is at most one term of degree one
from the variables $\left(\mathbb{E}_{J}\left[|{\bf w}_{j}(0,J_{j-1},J_{j})|^{k}\right]^{1/k}\right)_{j\ge i+1}$.
This allows us to bound 
\begin{align*}
{\bf E}\mathbb{E}_{J}[|{\bf w}_{i}(t,J_{i-1},J_{i})|^{k}] & \le K^{k}{\bf E}\mathbb{E}_{J}[|{\bf w}_{i}(0,J_{i-1},J_{i})|^{k}]\\
 & \quad+K_{T}^{k}\sum_{j\ge i+1}{\bf E}\Big[\mathbb{E}_{J}[|{\bf w}_{i}(0,J_{i-1},J_{i})|^{k}]\sum_{j'\ge i+1}\mathbb{E}_{J}\left[|{\bf w}_{j'}(0,J_{j'-1},J_{j'})|^{2}\right]^{kL/2}\Big].
\end{align*}
We have
\begin{align*}
 & {\bf E}\left[\mathbb{E}_{J}[|{\bf w}_{i}(0,J_{i-1},J_{i})|^{k}]\mathbb{E}_{J}\left[|{\bf w}_{j'}(0,J_{j'-1},J_{j'})|^{2}\right]^{kL/2}\right]\\
 & \le\left({\bf E}\left[\mathbb{E}_{J}[|{\bf w}_{i}(0,J_{i-1},J_{i})|^{k}]^{2}\right]\right)^{1/2}\left({\bf E}\left[\mathbb{E}_{J}\left[{\bf w}_{j'}(0,J_{j'-1},J_{j'})^{2}\right]^{kL}\right]\right)^{1/2}\\
 & \le\left({\bf E}\left[\mathbb{E}_{J}[{\bf w}_{i}(0,J_{i-1},J_{i})^{2k}]\right]\right)^{1/2}\left({\bf E}\left[\mathbb{E}_{J}\left[{\bf w}_{j'}(0,J_{j'-1},J_{j'})^{2}\right]^{kL}\right]\right)^{1/2}\\
 & \le K^{k}k^{k/2}\left({\bf E}\left[\mathbb{E}_{J}\left[{\bf w}_{j'}(0,J_{j'-1},J_{j'})^{2}\right]^{kL}\right]\right)^{1/2}.
\end{align*}
Let us analyze the term in the last display:
\[
\mathbb{E}_{J'}\left[{\bf w}_{j'}(0,J_{j'-1},J_{j'})^{2}\right]^{kL}=N^{-2kL}\sum_{\alpha_{1},\dots,\alpha_{kL}\in[N_{j'-1}],\beta_{1},\dots,\beta_{kL}\in[N_{j'}]}\prod_{t=1}^{kL}{\bf w}_{j'}(0,\alpha_{t},\beta_{t})^{2}.
\]
Furthermore, if each index appears at most once among $\alpha,\beta$,
then 
\begin{align*}
{\bf E}\left[\prod_{t=1}^{kL}{\bf w}_{j'}(0,\alpha_{t},\beta_{t})^{2}\right] & =\prod_{t=1}^{kL}\mathbb{E}_{C}\left[w_{j'}(0,C_{j'-1},C_{j'})^{2}\right]^{kL}\\
 & \le K^{kL}.
\end{align*}
On the other hand, if some index appears at least twice, let $d(a)$
be the number of times $a$ appears among $\{\alpha_{s}\}$, and we
similarly define $d(b)$. We have by Finner's inequality\footnote{Helmut Finner, \textquotedbl A generalization of Holder's inequality
and some probability inequalities\textquotedbl , The Annals of probability
(1992), pp. 1893-{}-1901.} that
\begin{align*}
{\bf E}\left[\prod_{t=1}^{kL}{\bf w}_{j'}(0,\alpha_{t},\beta_{t})^{2}\right] & \le\prod_{t=1}^{kL}{\bf E}\left[{\bf w}_{j'}(0,\alpha_{t},\beta_{t})^{2\max(d(\alpha_{t}),d(\beta_{t}))}\right]^{1/\max(d(\alpha_{t}),d(\beta_{t}))}\\
 & \le K^{k(1+L)}\prod_{t=1}^{kL}\max(d(\alpha_{t}),d(\beta_{t})).
\end{align*}
Note that $\sum_{a}d(a)=\sum_{b}d(b)=kL$. Fixing the multisets $\{d(a)\}=\{d(a):\;a\in[N_{j'-1}]\},\;\{d(b)\}=\{d(b):\;b\in[N_{j'}]\}$,
let $M_{a}$ be the number of positive values in $\{d(a)\}$ and $M_{b}$
the number of positive values of $\{d(b)\}$. Let $H_{a}$ be the
number of values at least $2$ in $\{d(a)\}$ and similarly for $H_{b}$.
The number of unlabeled bipartite graphs for which the degrees on
one part are given by $\{d(a)\}$ and the degrees of the other part
are given by $\{d(b)\}$ is at most $(H_{b}+1)^{H_{a}}(H_{a}+1)^{H_{b}}$
(each half-edge from a vertex of degree at least $2$ on the $a$-side
is chosen to match to a half-edge of a vertex of degree at least $2$
on the $b$-side or a vertex of degree $1$, for a total of $H_{b}+1$
choices). After choosing the unlabeled bipartite graph, there are
at most $N^{M_{a}+M_{b}}$ ways to label the vertices of the graph
with indices in $[N_{j'-1}]$ or $[N_{j'}]$. Let $Q$ be the number
of edges between vertices both having degree $1$. We have at most
$(kL)!/Q!$ ways to assign index $t\in[kL]$ to edges of the labeled
bipartite graph. We have that $Q\ge kL-\sum_{a}d(a)\mathbb{I}(d(a)\ge2)-\sum_{b}d(b)\mathbb{I}(d(b)\ge2)$,
while 
\[
M_{a}=\sum_{a}\mathbb{I}(d(a)\ge1)\le\sum_{a}\mathbb{I}(d(a)=1)+\frac{1}{2}\sum_{a}d(a)\mathbb{I}(d(a)\ge2)=kL-\frac{1}{2}\sum_{a}d(a)\mathbb{I}(d(a)\ge2),
\]
and similarly $M_{b}\le kL-\frac{1}{2}\sum_{b}d(b)\mathbb{I}(d(b)\ge2)$,
so $M_{a}+M_{b}\le2kL-\frac{1}{2}\left(\sum_{a}d(a)\mathbb{I}(d(a)\ge2)+\sum_{b}d(b)\mathbb{I}(d(b)\ge2)\right)$.
Thus, the number of edges between vertices both having degree $1$
is at least $2(M_{a}+M_{b})-3kL$. We also have $H_{a}\le\frac{1}{2}\sum_{a}d(a)\mathbb{I}(d(a)\ge2)\le kL-M_{a}$
and $H_{b}\le kL-M_{b}$. Finally, note that 
\[
\prod_{t=1}^{kL}\max(d(\alpha_{t}),d(\beta_{t}))\le\left(\frac{kL}{\min(1,H_{a})}\right)^{H_{a}}\left(\frac{kL}{\min(1,H_{b})}\right)^{H_{b}}.
\]
Thus, using convention that $x!=1$ if $x\le0$ and using that the
number of choices of $\{d(a)\}$ given $M_{a}=m_{a}$ is at most $\binom{kL-1}{m_{a}-1}$,
we get 
\begin{align*}
\sum_{\alpha,\beta}\prod_{t=1}^{kL}\max(d(\alpha_{t}),d(\beta_{t})) & \le\sum_{\Gamma}\frac{(kL)!}{(2m_{a}+2m_{b}-3kL)!}N^{m_{a}+m_{b}}(h_{b}+1)^{h_{a}}(h_{a}+1)^{h_{b}}\left(\frac{kL}{h_{a}}\right)^{h_{a}}\left(\frac{kL}{h_{b}}\right)^{h_{b}}\binom{kL-1}{m_{a}-1}\binom{kL-1}{m_{b}-1}\\
 & \le\sum_{\Gamma}\frac{(kL)!}{(2m_{a}+2m_{b}-3kL)!}N^{m_{a}+m_{b}}\left(\frac{2(kL)^{2}}{\min(1,h_{a})}\right)^{h_{a}}\left(\frac{2(kL)^{2}}{\min(1,h_{b})}\right)^{h_{b}}\binom{kL-1}{m_{a}-1}\binom{kL-1}{m_{b}-1}\\
 & \le K^{kL}\sum_{u=0}^{2kL}N^{2kL-u}\left(\frac{4(kL)^{4}}{\min(1,u)}\right)^{u}\\
 & \le K^{kL}N^{2kL}\sum_{u=0}^{2kL}\left(\frac{4(kL)^{4}}{N\min(1,u)}\right)^{u},
\end{align*}
where we use the shorthands $\Gamma=\left\{ m_{a},m_{b}\in[kL],h_{a},h_{b}\le kL/2,h_{a}\le kL-m_{a},h_{b}\le kL-m_{b}\right\} $.
Hence, for $N>(kL)^{6}$, we have 
\begin{align*}
\sum_{\alpha,\beta}\prod_{t=1}^{kL}\max(d(\alpha_{t}),d(\beta_{t})) & \le K^{kL}N^{2kL}.
\end{align*}
In particular, 
\[
{\bf E}\mathbb{E}_{J}\left[{\bf w}_{j'}(0,J_{j'-1},J_{j'})^{2}\right]^{kL}\le K^{kL}.
\]
Therefore,
\begin{align*}
{\bf E}\mathbb{E}_{J}[{\bf w}_{i}(t,J_{i-1},J_{i})^{k}] & \le k^{k/2}K_{T}^{kL}.
\end{align*}
\end{proof}
\begin{rem}
\label{rem:trunc-bound-w-remark}The same argument shows that for
any $\ell\geq1$ and $N\geq(K_{\ell}kL)^{16}$,
\[
{\bf E}\mathbb{E}_{J}\left[{\bf w}_{j}(0,J_{j-1},J_{j})^{2\ell}\right]^{kL}\le K_{\ell}^{kL}.
\]
\end{rem}

\subsection{A priori estimates at the fluctuation level}
\begin{lem}
\label{lem:a-priori-tilde_minus_MF}We have for any $t\leq T$:
\[
{\bf E}\mathbb{E}_{J}\left[\left(\sqrt{N}(\tilde{H}_{i}(t,C_{i}(J_{i}),x)-H_{i}(t,C_{i}(J_{i}),x)\right)^{2k}\right]\le K_{T,k},
\]
\[
{\bf E}\mathbb{E}_{J}\left[\left(\sqrt{N}\left(\frac{\partial\tilde{y}(t,x)}{\partial\tilde{H}_{i}(C_{i}(J_{i}))}-\frac{\partial\hat{y}(t,x)}{\partial H_{i}(C_{i}(J_{i}))}\right)\right)^{2k}\right]\le K_{T,k},
\]
\[
{\bf E}\mathbb{E}_{J}\left[\left(\sqrt{N}\left(\frac{\partial\tilde{y}(t,x)}{\partial w_{i}(C_{i-1}(J_{i-1}),C_{i}(J_{i}))}-\frac{\partial\hat{y}(t,x)}{\partial w_{i}(C_{i-1}(J_{i-1}),C_{i}(J_{i}))}\right)\right)^{2k}\right]\le K_{T,k},
\]
\[
{\bf E}\left[\left(\sqrt{N}\left(\partial_{2}{\cal L}(y,\tilde{y}(t,x))-\partial_{2}{\cal L}(y,\hat{y}(t,x))\right)\right)^{2k}\right]\le K_{T,k},
\]
\end{lem}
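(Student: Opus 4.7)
The plan is to deduce all four bounds from the earlier moment estimates in Lemmas \ref{lem:tildeH-moment}, \ref{lem:partial-tildeH-moment}, and \ref{lem:G-w}, rather than redoing the layered induction. Unwrapping the definition of $\tilde G$ gives
\[
\sqrt{N}\bigl(\tilde H_i(t, C_i(J_i), x) - H_i(t, C_i(J_i), x)\bigr) = \tilde G_i^{\tilde H}(t, C_i(J_i), x),
\]
and analogously for the $\partial \tilde H_i$ quantity, so the first two claims are literally the $p = k$ cases of the $\mathbb{E}_{J_i}$ statements already included in Lemmas \ref{lem:tildeH-moment} and \ref{lem:partial-tildeH-moment}. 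Those statements are asserted for every integer $p \geq 1$, and the sub-Gaussian initialization, bounded activation derivatives, and Lemma \ref{lem:MF_a_priori} on which their proofs rely all hold at every moment order.

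For the third claim, the plan is to invoke Lemma \ref{lem:G-w} to write, pointwise in $(c_{i-1}, c_i, x)$,
\[
\tilde G_i^w(t, c_{i-1}, c_i, x) = \tilde G_i^{\partial \tilde H}(t, c_i, x)\varphi_{i-1}(H_{i-1}(t, x, c_{i-1})) + \varphi_{i-1}'(H_{i-1}(t, x, c_{i-1}))\frac{\partial \hat y(t, x)}{\partial H_i(c_i)}\,\tilde G_{i-1}^{\tilde H}(t, c_{i-1}, x) + \varepsilon,
\]
with $\varepsilon$ the $\mathbf{O}_T(N^{-1/2})$ remainder. The first summand is handled by the second claim combined with the $K$-boundedness of $\varphi_{i-1}$. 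The second summand is handled by Cauchy–Schwarz applied to $\partial \hat y/\partial H_i(c_i)$ against $\tilde G_{i-1}^{\tilde H}(t, c_{i-1}, x)$: the $L^{4k}$ moment of $\partial \hat y/\partial H_i(C_i(J_i))$ is bounded uniformly in $t \leq T$ by a short backward recursion that uses only sub-Gaussianity of the weights and Lemma \ref{lem:MF_a_priori}, and the $L^{4k}$ moment of $\tilde G_{i-1}^{\tilde H}(t, C_{i-1}(J_{i-1}), x)$ is controlled by the first claim at level $i-1$ with $k$ replaced by $2k$. The remainder $\varepsilon$ must be re-estimated in $L^{2k}$ rather than $L^2$, but every factor appearing in its Taylor expansion has sub-Gaussian or bounded tails, so Hölder upgrades the $L^2$ control in Lemma \ref{lem:G-w} to an $L^{2k}$ bound of order $N^{-1/2}$.

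For the fourth claim, two applications of the mean value theorem together with the $K$-boundedness of $\partial_2^2 \mathcal{L}$ and $\varphi_L'$ give
\[
\bigl|\sqrt{N}\bigl(\partial_2 \mathcal{L}(y, \tilde y(t, x)) - \partial_2 \mathcal{L}(y, \hat y(t, x))\bigr)\bigr| \leq K\bigl|\sqrt{N}(\tilde y(t, x) - \hat y(t, x))\bigr| \leq K\bigl|\tilde G_L^{\tilde H}(t, 1, x)\bigr|,
\]
and the first claim at $i = L$, $c_L = 1$ concludes. The only nontrivial step in this plan is the $L^{2k}$ upgrade of the remainder $\varepsilon$ in the third claim; given the sub-Gaussian structure of the weights this is a routine Hölder argument and no new ideas are needed beyond those already present in Lemmas \ref{lem:tildeH-moment} and \ref{lem:partial-tildeH-moment}.
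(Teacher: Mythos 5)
Your plan is correct and, apart from a cosmetic detour, is essentially the paper's own argument. For the first two claims you are right that they are verbatim the $\mathbb{E}_{J_i}$ statements of Lemmas \ref{lem:tildeH-moment} and \ref{lem:partial-tildeH-moment} at moment order $p=k$ (up to the harmless argument-order notation), and the paper in fact rederives exactly those inductions rather than cross-referencing; for the fourth claim your mean-value-theorem reduction to $\tilde G_L^{\tilde H}$ is exactly what the paper writes. For the third claim, invoking Lemma \ref{lem:G-w} is a slight overcomplication: that lemma performs a first-order Taylor replacement of $\sqrt{N}(\varphi_{i-1}(\tilde H_{i-1})-\varphi_{i-1}(H_{i-1}))$ by $\varphi_{i-1}'(H_{i-1})\tilde G_{i-1}^{\tilde H}$, which is needed to identify the Gaussian limit but is overkill for an a priori moment bound. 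The paper instead uses the raw identity $\tfrac{\partial\tilde y}{\partial w_i}-\tfrac{\partial\hat y}{\partial w_i} = (\tfrac{\partial\tilde y}{\partial\tilde H_i}-\tfrac{\partial\hat y}{\partial H_i})\varphi_{i-1}(\tilde H_{i-1}) + \tfrac{\partial\hat y}{\partial H_i}(\varphi_{i-1}(\tilde H_{i-1})-\varphi_{i-1}(H_{i-1}))$, which has no remainder at all and is closed by boundedness of $\varphi_{i-1}$, Lipschitzness of $\varphi_{i-1}$, Cauchy--Schwarz, and the first two claims plus Lemma \ref{lem:MF_a_priori}. Your route still works---the $\mathbf{O}_T(N^{-1/2})$ remainder in Lemma \ref{lem:G-w} is a bilinear expression in $\tilde G_i^{\partial\tilde H}$ and $\tilde G_{i-1}^{\tilde H}$ scaled by $N^{-1/2}$, so Hölder and Lemmas \ref{lem:tildeH-moment}, \ref{lem:partial-tildeH-moment} do give the $L^{2k}$ control you assert---but it forces you to reopen the proof of Lemma \ref{lem:G-w} rather than use its conclusion, so there is no gain over the direct decomposition.
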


\begin{proof}
Let us drop $t$ and $x$ from the notations for brevity. We also
recall Lemma \ref{lem:MF_a_priori}. We prove claim by claim:
\begin{itemize}
\item We have: 
\[
\mathbf{E}\mathbb{E}_{J}\left[\left(\sqrt{N}(\tilde{H}_{1}(C_{1}(J_{1}))-H_{1}(C_{1}(J_{1}))\right)^{2k}\right]=0
\]
and using induction,
\begin{align*}
 & \mathbf{E}\mathbb{E}_{J}\left[\left(\sqrt{N}(\tilde{H}_{i}(C_{i}(J_{i}))-H_{i}(C_{i}(J_{i}))\right)^{2k}\right]\\
 & =\mathbf{E}\mathbb{E}_{J_{i}}\bigg[\bigg(\sqrt{N}\mathbb{E}_{J_{i-1}}\left[w_{i}\left(C_{i-1}\left(J_{i-1}\right),C_{i}\left(J_{i}\right)\right)\varphi_{i-1}\left(\tilde{H}_{i-1}\left(C_{i-1}\left(J_{i-1}\right)\right)\right)\right]\\
 & \qquad\qquad-\sqrt{N}\mathbb{E}_{C_{i-1}}\left[w_{i}\left(C_{i-1},C_{i}\left(J_{i}\right)\right)\varphi_{i-1}\left(H_{i-1}\left(C_{i-1}\right)\right)\right]\bigg)^{2k}\bigg]\\
 & \leq K_{k}\mathbf{E}\mathbb{E}_{J_{i}}\left[\mathbb{E}_{C_{i-1}}\left[\left|w_{i}\left(C_{i-1},C_{i}\left(J_{i}\right)\right)\varphi_{i-1}\left(H_{i-1}\left(t,C_{i-1}\right)\right)\right|^{2}\right]^{k}\right]\\
 & \quad+K_{k}\mathbf{E}\mathbb{E}_{J_{i}}\left[\left(\sqrt{N}\mathbb{E}_{J_{i-1}}\left[w_{i}\left(C_{i-1}\left(J_{i-1}\right),C_{i}\left(J_{i}\right)\right)\left(\varphi_{i-1}\left(\tilde{H}_{i-1}\left(C_{i-1}\left(J_{i-1}\right)\right)\right)-\varphi_{i-1}\left(H_{i-1}\left(C_{i-1}\left(J_{i-1}\right)\right)\right)\right)\right]\right)^{2k}\right]\\
 & \leq K_{T,k}+K_{k}\mathbf{E}\mathbb{E}_{J_{i}}\left[\mathbb{E}_{J_{i-1}}\left[\left|w_{i}\left(C_{i-1}\left(J_{i-1}\right),C_{i}\left(J_{i}\right)\right)\right|^{2k}\right]\mathbb{E}_{J_{i-1}}\left[N\left(\tilde{H}_{i-1}\left(C_{i-1}\left(J_{i-1}\right)\right)-H_{i-1}\left(C_{i-1}\left(J_{i-1}\right)\right)\right)^{2k}\right]\right]\\
 & \leq K_{T,k}.
\end{align*}
\item Next we have:
\[
\left(\sqrt{N}\left(\frac{\partial\tilde{y}}{\partial\tilde{H}_{L}(1)}-\frac{\partial\hat{y}}{\partial H_{i}(1)}\right)\right)^{2k}\leq K_{k}\left(\sqrt{N}\left(\tilde{H}_{L}(1)-H_{i}(1)\right)\right)^{2k}\leq K_{T,k}.
\]
Using backward induction:
\begin{align*}
 & {\bf E}\mathbb{E}_{J}\left[\left(\sqrt{N}\left(\frac{\partial\tilde{y}}{\partial\tilde{H}_{i}(C_{i}(J_{i}))}-\frac{\partial\hat{y}}{\partial H_{i}(C_{i}(J_{i}))}\right)\right)^{2k}\right]\\
 & \leq K_{k}{\bf E}\mathbb{E}_{J_{i}}\left[\mathbb{E}_{C_{i+1}}\left[\left|\frac{\partial\hat{y}}{\partial H_{i+1}(C_{i+1})}w_{i+1}\left(C_{i}\left(J_{i}\right),C_{i+1}\right)\right|^{2}\right]^{k}\right]\\
 & \quad+K_{k}{\bf E}\mathbb{E}_{J_{i}}\left[\mathbb{E}_{J_{i+1}}\left[\sqrt{N}\left|\frac{\partial\tilde{y}}{\partial\tilde{H}_{i+1}(C_{i+1}(J_{i+1}))}-\frac{\partial\hat{y}}{\partial H_{i+1}(C_{i+1}(J_{i+1}))}\right|\left|w_{i+1}\left(C_{i}\left(J_{i}\right),C_{i+1}\left(J_{i+1}\right)\right)\right|\right]^{2k}\right]\\
 & \quad+K_{k}{\bf E}\mathbb{E}_{J_{i}}\left[\mathbb{E}_{J_{i+1}}\left[\sqrt{N}\left|\frac{\partial\hat{y}}{\partial H_{i+1}(C_{i+1}(J_{i+1}))}\right|\left|w_{i+1}\left(C_{i}\left(J_{i}\right),C_{i+1}\left(J_{i+1}\right)\right)\right|\left|\tilde{H}_{i}\left(C_{i}\left(J_{i}\right)\right)-H_{i}\left(C_{i}\left(J_{i}\right)\right)\right|\right]^{2k}\right]\\
 & \leq K_{T,k}+K_{k}{\bf E}\mathbb{E}_{J_{i+1}}\left[\left(\sqrt{N}\left|\frac{\partial\tilde{y}}{\partial\tilde{H}_{i+1}(C_{i+1}(J_{i+1}))}-\frac{\partial\hat{y}}{\partial H_{i+1}(C_{i+1}(J_{i+1}))}\right|\right)^{4k}\right]^{1/2}\\
 & \quad+K_{k}{\bf E}\mathbb{E}_{J_{i}}\left[\left(\sqrt{N}\left|\tilde{H}_{i}\left(C_{i}\left(J_{i}\right)\right)-H_{i}\left(C_{i}\left(J_{i}\right)\right)\right|\right)^{8k}\right]^{1/3}\\
 & \leq K_{T,k}.
\end{align*}
\item For the third claim:
\begin{align*}
 & \left(\sqrt{N}\left(\frac{\partial\tilde{y}}{\partial w_{i}(c_{i-1},c_{i})}-\frac{\partial\hat{y}}{\partial w_{i}(c_{i-1},c_{i})}\right)\right)^{2k}\\
 & \leq K_{k}\left(\sqrt{N}\left(\frac{\partial\tilde{y}}{\partial\tilde{H}_{i}(c_{i})}-\frac{\partial\hat{y}}{\partial H_{i}(c_{i})}\right)\right)^{2k}+K_{k}\left(\frac{\partial\hat{y}}{\partial H_{i}(c_{i})}\right)^{2k}\left(\sqrt{N}\left(\tilde{H}_{i-1}\left(c_{i-1}\right)-H_{i-1}\left(c_{i-1}\right)\right)\right)^{2k}.
\end{align*}
Combining the previous claims, it is then easy to see that
\[
{\bf E}\mathbb{E}_{J}\left[\left(\sqrt{N}\left(\frac{\partial\tilde{y}}{\partial w_{i}(C_{i-1}(J_{i-1}),C_{i}(J_{i}))}-\frac{\partial\hat{y}}{\partial w_{i}(C_{i-1}(J_{i-1}),C_{i}(J_{i}))}\right)\right)^{2k}\right]\le K_{T,k}.
\]
\item The fourth claim is also immediate from the first claim by observing
that
\[
\sqrt{N}\left|\partial_{2}{\cal L}(\cdot,\tilde{y})-\partial_{2}{\cal L}(\cdot,\hat{y})\right|\le K\left|\sqrt{N}\left(\tilde{y}-\hat{y}\right)\right|\leq K\left|\sqrt{N}\left(\tilde{H}_{L}\left(1\right)-H_{L}\left(1\right)\right)\right|.
\]
\end{itemize}
\end{proof}
\begin{lem}[Lipschitz bounds]
\label{lem:lipschitz_NN}We have for any $t\leq T$:
\begin{align*}
 & \left(\sqrt{N}({\bf H}_{i}(t,j_{i},x)-\tilde{H}_{i}(t,C_{i}(j_{i}),x)\right)^{2k}\le\|{\bf R}_{i}(j_{i})\|_{2k}^{2k}+K_{k}\|{\bf R}\|_{2k}^{2k}(1+\|\tilde{w}_{i}(j_{i})\|_{2k}^{2k})\prod_{j=1}^{i-1}(1+\|\tilde{w}_{i}\|_{2k}^{2k}),\\
 & \left(\sqrt{N}\left(\frac{\partial\hat{{\bf y}}(t,x)}{\partial{\bf H}_{i}(j_{i})}-\frac{\partial\tilde{y}(t,x)}{\partial\tilde{H}_{i}(C_{i}(j_{i}))}\right)\right)^{2k}\\
 & \quad\le K_{k}\|\tilde{w}_{i+1}(j_{i})\|_{2k}^{2k}\bigg(\|R\|_{2k}^{2k}(1+\|\tilde{w}_{i}(j_{i})\|_{2k}^{2k})(1+\|\tilde{w}_{i+1}\|_{4k}^{2k})\prod_{j\le i-1}(1+\|\tilde{w}_{j}\|_{2k}^{2k})\prod_{j\ge i+2}(1+\|\tilde{w}_{j}\|_{4k}^{4k})\\
 & \quad\quad+\sum_{j\ge i+1}\mathbb{E}_{J_{j}}[\|R_{j}(J_{j})\|_{2}^{2k}\|\tilde{w}_{j+1}(J_{j})\|_{2k}^{2k}]\prod_{i+2\le j',j'\ne j+1}(1+\|\tilde{w}_{j'}\|_{2k}^{2k})+\|R_{i}(j_{i})\|_{2}^{2k}\prod_{i+2\le j'}(1+\|\tilde{w}_{j'}\|_{2k}^{2k})\bigg)\\
 & \quad\quad+\|{\bf R}_{i+1}(j_{i})\|_{2}^{2k}\prod_{j\ge i+2}(1+\|\tilde{w}_{j}\|_{2k}^{2k}),\\
 & \left(\sqrt{N}\left(\frac{\partial\hat{{\bf y}}(t,x)}{\partial{\bf w}_{i}(j_{i-1},j_{i})}-\frac{\partial\tilde{y}(t,x)}{\partial w_{i}(C_{i-1}(j_{i-1}),C_{i}(j_{i}))}\right)\right)^{2k}\\
 & \quad\le K_{k}\left(\sqrt{N}\left(\frac{\partial\hat{{\bf y}}(t,x)}{\partial{\bf H}_{i}(j_{i})}-\frac{\partial\tilde{y}(t,x)}{\partial\tilde{H}_{i}(C_{i}(j_{i}))}\right)\right)^{2k}\\
 & \quad\quad+K_{k}\|\tilde{w}_{i+1}(j_{i})\|_{2}^{2k}\prod_{j\ge i+2}\|\tilde{w}_{j}\|_{2}^{2k}\cdot\left(\sqrt{N}({\bf H}_{i-1}(t,j_{i-1},x)-\tilde{H}_{i-1}(t,C_{i-1}(j_{i-1}),x)\right)^{2k},\\
 & \sqrt{N}\left(\partial_{2}{\cal L}(y,\hat{{\bf y}}(t,x))-\partial_{2}{\cal L}(y,\tilde{y}(t,x))\right)\le K_{k}\|{\bf R}\|_{2k}^{2k}\prod_{j=1}^{L}(1+\|\tilde{w}_{i}\|_{2k}^{2k}).\\
\end{align*}
In the above, we have dropped the notational dependency on $t$ on
the right-hand side for brevity.
\end{lem}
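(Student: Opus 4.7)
The plan is to prove the four bounds in order: (i) by forward induction on $i$, (ii) by backward induction on $i$ using (i), (iii) by combining (i) and (ii), and (iv) by reducing to (i) at $i = L$. Throughout, the recurring ingredients are $|x| \leq K$, $K$-boundedness of $\varphi_{i-1}$ for $i \leq L$, $K$-Lipschitzness of $\varphi_{i-1}$ and $\varphi_{i}'$, and Lipschitzness of $\partial_2\mathcal{L}$ in its second argument, all from Assumption \ref{Assump:Assumption_1}.

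\emph{Forward induction for (i).} For $i = 1$ we have $\sqrt{N}(\mathbf{H}_1 - \tilde{H}_1) = \langle \mathbf{R}_1(t,1,j_1), x\rangle$, so $|x| \leq K$ gives the bound. For $i \geq 2$ I will decompose
\begin{align*}
\sqrt{N}\bigl(\mathbf{H}_i(t,j_i,x) - \tilde{H}_i(t,C_i(j_i),x)\bigr) &= \mathbb{E}_{J_{i-1}}\bigl[\mathbf{R}_i(t,J_{i-1},j_i)\varphi_{i-1}(\mathbf{H}_{i-1}(t,J_{i-1},x))\bigr] \\
&\quad + \mathbb{E}_{J_{i-1}}\bigl[w_i(t,C_{i-1}(J_{i-1}),C_i(j_i))\sqrt{N}\bigl(\varphi_{i-1}(\mathbf{H}_{i-1}) - \varphi_{i-1}(\tilde{H}_{i-1})\bigr)\bigr].
\end{align*}
Raising to the $2k$th power, Jensen together with boundedness of $\varphi_{i-1}$ produces $\|\mathbf{R}_i(j_i)\|_{2k}^{2k}$ from the first summand; for the second summand I apply Lipschitzness of $\varphi_{i-1}$ and a Holder step that separates the $w_i$-factor from the $(\mathbf{H}_{i-1}-\tilde{H}_{i-1})$-factor, then invoke the inductive hypothesis at level $i-1$. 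Unrolling the recursion across the preceding layers yields the product of weight moments on the right-hand side, with the $(1 + \|\tilde{w}_i(j_i)\|_{2k}^{2k})$ factor coming from the conditioning on $j_i$.

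\emph{Backward induction for (ii).} At $i = L$ we have $\partial\hat{\mathbf{y}}/\partial\mathbf{H}_L - \partial\tilde{y}/\partial\tilde{H}_L = \varphi_L'(\mathbf{H}_L) - \varphi_L'(\tilde{H}_L)$, and Lipschitzness of $\varphi_L'$ combined with (i) at $i = L$ gives the bound. For $i < L$, I will expand the recursive definitions of $\partial\hat{\mathbf{y}}/\partial\mathbf{H}_i$ and $\partial\tilde{y}/\partial\tilde{H}_i$ and insert intermediate terms so as to isolate three sources of discrepancy: (a) $\mathbf{w}_{i+1} - w_{i+1}$, yielding the $\|\mathbf{R}_{i+1}(j_i)\|$-term on the right; (b) $\varphi_i'(\mathbf{H}_i) - \varphi_i'(\tilde{H}_i)$, handled via Lipschitzness and (i); and (c) $\partial\hat{\mathbf{y}}/\partial\mathbf{H}_{i+1} - \partial\tilde{y}/\partial\tilde{H}_{i+1}$, handled by the inductive hypothesis. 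The weight factors from layers $\geq i+1$ that multiply each term propagate through the recursion and account both for the $\prod_{j \geq i+2}$ factors and for the $j_i$-conditional moments $\|\tilde{w}_{i+1}(j_i)\|$, $\|\tilde{w}_i(j_i)\|$ on the right.

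\emph{Bounds (iii) and (iv), and the main obstacle.} For (iii) I will use the algebraic identity
\begin{align*}
\sqrt{N}\Bigl(\tfrac{\partial\hat{\mathbf{y}}}{\partial\mathbf{w}_i(j_{i-1},j_i)} - \tfrac{\partial\tilde{y}}{\partial w_i(C_{i-1}(j_{i-1}),C_i(j_i))}\Bigr) &= \sqrt{N}\Bigl(\tfrac{\partial\hat{\mathbf{y}}}{\partial\mathbf{H}_i(j_i)} - \tfrac{\partial\tilde{y}}{\partial\tilde{H}_i(C_i(j_i))}\Bigr)\varphi_{i-1}(\mathbf{H}_{i-1}(t,j_{i-1},x)) \\
&\quad + \tfrac{\partial\tilde{y}}{\partial\tilde{H}_i(C_i(j_i))}\sqrt{N}\bigl(\varphi_{i-1}(\mathbf{H}_{i-1}) - \varphi_{i-1}(\tilde{H}_{i-1})\bigr)
\end{align*}
(with $x$ replacing $\varphi_{i-1}(\mathbf{H}_{i-1})$ when $i = 1$). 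The first summand is controlled by (ii) plus boundedness of $\varphi_{i-1}$, and the second by Lipschitzness of $\varphi_{i-1}$, the explicit weight-product expression for $\partial\tilde{y}/\partial\tilde{H}_i$, and (i) at layer $i-1$. For (iv), Lipschitzness of $\partial_2\mathcal{L}$ in its second argument together with Lipschitzness of $\varphi_L$ reduces the left-hand side to $K\sqrt{N}|\mathbf{H}_L - \tilde{H}_L|$, whereupon (i) at $i = L$ closes the argument. The main obstacle throughout will be the bookkeeping of moment orders in the backward induction for (ii): at each recursive step Holder is applied to different groupings of factors (single weights $w_{i+1}$, further-layer weight products, and $\varphi'$-differences), which forces some factors to appear at order $2k$ and others at order $4k$, matching precisely the split in the statement, and one must carefully distinguish the $j_i$-conditional weight moments (for the two layers $i$ and $i+1$ that touch the fixed index $j_i$) from the fully $J$-averaged ones at more distant layers.
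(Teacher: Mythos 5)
Your proposal is correct and follows essentially the same route as the paper: the paper explicitly proves only the first bound via the decomposition $\mathbf{w}_i\varphi_{i-1}(\mathbf{H}_{i-1}) - w_i\varphi_{i-1}(\tilde{H}_{i-1}) = \mathbf{R}_i\varphi_{i-1}(\mathbf{H}_{i-1})/\sqrt{N} + w_i(\varphi_{i-1}(\mathbf{H}_{i-1})-\varphi_{i-1}(\tilde{H}_{i-1}))$, H\"older, Lipschitz/boundedness, and forward induction, then says the remaining claims follow similarly. Your forward/backward-induction plan and telescoping decompositions for (ii)--(iv), including the careful separation of $j_i$-conditional weight moments from the averaged ones and the $2k$-vs-$4k$ H\"older bookkeeping, match what the paper leaves implicit.
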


\begin{proof}
Let us consider the first claim:
\begin{align*}
 & \left(\sqrt{N}({\bf H}_{i}(t,j_{i},x)-\tilde{H}_{i}(t,C_{i}(j_{i}),x)\right)^{2k}\\
 & =\left(\sqrt{N}\mathbb{E}_{J_{i-1}}[{\bf w}_{i}(t,C_{i-1}(J_{i-1}),C_{i}(j_{i}))\varphi_{i-1}({\bf H}_{i-1}(t,C_{i-1}(J_{i-1}),x))-w_{i}(t,C_{i-1}(J_{i-1}),C_{i}(j_{i}))\varphi_{i-1}(\tilde{H}_{i-1}(t,C_{i-1}(J_{i-1}),x))]\right)^{2k}\\
 & \le K^{k}\left(\mathbb{E}_{J_{i-1}}\left[\left(\sqrt{N}({\bf w}_{i}(t,C_{i-1}(J_{i-1}),C_{i}(j_{i}))-w_{i}(t,C_{i-1}(J_{i-1}),C_{i}(j_{i})))\right)^{2}\right]\right)^{k}\\
 & \qquad+K^{k}\left(\mathbb{E}_{J_{i-1}}\left[\left(\sqrt{N}({\bf H}_{i-1}(t,C_{i-1}(J_{i-1}),x)-\tilde{H}_{i-1}(t,C_{i-1}(J_{i-1}),x))\right)^{2}\right]\mathbb{E}_{J_{i-1}}\left[w_{i}(t,C_{i-1}(J_{i-1}),C_{i}(j_{i}))^{2}\right]\right)^{k}\\
 & \le K^{k}\mathbb{E}_{J_{i-1}}\left[\left(\sqrt{N}({\bf w}_{i}(t,C_{i-1}(J_{i-1}),C_{i}(j_{i}))-w_{i}(t,C_{i-1}(J_{i-1}),C_{i}(j_{i})))\right)^{2}\right]^{k}\\
 & \qquad+K^{k}\mathbb{E}_{J_{i-1}}\left[\left(\sqrt{N}({\bf H}_{i-1}(t,C_{i-1}(J_{i-1}),x)-\tilde{H}_{i-1}(t,C_{i-1}(J_{i-1}),x))\right)^{2}\right]^{k}\mathbb{E}_{J_{i-1}}\left[w_{i}(t,C_{i-1}(J_{i-1}),C_{i}(j_{i}))^{2}\right]^{k}.
\end{align*}
Furthermore, 
\begin{align*}
\left(\sqrt{N}({\bf H}_{1}(t,j_{1},x)-\tilde{H}_{i}(t,C_{1}(j_{1}),x)\right)^{2k} & \le K^{k}\left(\sqrt{N}(\varphi_{1}({\bf w}_{1}(t,C_{1}(j_{1}))\cdot x)-\varphi_{1}(w_{1}(t,C_{1}(j_{1}))\cdot x)\right)^{2k}\\
 & \le K^{k}\left(\sqrt{N}\left|{\bf w}(t,C_{1}(j_{1}))-w_{1}(t,C_{1}(j_{1}))\right|\cdot|x|\right)^{2k}\\
 & \le K^{k}\left(\sqrt{N}\left|{\bf w}(t,C_{1}(j_{1}))-w_{1}(t,C_{1}(j_{1}))\right|\right)^{2k}.
\end{align*}
Thus, by induction, we obtain that 
\[
\left(\sqrt{N}({\bf H}_{i}(t,j_{i},x)-\tilde{H}_{i}(t,C_{i}(j_{i}),x)\right)^{2k}\le\|{\bf R}_{i}(t,j_{i})\|_{2k}^{2k}+K_{k}\|{\bf R}(t)\|_{2k}^{2k}(1+\|\tilde{w}_{i}(j_{i})\|_{2k}^{2k})\prod_{j=1}^{i-1}(1+\|\tilde{w}_{i}\|_{2k}^{2k}).
\]
The remaining claims can be obtained similarly. 
\end{proof}
\begin{lem}[A-priori moment estimate of $\mathbf{R}$]
\label{lem:a-priori-R_bold}We have for any $t\leq T$,
\[
{\bf E}\|{\bf R}(t)\|_{2k}^{2k}\le K_{k,T}.
\]
\end{lem}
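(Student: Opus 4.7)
The plan is to bound $\mathbf{E}\|\mathbf{R}(t)\|_{2k}^{2k}$ by a Gronwall argument combined with a truncation scheme in the spirit of the proof of Theorem~\ref{thm:exist-R}, with the added input of Lemma~\ref{lem:trunc-bound-w} to control network-weight moments uniformly in $N$.

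First, I would differentiate in time and expand $\partial_t \mathbf{R}_i(t,j_{i-1},j_i)$ from the GD dynamics for both $\mathbf{w}_i$ and $w_i$, telescoping through the sampled-neuron quantities $\partial_2\mathcal{L}(Y,\tilde y(t,X))$ and $\partial \tilde y(t,X)/\partial w_i(C_{i-1}(j_{i-1}),C_i(j_i))$ exactly as in the heuristic derivation of Section~\ref{subsec:A-heuristic-derivation}. The four resulting groups of terms split into two kinds: (i) differences between sampled-neuron and MF quantities, whose $2k$-th moments are bounded uniformly in $N$ by Lemma~\ref{lem:a-priori-tilde_minus_MF} and therefore contribute only a bounded forcing; (ii) differences between network and sampled-neuron quantities, which by the Lipschitz bounds of Lemma~\ref{lem:lipschitz_NN} are dominated pointwise by $\|\mathbf{R}(t)\|_{2k}^{2k}$ (or $\|\mathbf{R}_i(t,j_i)\|_{2k}^{2k}$) multiplied by a random polynomial $P(t,j_{i-1},j_i)$ in the weight norms $\|\tilde w_j\|_{2k}$ and $\|\mathbf{w}_j\|_{2k}$.

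Combining with Young's inequality then gives
\[
\partial_t \mathbf{E}\mathbb{E}_J\big[|\mathbf{R}_i(t,J_{i-1},J_i)|^{2k}\big] \le K_k\,\mathbf{E}\mathbb{E}_J\big[P(t,J_{i-1},J_i)\,\|\mathbf{R}(t)\|_{2k}^{2k}\big] + K_{T,k}.
\]
As in Theorem~\ref{thm:exist-R}, direct Gronwall fails because $P(t)$ is unbounded in the weights. I would therefore apply the truncation scheme through the indicators $\mathbb{B}^{B,2k}$ introduced at the top of this appendix, splitting the expectation across the event and its complement. On $\mathbb{B}^{B,2k}=1$, one has $P\le K_k B^{O(k)}$ and Gronwall gives $\mathbf{E}[\|\mathbf{R}\|_{2k}^{2k}\,\mathbb{B}^{B,2k}]\le K_{T,k}\exp(K_{T,k}B^{O(k)})$. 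On the complement, the crude pointwise bound $|\mathbf{R}_i|^{2k}\le K_k N^k(|\mathbf{w}_i|^{2k}+|w_i|^{2k})$, combined with the high-moment control of $\mathbf{w}$ from Lemma~\ref{lem:trunc-bound-w} (valid for moment orders polynomial in $N$) together with the Gaussian-type tail $\mathbf{E}[1-\mathbb{B}^{B,2k}]\le \exp(-K_{T,k}B^2)$ from its corollary, yields a contribution of order $N^{k}\exp(-K_{T,k}B^2)$. Choosing $B$ as an appropriately (slowly) growing function of $N$ balances the two parts and produces the $N$-uniform bound.

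The hard part is engineering this balance carefully at each moment order $k$, exactly as in Theorem~\ref{thm:exist-R}: the coefficient $B^{O(k)}$ that the telescoped Lipschitz bounds produce on the truncated event must be reconciled with the $B^2$ decay of the tail after it absorbs the $N^k$ factor from the crude complement estimate. The essential input that makes the scheme self-consistent at the neural-network level (rather than only at the MF level as in Theorem~\ref{thm:exist-R}) is Lemma~\ref{lem:trunc-bound-w}: its Gaussian-type tail $\mathbf{E}[1-\mathbb{B}^{B,2k}]\le \exp(-K_{T,k}B^2)$ is precisely what the truncation argument fundamentally relies on, and is unavailable from the MF moment bounds of Lemma~\ref{lem:MF_a_priori} alone.
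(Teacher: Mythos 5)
Your high-level blueprint is the paper's: use Lemmas \ref{lem:a-priori-tilde_minus_MF} and \ref{lem:lipschitz_NN} to obtain a differential inequality for the $2k$-th moments of $\mathbf{R}$ whose coefficient is a random polynomial in the weight norms, truncate, and close the argument with the high-moment control of Lemma \ref{lem:trunc-bound-w} together with the Gaussian tail of the indicator. The issue is in how the truncation is executed.

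You propose to multiply by the time-dependent indicator $\mathbb{B}^{B,2k}(t,j_{i-1},j_i)$ and then invoke Gronwall for $\mathbf{E}\big[\|\mathbf{R}(t)\|_{2k}^{2k}\,\mathbb{B}^{B,2k}\big]$. This step does not go through as stated, for two reasons. First, the differential inequality you obtain from Lemma \ref{lem:lipschitz_NN} has the form $\partial_t|\mathbf{R}_i(t,j_{i-1},j_i)|^{2k} \le P(t,j_{i-1},j_i)\,\|\mathbf{R}(t)\|_{2k}^{2k} + A_k^{2k}$, where $\|\mathbf{R}(t)\|_{2k}^{2k}$ on the right is an $\mathbb{E}_J$-average over \emph{all} indices, including those whose own indicator is zero and whose weights may be large. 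Multiplying the left side by $\mathbb{B}^{B,2k}(t,j_{i-1},j_i)$ does not localize the right side, so the inequality is not closed in the truncated quantity and Gronwall cannot be applied. Second, the coefficient $P$ contains the globally averaged norms $\|\tilde w_j\|_{4k}$ (and their network counterparts) over all layers $j$; these are not controlled by the local indicator at the fixed pair $(j_{i-1},j_i)$, so the bound $P\le K_kB^{O(k)}$ on the truncation event is not correct as stated. There is also a secondary issue that $\mathbb{B}^{B,2k}(t,\cdot)$ is not monotone in $t$ (weight norms can decrease), so $\partial_t\mathbb{B}^{B,2k}$ contributes both signs to $\partial_t(\|\mathbf{R}\|^{2k}\mathbb{B}^{B,2k})$.

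The paper's device that repairs exactly this is the auxiliary nonnegative dominating process $X$ with $|\mathbf{R}_i|\le X_i$ pointwise, defined as the solution of the ODE $\partial_tX_i = K_kF[X]+A_k$ initialized at zero, followed by the truncated process $X^B$ in which the indicator is placed \emph{inside} the dynamics, $\partial_tX_i^B = K_kF[X^B]\mathbb{B}^{B,2k}+A_k$. This makes the truncated system self-consistent (the right-hand side depends only on $X^B$, not on $X$ or $\mathbf{R}$), so Gronwall applies to $X^B$ with a coefficient that is exponential in the relevant weight norms, and the latter is then integrated against the tail bound from Lemma \ref{lem:trunc-bound-w} and Remark \ref{rem:trunc-bound-w-remark}. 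The comparison $X$ versus $X^B$ is then carried out on the good event $\mathcal{E}$ where all weight norms are at most $\gamma=N^\epsilon$, using the monotonicity of $X^B$ in $B$, and on the complement by the crude $N^{K_k}$ estimate combined with the super-polynomial tail probability. If you want to follow your plan, you need this intermediate construction; simply restricting the conclusion to the truncation event does not yield a valid differential inequality.
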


\begin{proof}
In the following, let us drop the dependency on $t$ for brevity.
From the bounds in Lemmas \ref{lem:lipschitz_NN} and \ref{lem:a-priori-tilde_minus_MF},
we can obtain the estimate 
\begin{align*}
 & \left|\partial_{t}{\bf R}_{i}(j_{i-1},j_{i})\right|^{2k}\\
 & =\left|\sqrt{N}\mathbb{E}_{Z}\left[\partial_{2}{\cal L}\left(Y,\hat{\mathbf{y}}\left(X\right)\right)\frac{\partial\hat{{\bf y}}\left(X\right)}{\partial{\bf w}_{i}\left(j_{i-1},j_{i}\right)}-\partial_{2}{\cal L}\left(Y,\hat{y}\left(X\right)\right)\frac{\partial\hat{y}\left(X\right)}{\partial w_{i}\left(C_{i-1}\left(j_{i-1}\right),C_{i}\left(j_{i}\right)\right)}\right]\right|^{2k}\\
 & \leq K_{k}\left|\sqrt{N}\mathbb{E}_{Z}\left[\partial_{2}{\cal L}\left(Y,\hat{\mathbf{y}}\left(X\right)\right)\frac{\partial\hat{{\bf y}}\left(X\right)}{\partial{\bf w}_{i}\left(j_{i-1},j_{i}\right)}-\partial_{2}{\cal L}\left(Y,\tilde{y}\left(X\right)\right)\frac{\partial\tilde{y}\left(X\right)}{\partial w_{i}\left(C_{i-1}\left(j_{i-1}\right),C_{i}\left(j_{i}\right)\right)}\right]\right|^{2k}\\
 & \quad+K_{k}\left|\sqrt{N}\mathbb{E}_{Z}\left[\partial_{2}{\cal L}\left(Y,\tilde{y}\left(X\right)\right)\frac{\partial\tilde{y}\left(X\right)}{\partial w_{i}\left(C_{i-1}\left(j_{i-1}\right),C_{i}\left(j_{i}\right)\right)}-\partial_{2}{\cal L}\left(Y,\hat{y}\left(X\right)\right)\frac{\partial\hat{y}\left(X\right)}{\partial w_{i}\left(C_{i-1}\left(j_{i-1}\right),C_{i}\left(j_{i}\right)\right)}\right]\right|^{2k}\\
 & \leq K_{k}\left|\sqrt{N}\mathbb{E}_{Z}\left[\frac{\partial\hat{{\bf y}}\left(X\right)}{\partial{\bf w}_{i}\left(j_{i-1},j_{i}\right)}-\frac{\partial\tilde{y}\left(X\right)}{\partial w_{i}\left(C_{i-1}\left(j_{i-1}\right),C_{i}\left(j_{i}\right)\right)}\right]\right|^{2k}\\
 & \quad+K_{k}\left|\sqrt{N}\mathbb{E}_{Z}\left[\left(\partial_{2}{\cal L}\left(Y,\hat{\mathbf{y}}\left(X\right)\right)-\partial_{2}{\cal L}\left(Y,\tilde{y}\left(X\right)\right)\right)\frac{\partial\tilde{y}\left(X\right)}{\partial w_{i}\left(C_{i-1}\left(j_{i-1}\right),C_{i}\left(j_{i}\right)\right)}\right]\right|^{2k}\\
 & \quad+K_{k}\left|\sqrt{N}\mathbb{E}_{Z}\left[\frac{\partial\tilde{y}\left(X\right)}{\partial w_{i}\left(C_{i-1}\left(j_{i-1}\right),C_{i}\left(j_{i}\right)\right)}-\frac{\partial\hat{y}\left(X\right)}{\partial w_{i}\left(C_{i-1}\left(j_{i-1}\right),C_{i}\left(j_{i}\right)\right)}\right]\right|^{2k}\\
 & \quad+K_{k}\left|\sqrt{N}\mathbb{E}_{Z}\left[\left(\partial_{2}{\cal L}\left(Y,\tilde{y}\left(X\right)\right)-\partial_{2}{\cal L}\left(Y,\hat{y}\left(X\right)\right)\right)\frac{\partial\hat{y}\left(X\right)}{\partial w_{i}\left(C_{i-1}\left(j_{i-1}\right),C_{i}\left(j_{i}\right)\right)}\right]\right|^{2k}\\
 & \le K_{k}\|\tilde{w}_{i+1}(j_{i})\|_{2k}^{2k}(1+\|\tilde{w}_{i}(j_{i})\|_{2k}^{2k})\|{\bf R}\|_{2k}^{2k}\prod_{j\le L}(1+\|\tilde{w}_{j}\|_{4k}^{4k})\\
 & \quad+K_{k}\|\tilde{w}_{i+1}(j_{i})\|_{2k}^{2k}\|{\bf R}_{i}(j_{i})\|_{2k}^{2k}\prod_{j\le L}(1+\|\tilde{w}_{j}\|_{2k}^{2k})+K_{k}\|\tilde{w}_{i+1}(j_{i})\|_{2k}^{2k}\sum_{j\ge i+1}\mathbb{E}_{J_{j}}[\|{\bf R}_{j}(J_{j})\|_{2}^{2k}\|\tilde{w}_{j+1}(J_{j})\|_{2k}^{2k}]\prod_{j'\le L}(1+\|\tilde{w}_{j'}\|_{2k}^{2k})\\
 & \quad+K_{k}\|\tilde{w}_{i+1}(j_{i})\|_{2k}^{2k}\|{\bf R}_{i+1}(j_{i})\|_{2}^{2k}\prod_{j\ge i+2}(1+\|\tilde{w}_{j}\|_{2k}^{2k})+K_{k}\|{\bf R}_{i-1}(t,j_{i-1})\|_{2}^{2k}\\
 & \quad+K_{k}\|\tilde{w}_{i+1}(j_{i})\|_{2k}^{2k}\|{\bf R}\|_{2k}^{2k}(1+\|\tilde{w}_{i-1}(j_{i-1})\|_{2}^{2k})\prod_{j=1}^{L}(1+\|\tilde{w}_{j}\|_{2}^{2k})+A_{k}^{2k}\left(t,j_{i-1},j_{i}\right),
\end{align*}
where $A_{k}\left(t,j_{i-1},j_{i}\right)$ does not depend on $\mathbf{R}$
and satisfies the moment bounds in Lemma \ref{lem:a-priori-tilde_minus_MF},
i.e. $\mathbf{E}\mathbb{E}_{J}\left[A_{k}^{2k}\left(t,J_{i-1},J_{i}\right)\right]\leq K_{T,k}$.

For each $k$, consider the ODE, initialized at zero:
\[
\partial_{t}X_{i}(t,j_{i-1},j_{i})=K_{k}F[X](t,j_{i-1},j_{i})+A_{k}\left(t,j_{i-1},j_{i}\right),
\]
where
\begin{align*}
F[X](t,j_{i-1},j_{i}) & =\bigg(\|\tilde{w}_{i+1}(j_{i})\|_{2k}^{2k}(1+\|\tilde{w}_{i}(j_{i})\|_{2k}^{2k})\|X\|_{2k}^{2k}\prod_{j\le L}(1+\|\tilde{w}_{j}\|_{4k}^{4k})\bigg)^{1/2k}\\
 & \quad+\bigg(\|\tilde{w}_{i+1}(j_{i})\|_{2k}^{2k}\|X_{i}(j_{i})\|_{2k}^{2k}\prod_{j\le L}(1+\|\tilde{w}_{j}\|_{2k}^{2k})\bigg)^{1/2k}\\
 & \quad+\bigg(\|\tilde{w}_{i+1}(j_{i})\|_{2k}^{2k}\sum_{j\ge i+1}\mathbb{E}_{J_{j}}[\|X_{j}(J_{j})\|_{2k}^{2k}\|\tilde{w}_{j+1}(J_{j})\|_{2k}^{2k}]\prod_{j'\le L}(1+\|\tilde{w}_{j'}\|_{2k}^{2k})\bigg)^{1/2k}\\
 & \quad+\bigg(\|\tilde{w}_{i+1}(j_{i})\|_{2k}^{2k}\|X_{i+1}(j_{i})\|_{2k}^{2k}\prod_{j\ge i+2}(1+\|\tilde{w}_{j}\|_{2k}^{2k})\bigg)^{1/2k}+K_{k}\|X_{i-1}(t,j_{i-1})\|_{2k}\\
 & \quad+\bigg(\|\tilde{w}_{i+1}(j_{i})\|_{2k}^{2k}\|X\|_{2k}^{2k}(1+\|\tilde{w}_{i-1}(j_{i-1})\|_{2}^{2k})\prod_{j=1}^{L}(1+\|\tilde{w}_{j}\|_{2}^{2k})\bigg)^{1/2k}.
\end{align*}
The key property of $X$ is that for all $t,j_{i-1},j_{i}$, we have:
\[
|{\bf R}_{i}(t,j_{i-1},j_{i})|\le|X_{i}(t,j_{i-1},j_{i})|.
\]
For $B>0$, define 
\[
\partial_{t}X_{i}^{B}(t,j_{i-1},j_{i})=K_{k}F[X^{B}](t,j_{i-1},j_{i})\mathbb{B}^{B}(t,j_{i-1},j_{i})+A_{k}\left(t,j_{i-1},j_{i}\right).
\]
We have the estimate:
\[
\partial_{t}\|X^{B}(t)\|_{2k}^{2k}\le K_{k}\prod_{j'\le L}(1+\|\tilde{w}_{j'}^{B}(t)\|_{4k}^{4k})(1+B^{2k})\|X^{B}(t)\|_{2k}^{2k}+\Vert A_{k}(t)\Vert_{2k}^{2k}.
\]
Here, we denote $\|\tilde{w}^{B}\|_{p}=\min(B,\|\tilde{w}\|_{p})$.
Furthermore, for solutions $X^{B},\;X'^{B}$ to the equation, by the
triangle inequality, we have
\[
\left|\partial_{t}\left(X^{B}-X'^{B}\right)(t,j_{i-1},j_{i})\right|\leq K_{k}F[\left(X^{B}-X'^{B}\right)](t,j_{i-1},j_{i})\mathbb{B}^{B}(t,j_{i-1},j_{i}).
\]
Existence and uniqueness of $X^{B}$ follows easily from the above
Lipschitz estimate, and we furthermore have the bound 
\[
\|X^{B}(t)\|_{2k}\le K_{k}\exp(K_{k}B\prod_{j'\le L}(1+\sup_{s\le t}\|\tilde{w}_{j'}^{B}(t)\|_{4k}^{4k})^{1/2k}t)\cdot\Vert A_{k}(t)\Vert_{2k},
\]
and similar to the proof of Theorem \ref{thm:exist-R}, the bounds
(for sufficiently large $B$)
\[
\|X_{i}^{B}(t,j_{i-1})\|_{2k}\le K(1+\|\tilde{w}_{i}(t,j_{i-1})\|_{2k}^{2k})(1+\|\tilde{w}_{i-1}(t,j_{i-1})\|_{2k}^{2k})\exp\Big(K_{k}B\prod_{j'\le L}(1+\sup_{s\le t}\|\tilde{w}_{j'}^{B}(t)\|_{4k}^{4k})^{1/2k}t\Big)\Vert A_{k}(t)\Vert_{2k},
\]
\[
\|X_{i}^{B}(t,j_{i})\|_{2k}\le K(1+\|\tilde{w}_{i}(t,j_{i})\|_{2k}^{2k})(1+\|\tilde{w}_{i+1}(t,j_{i})\|_{2k}^{2k})\exp\Big(K_{k}B\prod_{j'\le L}(1+\sup_{s\le t}\|\tilde{w}_{j'}^{B}(t)\|_{4k}^{4k})^{1/2k}t\Big)\Vert A_{k}(t)\Vert_{2k}.
\]
Notice that $X^{B}$ is positive and monotonically increasing in $B$.
For $B'>B$ and $\mathbb{D}_{B,B'}(t,j_{i-1},j_{i})=\mathbb{I}(\mathbb{B}^{B'}(t,j_{i-1},j_{i})=1,\mathbb{B}^{B}(t,j_{i-1},j_{i})=0)$,
we have for $\Delta X=X^{B'}-X^{B}$:
\begin{align*}
 & \partial_{t}\mathbb{E}_{J}[|\Delta X(t,J_{i-1},J_{i})|^{2k}\mathbb{D}_{B,B'}(t,J_{i-1},J_{i})]\\
 & \le K_{k}\left(\mathbb{E}_{J}[|\Delta X(t,J_{i-1},J_{i})|^{2k}\mathbb{D}_{B,B'}(t,J_{i-1},J_{i})]\right)^{(2k-1)/(2k)}\\
 & \quad\times\bigg[\bigg(\prod_{j'\le L}(1+\|\tilde{w}_{j'}(t)\|_{4k}^{4k})(1+B'^{2k})\|X^{B'}(t)\|_{2k}^{2k}+\Vert A_{k}(t)\Vert_{2k}^{2k}\bigg)\mathbb{E}_{J}[\mathbb{D}_{B,B'}(t,J_{i-1},J_{i})]\\
 & \qquad\qquad+K_{k}\mathbb{E}_{J}\bigg[\prod_{j'\le L}(1+\|\tilde{w}_{j'}(t)\|_{4k}^{4k})(1+B'^{2k})\|X^{B'}(t)\|_{2k}^{2k}\\
 & \qquad\qquad\qquad\qquad\times\Big(\|X_{i}^{B'}(t,J_{i})\|_{2k}^{2k}+\|X_{i+1}^{B'}(t,J_{i})\|_{2k}^{2k}+\|X_{i-1}^{B'}(t,J_{i-1})\|_{2k}^{2k}\Big)\mathbb{D}_{B,B'}(t,J_{i-1},J_{i})\bigg]\bigg]^{1/(2k)},
\end{align*}
while 
\begin{align*}
\partial_{t}\mathbb{E}[|\Delta X(t,J_{i-1},J_{i})|^{2k}\mathbb{B}^{B}(t,J_{i-1},J_{i})] & \le K_{k}\bigg[\prod_{j'\le L}(1+\|\tilde{w}_{j'}(t)\|_{4k}^{4k})(1+B'^{2k})\bigg]\mathbb{E}[|\Delta X(t,J_{i-1},J_{i})|^{2k}\mathbb{B}^{B}(t,J_{i-1},J_{i})],
\end{align*}
and if $\mathbb{B}^{B'}(t,\cdot)=0$ then $\partial_{t}\Delta X(t,\cdot)=0$.
Thus,
\begin{align*}
\partial_{t}\Vert\Delta X(t)\Vert_{2k}^{2k} & \le K_{k}\bigg[\prod_{j'\le L}(1+\|\tilde{w}_{j'}(t)\|_{4k}^{4k})(1+B'^{2k})\bigg]^{1/2k}\|\Delta X(t)\|_{2k}^{2k}\\
 & \quad+K_{k}\mathbb{E}[\mathbb{D}_{B,B'}(t,J_{i-1},J_{i})]\exp\Big(K_{T}B'\prod_{j'\le L}(1+\sup_{s\le t}\|\tilde{w}_{j'}^{B'}(s)\|_{4k}^{4k})^{1/2k}t\Big).
\end{align*}
This gives:
\[
\|\Delta X(t)\|_{2k}^{2k}\le K_{k}\int_{0}^{t}\mathbb{E}[\mathbb{D}_{B,B'}(s,J_{i-1},J_{i})]\exp(K_{T}B'\prod_{j'\le L}(1+\sup_{s\le t}\|\tilde{w}_{j'}^{B'}(s)\|_{4k}^{4k})^{1/2k})ds.
\]
Therefore, 
\[
{\bf E}\|\Delta X(t)\|_{2k}^{2k}\le K_{k,T}\Big(\int_{0}^{t}{\bf E}\mathbb{E}[\mathbb{D}_{B,B'}(s,J_{i-1},J_{i})]ds\Big)^{1/2}{\bf E}\Big[\exp(K_{T}B'\prod_{j'\le L}(1+\sup_{s\le t}\|\tilde{w}_{j'}^{B'}(s)\|_{4k}^{4k})^{1/2k})\Big]^{1/2}.
\]
Notice that
\[
\exp\Big(K_{T}B'\prod_{j'\le L}(1+\sup_{s\le t}\|\tilde{w}_{j'}^{B'}(s)\|_{4k}^{4k})^{1/2k}\Big)\le\exp\Big(K_{T}B'\prod_{j'\le L}(1+\|\tilde{w}_{j'}^{B'}(0)\|_{Kk}^{Kk})^{1/2k}\Big).
\]
Furthermore Lemma \ref{lem:trunc-bound-w} implies that 
\[
\Big(\int_{0}^{t}{\bf E}\mathbb{E}[\mathbb{D}_{B,B'}(s,J_{i-1},J_{i})]ds\Big)^{1/2}\le\exp(-K_{T}B^{2}),
\]
for all $B\le N^{1/16}$, while 
\[
{\bf E}\Big[\exp\Big(K_{T}B'\prod_{j'\le L}(1+\|\tilde{w}_{j'}^{B'}(0)\|_{Kk}^{Kk})^{1/2k}\Big)\Big]\le{\bf E}\Big[\exp\Big(K_{T}B'(1+\sum_{j'\le L}\|\tilde{w}_{j'}^{B'}(0)\|_{Kk}^{K})\Big)\Big].
\]
Using Remark \ref{rem:trunc-bound-w-remark}, and $\|\tilde{w}_{j'}^{B'}(0)\|_{Kk}^{K}\le B'{}^{K}$,
we have that if $N\ge(K_{T}B'^{C})^{16}$ for some sufficiently large
constant $C>0$, then 
\[
{\bf E}\Big[\exp\Big(K_{T}B'\prod_{j'\le L}(1+\|\tilde{w}_{j'}^{B'}(0)\|_{Kk}^{Kk})^{1/2k}\Big)\Big]\le\exp(K_{T}B').
\]
Combining the estimates, we obtain that for all $B\le K_{T}N^{c}$
for a sufficiently small constant $c>0$,
\[
{\bf E}\|X^{B}\|_{2k}^{2k}\le{\bf E}\|X^{K_{T}}\|_{2k}^{2k}+K_{k}\int_{K_{T}}^{B}\exp(K_{T}x)\exp(-Kx^{2})dx\le K_{T}.
\]

Finally, for $\gamma>0$ consider the event ${\cal E}$ that $\max_{i}\max_{j_{i}}\max(\|\tilde{w}_{i}(j_{i})\|_{Kk},\|\tilde{w}_{i+1}(j_{i})\|_{Kk},\|{\bf w}_{i}(j_{i})\|_{Kk},\|{\bf w}_{i+1}(j_{i})\|_{Kk})\le\gamma$.
This event has probability at least $1-KN\exp(-K_{k}\gamma)$ for
$\gamma\le N^{c_{k}}$ with sufficiently small $c_{k}$. We have for
$B=K_{T}N^{c}$ that 
\[
{\bf E}\left[\left|\|X\|_{2k}^{2k}-\|X^{B}\|_{2k}^{2k}\right|\mathbb{I}({\cal E})\right]\le\exp(-KB^{2})\exp(K_{T}\gamma^{K}).
\]
For $\gamma=N^{\epsilon}$ with $\epsilon$ sufficiently small compared
to $c$ and $c_{k}$, we then have 
\[
{\bf E}\left[\left|\|X\|_{2k}^{2k}-\|X^{B}\|_{2k}^{2k}\right|\mathbb{I}({\cal E})\right]\le1.
\]
This immediately implies 
\[
{\bf E}\left[\|{\bf R}\|_{2k}^{2k}\mathbb{I}({\cal E})\right]\le{\bf E}\left[\|X\|_{2k}^{2k}\mathbb{I}({\cal E})\right]\le K_{T}.
\]
On the other hand, 
\[
{\bf E}\left[\|{\bf R}\|_{2k}^{2k}(1-\mathbb{I}({\cal E}))\right]\le{\bf E}[\|{\bf R}\|_{2k}^{4k}]^{1/2}(1-{\bf P}({\cal E}))^{1/2}.
\]
Recall the definition of ${\bf R}$, and noting that the moments of
${\bf w}$ and $w$ are finite by Lemma \ref{lem:trunc-bound-w},
we have the trivial bound 
\[
{\bf E}[\|{\bf R}\|_{2k}^{4k}]^{1/2}\le N^{K_{k}}K_{T}.
\]
Thus 
\[
{\bf E}\left[\|{\bf R}\|_{2k}^{2k}(1-\mathbb{I}({\cal E}))\right]\le N^{K_{k}}K_{T}\exp(-c_{k}N^{2\epsilon})<1.
\]
We can thus conclude that 
\[
{\bf E}[\|{\bf R}\|_{2k}^{2k}]<K_{k,T}.
\]
\end{proof}

\subsection{Comparison bounds and concentration estimates}

Let us recall the notation $\mathbf{O}_{T}(\cdot)$ from Appendix
\ref{Appendix:Gaussian}.
\begin{lem}
\label{lem:a-priori-NN_minus_MF}We have for any $t\leq T$,
\[
{\bf E}\mathbb{E}_{J}\left[\left(\sqrt{N}({\bf H}_{i}(t,J_{i},x)-H_{i}(t,C_{i}(J_{i}),x)\right)^{2k}\right]\le K_{k,T},
\]
\[
{\bf E}\mathbb{E}_{J}\left[\left(\sqrt{N}\left(\frac{\partial\hat{{\bf y}}(t,x)}{\partial{\bf H}_{i}(J_{i})}-\frac{\partial\hat{y}(t,x)}{\partial H_{i}(C_{i}(J_{i}))}\right)\right)^{2k}\right]\le K_{k,T},
\]
\[
{\bf E}\mathbb{E}_{J}\left[\left(\sqrt{N}\left(\frac{\partial\hat{{\bf y}}(t,x)}{\partial{\bf w}_{i}(J_{i-1},J_{i})}-\frac{\partial\hat{y}(t,x)}{\partial w_{i}(C_{i-1}(J_{i-1}),C_{i}(J_{i}))}\right)\right)^{2k}\right]\le K_{k,T},
\]
\[
{\bf E}\left[\left(\sqrt{N}\left(\partial_{2}{\cal L}(y,\hat{{\bf y}}(t,x))-\partial_{2}{\cal L}(y,\hat{y}(t,x))\right)\right)^{2k}\right]\le K_{k,T}.
\]
\end{lem}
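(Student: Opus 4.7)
The plan is to decompose each of the four differences through the intermediate ``sampled neuron'' process $\tilde{\cdot}$ by writing, for example,
\[
\sqrt{N}\big({\bf H}_{i}(t,J_{i},x)-H_{i}(t,C_{i}(J_{i}),x)\big)=\sqrt{N}\big({\bf H}_{i}(t,J_{i},x)-\tilde{H}_{i}(t,C_{i}(J_{i}),x)\big)+\sqrt{N}\big(\tilde{H}_{i}(t,C_{i}(J_{i}),x)-H_{i}(t,C_{i}(J_{i}),x)\big),
\]
and likewise for $\partial\hat{\mathbf{y}}/\partial\mathbf{H}_i$ versus $\partial\hat{y}/\partial H_i$, for $\partial\hat{\mathbf{y}}/\partial\mathbf{w}_i$ versus $\partial\hat{y}/\partial w_i$, and for $\partial_{2}{\cal L}(y,\hat{\mathbf{y}})$ versus $\partial_{2}{\cal L}(y,\hat{y})$. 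The second summand in each case is exactly the quantity already controlled in Lemma \ref{lem:a-priori-tilde_minus_MF}.

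For the first summand (network vs sampled neurons), I would invoke directly the Lipschitz estimates in Lemma \ref{lem:lipschitz_NN}, which bound each such term by a sum of products of the form $\|\mathbf{R}\|_{2k}^{2k}$ (or its per-coordinate variants $\|\mathbf{R}_i(j_i)\|_{2k}^{2k}$, $\|\mathbf{R}_{i\pm1}(j_i)\|_{2k}^{2k}$) with products of norms $(1+\|\tilde{w}_j\|_{2k}^{2k})$ or $(1+\|\tilde{w}_j\|_{4k}^{4k})$. Taking $\mathbf{E}\mathbb{E}_J$ and applying Hölder's inequality to split these products cleanly reduces the task to: (i) a moment bound on $\|\mathbf{R}\|$, which is Lemma \ref{lem:a-priori-R_bold} at a sufficiently large even power (say $4k$ or $8k$ in place of $2k$); and (ii) bounds on arbitrarily high joint moments of the $\tilde{w}_j$'s, which follow from Lemma \ref{lem:MF_a_priori} since sampling preserves distributions, i.e.\ $\mathbf{E}\mathbb{E}_J[|w_j(t,C_{j-1}(J_{j-1}),C_j(J_j))|^m]=\mathbb{E}_C[|w_j(t,C_{j-1},C_j)|^m]$.

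I would handle the four claims in the natural order. The $\mathbf{H}_i$ bound is the base case and follows immediately from the first inequality in Lemma \ref{lem:lipschitz_NN} combined with the argument above. The $\partial\hat{\mathbf{y}}/\partial\mathbf{H}_i$ and $\partial\hat{\mathbf{y}}/\partial\mathbf{w}_i$ bounds require the second and third inequalities of Lemma \ref{lem:lipschitz_NN}; the latter bootstraps on the $\mathbf{H}_{i-1}$ bound already obtained. The $\partial_{2}{\cal L}$ bound is the simplest, reducing via smoothness of $\partial_2{\cal L}$ to the $\mathbf{H}_L$ bound and the fourth inequality of Lemma \ref{lem:lipschitz_NN}.

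The main (and only real) obstacle is the combinatorial bookkeeping in applying Hölder to the long products in Lemma \ref{lem:lipschitz_NN}: terms such as $\|\tilde{w}_{i+1}(j_i)\|_{2k}^{2k}\,\mathbb{E}_{J_j}\big[\|\mathbf{R}_j(J_j)\|_2^{2k}\,\|\tilde{w}_{j+1}(J_j)\|_{2k}^{2k}\big]\prod_{j'}(1+\|\tilde{w}_{j'}\|_{2k}^{2k})$ need to be expanded so that each $\mathbf{R}$-factor ends up paired with a $\|\cdot\|_{2k'}$-norm for some $k'$ large enough that Lemma \ref{lem:a-priori-R_bold} applies and every $\tilde{w}$-factor with a sufficiently high power that Lemma \ref{lem:MF_a_priori} provides a $K_{T,k}$ bound. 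This is routine but tedious; the key point is that all these moments are finite and $N$-independent, so the $\sqrt{N}$ rescaling is absorbed by the $\|\mathbf{R}\|$ bound and no loss in $N$ occurs. Combining with Lemma \ref{lem:a-priori-tilde_minus_MF} through the triangle inequality yields the stated $K_{k,T}$ bounds.
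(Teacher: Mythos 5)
Your proposal is correct and matches the paper's approach: the paper's proof is precisely the one-line observation that the lemma follows from Lemmas \ref{lem:a-priori-tilde_minus_MF}, \ref{lem:lipschitz_NN}, \ref{lem:a-priori-R_bold} and \ref{lem:trunc-bound-w}, and your decomposition through $\tilde H_i$, $\tilde y$, etc., together with the Hölder bookkeeping you describe, is exactly the intended route. The only cosmetic difference is that you cite Lemma \ref{lem:MF_a_priori} plus the ``sampling preserves the distribution'' observation where the paper cites Lemma \ref{lem:trunc-bound-w}, but the first identity in Lemma \ref{lem:trunc-bound-w} is exactly that observation, so the content is the same.
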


\begin{proof}
This is a consequence of Lemmas \ref{lem:a-priori-tilde_minus_MF},
\ref{lem:lipschitz_NN}, \ref{lem:a-priori-R_bold} and \ref{lem:trunc-bound-w}.
\end{proof}
\begin{lem}
\label{lem:dL_NN_minus_MF}We have for $t\leq T$,
\[
\sqrt{N}\left(\partial_{2}{\cal L}(y,\hat{{\bf y}}(t,x))-\partial_{2}{\cal L}(y,\hat{y}(t,x))\right)=\sqrt{N}(\hat{{\bf y}}(t,x)-\hat{y}(t,x))\partial_{2}^{2}{\cal L}(y,\hat{y}(t,x))+{\bf O}_{T}(1/\sqrt{N}).
\]
\end{lem}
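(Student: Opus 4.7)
The strategy is a straightforward second-order Taylor expansion of $u \mapsto \partial_2\mathcal{L}(y,u)$ around $u = \hat{y}(t,x)$, with the remainder controlled by the a priori moment bounds already established in Lemma \ref{lem:a-priori-NN_minus_MF}. Under the regularity assumption in Assumption \ref{Assump:Assumption_1}, $\partial_2^2\mathcal{L}$ is $K$-bounded and $K$-Lipschitz in its second argument, so writing
\[
\partial_{2}{\cal L}(y,\hat{{\bf y}})-\partial_{2}{\cal L}(y,\hat{y})=\int_0^1 \partial_2^2\mathcal{L}(y,\hat{y}+s(\hat{\mathbf{y}}-\hat{y}))\,ds\cdot(\hat{\mathbf{y}}-\hat{y})
\]
and subtracting $\partial_2^2\mathcal{L}(y,\hat{y})(\hat{\mathbf{y}}-\hat{y})$ gives the pointwise error bound
\[
\bigl|\partial_{2}{\cal L}(y,\hat{{\bf y}})-\partial_{2}{\cal L}(y,\hat{y})-\partial_{2}^{2}{\cal L}(y,\hat{y})(\hat{\mathbf{y}}-\hat{y})\bigr|\le K(\hat{\mathbf{y}}-\hat{y})^{2}.
\]

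Multiplying through by $\sqrt{N}$, the remainder term on the right becomes $K\sqrt{N}(\hat{\mathbf{y}}-\hat{y})^{2}$. To conclude, I would verify that this quantity is $\mathbf{O}_T(1/\sqrt{N})$ in the sense of Appendix \ref{Appendix:Gaussian}, i.e.\ that
\[
\mathbf{E}\mathbb{E}_{Z}\bigl[\,N(\hat{\mathbf{y}}(t,X)-\hat{y}(t,X))^{4}\bigr]\le K_{T}/N.
\]
This amounts to a fourth-moment bound on $\sqrt{N}(\hat{\mathbf{y}}-\hat{y})$, which follows immediately from Lemma \ref{lem:a-priori-NN_minus_MF} applied at $i=L$: since $\hat{y}=\varphi_L(H_L(t,1,x))$ and $\hat{\mathbf{y}}=\varphi_L(\mathbf{H}_L(t,1,x))$ with $\varphi_L'$ bounded by $K$, a mean value estimate gives $|\hat{\mathbf{y}}-\hat{y}|\le K|\mathbf{H}_L(t,1,x)-H_L(t,1,x)|$, and hence
\[
\mathbf{E}\mathbb{E}_{Z}\bigl[(\sqrt{N}(\hat{\mathbf{y}}(t,X)-\hat{y}(t,X)))^{4}\bigr]\le K\,\mathbf{E}\mathbb{E}_{Z}\bigl[(\sqrt{N}(\mathbf{H}_L(t,1,X)-H_L(t,1,X)))^{4}\bigr]\le K_{T}.
\]
Dividing by $N$ yields the desired $K_T/N$ bound on the fourth moment of $\hat{\mathbf{y}}-\hat{y}$, and therefore $\sqrt{N}(\hat{\mathbf{y}}-\hat{y})^{2}=\mathbf{O}_T(1/\sqrt{N})$.

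There is no real obstacle here; the lemma is essentially a packaged consequence of the a priori estimates. The only point that needs care is matching the definition of $\mathbf{O}_T(\cdot)$ exactly — the notation carries an implicit $\mathbf{E}\mathbb{E}_J\mathbb{E}_{Z,C}$ over the square of the quantity, but the expressions here do not depend on $J$ or $C$, so the $\mathbf{E}\mathbb{E}_J$ and $\mathbb{E}_C$ are harmless. The whole argument is two or three lines once Lemma \ref{lem:a-priori-NN_minus_MF} is in hand.
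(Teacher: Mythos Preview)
Your proposal is correct and essentially identical to the paper's proof: both use a one-step Taylor/mean-value expansion of $\partial_2\mathcal{L}$ to get the pointwise remainder bound $K(\hat{\mathbf{y}}-\hat{y})^2$, then control $\mathbf{E}\mathbb{E}_Z\bigl[N(\hat{\mathbf{y}}-\hat{y})^4\bigr]\le K_T/N$ via the fourth-moment estimate from Lemma~\ref{lem:a-priori-NN_minus_MF}. Your extra line passing through $|\hat{\mathbf{y}}-\hat{y}|\le K|\mathbf{H}_L-H_L|$ is exactly what the paper means by ``follows easily from Lemma~\ref{lem:a-priori-NN_minus_MF}''.
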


\begin{proof}
We have 
\[
\sqrt{N}\left(\partial_{2}{\cal L}(y,\hat{{\bf y}}(t,x))-\partial_{2}{\cal L}(y,\hat{y}(t,x))\right)=\sqrt{N}(\hat{{\bf y}}(t,x)-\hat{y}(t,x))\partial_{2}^{2}{\cal L}(y,y'(t,x)),
\]
for some $y'(t,x)$ between $\hat{y}(t,x)$ and $\hat{{\bf y}}(t,x)$.
Hence, 
\begin{align*}
 & {\bf E}\mathbb{E}_{Z}\left[\left(\sqrt{N}\left(\partial_{2}{\cal L}(Y,\hat{{\bf y}}(t,X))-\partial_{2}{\cal L}(Y,\hat{y}(t,X))\right)-\sqrt{N}(\hat{{\bf y}}(t,X)-\hat{y}(t,X))\partial_{2}^{2}{\cal L}(Y,\hat{y}(t,X))\right)^{2}\right]\\
 & \le K{\bf E}\mathbb{E}_{X}\left[N(\hat{{\bf y}}(t,X)-\hat{y}(t,X))^{2}\cdot\left(\hat{{\bf y}}(t,X)-\hat{y}(t,X)\right)^{2}\right]\\
 & \le\frac{K}{N}{\bf E}\mathbb{E}_{X}\left[\left(\sqrt{N}(\hat{{\bf y}}(t,X)-\hat{y}(t,X))\right)^{4}\right]\\
 & \le\frac{K_{T}}{N},
\end{align*}
where the last step follows easily from Lemma \ref{lem:a-priori-NN_minus_MF}.
\end{proof}
\begin{lem}
\label{lem:decomp-y}We have for $t\leq T$,
\[
\sqrt{N}\left(\hat{{\bf y}}(t,x)-\tilde{y}(t,x)\right)=\sum_{i=1}^{L}\mathbb{E}_{J}\left[{\bf R}_{i}(t,J_{i-1},J_{i})\frac{\partial\hat{y}(t,x)}{\partial w_{i}(C_{i-1}(J_{i-1}),C_{i}(J_{i}))}\right]+{\bf O}_{T}(1/\sqrt{N}).
\]
\end{lem}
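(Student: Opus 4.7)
The plan is to linearize $\hat{\mathbf{y}}-\tilde{y}$ along the weight difference via an interpolation, obtaining a telescoping identity, and then replace the perturbative derivatives so produced by the MF derivatives $\partial\hat{y}/\partial w_i$, controlling the error with the a priori estimates already proved.

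For $s\in[0,1]$, introduce interpolated weights
\[
w_i^s(t,j_{i-1},j_i) \;=\; w_i(t,C_{i-1}(j_{i-1}),C_i(j_i)) + s\bigl(\mathbf{w}_i(t,j_{i-1},j_i)-w_i(t,C_{i-1}(j_{i-1}),C_i(j_i))\bigr),
\]
and let $y^s(t,x)$ denote the output of the same $L$-layer network as $\hat{\mathbf{y}}$ but with the weights $\mathbf{w}$ replaced by $w^s$. Then $y^0=\tilde{y}$ and $y^1=\hat{\mathbf{y}}$, so by the fundamental theorem of calculus,
\[
\sqrt{N}\bigl(\hat{\mathbf{y}}(t,x)-\tilde{y}(t,x)\bigr) \;=\; \sum_{i=1}^{L}\mathbb{E}_{J}\!\int_{0}^{1}\mathbf{R}_i(t,J_{i-1},J_i)\,\frac{\partial y^s(t,x)}{\partial w_i^s(J_{i-1},J_i)}\,ds.
\]
It therefore suffices to show that, uniformly in $s\in[0,1]$,
\[
\mathbf{E}\mathbb{E}_{J}\!\Bigl[\Bigl|\mathbf{R}_i(t,J_{i-1},J_i)\Bigl(\tfrac{\partial y^s(t,x)}{\partial w_i^s(J_{i-1},J_i)}-\tfrac{\partial\hat{y}(t,x)}{\partial w_i(C_{i-1}(J_{i-1}),C_i(J_i))}\Bigr)\Bigr|^{2}\Bigr] \;\le\; K_T/N
\]
for each $i\in[L]$, since then Cauchy--Schwarz in $s$ and summation over $i$ gives the $\mathbf{O}_T(N^{-1/2})$ error.

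I would split the inner bracket through the intermediate quantity $\partial\tilde{y}/\partial w_i(C_{i-1}(J_{i-1}),C_i(J_i))$. The first piece, $\partial\tilde{y}/\partial w_i-\partial\hat{y}/\partial w_i$, equals $N^{-1/2}\tilde G_i^{w}$; combining the moment bound on $\tilde G_i^{w}$ (Lemmas \ref{lem:partial-tildeH-moment} and \ref{lem:G-w}) with the a priori estimate on $\mathbf{R}_i$ (Lemma \ref{lem:a-priori-R_bold}) via Hölder yields an $O(1/N)$ contribution. The second piece, $\partial y^s/\partial w_i^s-\partial\tilde{y}/\partial w_i$, is a Lipschitz-type difference at the same graph structure (empirical $\mathbb{E}_{J_k}$ averages at every layer) but evaluated at $w^s$ versus $w$, and the gap $w^s-w=sN^{-1/2}\mathbf{R}$ provides an explicit factor of $N^{-1/2}$; expanding via the chain rule layer-by-layer, exactly as in Lemma \ref{lem:lipschitz_NN}, bounds it by $N^{-1/2}$ times a polynomial in $\mathbf{R}$, $w$ and $\mathbf{w}$ (evaluated along $w^s$, which is a convex combination of $w$ and $\mathbf{w}$).

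The main obstacle is the bookkeeping for this second piece: the perturbative derivatives are products of factors across all $L$ layers, so the Lipschitz expansion generates terms in which several weight norms multiply one $\mathbf{R}$-factor, and I must ensure that the explicit $N^{-1/2}$ is not eaten by an unbounded moment. This is handled by invoking Hölder with well-separated exponents and then applying the sub-Gaussian-type moment bounds of Lemmas \ref{lem:MF_a_priori} and \ref{lem:trunc-bound-w} together with the a priori moment bound on $\mathbf{R}$ from Lemma \ref{lem:a-priori-R_bold}, in a manner parallel to the proof of Lemma \ref{lem:a-priori-R_bold} itself; this delivers a bound independent of $N$ for the polynomial factor and hence the claimed $K_T/N$.
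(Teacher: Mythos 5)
Your proposal is correct and takes essentially the same route as the paper: linearize $\hat{\mathbf{y}}-\tilde{y}$ by Taylor/FTC along the interpolating weight path (which the paper invokes implicitly by referencing ``the same argument as Lemma~\ref{lem:dL_NN_minus_MF}''), then swap the resulting perturbative derivative for $\partial\hat{y}/\partial w_i$ using Cauchy--Schwarz together with the a priori moment bound on $\mathbf{R}$ (Lemma~\ref{lem:a-priori-R_bold}) and the bounded moments of the $\sqrt{N}$-scaled derivative differences (Lemmas~\ref{lem:a-priori-tilde_minus_MF}, \ref{lem:lipschitz_NN}, \ref{lem:a-priori-NN_minus_MF}). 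The only cosmetic difference is your choice of intermediate pivot $\partial\tilde{y}/\partial w_i$ versus the paper's $\partial\hat{\mathbf{y}}/\partial\mathbf{w}_i$; both are the same triangle inequality.
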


\begin{proof}
Following the same argument as Lemma \ref{lem:dL_NN_minus_MF}, we
get:
\begin{align*}
\sqrt{N}\left(\hat{{\bf y}}(t,x)-\tilde{y}(t,x)\right) & =\sum_{i=1}^{L}\mathbb{E}_{J}\left[{\bf R}_{i}(t,J_{i-1},J_{i})\frac{\partial\hat{{\bf y}}(t,x)}{\partial{\bf w}_{i}(C_{i-1}(J_{i-1}),C_{i}(J_{i}))}\right]+{\bf O}_{T}(1/\sqrt{N}).
\end{align*}
Again the same argument applied to each of the summands leads to:
\begin{align*}
 & {\bf E}\left[\left(\mathbb{E}_{J}\left[{\bf R}_{i}(t,J_{i-1},J_{i})\frac{\partial\hat{{\bf y}}(t,x)}{\partial{\bf w}_{i}(C_{i-1}(J_{i-1}),C_{i}(J_{i}))}\right]-\mathbb{E}_{J}\left[{\bf R}_{i}(t,J_{i-1},J_{i})\frac{\partial\hat{y}(t,x)}{\partial w_{i}(C_{i-1}(J_{i-1}),C_{i}(J_{i}))}\right]\right)^{2}\right]\\
 & \le\frac{1}{N}\left({\bf E}\mathbb{E}_{J}\left[{\bf R}_{i}(t,J_{i-1},J_{i})^{4}\right]\right)^{1/2}\left({\bf E}\mathbb{E}_{J}\left[\left(\sqrt{N}\left(\frac{\partial\hat{{\bf y}}(t,x)}{\partial{\bf w}_{i}(C_{i-1}(J_{i-1}),C_{i}(J_{i}))}-\frac{\partial\hat{y}(t,x)}{\partial w_{i}(C_{i-1}(J_{i-1}),C_{i}(J_{i}))}\right)\right)^{4}\right]\right)^{1/2}\\
 & \le K_{T}/N,
\end{align*}
by Lemmas \ref{lem:a-priori-R_bold} and \ref{lem:a-priori-NN_minus_MF}.
\end{proof}
\begin{lem}
\label{lem:R-JC}We have for $t\leq T$,
\[
\mathbb{E}_{J}\left[R_{i}(\tilde{G},t,C_{i-1}(J_{i-1}),C_{i}(J_{i}))\frac{\partial\hat{y}(t,x)}{\partial w_{i}(C_{i-1}(J_{i-1}),C_{i}(J_{i}))}\right]=\mathbb{E}_{C}\left[R_{i}(\tilde{G},t,C_{i-1},C_{i})\frac{\partial\hat{y}(t,x)}{\partial w_{i}(C_{i-1},C_{i})}\right]+{\bf O}_{T}(N^{-1/2}).
\]
\end{lem}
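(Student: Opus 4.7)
The approach is to apply Lemma \ref{lem:second-moment-dependency} with multi-index $\mathbf{i} = (i-1, i)$ to the test function
\[
f(S, c_{i-1}, c_i) = R_i(\tilde G, t, c_{i-1}, c_i) \cdot \frac{\partial \hat y(t, x)}{\partial w_i(c_{i-1}, c_i)},
\]
where $S$ collects the sampled neurons $\{C_k(j_k) : k \in [L], j_k \in [N_k]\}$. The derivative factor is purely an MF quantity, so the entire $S$-dependence of $f$ flows through the $\tilde G$-argument of $R_i$. The two hypotheses of Lemma \ref{lem:second-moment-dependency} (uniform $L^2$ bound on $f(S^{(j_{i-1},j_i)}, \cdot)$ and $O(N^{-\alpha})$ control on the $L^2$ change under a single-sample swap) then translate Lemma \ref{lem:R-JC} into well-separated subclaims.

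For the a priori $L^2$ bound, I would chain Lemma \ref{lem:lp-R}, which bounds moments of $R(G)$ in terms of those of $G$, with the moment bounds on $\tilde G$ extracted in the proof of Theorem \ref{thm:G_tilde} (Lemmas \ref{lem:tildeH-moment} and \ref{lem:partial-tildeH-moment}), together with the a priori MF bounds of Lemma \ref{lem:MF_a_priori} on $\partial \hat y/\partial w_i$. The slight subtlety is that $R$ is unbounded as a functional of $G$, so I would first work with the truncated process $R^B$ from the proof of Theorem \ref{thm:exist-R} and use the decay $\|R - R^B\|_{T,2}^2 \le \|G\|_{T,2+\epsilon}^2 \exp(-K_T \epsilon B^2/(2+\epsilon))$ to return to $R$ at the end.

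For the perturbation bound, the key observation is that $\tilde G$ is a $\sqrt N$-scaled empirical fluctuation: replacing $C_{i-1}(j_{i-1})$ or $C_i(j_i)$ by an independent copy shifts the underlying empirical averages by $O(1/N)$, hence changes $\tilde G$ by $O(N^{-1/2})$ in $\|\cdot\|_{T,2+\epsilon}$. Here Assumption \ref{Assump:Assumption_1} and Lemma \ref{lem:MF_a_priori} supply the required boundedness and regularity. By the linearity and continuity of $G \mapsto R^B(G,t)$ (Theorem \ref{thm:well-posed}), with Lipschitz constant $\exp(K_T B)$, the induced change in $R^B_i(\tilde G, t, c_{i-1}, c_i)$ is controlled. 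Choosing $B$ polylogarithmic in $N$ then balances the $\exp(K_T B)$ amplification against the $R - R^B$ truncation error.

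The main obstacle is reaching the claimed $\mathbf{O}_T(N^{-1/2})$ rate rather than the $\mathbf{O}_T(N^{-1/4})$ one would obtain naively. The straightforward perturbation argument gives exponent $\alpha = 1$ in Lemma \ref{lem:second-moment-dependency}, producing only an $N^{-1/4}$ $L^2$ error. To promote it, I would decompose $f = \bar f(c_{i-1}, c_i) + \tilde f(S, c_{i-1}, c_i)$ with $\bar f(c_{i-1}, c_i) = \mathbf{E}[f(S, c_{i-1}, c_i)]$: the $\bar f$-part is a classical $U$-statistic in the i.i.d.\ samples $C_{i-1}(J_{i-1}), C_i(J_i)$ and gives an $O(1/N^2)$ squared-$L^2$ error directly, while for the $\tilde f$-part one needs to exploit that the single-swap difference $\tilde G - \tilde G^{(k,j_k)}$ is centered under the fresh copy $C_k'(j_k)$, so that after the $\mathbf{E}$-averaging in the Hoeffding-type expansion one gains an extra factor of $N^{-1/2}$. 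Pushing this refinement through the linear action of $R^B$ yields effectively $\alpha = 2$ and the claimed $N^{-1/2}$ rate. This centering bootstrap, carried out jointly with the $R^B$ truncation, is the technical crux.
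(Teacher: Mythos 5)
Your opening three steps match the paper's proof closely. The paper applies Lemma \ref{lem:second-moment-dependency} with the same $f$ and $\mathbf{i}=(i-1,i)$; establishes the a priori $L^2$ bound by combining Lemma \ref{lem:lp-R}, Theorem \ref{thm:clt-G}, and Lemma \ref{lem:MF_a_priori}; and controls the swap-perturbation by writing $R_i(\tilde G,t)-R_i(\tilde G^{j_{\mathbf{i}}},t)=R_i(\tilde G-\tilde G^{j_{\mathbf{i}}},t)$ (linearity from Theorem \ref{thm:well-posed}), bounding the $2p$-powers $\mathbf{E}[|\tilde G-\tilde G^{j_{\mathbf{i}}}|^{2p}]\le K_{T,p}N^{-p}$ via the $\tilde G^{\tilde H}$-recursion, and then deploying the $R^B$-truncation balance $\inf_B\{\exp(K_TB)\|\cdot\|_{T,2}^2+\exp(-K_TB^2)\|\cdot\|_{T,4}^2\}$ from Theorem \ref{thm:exist-R}. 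So far you and the paper are doing the same thing.

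Where you diverge is the final step, and here your diagnosis is sharper than your fix. The paper does \emph{not} use a Hoeffding decomposition. It integrates the pointwise ODE bound $|R_i(\tilde G-\tilde G^{j_{\mathbf{i}}},t,c_{i-1},c_i)|\le N^{-1/2}[K_T+\exp(K_TM_1(c_i))+\cdots]$ to conclude $\mathbf{E}[|R_i(\tilde G)-R_i(\tilde G^{j_{\mathbf{i}}})|^4]\le K_T/N^2$, declares this sufficient, and stops. Your observation that this only yields $\alpha=1$ (i.e., $\mathbf{E}[|f(S,\cdot)-f(S^{j_{\mathbf{i}}},\cdot)|^2]\le K_T/N$ after Cauchy--Schwarz against the $\partial\hat y/\partial w_i$ factor), and hence an $N^{-1/4}$ rate through Lemma \ref{lem:second-moment-dependency}, is a legitimate concern: as written, the paper's argument produces $\alpha=1$, not $\alpha=2$, so the claimed $N^{-1/2}$ does not obviously follow. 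You noticed a genuine subtlety.

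However, the fix you propose does not close this gap as stated. The $\bar f$-part with $\bar f(c_{i-1},c_i)=\mathbf{E}[f(S,c_{i-1},c_i)]$ is a plain (non-degenerate) two-sample $U$-statistic in the i.i.d.\ grids $\{C_{i-1}(j_{i-1})\}$ and $\{C_i(j_i)\}$, whose variance is $\Theta(1/N)$ --- not the $O(1/N^2)$ you claim --- because the first-order H\'ajek projections of $\bar f$ generically do not vanish. So the $\bar f$-part alone already consumes the entire $N^{-1}$ squared-error budget, and the centering bootstrap on the $\tilde f$-part buys nothing further. To make this decomposition work you would need an additional reason why $\bar f$, and not merely $\tilde f$, is small, e.g. showing $\mathbf{E}[\tilde G]=O(N^{-1/2})$ so that $\mathbf{E}[R_i(\tilde G,t,\cdot,\cdot)]$ is itself $O(N^{-1/2})$; you do not argue this. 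As it stands, neither your step 4 nor the paper's direct argument cleanly establishes the stated $N^{-1/2}$ rate, and this is worth flagging as an open technical point rather than papering over.
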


\begin{proof}
We verify the condition of Lemma \ref{lem:second-moment-dependency}
for ${\bf i}=(i-1,i)$, ${\bf i}'=\emptyset$, $S=\{C_{k}(j_{k}):\;j_{k}\in[N_{k}],\;k\in[L]\}$
and 
\[
f(S,c_{i-1},c_{i})=R_{i}(\tilde{G},t,c_{i-1},c_{i})\frac{\partial\hat{y}(t,x)}{\partial w_{i}(c_{i-1},c_{i})}.
\]
We have from Lemma \ref{lem:lp-R} and Theorem \ref{thm:clt-G}:
\[
{\bf E}\left[\mathbb{E}_{C}\left[\left|R_{i}(\tilde{G},t,C_{i-1},C_{i})\right|^{4}\right]\right]\le K_{T},
\]
and therefore, with Lemma \ref{lem:MF_a_priori},
\[
{\bf E}\left[\left|f(S^{j_{{\bf i}}},C_{{\bf i}}(j_{{\bf i}}))\right|^{2}\right]\leq K_{T}.
\]
Let $\tilde{G}^{j_{{\bf i}}}$ be obtained similar to $\tilde{G}$
but with $C_{{\bf i}}(j_{{\bf i}})$ replaced by an independent copy
$C_{{\bf i}}'(j_{{\bf i}})$. It remains to bound
\[
{\bf E}\left[\left|f(S,C_{{\bf i}}(j_{{\bf i}}))-f(S^{j_{{\bf i}}},C_{{\bf i}}(j_{{\bf i}}))\right|^{2}\right]={\bf E}\left[\left|R_{i}(\tilde{G},t,C_{{\bf i}}(j_{{\bf i}}))\frac{\partial\hat{y}(t,x)}{\partial w_{i}(C_{{\bf i}}(j_{{\bf i}}))}-R_{i}(\tilde{G}^{j_{\mathbf{i}}},t,C_{{\bf i}}(j_{{\bf i}}))\frac{\partial\hat{y}(t,x)}{\partial w_{i}(C_{{\bf i}}(j_{{\bf i}}))}\right|^{2}\right],
\]
which -- similar to the previous second moment bound -- amounts
to proving
\[
{\bf E}\left[\left|R_{i}(\tilde{G},t,C_{{\bf i}}(j_{{\bf i}}))-R_{i}(\tilde{G}^{j_{{\bf i}}},t,C_{{\bf i}}(j_{{\bf i}}))\right|^{4}\right]\leq K_{T}/N^{2}.
\]
The proof is complete upon having this bound.

By linearity from Theorem \ref{thm:clt-G},
\[
R_{i}(\tilde{G},t,c_{i-1},c_{i})-R_{i}(\tilde{G}^{j_{{\bf i}}},t,c_{i-1},c_{i})=R_{i}(\tilde{G}-\tilde{G}^{j_{{\bf i}}},t,c_{i-1},c_{i}),
\]
and we also have:
\begin{align*}
 & -\partial_{t}R_{i}(\tilde{G}-\tilde{G}^{j_{{\bf i}}},t,c_{i-1},c_{i})\\
 & =\mathbb{E}_{Z}\left[\frac{\partial\hat{y}(t,X)}{\partial w_{i}(c_{i-1},c_{i})}\cdot\partial_{2}^{2}{\cal L}(Y,\hat{y}(t,X))\cdot\left((\tilde{G}-\tilde{G}^{j_{{\bf i}}})^{y}(t,X)+\sum_{j=1}^{L}\mathbb{E}_{C_{j-1}',C_{j}'}\left\{ R_{j}(\tilde{G}-\tilde{G}^{j_{{\bf i}}},t,C_{j-1}',C_{j}')\frac{\partial\hat{y}(t,X)}{\partial w_{j}'(C_{j-1}',C_{j}')}\right\} \right)\right]\\
 & \quad+\mathbb{E}_{Z}\left[\partial_{2}{\cal L}(Y,\hat{y}(t,X))\cdot\left((\tilde{G}-\tilde{G}^{j_{{\bf i}}})_{i}^{w}(t,c_{i-1},c_{i},X)+\sum_{j=1}^{L}\mathbb{E}_{C_{j-1}',C_{j}'}\left\{ R_{j}(\tilde{G}-\tilde{G}^{j_{{\bf i}}},t,C_{j-1}',C_{j}')\frac{\partial^{2}\hat{y}(t,X)}{\partial w_{j}(C_{j-1}',C_{j}')\partial w_{i}(c_{i-1},c_{i})}\right\} \right)\right]\\
 & \quad+\mathbb{E}_{Z}\left[\partial_{2}{\cal L}(Y,\hat{y}(t,X))\cdot\mathbb{E}_{C_{i-1}'}\left\{ R_{i}(\tilde{G}-\tilde{G}^{j_{{\bf i}}},t,C_{i-1}',c_{i})\frac{\partial^{2}\hat{y}(t,X)}{\partial_{*}w_{i}(C_{i-1}',c_{i})\partial w_{i}(c_{i-1},c_{i})}\right\} \right]\\
 & \quad+\mathbb{E}_{Z}\left[\partial_{2}{\cal L}(Y,\hat{y}(t,X))\cdot\mathbb{E}_{C_{i+1}'}\left\{ R_{i}(\tilde{G}-\tilde{G}^{j_{{\bf i}}},t,c_{i},C_{i+1}')\frac{\partial^{2}\hat{y}(t,X)}{\partial_{*}w_{i+1}(c_{i},C_{i+1}')\partial w_{i}(c_{i-1},c_{i})}\right\} \right]\\
 & \quad+\mathbb{E}_{Z}\left[\partial_{2}{\cal L}(Y,\hat{y}(t,X))\cdot\mathbb{E}_{C_{i-2}'}\left\{ R_{i}(\tilde{G}-\tilde{G}^{j_{{\bf i}}},t,C_{i-2}',c_{i-1})\frac{\partial^{2}\hat{y}(t,X)}{\partial_{*}w_{i-1}(C_{i-2}',c_{i-1})\partial w_{i}(c_{i-1},c_{i})}\right\} \right].
\end{align*}
Note that 
\begin{align*}
\tilde{G}_{j}^{\tilde{H}}-\tilde{G}_{j}^{j_{{\bf i}},\tilde{H}} & =0,\qquad\forall j\le i-1,\\
(\tilde{G}_{i}^{\tilde{H}}-\tilde{G}_{i}^{j_{{\bf i}},\tilde{H}})(t,c_{i},x) & =\frac{1}{\sqrt{N}}\bigg(w_{i}(t,C_{i-1}(j_{i-1}),c_{i})\varphi_{i-1}\left(\tilde{H}_{i-1}(t,C_{i-1}(j_{i-1}),x)\right)\\
 & \qquad\qquad-w_{i}(t,C_{i-1}'(j_{i-1}),c_{i})\varphi_{i-1}\left(\tilde{H}_{i-1}^{j_{{\bf i}}}(t,C_{i-1}'(j_{i-1}),x)\right)\bigg),\\
(\tilde{G}_{i+1}^{\tilde{H}}-\tilde{G}_{i+1}^{j_{{\bf i}},\tilde{H}})(t,c_{i+1},x) & =\frac{1}{\sqrt{N}}\left(w_{i+1}(t,C_{i}(j_{i}),c_{i+1})\varphi_{i}\left(\tilde{H}_{i}(t,C_{i}(j_{i}),x)\right)-w_{i}(t,C_{i}'(j_{i}),c_{i+1})\varphi_{i}\left(\tilde{H}_{i}^{j_{{\bf i}}}(t,C_{i}'(j_{i}),x)\right)\right),\\
 & \qquad+\frac{1}{\sqrt{N}}\sum_{j_{i}'\ne j_{i}}w_{i+1}(t,C_{i}(j_{i}'),c_{i+1})\left(\varphi_{i}\left(\tilde{H}_{i}(t,C_{i}(j_{i}'),x)\right)-\varphi_{i}\left(\tilde{H}_{i}^{j_{{\bf i}}}(t,C_{i}(j_{i}'),x)\right)\right),\\
(\tilde{G}_{\ell+1}^{\tilde{H}}-\tilde{G}_{\ell+1}^{j_{{\bf i}},\tilde{H}})(t,c_{\ell+1},x) & =\frac{1}{\sqrt{N}}\sum_{j_{\ell}}w_{\ell+1}(t,C_{\ell}(j_{\ell}),c_{\ell+1})\left(\varphi_{\ell}\left(\tilde{H}_{\ell}(t,C_{\ell}(j_{\ell}),x)\right)-\varphi_{\ell}\left(\tilde{H}_{\ell}^{{\bf i}}(t,C_{\ell}(j_{\ell}),x)\right)\right),\qquad\forall\ell\ge i+1.
\end{align*}
We bound their expected $2p$-powers. We have that for $\ell\ge i+1$,
\begin{align*}
{\bf E}\mathbb{E}_{J_{\ell+1}}\left[\left|(\tilde{G}_{\ell+1}^{\tilde{H}}-\tilde{G}_{\ell+1}^{j_{{\bf i}},\tilde{H}})(t,C_{\ell+1}(J_{\ell+1}),x)\right|^{2p}\right] & \le K_{T,p}{\bf E}\mathbb{E}_{J_{\ell}}\left[\left|(\tilde{G}_{\ell}^{\tilde{H}}-\tilde{G}_{\ell}^{j_{{\bf i}},\tilde{H}})(t,C_{\ell}(J_{\ell}),x)\right|^{2p+2}\right],\\
{\bf E}\mathbb{E}_{C_{\ell+1}}\left[\left|(\tilde{G}_{\ell+1}^{\tilde{H}}-\tilde{G}_{\ell+1}^{j_{{\bf i}},\tilde{H}})(t,C_{\ell+1},x)\right|^{2p}\right] & \le K_{T,p}{\bf E}\mathbb{E}_{J_{\ell}}\left[\left|(\tilde{G}_{\ell}^{\tilde{H}}-\tilde{G}_{\ell}^{j_{{\bf i}},\tilde{H}})(t,C_{\ell}(J_{\ell}),x)\right|^{2p+2}\right].
\end{align*}
For $\ell=i$, by Lemma \ref{lem:MF_a_priori},
\begin{align*}
 & {\bf E}\mathbb{E}_{J_{i+1}}\left[\left|(\tilde{G}_{i+1}^{\tilde{H}}-\tilde{G}_{i+1}^{j_{{\bf i}},\tilde{H}})(t,C_{i+1}(J_{i+1}),x)\right|^{2p}\right]\\
 & \le K_{T,p}{\bf E}\mathbb{E}_{J_{i}}\left[\left|(\tilde{G}_{i}^{\tilde{H}}-\tilde{G}_{i}^{j_{{\bf i}},\tilde{H}})(t,C_{i}(J_{i}),x)\right|^{2p+2}\right]\\
 & \quad+K_{T,p}{\bf E}\mathbb{E}_{J_{i+1}}\left[\left|\frac{1}{\sqrt{N}}\left(w_{i+1}(t,C_{i}(j_{i}),C_{i+1}(J_{i+1}))\varphi_{i}\left(\tilde{H}_{i}(t,C_{i}(j_{i}),x)\right)-w_{i}(t,C_{i}'(j_{i}),C_{i+1}(J_{i+1}))\varphi_{i}\left(\tilde{H}_{i}^{j_{\mathbf{i}}}(t,C_{i}'(j_{i}),x)\right)\right)\right|^{2p}\right]\\
 & \le K_{T,p}{\bf E}\mathbb{E}_{J_{i}}\left[\left|(\tilde{G}_{i}^{\tilde{H}}-\tilde{G}_{i}^{j_{{\bf i}},\tilde{H}})(t,C_{i}(J_{i}),x)\right|^{2p+2}\right]+K_{T,p}N^{-p}.
\end{align*}
Similarly, 
\begin{align*}
{\bf E}\mathbb{E}_{C_{i+1}}\left[\left|(\tilde{G}_{i+1}^{\tilde{H}}-\tilde{G}_{i+1}^{j_{{\bf i}},\tilde{H}})(t,C_{i+1},x)\right|^{2p}\right] & \le KK_{T,p}{\bf E}\mathbb{E}_{J_{i}}\left[\left|(\tilde{G}_{i}^{\tilde{H}}-\tilde{G}_{i}^{j_{{\bf i}},\tilde{H}})(t,C_{i}(J_{i}),x)\right|^{2p+2}\right]+K_{T,p}N^{-p}.
\end{align*}
Finally at $\ell=i-1$, by the same argument:
\begin{align*}
{\bf E}\mathbb{E}_{J_{i}}\left[\left|(\tilde{G}_{i}^{\tilde{H}}-\tilde{G}_{i}^{j_{{\bf i}},\tilde{H}})(t,C_{i}(J_{i}),x)\right|^{2p}\right] & \le K_{T,p}N^{-p},\\
{\bf E}\mathbb{E}_{C_{i}}\left[\left|(\tilde{G}_{i}^{\tilde{H}}-\tilde{G}_{i}^{j_{{\bf i}},\tilde{H}})(t,C_{i}(C_{i}),x)\right|^{2p}\right] & \le K_{T,p}N^{-p}.
\end{align*}
As such, all these expected $2p$-powers are bounded by $K_{T,p}N^{-p}$.
We derive similar bounds for $\tilde{G}^{\partial\tilde{H}}-\tilde{G}^{j_{{\bf i}},\partial\tilde{H}}$,
$\tilde{G}^{w}-\tilde{G}^{j_{{\bf i}},w}$.

The estimates in Theorem \ref{thm:exist-R} imply that
\[
\mathbb{E}_{C_{j-1}',C_{j}'}\left\{ \left|R_{j}(\tilde{G}-\tilde{G}^{j_{\mathbf{i}}},t,C_{j-1}',C_{j}')\right|^{2}\right\} \le\inf_{B}\left\{ \exp(K_{T}B)\|\tilde{G}-\tilde{G}^{j_{\mathbf{i}}}\|_{T,2}^{2}+\exp(-K_{T}B^{2})\|\tilde{G}-\tilde{G}^{j_{\mathbf{i}}}\|_{T,4}^{2}\right\} \le K_{T}N^{-1}.
\]
Similarly, letting 
\begin{align*}
M_{1}(c_{i}) & =\sup_{t\leq T}\max(\|w_{i}(t,c_{i})\|_{2},\|w_{i+1}(t,c_{i})\|_{2},|w_{i}(t,C_{i-1}(j_{i-1}),c_{i})|,|w_{i}(t,C_{i-1}(j_{i-1}'),c_{i})|),\\
M_{2}(c_{i-1}) & =\sup_{t\leq T}\max(\|w_{i}(t,c_{i-1})\|_{2},\|w_{i-1}(t,c_{i-1})\|_{2},|w_{i}(t,c_{i-1},C_{i}(j_{i}))|,|w_{i}(t,c_{i-1},C_{i}(j_{i}'))|),
\end{align*}
we have 
\begin{align*}
\mathbb{E}_{C_{i-1}'}\left\{ R_{i}(\tilde{G}-\tilde{G}^{j_{{\bf i}}},t,C_{i-1}',c_{i})^{2}\right\}  & \le N^{-1}\exp(K_{T}M_{1}(c_{i})),\\
\mathbb{E}_{C_{i+1}'}\left\{ R_{i+1}(\tilde{G}-\tilde{G}^{j_{{\bf i}}},t,c_{i},C_{i+1}')^{2}\right\}  & \le N^{-1}\exp(K_{T}M_{2}(c_{i})),\\
\mathbb{E}_{C_{i-2}'}\left\{ R_{i-1}(\tilde{G}-\tilde{G}^{j_{{\bf i}}},t,C_{i-2}',c_{i-1})^{2}\right\}  & \le N^{-1}\exp(K_{T}M_{1}(c_{i-1})).
\end{align*}
All of these estimates give a final bound:
\[
\partial_{t}\left|R_{i}(\tilde{G}-\tilde{G}^{j_{{\bf i}}},t,c_{i-1},c_{i})\right|\leq N^{-1/2}\left[K_{T}+\exp(K_{T}M_{1}(c_{i}))+\exp(K_{T}M_{2}(c_{i}))+\exp(K_{T}M_{1}(c_{i-1}))\right].
\]
In particular, since it is initialized at zero, we have:
\[
\left|R_{i}(\tilde{G}-\tilde{G}^{j_{{\bf i}}},t,c_{i-1},c_{i})\right|^{4}\leq N^{-2}\left[K_{T}+\exp(K_{T}M_{1}(c_{i}))+\exp(K_{T}M_{2}(c_{i}))+\exp(K_{T}M_{1}(c_{i-1}))\right].
\]
Lemma \ref{lem:MF_a_priori} implies that $\mathbf{E}\left[\exp(K_{T}M_{1}(C_{i}(j_{i}))\right]\leq K_{T}$
and similarly for other terms. We thus get:
\[
{\bf E}\left[\left|R_{i}(\tilde{G},t,C_{{\bf i}}(j_{{\bf i}}))-R_{i}(\tilde{G}^{j_{{\bf i}}},t,C_{{\bf i}}(j_{{\bf i}}))\right|^{4}\right]\leq K_{T}/N^{2},
\]
as desired.
\end{proof}
\begin{lem}
\label{lem:R-JC_NN}We have for $t\leq T$ and any $B>1$,
\begin{align*}
 & \mathbb{E}_{J}\left[{\bf R}_{i}(t,J_{i-1},J_{i})\frac{\partial\hat{y}(t,x)}{\partial w_{i}(C_{i-1}(J_{i-1}),C_{i}(J_{i}))}\right]\\
 & =K_{T}\mathbb{E}_{J}\left[\left({\bf R}_{i}(t,J_{i-1},J_{i})-R_{i}(\tilde{G},t,C_{i-1}(J_{i-1}),C_{i}(J_{i}))\right)^{2}\right]^{1/2}+\mathbb{E}_{C}\left[R_{i}(\tilde{G},t,C_{i-1},C_{i})\frac{\partial\hat{y}(t,x)}{\partial w_{i}(C_{i-1},C_{i})}\right]+{\bf O}_{T}(N^{-1/2}).
\end{align*}
\end{lem}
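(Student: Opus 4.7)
The statement is a direct consequence of Lemma \ref{lem:R-JC} combined with a Cauchy--Schwarz bound, and the plan is to split the quantity of interest into two pieces according to the identity
\[
{\bf R}_i(t,J_{i-1},J_i) = R_i(\tilde G,t,C_{i-1}(J_{i-1}),C_i(J_i)) + \bigl({\bf R}_i(t,J_{i-1},J_i) - R_i(\tilde G,t,C_{i-1}(J_{i-1}),C_i(J_i))\bigr).
\]
Substituting this into $\mathbb{E}_J$ and using linearity produces a main term (involving $R_i(\tilde G,\cdot)$) that will be handled by the previous lemma, and a remainder term (involving the difference ${\bf R}_i - R_i$) that will be controlled by Cauchy--Schwarz.

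For the remainder term, I will apply Cauchy--Schwarz in $\mathbb{E}_J$ to obtain
\[
\Bigl|\mathbb{E}_J\Bigl[\bigl({\bf R}_i(t,J_{i-1},J_i) - R_i(\tilde G,t,C_{i-1}(J_{i-1}),C_i(J_i))\bigr)\tfrac{\partial\hat y(t,x)}{\partial w_i(C_{i-1}(J_{i-1}),C_i(J_i))}\Bigr]\Bigr| \le \mathbb{E}_J\bigl[(\cdots)^2\bigr]^{1/2}\,\mathbb{E}_J\Bigl[\Bigl(\tfrac{\partial\hat y(t,x)}{\partial w_i(C_{i-1}(J_{i-1}),C_i(J_i))}\Bigr)^2\Bigr]^{1/2},
\]
and then I will note that the second factor is bounded by $K_T$ uniformly in $x$ and $t \le T$: indeed, the MF derivative $\partial\hat y/\partial w_i$ is a product involving one factor of $w_{i+1}$ (or constants from the activations and the $x$-bound), so by Lemma \ref{lem:MF_a_priori} and Assumption \ref{Assump:Assumption_1}, the second moment under $\mathbb{E}_J$ is at most $K_T$. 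This produces the $K_T\,\mathbb{E}_J[\cdots]^{1/2}$ term in the claimed identity.

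For the main term, I will simply invoke Lemma \ref{lem:R-JC} directly to replace $\mathbb{E}_J[R_i(\tilde G,t,C_{i-1}(J_{i-1}),C_i(J_i))\,\partial\hat y/\partial w_i(\cdots)]$ with $\mathbb{E}_C[R_i(\tilde G,t,C_{i-1},C_i)\,\partial\hat y/\partial w_i(C_{i-1},C_i)]$ modulo a $\mathbf{O}_T(N^{-1/2})$ error. Combining the two pieces yields the stated equality.

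There is essentially no major obstacle here, since the heavy lifting has been done in Lemmas \ref{lem:second-moment-dependency}, \ref{lem:MF_a_priori}, and \ref{lem:R-JC}; the only point requiring any care is ensuring that the derivative factor $\partial\hat y/\partial w_i$ in the remainder has uniformly bounded second moment under $\mathbb{E}_J$, which follows from the recursive product structure of the MF backward quantity combined with the moment bounds of Lemma \ref{lem:MF_a_priori}. The result then follows in one short computation.
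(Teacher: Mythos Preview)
Your decomposition and use of Cauchy--Schwarz match the paper's approach, and the appeal to Lemma~\ref{lem:R-JC} for the main term is correct. However, there is a genuine gap in your treatment of the remainder for $i<L$.

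The quantity
\[
\mathbb{E}_{J}\Bigl[\Bigl|\tfrac{\partial\hat y(t,x)}{\partial w_i(C_{i-1}(J_{i-1}),C_i(J_i))}\Bigr|^{2}\Bigr]
\]
is \emph{random}: it depends on the sampled neurons $\{C_i(j_i)\}$ through the factor $w_{i+1}(C_i(j_i),\cdot)$ appearing in $\partial\hat y/\partial H_i$. Since $w_{i+1}$ is only sub-Gaussian (not uniformly bounded), this empirical second moment is \emph{not} bounded by a deterministic constant $K_T$ almost surely. Lemma~\ref{lem:MF_a_priori} gives $\mathbf{E}\mathbb{E}_J[\cdots]=\mathbb{E}_C[\cdots]\le K_T$, but your Cauchy--Schwarz step needs the pointwise (sample-wise) bound, which fails. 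For $i=L$ your direct argument does work because $\partial\hat y/\partial w_L$ involves only bounded activation factors.

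The paper fills this gap via a truncation at a level $B>1$: split according to whether $\mathbb{E}_{J}\mathbb{E}_{C_{i+1}}\bigl[|w_{i+1}(C_i(J_i),C_{i+1})|^{2}\bigr]\ge B^{2}$. On the good event the derivative factor is controlled by $KB$, giving the term $K_T\,\mathbb{E}_J[(\mathbf R_i-R_i)^2]^{1/2}$. On the bad event---whose $\mathbf{E}$-probability is exponentially small in $B$ by the sub-Gaussian moment bounds of Lemma~\ref{lem:MF_a_priori}---one applies Cauchy--Schwarz in $\mathbf{E}$ together with the fourth-moment bounds $\mathbf{E}\mathbb{E}_J[\mathbf R_i^4]\le K_T$ (Lemma~\ref{lem:a-priori-R_bold}) and $\mathbf{E}\mathbb{E}_C[R_i(\tilde G,\cdot)^4]\le K_T$ (Theorem~\ref{thm:exist-R}). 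Choosing $B=B(T)$ suitably absorbs the tail contribution into ${\bf O}_T(N^{-1/2})$.
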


\begin{proof}
We have:
\begin{align*}
 & \left|\mathbb{E}_{J}\left[\left({\bf R}_{i}(t,J_{i-1},J_{i})-R_{i}(\tilde{G},t,C_{i-1}(J_{i-1}),C_{i}(J_{i}))\right)\frac{\partial\hat{y}(t,x)}{\partial w_{i}(C_{i-1}(J_{i-1}),C_{i}(J_{i}))}\right]\right|^{2}\\
 & \le\mathbb{E}_{J}\left[\left({\bf R}_{i}(t,J_{i-1},J_{i})-R_{i}(\tilde{G},t,C_{i-1}(J_{i-1}),C_{i}(J_{i}))\right)^{2}\right]\mathbb{E}_{J}\left[\left|\frac{\partial\hat{y}(t,x)}{\partial w_{i}(C_{i-1}(J_{i-1}),C_{i}(J_{i}))}\right|^{2}\right].
\end{align*}
For $i=L$:
\[
\mathbb{E}_{J}\left[\left|\frac{\partial\hat{y}(t,x)}{\partial w_{i}(C_{i-1}(J_{i-1}),C_{i}(J_{i}))}\right|^{2}\right]\leq K
\]
and so the conclusion follows from Lemma \ref{lem:R-JC}.

For $i<L$, we have for any $B>1$:
\begin{align*}
\mathbb{E}_{J}\left[\left|\frac{\partial\hat{y}(t,x)}{\partial w_{i}(C_{i-1}(J_{i-1}),C_{i}(J_{i}))}\right|^{2}\right] & \leq K\mathbb{E}_{J}\mathbb{E}_{C_{i+1}}\left[\left|w_{i+1}(C_{i}(J_{i}),C_{i+1})\right|^{2}\right]\\
 & \leq KB+K\mathbb{E}_{J}\mathbb{E}_{C_{i+1}}\left[\left|w_{i+1}(C_{i}(J_{i}),C_{i+1})\right|^{2}\right]\mathbb{I}\left(\mathbb{E}_{J}\mathbb{E}_{C_{i+1}}\left[\left|w_{i+1}(C_{i}(J_{i}),C_{i+1})\right|^{2}\right]\geq B\right).
\end{align*}
Therefore, by Lemmas \ref{lem:MF_a_priori}, \ref{lem:a-priori-R_bold}
and Theorem \ref{thm:exist-R}:
\begin{align*}
 & \mathbf{E}\left[\left|\mathbb{E}_{J}\left[\left({\bf R}_{i}(t,J_{i-1},J_{i})-R_{i}(\tilde{G},t,C_{i-1}(J_{i-1}),C_{i}(J_{i}))\right)\frac{\partial\hat{y}(t,x)}{\partial w_{i}(C_{i-1}(J_{i-1}),C_{i}(J_{i}))}\right]\right|^{2}\right]\\
 & \le KB\mathbf{E}\mathbb{E}_{J}\left[\left({\bf R}_{i}(t,J_{i-1},J_{i})-R_{i}(\tilde{G},t,C_{i-1}(J_{i-1}),C_{i}(J_{i}))\right)^{2}\right]\\
 & \quad+K_{T}\mathbf{E}\left[\mathbb{I}\left(\mathbb{E}_{J}\mathbb{E}_{C_{i+1}}\left[\left|w_{i+1}(C_{i}(J_{i}),C_{i+1})\right|^{2}\right]\geq B^{2}\right)\right]^{1/4}\mathbf{E}\mathbb{E}_{J}\left[\left({\bf R}_{i}(t,J_{i-1},J_{i})\right)^{4}+\left(R_{i}(\tilde{G},t,C_{i-1}(J_{i-1}),C_{i}(J_{i}))\right)^{4}\right]^{1/2}\\
 & \le KB\mathbf{E}\mathbb{E}_{J}\left[\left({\bf R}_{i}(t,J_{i-1},J_{i})-R_{i}(\tilde{G},t,C_{i-1}(J_{i-1}),C_{i}(J_{i}))\right)^{2}\right]+e^{-K_{T}BN},
\end{align*}
for a suitable finite constant $B=B\left(T\right)$. The conclusion
again follows from Lemma \ref{lem:R-JC}.
\end{proof}
\begin{lem}
\label{lem:dy-dw-JC_NN}We have for $t\leq T$:
\begin{align*}
 & \sqrt{N}\left(\frac{\partial\hat{{\bf y}}(t,x)}{\partial{\bf w}(j_{i-1},j_{i})}-\frac{\partial\tilde{y}(t,x)}{\partial w(C_{i-1}(j_{i-1}),C_{i}(j_{i}))}\right)\\
 & =\mathbb{E}_{C}\bigg[\sum_{r=1}^{L}R_{r}\left(\tilde{G},t,C_{r-1},C_{r}\right)\frac{\partial^{2}\hat{y}\left(t,x\right)}{\partial w_{r}\left(C_{r-1},C_{r}\right)\partial w_{i}\left(C_{i-1}(j_{i-1}),C_{i}(j_{i})\right)}\bigg]\\
 & \quad+\mathbb{E}_{C}\bigg[R_{i-1}\left(\tilde{G},t,C_{i-2},C_{i-1}(j_{i-1})\right)\frac{\partial^{2}\hat{y}\left(t,x\right)}{\partial_{*}w_{i-1}\left(C_{i-2},C_{i-1}(j_{i-1})\right)\partial w_{i}\left(C_{i-1}(j_{i-1}),C_{i}(j_{i})\right)}\bigg]\\
 & \quad+\mathbb{E}_{C}\bigg[R_{i}\left(\tilde{G},t,C_{i-1},C_{i}(j_{i})\right)\frac{\partial^{2}\hat{y}\left(t,x\right)}{\partial_{*}w_{i}\left(C_{i-1},C_{i}(j_{i})\right)\partial w_{i}\left(C_{i-1}(j_{i-1}),C_{i}(j_{i})\right)}\bigg]\\
 & \quad+\mathbb{E}_{C}\bigg[R_{i+1}\left(\tilde{G},t,C_{i}(j_{i}),C_{i+1}\right)\frac{\partial^{2}\hat{y}\left(t,x\right)}{\partial_{*}w_{i+1}\left(C_{i}(j_{i}),C_{i+1}\right)\partial w_{i}\left(C_{i-1}(j_{i-1}),C_{i}(j_{i})\right)}\bigg]\\
 & \quad+K_{T}\mathbb{E}_{J}\left[\left({\bf R}_{i}(t,J_{i-1},J_{i})-R_{i}(\tilde{G},t,C_{i-1}(J_{i-1}),C_{i}(J_{i}))\right)^{2}\right]^{1/2}\\
 & \quad+{\bf O}_{T}(N^{-1/2}).
\end{align*}
\end{lem}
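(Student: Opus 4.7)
\smallskip

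The plan is to perform a first-order Taylor expansion of the LHS with respect to the perturbation of each weight family $w_r(C_{r-1}(\cdot),C_r(\cdot))\mapsto\mathbf{w}_r(\cdot,\cdot)$, converting $\sqrt{N}(\mathbf{w}_r-w_r)$ into $\mathbf{R}_r$, then compare $\mathbf{R}_r$ with $R_r(\tilde{G})$, and finally pass empirical $\mathbb{E}_J$ averages to their ideal $\mathbb{E}_C$ counterparts. The four terms in the RHS are precisely those that appear on the right of Eq.~(\ref{eq:ODE_2ndMF-alt}); the boundary terms involving $\partial_*$ derivatives will arise from the special situations where the perturbed index $(J_{r-1},J_r)$ coincides with $(j_{i-1},j_i)$ in one coordinate (at layers $r\in\{i-1,i,i+1\}$), while the remaining generic interactions will form the bulk $\sum_{r=1}^L$ term.

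Concretely, first I would write
\[
\sqrt{N}\left(\tfrac{\partial\hat{\mathbf{y}}}{\partial\mathbf{w}_i(j_{i-1},j_i)}-\tfrac{\partial\tilde{y}}{\partial w_i(C_{i-1}(j_{i-1}),C_i(j_i))}\right)=\sum_{r=1}^{L}\mathbb{E}_{J}\!\Big[\sqrt{N}(\mathbf{w}_r(J_{r-1},J_r)-w_r(C_{r-1}(J_{r-1}),C_r(J_r)))\cdot D_{r,i}^{\mathrm{mvt}}(t,x;j_{i-1},j_i,J_{r-1},J_r)\Big],
\]
where $D_{r,i}^{\mathrm{mvt}}$ is a ``mean-value'' version of $\partial^2\tilde{y}/(\partial w_r\partial w_i)$ obtained by iterating the mean value theorem through the recursive definitions of the forward quantities $\tilde H_i$ and the backward quantities $\partial\tilde{y}/\partial\tilde H_i$ laid out in Appendix~\ref{Appendix:Prelim}. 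The error from replacing $D_{r,i}^{\mathrm{mvt}}$ with $\partial^2\tilde{y}/(\partial w_r\partial w_i)$ is $\mathbf{O}_T(N^{-1/2})$ by Cauchy--Schwarz, using Lemmas \ref{lem:a-priori-tilde_minus_MF}, \ref{lem:a-priori-NN_minus_MF} and \ref{lem:MF_a_priori} to control the second-order moments; the same set of lemmas then lets me further swap $\partial^2\tilde{y}/(\partial w_r\partial w_i)$ for $\partial^2\hat{y}/(\partial w_r\partial w_i)$.

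Next I substitute $\sqrt{N}(\mathbf{w}_r-w_r)=\mathbf{R}_r$, split as $\mathbf{R}_r=R_r(\tilde{G})+(\mathbf{R}_r-R_r(\tilde{G}))$, and note the contribution of the latter is bounded by Cauchy--Schwarz by $K_T\,\mathbb{E}_J[|\mathbf{R}_r-R_r(\tilde{G})|^2]^{1/2}$; since the goal in Theorem~\ref{thm:2nd_order_MF} is a Gronwall argument summing over $r$, I only need to keep the $i$-indexed term explicit as in the statement of the present lemma, absorbing the contributions from $r\ne i$ into the constant $K_T$ by appealing to the uniform-in-$r$ version of the same bound. For the leading piece $R_r(\tilde{G})$ I would then split the empirical average $\mathbb{E}_J$ into (i) the generic piece where $(J_{r-1},J_r)$ differs from $(j_{i-1},j_i)$ in both coordinates, which under $\mathbb{E}_C$ produces the bulk term $\sum_{r=1}^L\mathbb{E}_C[R_r\cdot\partial^2\hat y/\cdots]$, and (ii) the exceptional pieces at $r\in\{i-1,i,i+1\}$ where one of the indices coincides with $j_{i-1}$ or $j_i$, producing the three $\partial_*$ boundary terms. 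The convergence of the generic piece from $\mathbb{E}_J$ to $\mathbb{E}_C$ is an instance of Lemma~\ref{lem:second-moment-dependency} with the test function $R_r(\tilde{G},t,\cdot,\cdot)\cdot\partial^2\hat y/(\partial w_r\partial w_i)$, using the resampling-difference bound $\mathbf{E}[|R_r(\tilde{G})-R_r(\tilde{G}^{j_{\mathbf{i}}})|^4]\le K_T/N^2$ already established inside the proof of Lemma~\ref{lem:R-JC}.

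The main obstacle will be handling the second-derivative quantities, which contain nested products of $w$'s (both $w_i$ and $w_{i+1}$ appear) that are only $L^p$ for finite $p$. This interacts with the fact that the test function $R_r(\tilde G,t,\cdot,\cdot)$ itself depends on $\tilde G$ and thus shares randomness with the sample $S=\{C_\ell(j_\ell)\}$. To apply Lemma~\ref{lem:second-moment-dependency} cleanly, I will invoke the truncation-plus-Cauchy--Schwarz maneuver used in Lemma~\ref{lem:R-JC_NN}: introduce the event that the relevant $L^2$-norms of weight rows are bounded by $B$, run the LLN argument on this event to get polynomial error $K_TB\cdot N^{-1/2}$, and use Lemma~\ref{lem:trunc-bound-w} together with Lemmas~\ref{lem:lp-R} and \ref{lem:a-priori-R_bold} to get exponentially small mass off the event, then optimize in $B$. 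The boundary $\partial_*$ terms are treated identically but with the test function $R_r(\tilde{G},t,C_{r-1},c_{r-1})$ or $R_r(\tilde{G},t,c_r,C_{r+1})$ conditioned on one coordinate being fixed at $C_{\cdot}(j_{\cdot})$; the same resampling bound of Lemma~\ref{lem:R-JC} applies because $R_r$ is linear in $\tilde{G}$ by Theorem~\ref{thm:well-posed}.
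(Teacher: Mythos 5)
Your proposal matches the paper's proof in both structure and key ingredients: a Taylor expansion of the derivative difference into $\mathbf{R}_r$ times second-derivative quantities (with the $\partial_*$ boundary terms at $r\in\{i-1,i,i+1\}$ arising from index coincidence with the fixed $(j_{i-1},j_i)$), the $O(N^{-1})$ mean-square control of the remainder via Lemma~\ref{lem:a-priori-NN_minus_MF}, the split $\mathbf{R}_r=R_r(\tilde G)+(\mathbf{R}_r-R_r(\tilde G))$, and the $\mathbb{E}_J\to\mathbb{E}_C$ concentration via Lemma~\ref{lem:second-moment-dependency} with the resampling estimate from Lemma~\ref{lem:R-JC} and the $B$-truncation trick from Lemma~\ref{lem:R-JC_NN}. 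One small clarification regarding your remark about ``absorbing $r\ne i$ into $K_T$'': in the paper's conventions (top of Appendix~\ref{Appendix:prop_chaos}) $\mathbb{E}_J\big[|\mathbf{R}_i-R_i|^2\big]^{1/2}$ is shorthand for the sum over all layers, so nothing needs to be absorbed; otherwise your reasoning is sound and tracks the paper's.
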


\begin{proof}
By Lemma \ref{lem:a-priori-NN_minus_MF}, we have:
\begin{align*}
 & {\bf E}\mathbb{E}_{J}\bigg[\bigg(\sqrt{N}\left(\frac{\partial\hat{{\bf y}}(t,x)}{\partial{\bf w}(J_{i-1},J_{i})}-\frac{\partial\tilde{y}(t,x)}{\partial w(C_{i-1}(J_{i-1}),C_{i}(J_{i}))}\right)\\
 & \quad-\sum_{j=1}^{L}\mathbb{E}_{J'}\bigg[{\bf R}_{j}(t,J_{j-1}',J_{j}')\frac{\partial^{2}\hat{y}(t,x)}{\partial w_{j}(C_{j-1}(J_{j-1}'),C_{j}(J_{j}'))\partial w_{i}(C_{i-1}(J_{i-1}),C_{i}(J_{i}))}\bigg]\\
 & \quad-\mathbb{E}_{J'}\bigg[\mathbf{R}_{i-1}\left(t,J_{i-2}',J_{i-1}\right)\frac{\partial^{2}\hat{y}\left(t,x\right)}{\partial_{*}w_{i-1}\left(C_{i-2}(J_{i-2}'),C_{i-1}(J_{i-1})\right)\partial w_{i}\left(C_{i-1}(J_{i-1}),C_{i}(J_{i})\right)}\bigg]\\
 & \quad-\mathbb{E}_{J'}\bigg[\mathbf{R}_{i}\left(t,J_{i-1}',J_{i}\right)\frac{\partial^{2}\hat{y}\left(t,x\right)}{\partial_{*}w_{i}\left(C_{i-1}(J_{i-1}'),C_{i}(J_{i})\right)\partial w_{i}\left(C_{i-1}(J_{i-1}),C_{i}(J_{i})\right)}\bigg]\\
 & \quad-\mathbb{E}_{J'}\bigg[\mathbf{R}_{i+1}\left(t,J_{i},J_{i+1}'\right)\frac{\partial^{2}\hat{y}\left(t,x\right)}{\partial_{*}w_{i+1}\left(C_{i}(J_{i}),C_{i+1}(J_{i+1}')\right)\partial w_{i}\left(C_{i-1}(J_{i-1}),C_{i}(J_{i})\right)}\bigg]\bigg)^{2}\bigg]\\
 & \le K_{T}N^{-1}.
\end{align*}
Following the argument in Lemmas \ref{lem:R-JC} and \ref{lem:R-JC_NN},
we obtain:
\begin{align*}
 & \mathbb{E}_{J'}\bigg[{\bf R}_{j}(t,J_{j-1}',J_{j}')\frac{\partial^{2}\hat{y}(t,x)}{\partial w_{j}(C_{j-1}(J_{j-1}'),C_{j}(J_{j}'))\partial w_{i}(C_{i-1}(J_{i-1}),C_{i}(J_{i}))}\bigg]\\
 & =\mathbb{E}_{C'}\bigg[R_{j}(\tilde{G},t,C_{j-1}',C_{j}')\frac{\partial^{2}\hat{y}(t,x)}{\partial w_{j}(C_{j-1}',C_{j}')\partial w_{i}(C_{i-1}(J_{i-1}),C_{i}(J_{i}))}\bigg]\\
 & \qquad+K_{T}\mathbb{E}_{J}\left[\left({\bf R}_{i}(t,J_{i-1},J_{i})-R_{i}(\tilde{G},t,C_{i-1}(J_{i-1}),C_{i}(J_{i}))\right)^{2}\right]^{1/2}+{\bf O}_{T}(N^{-1/2}),
\end{align*}
and similarly for the rest of the terms.
\end{proof}

\subsection{Proof of Theorem \ref{thm:2nd_order_MF}}
\begin{proof}[Proof of Theorem \ref{thm:2nd_order_MF}]
Recall that 
\begin{align*}
\partial_{t}{\bf R}_{i}(t,j_{i-1},j_{i}) & =\sqrt{N}\left(\partial_{t}\mathbf{w}_{i}(t,j_{i-1},j_{i})-\partial_{t}w_{i}(t,C_{i-1}(j_{i-1}),C_{i}(j_{i}))\right)\\
 & =-\mathbb{E}_{Z}\left[\sqrt{N}\left(\frac{\partial\hat{{\bf y}}(t,X)}{\partial{\bf w}_{i}(j_{i-1},j_{i})}-\frac{\partial\hat{y}(t,X)}{\partial w_{i}(C_{i-1}(j_{i-1}),C_{i}(j_{i}))}\right)\partial_{2}{\cal L}(Y,\hat{y}(t,X))\right]\\
 & \quad-\mathbb{E}_{Z}\left[\sqrt{N}\left(\partial_{2}{\cal L}(Y,\hat{{\bf y}}(t,X))-\partial_{2}{\cal L}(Y,\hat{y}(t,X))\right)\frac{\partial\hat{y}(t,X)}{\partial w_{i}(C_{i-1}(j_{i-1}),C_{i}(j_{i}))}\right]\\
 & \quad-\mathbb{E}_{Z}\left[\sqrt{N}\left(\partial_{2}{\cal L}(Y,\hat{{\bf y}}(t,X))-\partial_{2}{\cal L}(Y,\hat{y}(t,X))\right)\left(\frac{\partial\hat{{\bf y}}(t,X)}{\partial{\bf w}_{i}(j_{i-1},j_{i})}-\frac{\partial\hat{y}(t,X)}{\partial w_{i}(C_{i-1}(j_{i-1}),C_{i}(j_{i}))}\right)\right]\\
 & =-\mathbb{E}_{Z}\left[\sqrt{N}\left(\frac{\partial\hat{{\bf y}}(t,X)}{\partial{\bf w}_{i}(j_{i-1},j_{i})}-\frac{\partial\tilde{y}(t,X)}{\partial w_{i}(C_{i-1}(j_{i-1}),C_{i}(j_{i}))}\right)\partial_{2}{\cal L}(Y,\hat{y}(t,X))\right]\\
 & \quad-\mathbb{E}_{Z}\left[\tilde{G}_{i}^{w}\left(t,C_{i-1}(j_{i-1}),C_{i}(j_{i}),X\right)\partial_{2}{\cal L}(Y,\hat{y}(t,X))\right]\\
 & \quad-\mathbb{E}_{Z}\left[\sqrt{N}\left(\partial_{2}{\cal L}(Y,\hat{{\bf y}}(t,X))-\partial_{2}{\cal L}(Y,\hat{y}(t,X))\right)\frac{\partial\hat{y}(t,X)}{\partial w_{i}(C_{i-1}(j_{i-1}),C_{i}(j_{i}))}\right]\\
 & \quad-\mathbb{E}_{Z}\left[\sqrt{N}\left(\partial_{2}{\cal L}(Y,\hat{{\bf y}}(t,X))-\partial_{2}{\cal L}(Y,\hat{y}(t,X))\right)\left(\frac{\partial\hat{{\bf y}}(t,X)}{\partial{\bf w}_{i}(j_{i-1},j_{i})}-\frac{\partial\hat{y}(t,X)}{\partial w_{i}(C_{i-1}(j_{i-1}),C_{i}(j_{i}))}\right)\right].
\end{align*}
Comparing with Eq. (\ref{eq:ODE_2ndMF-alt}):
\begin{align*}
\partial_{t}{\bf R}_{i}(t,j_{i-1},j_{i})-\partial_{t}R_{i}(\tilde{G},t,C_{i-1}(j_{i-1}),C_{i}(j_{i})) & =-\mathbb{E}_{Z}\left[A_{1}\left(Z,j_{i-1},j_{i}\right)+A_{2}\left(Z,j_{i-1},j_{i}\right)+A_{3}\left(Z,j_{i-1},j_{i}\right)\right],
\end{align*}
in which
\begin{align*}
A_{1}\left(z,j_{i-1},j_{i}\right) & =\partial_{2}{\cal L}(Y,\hat{y}(t,x))\bigg\{\sqrt{N}\left(\frac{\partial\hat{{\bf y}}(t,x)}{\partial{\bf w}_{i}(j_{i-1},j_{i})}-\frac{\partial\tilde{y}(t,x)}{\partial w_{i}\left(C_{i-1}(j_{i-1}),C_{i}(j_{i})\right)}\right)\\
 & \quad-\mathbb{E}_{C}\bigg[\sum_{r=1}^{L}R_{r}\left(\tilde{G},t,C_{r-1},C_{r}\right)\frac{\partial^{2}\hat{y}\left(t,X\right)}{\partial w_{r}\left(C_{r-1},C_{r}\right)\partial w_{i}\left(C_{i-1}(j_{i-1}),C_{i}(j_{i})\right)}\bigg]\\
 & \quad-\mathbb{E}_{C}\bigg[R_{i-1}\left(\tilde{G},t,C_{i-2},C_{i-1}(j_{i-1})\right)\frac{\partial^{2}\hat{y}\left(t,X\right)}{\partial_{*}w_{i-1}\left(C_{i-2},C_{i-1}(j_{i-1})\right)\partial w_{i}\left(C_{i-1}(j_{i-1}),C_{i}(j_{i})\right)}\bigg]\\
 & \quad-\mathbb{E}_{C}\bigg[R_{i}\left(\tilde{G},t,C_{i-1},C_{i}(j_{i})\right)\frac{\partial^{2}\hat{y}\left(t,X\right)}{\partial_{*}w_{i}\left(C_{i-1},C_{i}(j_{i})\right)\partial w_{i}\left(C_{i-1}(j_{i-1}),C_{i}(j_{i})\right)}\bigg]\\
 & \quad-\mathbb{E}_{C}\bigg[R_{i+1}\left(\tilde{G},t,C_{i}(j_{i}),C_{i+1}\right)\frac{\partial^{2}\hat{y}\left(t,X\right)}{\partial_{*}w_{i+1}\left(C_{i}(j_{i}),C_{i+1}\right)\partial w_{i}\left(C_{i-1}(j_{i-1}),C_{i}(j_{i})\right)}\bigg]\bigg\},\\
A_{2}\left(z,j_{i-1},j_{i}\right) & =\frac{\partial\hat{y}(t,x)}{\partial w_{i}(C_{i-1}(j_{i-1}),C_{i}(j_{i}))}A_{2,1}\left(z\right),\\
A_{2,1}\left(z\right) & =\sqrt{N}\left(\partial_{2}{\cal L}(y,\hat{{\bf y}}(t,x))-\partial_{2}{\cal L}(y,\hat{y}(t,x))\right)-\left(\sum_{i=1}^{L}\mathbb{E}_{C}\left[R_{i}(\tilde{G},t,C_{i-1},C_{i})\frac{\partial\hat{y}(t,x)}{\partial w_{i}(C_{i-1},C_{i})}\right]+\tilde{G}^{y}\left(t,x\right)\right)\partial_{2}^{2}{\cal L}(y,\hat{y}(t,x))\\
A_{3}\left(z,j_{i-1},j_{i}\right) & =\sqrt{N}\left(\partial_{2}{\cal L}(y,\hat{{\bf y}}(t,x))-\partial_{2}{\cal L}(y,\hat{y}(t,x))\right)\left(\frac{\partial\hat{{\bf y}}(t,x)}{\partial{\bf w}_{i}(j_{i-1},j_{i})}-\frac{\partial\hat{y}(t,x)}{\partial w_{i}(C_{i-1}(j_{i-1}),C_{i}(j_{i}))}\right).
\end{align*}
To analyze $A_{2}$, we have from Lemmas \ref{lem:dL_NN_minus_MF},
\ref{lem:decomp-y} and \ref{lem:R-JC_NN}:
\[
A_{2,1}\left(z\right)=K_{T}\mathbb{E}_{J}\left[\left({\bf R}_{i}(t,J_{i-1},J_{i})-R_{i}(\tilde{G},t,C_{i-1}(J_{i-1}),C_{i}(J_{i}))\right)^{2}\right]^{1/2}+{\bf O}_{T}(N^{-1/2}).
\]
Furthermore, for any $B>1$, by Lemma \ref{lem:MF_a_priori}, Theorems
\ref{thm:clt-G} and \ref{thm:exist-R},
\begin{align*}
\mathbf{E}\mathbb{E}_{Z,J}\left[\left|A_{2}\left(Z,J_{i-1},J_{i}\right)\right|^{2}\right] & \leq K_{T}\mathbf{E}\mathbb{E}_{Z,J}\left[\left|A_{2}\left(Z\right)\right|^{2}\mathbb{E}_{C_{i+1}}\left[\left|w_{i+1}(C_{i}(J_{i}),C_{i+1})\right|^{2}\right]\right]\\
 & \leq K_{T}B^{2}\mathbf{E}\mathbb{E}_{Z}\left[\left|A_{2,1}\left(Z\right)\right|^{2}\right]+K_{T}\mathbf{E}\mathbb{E}_{Z}\left[\left|A_{2,1}\left(Z\right)\right|^{4}\right]^{1/2}\mathbf{E}\left[\mathbb{I}\left(\mathbb{E}_{J_{i}}\mathbb{E}_{C_{i+1}}\left[\left|w_{i+1}(C_{i}(J_{i}),C_{i+1})\right|^{2}\right]\geq B^{2}\right)\right]^{1/2}\\
 & \leq K_{T}B^{2}\mathbf{E}\mathbb{E}_{Z}\left[\left|A_{2,1}\left(Z\right)\right|^{2}\right]+e^{-K_{T}BN}.
\end{align*}
Hence with a suitable choice of $B$, 
\[
\mathbf{E}\mathbb{E}_{Z,J}\left[\left|A_{2}\left(Z,J_{i-1},J_{i}\right)\right|^{2}\right]\leq K_{T}\mathbb{E}_{J}\left[\left({\bf R}_{i}(t,J_{i-1},J_{i})-R_{i}(\tilde{G},t,C_{i-1}(J_{i-1}),C_{i}(J_{i}))\right)^{2}\right]+K_{T}N^{-1}.
\]
The treatment of $A_{1}$ is similar via Lemma \ref{lem:dy-dw-JC_NN}:
\[
\mathbf{E}\mathbb{E}_{Z,J}\left[\left|A_{1}\left(Z,J_{i-1},J_{i}\right)\right|^{2}\right]\leq K_{T}\mathbb{E}_{J}\left[\left({\bf R}_{i}(t,J_{i-1},J_{i})-R_{i}(\tilde{G},t,C_{i-1}(J_{i-1}),C_{i}(J_{i}))\right)^{2}\right]+K_{T}N^{-1}.
\]
Finally for $A_{3}$, by Lemmas \ref{lem:a-priori-NN_minus_MF},
\[
A_{3}=\boldsymbol{O}_{T}(N^{-1/2}).
\]
Therefore,
\begin{align*}
\partial_{t}{\bf E}\mathbb{E}_{J}\left[\left|{\bf R}_{i}(t,J_{i-1},J_{i})-R_{i}(\tilde{G},t,C_{i-1}(J_{i-1}),C_{i}(J_{i})\right|^{2}\right] & \leq{\bf E}\mathbb{E}_{J}\left[\left|\partial_{t}{\bf R}_{i}(t,J_{i-1},J_{i})-\partial_{t}R_{i}(\tilde{G},t,C_{i-1}(J_{i-1}),C_{i}(J_{i})\right|^{2}\right]\\
 & \leq K_{T}\mathbb{E}_{J}\left[\left({\bf R}_{i}(t,J_{i-1},J_{i})-R_{i}(\tilde{G},t,C_{i-1}(J_{i-1}),C_{i}(J_{i}))\right)^{2}\right]+K_{T}N^{-1},
\end{align*}
By Gronwall's inequality, we have 
\[
{\bf E}\mathbb{E}_{J}\left[\left({\bf R}_{i}(t,J_{i-1},J_{i})-R_{i}(\tilde{G},t,C_{i-1}(J_{i-1}),C_{i}(J_{i})\right)^{2}\right]\le\frac{K_{T}}{N}.
\]
This completes the proof.
\end{proof}

\section{CLT for the output fluctuation: Proof of Theorem \ref{thm:CLT}\label{Appendix:CLT_output}}

\subsection{Joint convergence in moment: Proof of Proposition \ref{prop:conv-G-implies-R}}
\begin{proof}[Proof of Proposition \ref{prop:conv-G-implies-R}]
In the following, let $\tilde{R}$ denote $R(\tilde{G},\cdot)$ and
-- with an abuse of notations -- $R$ denote $R(G,\cdot)$. By Theorem
\ref{thm:exist-R} --- and recalling the norms defined in its statement
--- there exists a sequence in $B$ of processes $\tilde{R}^{B}$
with $\tilde{R}^{B}(t)=\frak{L}_{t}^{B}(\tilde{G})$ where $\frak{L}_{t}$
is a linear map for all $t\le T$ with $\Vert\tilde{R}^{B}\Vert_{T,2}\leq\exp(K_{T}B)\|\tilde{G}\|_{T,2}$
and $\|\tilde{R}^{B}-\tilde{R}\|_{T,2}\le\|\tilde{G}\|_{T,2+\epsilon}\exp(-c\epsilon B^{2})$
for some $c>0$. We have: 
\begin{align*}
 & \mathbf{E}\left[\prod_{j}\langle f_{j},\tilde{G}_{i(j)}^{\alpha_{j}}\rangle_{t_{j}}^{\beta_{j}}\prod_{j'}\langle h_{j'},\tilde{R}_{i(j)}\rangle_{t_{j'}}^{\beta_{j'}}\right]\\
 & =\mathbf{E}\left[\prod_{j}\langle f_{j},\tilde{G}_{i(j)}^{\alpha_{j}}\rangle_{t_{j}}^{\beta_{j}}\prod_{j'}\langle h_{j'},\tilde{R}_{i(j)}^{B}+(\tilde{R}_{i(j')}^{B}-\tilde{R}_{i(j')})\rangle_{t_{j'}}^{\beta_{j'}}\right]\\
 & =\mathbf{E}\left[\prod_{j}\langle f_{j},\tilde{G}_{i(j)}^{\alpha_{j}}\rangle_{t_{j}}^{\beta_{j}}\prod_{j'}\left(\langle h_{j'},\tilde{R}_{i(j)}^{B}\rangle_{t_{j'}}+\|h_{j'}\|_{t_{j'}}\|\tilde{G}\|_{T,2+\epsilon}\exp(-c\epsilon B^{2})O_{j'}\left(1\right)\right)^{\beta_{j'}}\right]\\
 & =\mathbf{E}\left[\prod_{j}\langle f_{j},\tilde{G}_{i(j)}^{\alpha_{j}}\rangle_{t_{j}}^{\beta_{j}}\prod_{j'}\langle h_{j'},\tilde{R}_{i(j)}^{B}\rangle_{t_{j'}}^{\beta_{j'}}\right]+O_{D}(1)\exp(-c\epsilon B^{2})\prod_{j}\|f_{j}\|_{t_{j}}^{\beta_{j}}\|\tilde{G}_{i(j)}^{\alpha_{j}}\|_{t_{j}}^{\beta_{j}}\prod_{j'}\|h_{j'}\|_{t_{j'}}^{\beta_{j'}}\|\tilde{G}\|_{T,2+\epsilon}^{\beta_{j'}}\\
 & =\mathbf{E}\left[\prod_{j}\langle f_{j},\tilde{G}_{i(j)}^{\alpha_{j}}\rangle_{t_{j}}^{\beta_{j}}\prod_{j'}\langle(\frak{L}_{t}^{B})^{*}(h_{j'}),\tilde{G}_{i(j')}\rangle_{t_{j'}}^{\beta_{j'}}\right]+O_{D}(1)\exp(-c\epsilon B^{2})\prod_{j}\|f_{j}\|_{t_{j}}^{\beta_{j}}\|\tilde{G}_{i(j)}^{\alpha_{j}}\|_{t_{j}}^{\beta_{j}}\prod_{j'}\|h_{j'}\|_{t_{j'}}^{\beta_{j'}}\|\tilde{G}\|_{T,2+\epsilon}^{\beta_{j'}},
\end{align*}
where $(\frak{L}_{t}^{B})^{*}$ is the adjoint of $\frak{L}_{t}^{B}$,
which has operator norm at most $\exp(K_{T}B)$. A similar bound applies
to $\mathbf{E}\left[\prod_{j}\langle f_{j},G_{i(j)}^{\alpha_{j}}\rangle_{t_{j}}^{\beta_{j}}\prod_{j'}\langle h_{j'},R_{i(j')}\rangle_{t_{j'}}^{\beta_{j'}}\right]$.
Since $\tilde{G}$ converges $G$-polynomially in moment to $G$,
we have:
\begin{align*}
 & \left|\mathbf{E}\left[\prod_{j}\langle f_{j},\tilde{G}_{i(j)}^{\alpha_{j}}\rangle_{t_{j}}^{\beta_{j}}\prod_{j'}\langle h_{j'},\tilde{R}_{i(j)}^{B}\rangle_{t_{j'}}^{\beta_{j'}}\right]-\mathbf{E}\left[\prod_{j}\langle f_{j},G_{i(j)}^{\alpha_{j}}\rangle_{t_{j}}^{\beta_{j}}\prod_{j'}\langle h_{j'},R_{i(j)}^{B}\rangle_{t_{j'}}^{\beta_{j'}}\right]\right|\\
 & \leq N^{-1/8}O_{D}(\max_{j}\|f_{j}\|_{t_{j}}^{D},\max_{j'}\|(\frak{L}_{t}^{B})^{*}(h_{j'})\|_{t_{j'}}^{D})\\
 & \leq\exp(K_{T}B)N^{-1/8}O_{D}(\max_{j}\|f_{j}\|_{t_{j}}^{D},\max_{j'}\|h_{j'}\|_{t_{j'}}^{D}).
\end{align*}
This also shows 
\[
\|\tilde{G}_{i(j)}^{\alpha_{j}}\|_{t_{j}}^{\beta_{j}}\leq\|G_{i(j)}^{\alpha_{j}}\|_{t_{j}}^{\beta_{j}}+N^{-1/8}O_{D}(\max_{j}\|f_{j}\|_{t_{j}}^{D},\max_{j'}\|h_{j'}\|_{t_{j'}}^{D}).
\]
Then choosing $B=c_{0}\sqrt{\log N}$ for a suitable constant $c_{0}$,
for sufficiently large $N$, we have:
\begin{align*}
 & \left|\mathbf{E}\left[\prod_{j}\langle f_{j},\tilde{G}_{i(j)}^{\alpha_{j}}\rangle_{t_{j}}^{\beta_{j}}\prod_{j'}\langle h_{j'},\tilde{R}_{i(j)}\rangle_{t_{j'}}^{\beta_{j'}}\right]-\mathbf{E}\left[\prod_{j}\langle f_{j},G_{i(j)}^{\alpha_{j}}\rangle_{t_{j}}^{\beta_{j}}\prod_{j'}\langle h_{j'},R_{i(j)}\rangle_{t_{j'}}^{\beta_{j'}}\right]\right|\\
 & \le\exp(K_{T}B)N^{-1/8}O_{D}(\max_{j}\|f_{j}\|_{t_{j}}^{D},\max_{j'}\|h_{j'}\|_{t_{j'}}^{D})+O_{D}(1)\exp(-c\epsilon B^{2})\prod_{j}\|f_{j}\|_{t_{j}}^{\beta_{j}}\|G_{i(j)}^{\alpha_{j}}\|_{t_{j}}^{\beta_{j}}\prod_{j'}\|h_{j'}\|_{t_{j'}}^{\beta_{j'}}\|G\|_{T,2+\epsilon}^{\beta_{j'}}\\
 & \le N^{-1/8+o(1)}O_{D}(\max_{j}\|f_{j}\|_{t_{j}}^{D},\max_{j'}\|h_{j'}\|_{t_{j'}}^{D}).
\end{align*}
In particular, $(R(\tilde{G},\cdot),\tilde{G})$ converges $G$-polynomial-moment
and $R$-linear-moment to $(R(G,\cdot),G)$.
\end{proof}
\textsl{}

\subsection{CLT for the output function: Proof of Theorem \ref{thm:CLT}}
\begin{proof}[Proof of Theorem \ref{thm:CLT}]
By Lemma \ref{lem:decomp-y}, Lemma \ref{lem:R-JC_NN} and Theorem
\ref{thm:2nd_order_MF}, we have:
\[
\sqrt{N}(\hat{{\bf y}}(t,x)-\tilde{y}(t,x))=\sum_{i=1}^{L}\mathbb{E}_{C}\left[R_{i}(\tilde{G},t,C_{i-1},C_{i})\frac{\partial\hat{y}(t,x)}{\partial w_{i}(C_{i-1},C_{i})}\right]+{\bf O}_{T}(1/\sqrt{N}).
\]
This implies that 
\[
\underbrace{\sqrt{N}(\hat{{\bf y}}(t,x)-\hat{y}(t,x))}_{\equiv U(z)}=\underbrace{\sum_{i=1}^{L}\mathbb{E}_{C}\left[R_{i}(\tilde{G},t,C_{i-1},C_{i})\frac{\partial\hat{y}(t,x)}{\partial w_{i}(C_{i-1},C_{i})}\right]+\tilde{G}^{y}(t,x)}_{\equiv V(\tilde{G},z)}+{\bf O}_{T}(1/\sqrt{N}).
\]
For $U=U(Z)$ and $V=V(\tilde{G},Z)$, we have for $m\geq1$ and $1$-bounded
$h$,
\begin{align*}
\left|{\bf E}\left[\langle h,U^{m}\rangle_{Z}-\langle h,V^{m}\rangle_{Z}\right]\right| & \leq{\bf E}\mathbb{E}_{Z}\left[\left|U^{m}-V^{m}\right|\right]\\
 & \leq K_{m}{\bf E}\mathbb{E}_{Z}\left[\left(\left|U\right|^{m-1}+\left|V\right|^{m-1}\right)\left|U-V\right|\right]\\
 & \leq K_{m}{\bf E}\mathbb{E}_{Z}\left[\left|U\right|^{2m-2}+\left|V\right|^{2m-2}\right]^{1/2}{\bf E}\mathbb{E}_{Z}\left[\left|U-V\right|^{2}\right]^{1/2}.
\end{align*}
Recalling Lemmas \ref{lem:MF_a_priori}, \ref{lem:a-priori-NN_minus_MF},
Theorems \ref{thm:exist-R} and \ref{thm:clt-G}, we obtain:
\[
\left|{\bf E}\left[\langle h,U^{m}\rangle_{Z}-\langle h,V^{m}\rangle_{Z}\right]\right|\leq K_{T,m}N^{-1/2}.
\]
Finally using Proposition \ref{prop:conv-G-implies-R} and Theorem
\ref{thm:clt-G}, we obtain:
\[
\left|{\bf E}\left[\langle h,U^{m}\rangle_{Z}-\langle h,V(\underline{G},Z)^{m}\rangle_{Z}\right]\right|\leq K_{T,m}N^{-1/8+o(1)}.
\]
This also proves the desired weak convergence.
\end{proof}
\textsl{}

\section{Asymptotic variance of the output fluctuation: Proof of Theorems
\ref{thm:variance-global-opt-init} and \ref{thm:variance-global-opt-fast}\label{Appendix:variance}}

\subsection{Variance decomposition of the limiting output fluctuation $\underline{G}^{y}$}

The goal of this section is to show that $\underline{G}^{y}\left(t,\cdot\right)$
lies in the span of $\frac{\partial\hat{y}\left(t,\cdot\right)}{\partial w_{i}(C_{i-1},C_{i})}$.
This is done by decomposing the covariance structure of $\underline{G}^{y}$
in a suitable layer-wise fashion.
\begin{prop}
\label{prop:variance-decomp}Let $f$ be any square-integrable function
of $X$ for which 
\[
\sum_{i}\mathbb{E}_{C}\left[\mathbb{E}_{X}\left[\frac{\partial\hat{y}(t,X)}{\partial w_{i}(C_{i-1},C_{i})}f(X)\right]^{2}\right]=0.
\]
Then almost surely, 
\[
\mathbb{E}_{X}\left[\underline{G}^{y}(t,X)f(X)\right]^{2}=0.
\]
\end{prop}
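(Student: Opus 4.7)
Since $\underline{G}^y(t,\cdot)$ is a centered Gaussian process (Theorem \ref{thm:G_tilde}), the random variable $V_f:=\mathbb{E}_X[\underline{G}^y(t,X)f(X)]$ is centered Gaussian, so it suffices to show $\operatorname{Var}(V_f)=0$, i.e.
\[
\mathbb{E}_{X,X'}\!\big[f(X)f(X')\,\mathbf{E}[\underline{G}^y(t,X)\underline{G}^y(t,X')]\big]=0.
\]
The plan is to compute the covariance of $\underline{G}^y$ as a sum of layerwise contributions, each of which can be written as a covariance in $C_{i-1}$ of an expression involving the $\frac{\partial\hat{y}(t,\cdot)}{\partial w_i(C_{i-1},C_i)}$.

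First I would unroll the recursion
\[
\underline{G}_i^{\tilde H}(t,c_i,x)=\underline{G}_i^H(t,c_i,x)+\mathbb{E}_{C_{i-1}}\!\big[w_i(t,C_{i-1},c_i)\varphi'_{i-1}(H_{i-1}(t,x,C_{i-1}))\underline{G}_{i-1}^{\tilde H}(t,x,C_{i-1})\big]
\]
down to $\underline{G}_1^{\tilde H}=0$. This yields $\underline{G}^y(t,x)=\sum_{i=2}^L U_i(t,x)$ with
\[
U_i(t,x)=\mathbb{E}_{C_i,\dots,C_{L-1}}\!\big[Q_i(t,C_i,\dots,C_{L-1},x)\underline{G}_i^H(t,C_i,x)\big],
\]
where $Q_i$ is the product $\varphi_L'(H_L(t,1,x))\prod_{j=i+1}^{L}w_j(t,C_{j-1},C_j)\varphi'_{j-1}(H_{j-1}(t,x,C_{j-1}))$ (with $C_L=1$). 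By the backward-propagation recursion in Section~\ref{sec:Background}, $\mathbb{E}_{C_{i+1},\dots,C_{L-1}}[Q_i]=\frac{\partial\hat{y}(t,x)}{\partial H_i(C_i)}$. The cross-layer independence of $\underline{G}_i^H$ for distinct $i$ (from Theorem \ref{thm:G_tilde}) gives $\mathbf{E}[\underline{G}^y(t,x)\underline{G}^y(t,x')]=\sum_{i=2}^L \mathbf{E}[U_i(t,x)U_i(t,x')]$.

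Next, for each $i$, plug in the covariance $\mathbf{E}[\underline{G}_i^H(t,c_i,x)\underline{G}_i^H(t,c_i',x')]=\mathbb{C}_{C_{i-1}}[w_i(t,C_{i-1},c_i)\varphi_{i-1}(H_{i-1}(t,x,C_{i-1}));\,w_i(t,C_{i-1},c_i')\varphi_{i-1}(H_{i-1}(t,x',C_{i-1}))]$ and interchange the $C_{i-1}$ average with the $C_i,\dots,C_{L-1}$ averages. Using (i) $\mathbb{E}_{C_{i+1},\dots,C_{L-1}}[Q_i]=\frac{\partial\hat{y}(t,x)}{\partial H_i(C_i)}$ and (ii) the MF identity $\mathbb{E}_{C_{i-1}}[w_i(t,C_{i-1},C_i)\varphi_{i-1}(H_{i-1}(t,x,C_{i-1}))]=H_i(t,C_i,x)$, together with the factorization $\frac{\partial\hat{y}(t,x)}{\partial w_i(c_{i-1},c_i)}=\varphi_{i-1}(H_{i-1}(t,x,c_{i-1}))\frac{\partial\hat{y}(t,x)}{\partial H_i(c_i)}$, the covariance collapses to
\[
\mathbf{E}[U_i(t,x)U_i(t,x')]=\mathbb{C}_{C_{i-1}}\!\big[T_i(C_{i-1},x);\,T_i(C_{i-1},x')\big],\qquad T_i(c_{i-1},x):=\mathbb{E}_{C_i}\!\Big[w_i(t,c_{i-1},C_i)\tfrac{\partial\hat{y}(t,x)}{\partial w_i(c_{i-1},C_i)}\Big],
\]
where the second piece appearing after centering is absorbed into the covariance since $\mathbb{E}_{C_{i-1}}[T_i(C_{i-1},x)]=\mathbb{E}_{C_i}[H_i(C_i,x)\tfrac{\partial\hat{y}(t,x)}{\partial H_i(C_i)}]$ is $C_{i-1}$-independent.

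Finally, taking $\mathbb{E}_{X,X'}[f(X)f(X')\cdot]$ of the identity yields
\[
\operatorname{Var}(V_f)=\sum_{i=2}^L \mathbb{V}_{C_{i-1}}\!\Big[\mathbb{E}_{C_i}\!\Big[w_i(t,C_{i-1},C_i)\,\mathbb{E}_X\!\big[f(X)\tfrac{\partial\hat{y}(t,X)}{\partial w_i(C_{i-1},C_i)}\big]\Big]\Big].
\]
The hypothesis $\sum_i\mathbb{E}_C[\mathbb{E}_X[\tfrac{\partial\hat{y}(t,X)}{\partial w_i(C_{i-1},C_i)}f(X)]^2]=0$ forces $\mathbb{E}_X[f(X)\tfrac{\partial\hat{y}(t,X)}{\partial w_i(C_{i-1},C_i)}]=0$ for $P_{i-1}\otimes P_i$-a.e.\ $(C_{i-1},C_i)$, hence the inner conditional expectation vanishes $P_{i-1}$-a.s.\ and every variance in the sum is zero, completing the proof. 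The main obstacle I anticipate is bookkeeping the unrolling and confirming that the ``mean'' piece produced by centering indeed matches the $C_{i-1}$-independent term one obtains from interchanging the averages; this is where the MF identity for $H_i$ plays an essential role.
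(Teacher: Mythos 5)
Your proposal is correct and follows essentially the same approach as the paper: unrolling $\underline{G}^{\tilde H}$ to express $\underline{G}^y$ as a sum over layers of terms driven by $\underline{G}_i^H$, invoking cross-layer independence, and collapsing each layer's covariance to $\mathbb{C}_{C_{i-1}}[T_i(\cdot,x);T_i(\cdot,x')]$ via $\mathbb{E}_{C_{i-1}}[w_i\varphi_{i-1}]=H_i$ and $\mathbb{E}_{C_{i+1},\dots}[Q_i]=\partial\hat{y}/\partial H_i$. Your final step (showing $\mathbb{E}_X[f\,\partial\hat{y}/\partial w_i]=0$ a.s.\ directly annihilates the inner conditional expectation) is a mild streamlining of the paper's, which instead Cauchy--Schwarz-bounds each uncentered second moment by the hypothesis quantity and then uses the non-negativity of the total.
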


\begin{proof}
Let us drop the notational dependency on $t$. Let us also write
\[
\varphi_{i}\left(H_{i}(x,c_{i})\right)\equiv\varphi_{i}\left(x,c_{i}\right),\qquad\varphi_{i}'\left(H_{i}(x,c_{i})\right)\equiv\varphi_{i}'\left(x,c_{i}\right).
\]
Note that 
\[
\mathbf{E}\left[\mathbb{E}_{X}\left[\underline{G}^{y}(X)f(X)\right]^{2}\right]=\mathbb{E}_{X,X'}\left[\mathbf{E}[\underline{G}^{y}(X)\underline{G}^{y}(X')]f(X)f(X')\right].
\]
We recall from the statement of Theorem \ref{thm:clt-G}:
\begin{align*}
\underline{G}^{y}(x) & =\varphi_{L}'(x)\underline{G}_{L}^{\tilde{H}}(x),\\
\underline{G}_{i}^{\tilde{H}}(c_{i},x) & =\underline{G}_{i}^{H}(c_{i},x)+\mathbb{E}_{C_{i-1}}\left[w_{i}(C_{i-1},c_{i})\varphi_{i-1}'(x,C_{i-1})\underline{G}_{i-1}^{\tilde{H}}(x,C_{i-1})\right],\\
\underline{G}_{1}^{\tilde{H}}(c_{1},x) & =0,
\end{align*}
which thus yield
\[
\underline{G}^{y}(x)=\sum_{i=1}^{L}\varphi'_{L}(x)\mathbb{E}_{C}\bigg[\bigg(\prod_{j=i+1}^{L}w_{j}(C_{j-1},C_{j})\varphi'_{j-1}(x,C_{j-1})\bigg)\underline{G}_{i}^{H}(x,C_{i})\bigg],
\]
where we take $\prod_{j=L+1}^{L}=1$. We also recall:
\[
\mathbf{E}[\underline{G}_{i}^{H}(x,c_{i})\underline{G}_{i}^{H}(x',c_{i}')]=\mathbb{C}_{C_{i-1}}\left[w_{i}(C_{i-1},c_{i})\varphi_{i-1}\left(x,C_{i-1}\right);\;w_{i}(C_{i-1},c_{i}')\varphi_{i-1}\left(x',C_{i-1}\right)\right],
\]
and for $i\ne j$, 
\[
\mathbf{E}[\underline{G}_{i}^{H}(x,c_{i})\underline{G}_{j}^{H}(x',c_{j}')]=0.
\]
Combining these facts together, we obtain:
\begin{align*}
 & \mathbf{E}[\underline{G}^{y}(x)\underline{G}^{y}(x')]\\
 & =\sum_{i=1}^{L}\varphi_{L}'(x)\varphi_{L}'(x')\mathbb{E}_{C,C'}\bigg[\prod_{j=i+1}^{L}w_{j}(C_{j-1},C_{j})\varphi'_{j-1}(x,C_{j-1})w_{j}(C_{j-1}',C_{j}')\varphi_{j-1}'(x,C_{j-1}')\mathbb{E}\left[\underline{G}_{i}^{H}(x,C_{i})\underline{G}_{i}^{H}(x,C_{i}')\right]\bigg]\\
 & =\sum_{i=1}^{L}\varphi_{L}'(x)\varphi_{L}'(x')\mathbb{E}_{C_{i},\dots,C_{L},C_{i}',....,C_{L}'}\bigg[\prod_{j=i+1}^{L}w_{j}(C_{j-1},C_{j})\varphi'_{j-1}(x,C_{j-1})w_{j}(C_{j-1}',C_{j}')\varphi_{j-1}'(x,C_{j-1}')\\
 & \qquad\times\Big(\mathbb{E}_{C_{i-1}}\left[w_{i}(C_{i-1},C_{i})w_{i}(C_{i-1},C_{i}')\varphi_{i-1}(x,C_{i-1})\varphi_{i-1}(x',C_{i-1})\right]\\
 & \qquad\qquad-\mathbb{E}_{C_{i-1}}\left[w_{i}(C_{i-1},C_{i})\varphi_{i-1}(x,C_{i-1})\right]\mathbb{E}_{C_{i-1}}\left[w_{i}(C_{i-1},C_{i}')\varphi_{i-1}(x',C_{i-1})\right]\Big)\bigg]\\
 & =\sum_{i=1}^{L}\bigg\{\mathbb{E}_{C_{i-1}}\bigg[\varphi_{L}'(x)\mathbb{E}_{C_{i},\dots,C_{L}}\bigg[\prod_{j=i+1}^{L}w_{j}(C_{j-1},C_{j})\varphi'_{j-1}(x,C_{j-1})\cdot w_{i}(C_{i-1},C_{i})\varphi_{i-1}(x,C_{i-1})\bigg]\\
 & \qquad\times\varphi_{L}'(x')\mathbb{E}_{C_{i}',\dots,C_{L}'}\bigg[\prod_{j=i+1}^{L}w_{j}(C_{j-1}',C_{j}')\varphi'_{j-1}(x',C_{j-1}')\cdot w_{i}(C_{i-1},C_{i}')\varphi_{i-1}(x',C_{i-1})\bigg]\bigg]\\
 & \quad-\mathbb{E}_{C_{i-1}}\bigg[\varphi_{L}'(x)\mathbb{E}_{C_{i},\dots,C_{L}}\bigg[\prod_{j=i+1}^{L}w_{j}(C_{j-1},C_{j})\varphi'_{j-1}(x,C_{j-1})\cdot w_{i}(C_{i-1},C_{i})\varphi_{i-1}(x,C_{i-1})\bigg]\bigg]\\
 & \qquad\times\mathbb{E}_{C_{i-1}'}\bigg[\varphi_{L}'(x')\mathbb{E}_{C_{i}',\dots,C_{L}'}\bigg[\prod_{j=i+1}^{L}w_{j}(C_{j-1}',C_{j}')\varphi'_{j-1}(x',C_{j-1}')\cdot w_{i}(C_{i-1}',C_{i}')\varphi_{i-1}(x',C_{i-1}')\bigg]\bigg]\bigg\}\\
 & =\sum_{i=1}^{L}\bigg\{\mathbb{E}_{C_{i-1}}\left[\mathbb{E}_{C_{i}}\left[w_{i}(C_{i-1},C_{i})\frac{\partial\hat{y}(x)}{\partial w_{i}(C_{i-1},C_{i})}\right]\mathbb{E}_{C_{i}'}\left[w_{i}(C_{i-1},C_{i}')\frac{\partial\hat{y}(x')}{\partial w_{i}(C_{i-1},C_{i}')}\right]\right]\\
 & \qquad-\mathbb{E}_{C_{i-1}}\left[\mathbb{E}_{C_{i}}\left[w_{i}(C_{i-1},C_{i})\frac{\partial\hat{y}(x)}{\partial w_{i}(C_{i-1},C_{i})}\right]\right]\mathbb{E}_{C_{i-1}'}\left[\mathbb{E}_{C_{i}'}\left[w_{i}(C_{i-1}',C_{i}')\frac{\partial\hat{y}(x')}{\partial w_{i}(C_{i-1}',C_{i}')}\right]\right]\bigg\}.
\end{align*}
In particular, 
\begin{align*}
 & \mathbb{E}_{X,X'}\left[\mathbf{E}[\underline{G}^{y}(X)\underline{G}^{y}(X')]f(X)f(X')\right]\\
 & =\sum_{i=1}^{L}\left\{ \mathbb{E}_{C_{i-1}}\left[\mathbb{E}_{X,C_{i}}\left[w_{i}(C_{i-1},C_{i})\frac{\partial\hat{y}(X)}{\partial w_{i}(C_{i-1},C_{i})}f(X)\right]^{2}\right]-\left(\mathbb{E}_{X,C_{i-1},C_{i}}\left[w_{i}(C_{i-1},C_{i})\frac{\partial\hat{y}(X)}{\partial w_{i}(C_{i-1},C_{i})}f(X)\right]\right)^{2}\right\} .
\end{align*}
Of course $\mathbb{E}_{X,X'}\left[\mathbf{E}[\underline{G}^{y}(X)\underline{G}^{y}(X')]f(X)f(X')\right]\geq0$,
but we also have:
\[
\mathbb{E}_{C_{i-1}}\left[\mathbb{E}_{X,C_{i}}\left[w_{i}(C_{i-1},C_{i})\frac{\partial\hat{y}(X)}{\partial w_{i}(C_{i-1},C_{i})}f(X)\right]^{2}\right]\le\mathbb{E}_{C_{i-1}}\left[\mathbb{E}_{C_{i}}\left[\left|w_{i}(C_{i-1},C_{i})\right|^{2}\right]\mathbb{E}_{C_{i}}\left[\mathbb{E}_{X}\left[\frac{\partial\hat{y}(X)}{\partial w_{i}(C_{i-1},C_{i})}f(X)\right]^{2}\right]\right]=0.
\]
Therefore it must be that
\[
\mathbf{E}\left[\mathbb{E}_{X}\left[\underline{G}^{y}(X)f(X)\right]^{2}\right]=\mathbb{E}_{X,X'}\left[\mathbf{E}[\underline{G}^{y}(X)\underline{G}^{y}(X')]f(X)f(X')\right]=0.
\]
\end{proof}

\subsection{Variance reduction under GD: Proof of Theorem \ref{thm:variance-global-opt-init}}

We prove the variance reduction effect of GD in the case of idealized
initialization.
\begin{proof}[Proof of Theorem \ref{thm:variance-global-opt-init}]
By Theorem \ref{thm:CLT}, the fluctuation $\sqrt{N}(\hat{{\bf y}}(t,x)-\hat{y}(t,x))$
converges in polynomial-moment to 
\[
\hat{G}(t,x):=\sum_{i=1}^{L}\mathbb{E}_{C}\left[R_{i}(\underline{G},t,C_{i-1},C_{i})\frac{\partial\hat{y}(t,x)}{\partial w_{i}(C_{i-1},C_{i})}\right]+\underline{G}^{y}(t,x),
\]
where $\underline{G}$ is the limiting Gaussian process defined in
Theorem \ref{thm:clt-G}. Furthermore, by Theorem \ref{thm:clt-partial_t-G},
\[
\partial_{t}\hat{G}(t,x)=\sum_{i=1}^{L}\mathbb{E}_{C}\left[\partial_{t}R_{i}(\underline{G},t,C_{i-1},C_{i})\frac{\partial\hat{y}(t,x)}{\partial w_{i}(C_{i-1},C_{i})}+R_{i}(\underline{G},t,C_{i-1},C_{i})\partial_{t}\left(\frac{\partial\hat{y}(t,x)}{\partial w_{i}(C_{i-1},C_{i})}\right)\right]+\partial_{t}\underline{G}^{y}(t,x).
\]
By the assumption on the initialization, $\partial_{t}\left(\frac{\partial\hat{y}(t,x)}{\partial w_{i}(C_{i-1},C_{i})}\right)=0$
and $\partial_{t}\underline{G}^{y}=0$. Then using Eq. (\ref{eq:ODE_2ndMF-alt})
for $\partial_{t}R_{i}(\underline{G},t,c_{i-1},c_{i})$ and recalling
that $\mathbb{E}_{Z}\left[\partial_{2}{\cal L}\left(Y,\hat{y}\left(0,X\right)\right)\middle|X\right]=0$,
we obtain:
\begin{align*}
\partial_{t}\hat{G}(t,x) & =-\sum_{i=1}^{L}\mathbb{E}_{Z,C}\bigg[\frac{\partial\hat{y}(t,x)}{\partial w_{i}(C_{i-1},C_{i})}\frac{\partial\hat{y}(t,X)}{\partial w_{i}(C_{i-1},C_{i})}\cdot\partial_{2}^{2}{\cal L}(Y,\hat{y}(t,X))\\
 & \qquad\times\bigg(\underline{G}^{y}(t,X)+\sum_{j=1}^{L}\mathbb{E}_{C_{j-1}',C_{j}'}\bigg[R_{j}(\underline{G},t,C_{j-1}',C_{j}')\frac{\partial\hat{y}(t,X)}{\partial w_{j}'(C_{j-1}',C_{j}')}\bigg]\bigg)\bigg]\\
 & =-\partial_{2}^{2}{\cal L}\sum_{i=1}^{L}\mathbb{E}_{Z,C}\left[\frac{\partial\hat{y}(t,x)}{\partial w_{i}(C_{i-1},C_{i})}\frac{\partial\hat{y}(t,X)}{\partial w_{i}(C_{i-1},C_{i})}\cdot\hat{G}(t,X)\right],
\end{align*}
where we denote $\partial_{2}^{2}{\cal L}=\mathbb{E}_{Z}\left[\partial_{2}^{2}{\cal L}\left(Y,\hat{y}\left(0,X\right)\right)\middle|X\right]>0$
a finite constant. Hence, 
\begin{align*}
\partial_{t}V^{*}\left(t\right) & =\partial_{t}\mathbf{E}\mathbb{E}_{X}\big[\big|\hat{G}(t,X)\big|^{2}\big]\\
 & =-2\partial_{2}^{2}{\cal L}\mathbf{E}\mathbb{E}_{Z'}\left[\hat{G}(t,X')\left(\sum_{i=1}^{L}\mathbb{E}_{Z,C}\left[\frac{\partial\hat{y}(t,X')}{\partial w_{i}(C_{i-1},C_{i})}\frac{\partial\hat{y}(t,X)}{\partial w_{i}(C_{i-1},C_{i})}\cdot\hat{G}(t,X)\right]\right)\right]\\
 & =-2\partial_{2}^{2}{\cal L}\sum_{i=1}^{L}\mathbf{E}\mathbb{E}_{C}\left[\mathbb{E}_{Z}\left[\frac{\partial\hat{y}(t,X)}{\partial w_{i}(C_{i-1},C_{i})}\hat{G}(t,X)\right]^{2}\right],
\end{align*}
which is non-negative. This completes the proof of the first part.

Next, since we initialize the MF limit at a stationary point, we have
\[
\hat{y}(t,x)=\hat{y}(x),\qquad\frac{\partial\hat{y}(t,x)}{\partial w_{i}(c_{i-1},c_{i})}=\frac{\partial\hat{y}(x)}{\partial w_{i}(c_{i-1},c_{i})},
\]
independent of time $t$. Let $\frak{A}:L^{2}({\cal P})\to L^{2}({\cal P})$
be the linear operator defined by 
\[
(\frak{A}f)(x)=-\sum_{i}\mathbb{E}_{Z,C}\left[\frac{\partial\hat{y}(x)}{\partial w_{i}(C_{i-1},C_{i})}\frac{\partial\hat{y}(X)}{\partial w_{i}(C_{i-1},C_{i})}f(X)\right].
\]
As shown -- and assuming $\partial_{2}^{2}{\cal L}=1$ for simplicity
-- we can write:
\[
\partial_{t}\hat{G}(t,x)=(\frak{A}\hat{G}\left(t,\cdot\right))(x).
\]
Note that $\frak{A}$ is self-adjoint: 
\[
\mathbb{E}\left[g(X)(\frak{A}f)(X)\right]=-\mathbb{E}_{Z,Z',C}\left[g(X)f(X')\sum_{i}\frac{\partial\hat{y}(X)}{\partial w_{i}(C_{i-1},C_{i})}\frac{\partial\hat{y}(X')}{\partial w_{i}(C_{i-1},C_{i})}\right]=\mathbb{E}\left[(\frak{A}g)(X)f(X)\right],
\]
and $\frak{A}$ is compact, as it is a finite sum of integral operators
with Hilbert-Schmidt kernels. By the spectral theorem, the orthogonal
complement of $\ker(\frak{A})$ has a countable orthonormal basis
consisting of eigenfunctions of $\frak{A}$. Note that $\ker(\frak{A})$
is a closed subspace of $L^{2}({\cal P})$ which is separable, and
hence it is also separable. Similarly, the orthogonal complement of
$\ker(\frak{A})$ is a separable subspace of $L^{2}({\cal P})$. For
any eigenfunction $\phi$ of $\frak{A}$ with eigenvalue $\lambda$,
we have 
\[
(\frak{A}\phi)(x)=-\sum_{i}\frac{\partial\hat{y}(x)}{\partial w_{i}(C_{i-1},C_{i})}\mathbb{E}_{Z,C}\left[\frac{\partial\hat{y}(X)}{\partial w_{i}(C_{i-1},C_{i})}\phi(X)\right]=\lambda\phi(x).
\]
Thus, 
\[
\lambda=\mathbb{E}[\phi(x)(\frak{A}\phi)(x)]=-\sum_{i}\mathbb{E}_{C}\left[\mathbb{E}_{X}\left[\frac{\partial\hat{y}(X)}{\partial w_{i}(C_{i-1},C_{i})}\phi(X)\right]^{2}\right].
\]
Hence, all eigenvalues of $\frak{A}$ are non-positive. Note that
$\frak{A}f=0$ if and only if 
\[
\sum_{i}\mathbb{E}_{C}\left[\mathbb{E}_{X}\left[\frac{\partial\hat{y}(X)}{\partial w_{i}(C_{i-1},C_{i})}f(X)\right]^{2}\right]=0.
\]
By Proposition \ref{prop:variance-decomp}, almost surely, $\underline{G}^{y}\in\ker(\frak{A})^{\perp}$
(noting that $\ker(\frak{A})$ is separable). Hence so is $\hat{G}(t,\cdot)$.
For $f\in\ker(\frak{A})^{\perp}$, the solution to the equation $\partial_{t}f_{t}(x)=(\frak{A}f_{t})(x)$
is given as
\[
f_{t}(x)=\exp(t\frak{A})f_{0}(x).
\]
Let $f_{0}=\sum_{j}\tau_{j}\phi_{j}$ where $\phi_{j}$ are eigenfunctions
of $\frak{A}$ in ${\cal H}$ with eigenvalue $\lambda_{j}<0$, we
have 
\[
f_{t}(x)=\sum_{j}\exp(t\lambda_{j})\tau_{j}\phi_{j}(x).
\]
In particular, since $\sum_{j}\tau_{j}^{2}=\left\Vert f_{0}\right\Vert _{L^{2}({\cal P})}^{2}$
is finite, as $t\to\infty$,
\[
\mathbb{E}_{X}\left[\left|f_{t}(X)\right|^{2}\right]=\sum_{j}\exp(2t\lambda_{j})\tau_{j}^{2}\to0.
\]
This proves the second part of the theorem.
\end{proof}

\subsection{Variance reduction under GD: Proof of Theorem \ref{thm:variance-global-opt-fast}}

We prove the variance reduction effect of GD in the case of sufficiently
fast convergence to global optima. In particular, we recall from Theorem
\ref{thm:variance-global-opt-fast} that we assume for some $\eta>0$,
\begin{equation}
\int_{0}^{\infty}s^{2+\eta}\mathbb{E}[\partial_{2}{\cal L}(Y,\hat{y}(s,X))^{2}]ds<\infty.\label{eq:assump-fast-conv}
\end{equation}
The dependency on time will be important in this section, therefore
we remind that $K_{T}$ denotes a constant depending on time, and
$K$ denotes a constant that does not.
\begin{lem}
\label{lem:unif-bound-w}If $\int_{0}^{\infty}\left(\mathbb{E}[\partial_{2}{\cal L}(Y,\hat{y}(t,X))^{2}]\right)^{1/2}dt<\infty$,
then for any $p\ge1$, $\sup_{t\geq0}\sum_{i}\mathbb{E}_{C}[w_{i}(t,C_{i-1},C_{i})^{2p}]\le(Kp)^{p}$.
\end{lem}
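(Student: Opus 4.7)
The plan is to prove the stronger pointwise statement that $\sup_{t \geq 0} \mathbb{E}_C[|w_i(t, C_{i-1}, C_i)|^{2p}] \leq (Kp)^p$ for every layer $i$ and every $p \geq 1$, by backward induction on $i$ from $L$ down to $1$; the lemma then follows by summation. The setup is a pointwise bound on the MF gradient: unwinding the recursive definition of $\partial \hat{y}/\partial w_i$ and using the $K$-boundedness of $\varphi_j$ for $j \leq L-1$, of all $\varphi_j'$, and of $|X|$, one checks that
\[
\bigg|\frac{\partial \hat{y}(t, x)}{\partial w_i(c_{i-1}, c_i)}\bigg| \leq K \, g_i(t, c_i), \qquad g_i(t, c_i) := \mathbb{E}_{C_{i+1}, \ldots, C_{L-1}}\bigg[\prod_{j = i+1}^{L} |w_j(t, C_{j-1}, C_j)|\bigg],
\]
with $C_i = c_i$ fixed and $C_L = 1$; crucially the right-hand side has no $x$-dependence. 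Substituting into the MF ODE and bounding $\mathbb{E}_Z[|\partial_2 \mathcal{L}|] \leq f(s) := \mathbb{E}[|\partial_2 \mathcal{L}(Y, \hat{y}(s,X))|^2]^{1/2}$ yields $|w_i(t, c_{i-1}, c_i)| \leq |w_i(0, c_{i-1}, c_i)| + K \int_0^t f(s) g_i(s, c_i) \, ds$, where $\int_0^\infty f(s) \, ds < \infty$ by assumption.

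The base case $i = L$ is immediate: $g_L \equiv 1$ (empty product), so $\sup_t |w_L(t)| \leq |w_L(0)| + K$ pointwise and the sub-Gaussian initialization gives the target bound. The crux of the inductive step is to control $\mathbb{E}_{C_i}[g_i(s, C_i)^{2p}]$ at the sharp rate $(Kp)^p$. Naively applying H\"older to the $L - i$ factors of the product would give $p^{(L-i)/2}$ growth, too weak. The correct device is an asymmetric Cauchy--Schwarz that peels off the single layer-$(i+1)$ factor for high-moment treatment and leaves the remaining chain at second-moment level: writing $g_i(c_i) = \mathbb{E}_{C_{i+1}}[|w_{i+1}(c_i, C_{i+1})| g_{i+1}(C_{i+1})]$ and applying Cauchy--Schwarz in $C_{i+1}$ gives
\[
g_i(c_i)^{2p} \leq \big(\mathbb{E}_{C_{i+1}}[|w_{i+1}(c_i, C_{i+1})|^2]\big)^p \big(\mathbb{E}_{C_{i+1}}[g_{i+1}(C_{i+1})^2]\big)^p.
\]
Iterating the same Cauchy--Schwarz down the chain (where no coordinate is fixed) yields $\mathbb{E}_{C_{i+1}}[g_{i+1}^2] \leq \prod_{j = i+2}^{L} \mathbb{E}_{C_{j-1}, C_j}[|w_j(t)|^2]$, which the $p = 1$ case of the inductive hypothesis bounds by a constant uniformly in $t$. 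Jensen in $C_{i+1}$ together with the general-$p$ hypothesis for $w_{i+1}$ then give $\mathbb{E}_{C_i}[g_i(s, C_i)^{2p}] \leq K^{2p} \mathbb{E}_{C_i, C_{i+1}}[|w_{i+1}(s)|^{2p}] \leq (Kp)^p$.

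Closing the induction is routine: by Minkowski's integral inequality,
\[
\mathbb{E}_C[|w_i(t)|^{2p}]^{1/(2p)} \leq \mathbb{E}_C[|w_i(0)|^{2p}]^{1/(2p)} + K \int_0^t f(s) \mathbb{E}_{C_i}[g_i(s, C_i)^{2p}]^{1/(2p)} \, ds \leq K\sqrt{p}\Big(1 + \int_0^\infty f(s) \, ds\Big),
\]
uniformly in $t$, and summation over $i$ concludes the proof. The main obstacle is precisely the sharp moment bookkeeping at the inductive step: a symmetric H\"older bound through the multilayer chain would lose a factor of $\sqrt{p}$ per layer, so the asymmetric Cauchy--Schwarz -- which concentrates the polynomial-in-$p$ growth in a single factor while handling the rest of the chain at second-moment level via a bootstrap from the $p = 1$ case -- is what delivers the correct sub-Gaussian moment growth.
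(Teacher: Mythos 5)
Your proof is correct and follows essentially the same route as the paper's: backward induction in the layer index, the asymmetric Cauchy--Schwarz that keeps a single factor (layer $i+1$) at the $2p$-th moment while relegating the rest of the chain to second moments controlled by the $p=1$ bootstrap, the observation that the gradient bound is free of $x$ so $\mathbb{E}_Z[|\partial_2\mathcal{L}|]$ factors out, and sub-Gaussian initialization for the base case. Your Minkowski-integral-inequality formulation of the closing step is the same estimate the paper obtains by differentiating $\mathbb{E}_C[w_i^{2p}]^{1/(2p)}$ and integrating in $t$.
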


\begin{proof}
We have 
\[
\partial_{t}w_{i}(t,C_{i-1},C_{i})=\mathbb{E}_{Z}\left[\partial_{2}{\cal L}(Y,\hat{y}(t,X))\frac{\partial\hat{y}(t,X)}{\partial w_{i}(C_{i-1},C_{i})}\right],
\]
and hence
\begin{align*}
\mathbb{E}_{C}\left[\left|\partial_{t}w_{i}(t,C_{i-1},C_{i})\right|^{2p}\right] & \le\mathbb{E}_{C}\left[\mathbb{E}_{Z}\left[\partial_{2}{\cal L}(Y,\hat{y}(t,X))^{2}\right]^{p}\mathbb{E}_{Z}\left[\left(\frac{\partial\hat{y}(t,X)}{\partial w_{i}(C_{i-1},C_{i})}\right)^{2}\right]^{p}\right]\\
 & \le\mathbb{E}_{Z}\left[\partial_{2}{\cal L}(Y,\hat{y}(t,X))^{2}\right]^{p}\mathbb{E}_{C,Z}\left[\left(\frac{\partial\hat{y}(t,X)}{\partial w_{i}(C_{i-1},C_{i})}\right)^{2p}\right]\\
 & \le\mathbb{E}_{Z}\left[\partial_{2}{\cal L}(Y,\hat{y}(t,X))^{2}\right]^{p}\cdot K\mathbb{E}_{C_{i+1},C_{i}}\left[w_{i+1}(t,C_{i},C_{i+1})^{2p}\right]\prod_{j\ge i+2}\mathbb{E}_{C_{j},C_{j-1}}\left[w_{j}(t,C_{j-1},C_{j})^{2}\right]^{p}.
\end{align*}
Let us consider $i>1$; the case $i=1$ is similar. We then have:
\begin{align*}
\partial_{t}\left(\mathbb{E}[w_{i}(t,C_{i-1},C_{i})^{2p}]\right) & =2p\mathbb{E}[w_{i}^{2p-1}(t,C_{i-1},C_{i})\partial_{t}w_{i}(t,C_{i-1},C_{i})]\\
 & \le2pK\left(\mathbb{E}[w_{i}(t,C_{i-1},C_{i})^{2p}]\right)^{(2p-1)/(2p)}\cdot(\mathbb{E}[\partial_{t}w_{i}(t,C_{i-1},C_{i})^{2p}])^{1/(2p)}\\
 & \le2pK\left(\mathbb{E}[w_{i}(t,C_{i-1},C_{i})^{2p}]\right)^{(2p-1)/(2p)}\left(\mathbb{E}[\partial_{2}{\cal L}(Y,\hat{y}(t,X))^{2}]\right)^{1/2}\\
 & \qquad\times\mathbb{E}_{C_{i+1},C_{i}}\left[w_{i+1}(t,C_{i},C_{i+1})^{2p}\right]^{1/2p}\prod_{j\ge i+2}\mathbb{E}_{C_{j},C_{j-1}}\left[w_{j}(t,C_{j-1},C_{j})^{2}\right]^{1/2}.
\end{align*}
In particular, for $i=L$, 
\begin{align*}
\partial_{t}\left(\mathbb{E}[w_{L}(t,C_{L-1},C_{L})^{2p}]\right) & =2pK\left(\mathbb{E}[w_{L}(t,C_{L-1},C_{L})^{2p}]\right)^{(2p-1)/(2p)}\left(\mathbb{E}[\partial_{2}{\cal L}(Y,\hat{y}(t,X))^{2}]\right)^{1/2},
\end{align*}
which implies that 
\[
\mathbb{E}[w_{L}(t,C_{L-1},C_{L})^{2p}]^{1/(2p)}\le Kp^{1/2}+K\int_{0}^{t}\left(\mathbb{E}[\partial_{2}{\cal L}(Y,\hat{y}(s,X))^{2}]\right)^{1/2}ds,
\]
and therefore,
\[
\sup_{t\geq0}\mathbb{E}[w_{L}(t,C_{L-1},C_{L})^{2p}]^{1/(2p)}\le Kp^{1/2}+K.
\]
(Here we note that $\mathbb{E}[w_{L}(0,C_{L-1},C_{L})^{2p}]^{1/(2p)}\le Kp^{1/2}$
by the sub-Gaussian initialization assumption.)

Next, by induction, assuming that 
\[
\sup_{i>h}\sup_{t}\mathbb{E}[w_{i}(t,C_{i-1},C_{i})^{2p}]^{1/(2p)}<Kp^{1/2}+K,
\]
we have 
\begin{align*}
\partial_{t}\left(\mathbb{E}[w_{h}(t,C_{h-1},C_{h})^{2p}]\right) & \le2pK\left(\mathbb{E}[w_{h}(t,C_{h-1},C_{h})^{2p}]\right)^{(2p-1)/(2p)}\left(\mathbb{E}[\partial_{2}{\cal L}(Y,\hat{y}(t,X))^{2}]\right)^{1/2}\\
 & \qquad\times\mathbb{E}_{C_{h+1},C_{h}}\left[w_{h+1}(t,C_{h},C_{h+1})^{2p}\right]^{1/2p}\prod_{j\ge h+2}\mathbb{E}_{C_{j},C_{j-1}}\left[w_{j}(t,C_{j-1},C_{j})^{2p}\right]\\
 & \le2pK\left(\mathbb{E}[w_{h}(t,C_{h-1},C_{h})^{2p}]\right)^{(2p-1)/(2p)}\left(\mathbb{E}[\partial_{2}{\cal L}(Y,\hat{y}(t,X))^{2}]\right)^{1/2}\mathbb{E}_{C_{h+1},C_{h}}\left[w_{h+1}(t,C_{h},C_{h+1})^{2p}\right]^{1/2p}.
\end{align*}
Thus, as above, 
\[
\mathbb{E}[w_{h}(t,C_{h-1},C_{h})^{2p}]^{1/(2p)}\le Kp^{1/2}+(Kp^{1/2}+K)\int_{0}^{t}\left(\mathbb{E}[\partial_{2}{\cal L}(Y,\hat{y}(s,X))^{2}]\right)^{1/2}ds,
\]
so we get
\[
\sup_{t\geq0}\mathbb{E}[w_{h}(t,C_{h-1},C_{h})^{2p}]^{1/(2p)}\le Kp^{1/2}+K.
\]
This proves the claim.
\end{proof}
Under the assumption (\ref{eq:assump-fast-conv}), the MF limit $w_{i}(t,\cdot,\cdot)$
converges in $L^{2}$ to a limit $\overline{w}_{i}$, as shown in
the corollary below. 
\begin{cor}
\label{cor:conv-w}Under the setting of Lemma \ref{lem:unif-bound-w},
we have 
\begin{align*}
\mathbb{E}\left[\left|w_{t}(t,C_{i-1},C_{i})-\overline{w}_{i}(C_{i-1},C_{i})\right|^{2k}\right] & \le K_{\delta}^{k}\int_{t}^{\infty}s^{(2k-1)(1+\delta)}\mathbb{E}_{Z}\left[\partial_{2}{\cal L}(Y,\hat{y}(s,X))^{2}\right]^{k}ds.
\end{align*}
\end{cor}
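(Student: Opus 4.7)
The plan is to establish this via a Cauchy-in-$L^{2k}$ estimate on the trajectory $t\mapsto w_i(t,\cdot,\cdot)$, which simultaneously produces the limit $\overline{w}_i$ and the quantitative tail bound. Integrating the MF gradient flow from $s$ to $t$,
\begin{equation*}
w_i(t,c_{i-1},c_i)-w_i(s,c_{i-1},c_i)=-\int_s^t\mathbb{E}_Z\Big[\partial_2\mathcal{L}(Y,\hat{y}(u,X))\,\tfrac{\partial\hat{y}(u,X)}{\partial w_i(c_{i-1},c_i)}\Big]\,du.
\end{equation*}
Cauchy--Schwarz in $Z$ followed by Minkowski's integral inequality for the $L^{2k}(P_{i-1}\times P_i)$-norm gives
\begin{equation*}
\|w_i(t)-w_i(s)\|_{L^{2k}}\le\int_s^t\mathbb{E}_Z[\partial_2\mathcal{L}^2]^{1/2}\,\Big\|\mathbb{E}_Z\big[(\partial\hat{y}/\partial w_i)^2\big]^{1/2}\Big\|_{L^{2k}}\,du.
\end{equation*}

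Next I bound the backward-gradient factor uniformly in time. Unrolling the recursive expression for $\partial\hat{y}/\partial w_i$ from Appendix \ref{Appendix:Prelim}, it is a finite product of weights $w_j(u,\cdot,\cdot)$ for $j>i$ together with $K$-bounded activation derivatives and the input. Taking the $L^{2k}$-norm, using Hölder across layers, and invoking the uniform-in-time moment bound $\sup_{u\ge 0}\mathbb{E}_C[|w_j(u,C_{j-1},C_j)|^{2p}]\le(Kp)^p$ from Lemma \ref{lem:unif-bound-w}, one obtains $\sup_{u\ge 0}\|\mathbb{E}_Z[(\partial\hat{y}/\partial w_i)^2]^{1/2}\|_{L^{2k}}\le K\sqrt{k}$. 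Writing $g(u):=\mathbb{E}_Z[\partial_2\mathcal{L}(Y,\hat{y}(u,X))^2]^{1/2}$, this yields
\begin{equation*}
\|w_i(t)-w_i(s)\|_{L^{2k}}\le K\sqrt{k}\int_s^t g(u)\,du.
\end{equation*}
The tail integrability of $g$ (on $[1,\infty)$ via the $u^{2+\delta}$-weighted assumption \eqref{eq:assump-fast-conv}, and on $[0,1]$ via boundedness of $\partial_2\mathcal{L}$) shows $\{w_i(t,\cdot,\cdot)\}_{t\ge 0}$ is Cauchy in $L^{2k}(P_{i-1}\times P_i)$. Passing $s\to\infty$ defines $\overline{w}_i$ and gives the raw tail bound
\begin{equation*}
\|w_i(t)-\overline{w}_i\|_{L^{2k}}\le K\sqrt{k}\int_t^\infty g(u)\,du.
\end{equation*}

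The final and most delicate step converts this into the stated polynomial-weighted form. Factor $g(u)=[u^\alpha g(u)]\cdot u^{-\alpha}$ with $\alpha=(2k-1)(1+\delta)/(2k)$, and apply Hölder with conjugate exponents $2k$ and $2k/(2k-1)$:
\begin{equation*}
\int_t^\infty g(u)\,du\le\Big(\int_t^\infty u^{(2k-1)(1+\delta)}g(u)^{2k}\,du\Big)^{1/(2k)}\Big(\int_t^\infty u^{-(1+\delta)}\,du\Big)^{(2k-1)/(2k)}.
\end{equation*}
The second factor equals $(t^{-\delta}/\delta)^{(2k-1)/(2k)}\le K_\delta$ (absorbing a constant to handle $t$ near zero), and raising the previous display to the $2k$-th power delivers the claim. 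The subtle point is the calibration of $\alpha$: it must be chosen so that $2k\alpha=(2k-1)(1+\delta)$ precisely matches the target weight exponent while the dual exponent $\alpha\cdot 2k/(2k-1)=1+\delta$ stays above $1$, which is what $\delta>0$ exactly buys.
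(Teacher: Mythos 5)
Your proof is correct and takes essentially the same route as the paper: both write $w_i(t)-\overline{w}_i$ as the tail integral of $\partial_t w_i$, invoke the uniform-in-time moment control on $\partial_t w_i$ that comes out of Lemma \ref{lem:unif-bound-w}, and finish with the same H\"older calibration $g(u)=[u^{(2k-1)(1+\delta)/(2k)}g(u)]\cdot u^{-(2k-1)(1+\delta)/(2k)}$ to trade off the $u^{-(1+\delta)}$ factor (finite tail) against the weighted integral. The only difference is cosmetic: the paper applies this H\"older split pointwise in $(c_{i-1},c_i)$ and takes the $C$-expectation afterward, while you take the $L^{2k}(P_{i-1}\times P_i)$-norm via Minkowski first and then apply H\"older to the resulting scalar integral in $u$; both arrive at the same form.
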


\begin{proof}
We have from the previous lemma that 
\[
\mathbb{E}\left[\left|\partial_{t}w_{i}(t,C_{i-1},C_{i})\right|\right]\le\mathbb{E}\left[\left|\partial_{t}w_{i}(t,C_{i-1},C_{i})\right|^{2p}\right]^{1/(2p)}\le K_{p}\mathbb{E}_{Z}\left[\partial_{2}{\cal L}(Y,\hat{y}(t,X))^{2}\right]^{1/2}.
\]
By Holder's inequality, 
\begin{align*}
\left|w_{t}(t,C_{i-1},C_{i})-\overline{w}_{i}(C_{i-1},C_{i})\right|^{2k} & \le\left(\int_{t}^{\infty}s^{-1-\delta}ds\right)^{2k-1}\left(\int_{t}^{\infty}s^{(2k-1)(1+\delta)}|\partial_{t}w_{i}(s,C_{i-1},C_{i})|^{2k}ds\right)\\
 & \le K_{\delta}^{k}\left(\int_{t}^{\infty}s^{(2k-1)(1+\delta)}|\partial_{t}w_{i}(s,C_{i-1},C_{i})|^{2k}ds\right).
\end{align*}
Thus, 
\begin{align*}
\mathbb{E}\left[\left|w_{t}(t,C_{i-1},C_{i})-\overline{w}_{i}(C_{i-1},C_{i})\right|^{2k}\right] & \le K_{\delta}^{k}\int_{t}^{\infty}s^{(2k-1)(1+\delta)}\mathbb{E}_{C_{i-1},C_{i}}\left[|\partial_{t}w_{i}(s,C_{i-1},C_{i})|^{2k}\right]ds\\
 & \le K_{\delta}^{k}\int_{t}^{\infty}s^{(2k-1)(1+\delta)}\mathbb{E}_{Z}\left[\partial_{2}{\cal L}(Y,\hat{y}(s,X))^{2}\right]^{k}ds.
\end{align*}
\end{proof}
This suggests that one should have convergence to a limit as $t\to\infty$
at the fluctuation level. Should we have that, we would obtain Theorem
\ref{thm:variance-global-opt-fast} as a consequence of Theorem \ref{thm:variance-global-opt-init}.
The caveat is that the convergence must satisfy certain \textit{uniform-in-time}
properties. This is the bulk of the work.

Now we let $\overline{G},\overline{w},\overline{R}$ be obtained by
plugging the infinite-time limit $\overline{w}=\left\{ \bar{w}_{i}\right\} $
into $w=\left\{ w_{i}\left(t,\cdot,\cdot\right)\right\} $. Recall
that $\underline{G}$ is the limiting Gaussian process defined in
Theorem \ref{thm:clt-G}. We then have:
\begin{align*}
\partial_{t}R_{i}(\underline{G},t,c_{i-1},c_{i}) & =-\mathbb{E}_{Z}\left[\frac{\partial\hat{y}(t,X)}{\partial w_{i}(c_{i-1},c_{i})}\cdot\partial_{2}^{2}{\cal L}(Y,\hat{y}(t,X))\cdot\left(\underline{G}^{y}(t,X)+\sum_{j=1}^{L}\mathbb{E}_{C_{j-1}',C_{j}'}\left\{ R_{j}(\underline{G},t,C_{j-1}',C_{j}')\frac{\partial\hat{y}(t,X)}{\partial w_{j}(C_{j-1}',C_{j}')}\right\} \right)\right]\\
 & \quad-\mathbb{E}_{Z}\left[\partial_{2}{\cal L}(Y,\hat{y}(t,X))\cdot\left(\underline{G}_{i}^{w}(t,c_{i-1},c_{i},X)+\sum_{j=1}^{L}\mathbb{E}_{C_{j-1}',C_{j}'}\left\{ R_{j}(\underline{G},t,C_{j-1}',C_{j}')\frac{\partial^{2}\hat{y}(t,X)}{\partial w_{j}(C_{j-1}',C_{j}')\partial w_{i}(c_{i-1},c_{i})}\right\} \right)\right]\\
 & \quad-\mathbb{E}_{Z}\left[\partial_{2}{\cal L}(Y,\hat{y}(t,X))\cdot\mathbb{E}_{C_{i-1}'}\left\{ R_{i}(\underline{G},t,C_{i-1}',c_{i})\frac{\partial^{2}\hat{y}(t,X)}{\partial_{*}w_{i}(C_{i-1}',c_{i})\partial w_{i}(c_{i-1},c_{i})}\right\} \right]\\
 & \quad-\mathbb{E}_{Z}\left[\partial_{2}{\cal L}(Y,\hat{y}(t,X))\cdot\mathbb{E}_{C_{i+1}'}\left\{ R_{i}(\underline{G},t,c_{i},C_{i+1}')\frac{\partial^{2}\hat{y}(t,X)}{\partial_{*}w_{i+1}(c_{i},C_{i+1}')\partial w_{i}(c_{i-1},c_{i})}\right\} \right]\\
 & \quad-\mathbb{E}_{Z}\left[\partial_{2}{\cal L}(Y,\hat{y}(t,X))\cdot\mathbb{E}_{C_{i-2}'}\left\{ R_{i}(\underline{G},t,C_{i-2}',c_{i-1})\frac{\partial^{2}\hat{y}(t,X)}{\partial_{*}w_{i-1}(C_{i-2}',c_{i-1})\partial w_{i}(c_{i-1},c_{i})}\right\} \right],\\
\partial_{t}\overline{R}_{i}(\overline{G},t,c_{i-1},c_{i}) & =-\mathbb{E}_{Z}\left[\frac{\partial\hat{y}(X)}{\partial\overline{w}_{i}(c_{i-1},c_{i})}\cdot\partial_{2}^{2}{\cal L}\cdot\left(\overline{G}^{y}(t,X)+\sum_{j=1}^{L}\mathbb{E}_{C_{j-1}',C_{j}'}\left\{ \overline{R}_{j}(\overline{G},t,C_{j-1}',C_{j}')\frac{\partial\hat{y}(X)}{\partial\overline{w}_{j}(C_{j-1}',C_{j}')}\right\} \right)\right],
\end{align*}
where $\partial_{2}^{2}{\cal L}>0$ denotes the positive constant
at global optima as in Assumption \ref{Assump:Assumption_2}. Whenever
the context is clear, we write $R$ for $R(\underline{G},\cdot)$.
We also recall from Theorem \ref{thm:CLT} that the limiting output
fluctuation $\hat{G}$ is described via $R(\underline{G},\cdot)$.

Let us rewrite these dynamics in the following form:
\[
\partial_{t}R=\frak{A}_{t}R+H_{t},\qquad\partial_{t}\overline{R}=\overline{\frak{A}}\overline{R}+\overline{H},
\]
which implies

\begin{align*}
\partial_{t}(R-\overline{R}) & =(\frak{A}_{t}-\overline{\frak{A}})R+(H_{t}-\overline{H})+\overline{\frak{A}}(R-\overline{R}),\\
(R-\overline{R})(s) & =\int_{s}^{\infty}\exp(-t\overline{\frak{A}})((\frak{A}_{t}-\overline{\frak{A}})R+(H_{t}-\overline{H}))dt,
\end{align*}
for linear operators $\frak{A}_{t}$ and $\overline{\frak{A}}$ defined
by 
\begin{align*}
\frak{A}_{t}S(c_{i-1},c_{i}) & =-\mathbb{E}_{Z}\left[\frac{\partial\hat{y}(t,X)}{\partial w_{i}(c_{i-1},c_{i})}\cdot\partial_{2}{\cal L}(Y,\hat{y}(t,X))\cdot\left(\sum_{j=1}^{L}\mathbb{E}_{C_{j-1}',C_{j}'}\left\{ S_{j}(C_{j-1}',C_{j}')\frac{\partial\hat{y}(t,X)}{\partial w_{j}'(C_{j-1}',C_{j}')}\right\} \right)\right]\\
 & \quad-\mathbb{E}_{Z}\left[\partial_{2}{\cal L}(Y,\hat{y}(t,X))\cdot\left(\sum_{j=1}^{L}\mathbb{E}_{C_{j-1}',C_{j}'}\left\{ S_{j}(C_{j-1}',C_{j}')\frac{\partial^{2}\hat{y}(t,X)}{\partial w_{j}(C_{j-1}',C_{j}')\partial w_{i}(c_{i-1},c_{i})}\right\} \right)\right]\\
 & \quad-\mathbb{E}_{Z}\left[\partial_{2}{\cal L}(Y,\hat{y}(t,X))\cdot\mathbb{E}_{C_{i-1}'}\left\{ S_{i}(C_{i-1}',c_{i})\frac{\partial^{2}\hat{y}(t,X)}{\partial_{*}w_{i}(C_{i-1}',c_{i})\partial w_{i}(c_{i-1},c_{i})}\right\} \right]\\
 & \quad-\mathbb{E}_{Z}\left[\partial_{2}{\cal L}(Y,\hat{y}(t,X))\cdot\mathbb{E}_{C_{i+1}'}\left\{ S_{i}(c_{i},C_{i+1}')\frac{\partial^{2}\hat{y}(t,X)}{\partial_{*}w_{i+1}(c_{i},C_{i+1}')\partial w_{i}(c_{i-1},c_{i})}\right\} \right]\\
 & \quad-\mathbb{E}_{Z}\left[\partial_{2}{\cal L}(Y,\hat{y}(t,X))\cdot\mathbb{E}_{C_{i-2}'}\left\{ S_{i}(C_{i-2}',c_{i-1})\frac{\partial^{2}\hat{y}(t,X)}{\partial_{*}w_{i-1}(C_{i-2}',c_{i-1})\partial w_{i}(c_{i-1},c_{i})}\right\} \right],\\
\overline{\frak{A}}S(c_{i-1},c_{i}) & =-\mathbb{E}_{Z}\left[\frac{\partial\hat{y}(X)}{\partial\overline{w}_{i}(c_{i-1},c_{i})}\cdot\partial_{2}^{2}{\cal L}\cdot\left(\sum_{j=1}^{L}\mathbb{E}_{C_{j-1}',C_{j}'}\left\{ S(C_{j-1}',C_{j}')\frac{\partial\hat{y}(X)}{\partial\overline{w}_{j}'(C_{j-1}',C_{j}')}\right\} \right)\right],\\
H_{t}(c_{i-1},c_{i}) & =-\mathbb{E}_{Z}\left[\frac{\partial\hat{y}(t,X)}{\partial w_{i}(c_{i-1},c_{i})}\cdot\partial_{2}^{2}{\cal L}(Y,\hat{y}(t,X))\cdot\underline{G}^{y}(t,X)\right]-\mathbb{E}_{Z}\left[\partial_{2}{\cal L}(Y,\hat{y}(t,X))\cdot\underline{G}_{i}^{w}(t,c_{i-1},c_{i},X)\right],\\
\overline{H}(c_{i-1},c_{i}) & =-\mathbb{E}_{Z}\left[\frac{\partial\hat{y}(X)}{\partial\overline{w}_{i}(c_{i-1},c_{i})}\cdot\partial_{2}^{2}{\cal L}\cdot\overline{G}^{y}(X)\right].
\end{align*}

\begin{lem}
\label{lem:bound-H}For any $\delta>0$ and $k\ge1$, we have the
estimate 
\begin{align*}
{\bf E}\|H_{t}-\overline{H}\|_{2k}^{2k} & :={\bf E}\bigg\{\sum_{i=1}^{L}\mathbb{E}_{C}\left[\left(\mathbb{E}_{Z}\left[\frac{\partial\hat{y}(t,X)}{\partial w_{i}(C_{i-1},C_{i})}\cdot\partial_{2}^{2}{\cal L}(Y,\hat{y}(t,X))\cdot\underline{G}^{y}(t,X)\right]-\mathbb{E}_{Z}\left[\frac{\partial\hat{y}(X)}{\partial\overline{w}_{i}(C_{i-1},C_{i})}\cdot\partial_{2}^{2}{\cal L}\cdot\overline{G}^{y}(X)\right]\right)^{2k}\right]\\
 & \qquad+\sum_{i=1}^{L}\mathbb{E}_{C}\left[\left(\mathbb{E}_{Z}\left[\partial_{2}{\cal L}(Y,\hat{y}(t,X))\cdot\underline{G}_{i}^{w}(t,c_{i-1},c_{i},X)\right]\right)^{2k}\right]\bigg\}\\
 & \le K_{k,\delta}\int_{t}^{\infty}s^{(2k-1)(1+\delta)}\mathbb{E}_{Z}\left[\partial_{2}{\cal L}(Y,\hat{y}(s,X))^{2}\right]^{k}ds.
\end{align*}
\end{lem}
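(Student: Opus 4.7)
The plan is to decompose $H_t - \overline{H}$ into a handful of differences, each of which is reducible via Lipschitz estimates either to the weight deviation $w_i(t,\cdot,\cdot) - \overline{w}_i(\cdot,\cdot)$ (so that Corollary \ref{cor:conv-w} applies directly) or to an instantaneous factor of $\mathbb{E}_Z[\partial_2\mathcal{L}(Y,\hat{y}(t,X))^2]^k$. Since the RHS of the lemma has exactly the form produced by Corollary \ref{cor:conv-w}, once every piece has been written in one of these two ways the bound follows.

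First, I would rewrite the ``derivative-loss'' term
$\mathbb{E}_Z[\tfrac{\partial\hat{y}(t,X)}{\partial w_i(c_{i-1},c_i)}\partial_2^2\mathcal{L}(Y,\hat{y}(t,X))\underline{G}^y(t,X)] - \mathbb{E}_Z[\tfrac{\partial\hat{y}(X)}{\partial\overline{w}_i(c_{i-1},c_i)}\partial_2^2\mathcal{L}\,\overline{G}^y(X)]$
via a telescoping decomposition over the three factors. The difference $\tfrac{\partial\hat{y}(t,X)}{\partial w_i}-\tfrac{\partial\hat{y}(X)}{\partial\overline{w}_i}$ is a polynomial function of the weights of layers $\ge i$ (with the other weights integrated out), so it is Lipschitz in $\{w_j(t)-\overline{w}_j\}_{j>i}$; Lemma \ref{lem:unif-bound-w} provides the uniform-in-$t$ moment bounds needed to pass to $L^{2k}$. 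For the difference $\partial_2^2\mathcal{L}(Y,\hat{y}(t,X)) - \partial_2^2\mathcal{L}$, the argument uses Assumption \ref{Assump:Assumption_2}: since the theorem's hypothesis forces the MF limit $\overline{y}$ to be a global optimum, $\mathbb{E}_Z[\partial_2^2\mathcal{L}(Y,\overline{y}(X))|X]=\partial_2^2\mathcal{L}$ a.s., and Lipschitzness of $\partial_2^2\mathcal{L}$ bounds the pointwise deviation from $\partial_2^2\mathcal{L}(Y,\overline{y}(X))$ by $K|\hat{y}(t,X)-\overline{y}(X)|$, which in turn is Lipschitz in the weight deviations through a forward-pass induction. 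Finally, $\underline{G}^y(t,X)-\overline{G}^y(X)$ is, via the explicit formula from Theorem \ref{thm:clt-G}, a linear combination (with weight-dependent coefficients) of the coupled Gaussian ingredients $\underline{G}_i^H(t,\cdot,\cdot)$; coupling the latter at time $t$ and at the limit through the same neuronal randomness makes their $L^{2k}$ difference controlled again by $\mathbb{E}[|w_j(t)-\overline{w}_j|^{2k}]$.

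Second, the ``loss'' term $\mathbb{E}_Z[\partial_2\mathcal{L}(Y,\hat{y}(t,X))\underline{G}_i^w(t,c_{i-1},c_i,X)]$ has no counterpart in $\overline{H}$, because at the limit $\mathbb{E}_Z[\partial_2\mathcal{L}(Y,\overline{y}(X))|X]=0$ and $\overline{G}_i^w$ is $(Y|X)$-independent. A direct Cauchy--Schwarz estimate, combined with uniform moment bounds on $\underline{G}_i^w(t,\cdot)$ (inherited from Theorem \ref{thm:clt-G} together with Lemma \ref{lem:unif-bound-w}), gives $\mathbf{E}\mathbb{E}_C[|\mathbb{E}_Z[\partial_2\mathcal{L}(t)\underline{G}_i^w(t,\cdot)]|^{2k}]\le K_k\mathbb{E}_Z[\partial_2\mathcal{L}(Y,\hat{y}(t,X))^2]^k$. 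To express this instantaneous quantity in the advertised integral form, I use the same H\"older device as in Corollary \ref{cor:conv-w}: recalling from the proof of Lemma \ref{lem:unif-bound-w} that $\mathbb{E}_C[|\partial_s w_i(s,C_{i-1},C_i)|^{2k}]\le K_k\mathbb{E}_Z[\partial_2\mathcal{L}(s)^2]^k\cdot\prod_{j\ge i+2}\mathbb{E}[w_j^2]^k$, the value at $s=t$ is absorbed into the tail $\int_t^\infty s^{(2k-1)(1+\delta)}\mathbb{E}_Z[\partial_2\mathcal{L}(s)^2]^k\,ds$ after using that the integrand is of constant order over any small interval $[t,t+1]$ and that $s^{(2k-1)(1+\delta)}\ge t^{(2k-1)(1+\delta)}$ there; only the constant $K_{k,\delta}$ is affected.

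The main obstacle is step (ii), i.e.\ the control of $\partial_2^2\mathcal{L}(Y,\hat{y}(t,X)) - \partial_2^2\mathcal{L}$. Assumption \ref{Assump:Assumption_2} pins down the conditional expectation $\mathbb{E}_Z[\partial_2^2\mathcal{L}|X]$ to equal $K$ \emph{exactly} at a global optimum, not in any quantitative Lipschitz sense away from it; to pull this into an $L^{2k}(P_{i-1}\times P_i)$ bound, one must carry out the decomposition inside $\mathbb{E}_Z[\,\cdot\,]$, exploit that $\frac{\partial\hat{y}}{\partial w_i}$ and $\underline{G}^y$ are $(Y|X)$-independent so the constant $K$ factors out cleanly, and then control the remainder by the Lipschitz deviation $|\hat{y}(t,X)-\overline{y}(X)|$, closed through a forward-pass recursion whose moments require the uniform-in-time bounds of Lemma \ref{lem:unif-bound-w}. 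Everything else is a combination of already-derived moment estimates with Corollary \ref{cor:conv-w}.
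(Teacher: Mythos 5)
Your overall plan reproduces the paper's proof: the same telescoping decomposition over the three factors in the derivative-loss term, the same Lipschitz reduction to $w(t)-\overline{w}$ closed via Corollary~\ref{cor:conv-w}, and the same reliance on Lemma~\ref{lem:unif-bound-w} for uniform-in-$t$ moment bounds. The one genuine methodological difference is in how $\underline{G}^y(t,\cdot)-\overline{G}^y(\cdot)$ is controlled: you propose a covariance coupling of the Gaussian processes at two times, whereas the paper uses the dynamical form from Theorem~\ref{thm:clt-partial_t-G}. It writes the difference as $-\int_t^\infty\partial_s\underline{G}^y(s,\cdot)\,ds$, applies the H\"older device of Corollary~\ref{cor:conv-w}, and then bounds $\mathbf{E}\mathbb{E}_Z\left[(\partial_s\underline{G}^y(s,X))^{2k}\right]\le K_k\,\mathbb{E}_Z\left[\partial_2\mathcal{L}(Y,\hat{y}(s,X))^2\right]^k$ directly from the covariance formulas for $\underline{G}^{\partial_t H}$ and $\underline{G}^{\partial_t(\partial H)}$. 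That route lands exactly on the target integral; your coupling idea, even if carried out, would yield a bound in terms of $\mathbb{E}_C[|w_j(t)-\overline{w}_j|^{2k}]$ that needs one more pass through Corollary~\ref{cor:conv-w}, and the coupling itself still has to be constructed --- and the cleanest way to construct it is precisely Theorem~\ref{thm:clt-partial_t-G}.

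The step that is genuinely incomplete is your absorption of the instantaneous bound $K_k\mathbb{E}_Z\left[\partial_2\mathcal{L}(Y,\hat{y}(t,X))^2\right]^k$ for the $\underline{G}^w$ term into the tail integral. Asserting that the integrand is ``of constant order over $[t,t+1]$'' is not automatic: the loss gradient could in principle drop to zero just after time $t$, making the tail integral much smaller than the boundary value. What closes this is a slow-variation estimate, e.g.\ $|\partial_s\log\mathbb{E}_Z[\partial_2\mathcal{L}(Y,\hat{y}(s,X))^2]|\le K$, which follows from Lemma~\ref{lem:unif-bound-w} together with $\partial_s\partial_2\mathcal{L}(Y,\hat{y}(s,X))=\partial_2^2\mathcal{L}(Y,\hat{y}(s,X))\,\partial_s\hat{y}(s,X)$ and Cauchy--Schwarz; this gives $\mathbb{E}_Z[\partial_2\mathcal{L}(Y,\hat{y}(s,X))^2]\ge e^{-K(s-t)}\mathbb{E}_Z[\partial_2\mathcal{L}(Y,\hat{y}(t,X))^2]$ for $s\ge t$, whence the absorption for $t\gtrsim1$. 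You should spell this out; note that the paper's own proof asserts the same absorption without justification, while the companion Lemma~\ref{lem:bound-AR} instead carries an instantaneous term additively and lets the downstream Gronwall step handle it --- a cleaner bookkeeping choice that you could adopt here as well.
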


\begin{proof}
We have 
\begin{align*}
 & \mathbb{E}_{C_{i-1},C_{i}}\left[\left(\mathbb{E}_{Z}\left[\frac{\partial\hat{y}(t,X)}{\partial w_{i}(C_{i-1},C_{i})}\cdot\partial_{2}^{2}{\cal L}(Y,\hat{y}(t,X))\cdot\underline{G}^{y}(t,X)\right]-\mathbb{E}_{Z}\left[\frac{\partial\hat{y}(X)}{\partial\overline{w}_{i}(C_{i-1},C_{i})}\cdot\partial_{2}^{2}{\cal L}\cdot\overline{G}^{y}(X)\right]\right)^{2k}\right]\\
 & \le K_{k}\mathbb{E}_{C_{i-1},C_{i}}\Bigg[\mathbb{E}_{Z}\left[\left(\frac{\partial\hat{y}(t,X)}{\partial w_{i}(C_{i-1},C_{i})}\cdot\partial_{2}^{2}{\cal L}(Y,\hat{y}(t,X))-\frac{\partial\hat{y}(X)}{\partial\overline{w}_{i}(C_{i-1},C_{i})}\cdot\partial_{2}^{2}{\cal L}\right)\cdot\underline{G}^{y}(t,X)\right]^{2k}\\
 & \qquad\qquad\qquad+\mathbb{E}_{Z}\left[\frac{\partial\hat{y}(X)}{\partial\overline{w}_{i}(C_{i-1},C_{i})}\cdot\partial_{2}^{2}{\cal L}\cdot\left(\underline{G}^{y}(t,X)-\overline{G}^{y}(X)\right)\right]^{2k}\Bigg]\\
 & \le K_{k}\mathbb{E}_{Z}\left[\underline{G}^{y}(t,X)^{2}\right]^{k}\mathbb{E}_{C_{i-1},C_{i}}\left[\mathbb{E}_{Z}\left[\left(\frac{\partial\hat{y}(t,X)}{\partial w_{i}(C_{i-1},C_{i})}\cdot\partial_{2}^{2}{\cal L}(Y,\hat{y}(t,X))-\frac{\partial\hat{y}(X)}{\partial\overline{w}_{i}(C_{i-1},C_{i})}\cdot\partial_{2}^{2}{\cal L}\right)^{2}\right]^{k}\right]\\
 & \qquad+K_{k}\mathbb{E}_{Z}\left[\left(\underline{G}^{y}(t,X)-\overline{G}^{y}(X)\right)^{2}\right].
\end{align*}
Note that $\sup_{t}\mathbb{E}_{Z}\left[\underline{G}^{y}(t,X)^{2}\right]^{k}<\infty$
since $W(t)$ is uniformly bounded in $L^{2p}$ for any $p\ge1$ by
Lemma \ref{lem:unif-bound-w}. Furthermore, 
\begin{align*}
 & \mathbb{E}_{C_{i-1},C_{i}}\left[\mathbb{E}_{Z}\left[\left(\frac{\partial\hat{y}(t,X)}{\partial w_{i}(C_{i-1},C_{i})}\cdot\partial_{2}^{2}{\cal L}(Y,\hat{y}(t,X))-\frac{\partial\hat{y}(X)}{\partial\overline{w}_{i}(C_{i-1},C_{i})}\cdot\partial_{2}^{2}{\cal L}\right)^{2}\right]^{k}\right]\\
 & \le K_{k}\mathbb{E}_{C_{i-1},C_{i},Z}\left[\left(\frac{\partial\hat{y}(t,X)}{\partial w_{i}(C_{i-1},C_{i})}-\frac{\partial\hat{y}(X)}{\partial\overline{w}_{i}(C_{i-1},C_{i})}\right)^{2k}\right]+\mathbb{E}_{C_{i-1},C_{i}}\left[\mathbb{E}_{Z}\left[\left(\frac{\partial\hat{y}(t,X)}{\partial w_{i}(C_{i-1},C_{i})}\cdot\left(\partial_{2}^{2}{\cal L}(Y,\hat{y}(t,X))-\partial_{2}^{2}{\cal L}\right)\right)^{2}\right]^{k}\right]\\
 & \le K_{k}\mathbb{E}_{C_{i-1},C_{i},Z}\left[\left(\frac{\partial\hat{y}(t,X)}{\partial w_{i}(C_{i-1},C_{i})}-\frac{\partial\hat{y}(X)}{\partial\overline{w}_{i}(C_{i-1},C_{i})}\right)^{2k}\right]+K_{k}\mathbb{E}_{Z}\left[\left(\partial_{2}^{2}{\cal L}(Y,\hat{y}(t,X))-\partial_{2}^{2}{\cal L}\right)^{2k}\right]\\
 & \le K_{k}\mathbb{E}_{C_{i-1},C_{i},Z}\left[\left(\frac{\partial\hat{y}(t,X)}{\partial w_{i}(C_{i-1},C_{i})}-\frac{\partial\hat{y}(X)}{\partial\overline{w}_{i}(C_{i-1},C_{i})}\right)^{2k}\right]+K_{k}\mathbb{E}_{Z}\left[\left(\hat{y}(t,X)-\overline{\hat{y}}(X)\right)^{2k}\right].
\end{align*}
Using uniform boundedness of $W(t)$ in $t$, we have 
\[
\mathbb{E}_{C_{i-1},C_{i},Z}\left[\left(\frac{\partial\hat{y}(t,X)}{\partial w_{i}(C_{i-1},C_{i})}-\frac{\partial\hat{y}(X)}{\partial\overline{w}_{i}(C_{i-1},C_{i})}\right)^{2k}\right]\le K_{k}\sum_{j}\mathbb{E}_{C_{j-1},C_{j}}\left[(w_{j}(t,C_{j-1},C_{j})-\overline{w}_{j}(C_{j-1},C_{j}))^{2k}\right],
\]
and 
\[
\mathbb{E}_{Z}\left[\left(\hat{y}(t,X)-\overline{\hat{y}}(X)\right)^{2k}\right]\le K_{k}\sum_{j}\mathbb{E}_{C_{j-1},C_{j}}\left[(w_{j}(t,C_{j-1},C_{j})-\overline{w}_{j}(C_{j-1},C_{j}))^{2k}\right].
\]
By Lemma \ref{cor:conv-w}, 
\begin{align*}
 & \mathbb{E}_{C_{i-1},C_{i}}\left[\mathbb{E}_{Z}\left[\left(\frac{\partial\hat{y}(t,X)}{\partial w_{i}(C_{i-1},C_{i})}\cdot\partial_{2}^{2}{\cal L}(Y,\hat{y}(t,X))-\frac{\partial\hat{y}(X)}{\partial\overline{w}_{i}(C_{i-1},C_{i})}\cdot\partial_{2}^{2}{\cal L}\right)^{2}\right]^{k}\right]\\
 & \le K_{\delta}\int_{t}^{\infty}s^{(2k-1)(1+\delta)}\mathbb{E}_{Z}\left[\partial_{2}{\cal L}(Y,\hat{y}(s,X))^{2}\right]^{k}ds.
\end{align*}

Next, we have 
\begin{align*}
{\bf E}\mathbb{E}_{Z}\left[\left(\underline{G}^{y}(t,X)-\overline{G}^{y}(X)\right)^{2}\right]^{k} & \le K_{k,\delta}\int_{t}^{\infty}s^{(2k-1)(1+\delta)}{\bf E}\mathbb{E}_{Z}\left[(\partial_{t}\underline{G}^{y}(s,X))^{2k}\right]ds.
\end{align*}
Using the covariance structure of $\underline{G}^{\partial_{t}H}$
and $\underline{G}^{\partial_{t}(\partial H)}$, we similarly deduce
that 
\[
{\bf E}\mathbb{E}_{Z}\left[(\partial_{t}\underline{G}^{y}(t,X))^{2k}\right]\le K_{k}\mathbb{E}_{Z}\left[\partial_{2}{\cal L}(Y,\hat{y}(t,X))^{2}\right]^{k}.
\]
Furthermore, 
\[
\sum_{i=1}^{L}{\bf E}\mathbb{E}_{C_{i-1},C_{i}}\left[\left(\mathbb{E}_{Z}\left[\partial_{2}{\cal L}(Y,\hat{y}(t,X))\cdot\underline{G}_{i}^{w}(t,c_{i-1},c_{i},X)\right]\right)^{2k}\right]\le K_{k}\mathbb{E}_{Z}\left[\partial_{2}{\cal L}(Y,\hat{y}(t,X))^{2}\right]^{k},
\]
by Lemma \ref{lem:unif-bound-w}. Hence, 
\[
\mathbf{E}\left[\|H_{t}-\overline{H}\|_{2k}^{2k}\right]\le K_{k,\delta}\int_{t}^{\infty}s^{(2k-1)(1+\delta)}\mathbb{E}_{Z}\left[\partial_{2}{\cal L}(Y,\hat{y}(s,X))^{2}\right]^{k}ds.
\]
\end{proof}
Note that under Assumption (\ref{eq:assump-fast-conv}), $\mathbb{E}_{Z}\left[\partial_{2}{\cal L}(Y,\hat{y}(s,X))^{2}\right]\to0$
as $s\to\infty$, and hence 
\[
\int_{0}^{\infty}s^{(2k-1)(1+\delta)}\mathbb{E}_{Z}\left[\partial_{2}{\cal L}(Y,\hat{y}(s,X))^{2}\right]^{k}ds\le K_{k}+\int_{0}^{\infty}s^{(2k-1)(1+\delta)}\mathbb{E}_{Z}\left[\partial_{2}{\cal L}(Y,\hat{y}(s,X))^{2}\right]ds.
\]
Thus, under the assumption, we have that for $k=1+\epsilon$ with
$\epsilon$ sufficiently small, and taking $\delta$ sufficiently
small, we conclude that ${\bf E}\|H_{t}-\overline{H}\|_{2k}^{2k}\to0$
as $t\to\infty$ for all $k=k(\eta)$ sufficiently close to $1$.
\begin{lem}
\label{lem:unif-bound-Rbar}We have $\sup_{t}\|\overline{R}(t)\|_{2}^{2}/\|\overline{H}\|_{2}^{2}<\infty$.
\end{lem}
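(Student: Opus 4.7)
The plan is to exploit the fact that, since $\overline{\frak{A}}$ and $\overline{H}$ do not depend on $t$ and $\overline{R}(0)=0$, the ODE $\partial_{t}\overline{R}=\overline{\frak{A}}\overline{R}+\overline{H}$ solves via Duhamel to
\[
\overline{R}(t)=\int_{0}^{t}e^{s\overline{\frak{A}}}\overline{H}\,ds.
\]
The first step is to recognise the algebraic factorisation $\overline{\frak{A}}=-cT^{*}T$ and $\overline{H}=-cT^{*}\overline{G}^{y}$, where $c=\partial_{2}^{2}{\cal L}>0$ and $T:{\cal H}\to L^{2}({\cal P})$ is the bounded linear map $(TS)(x)=\sum_{i}\mathbb{E}_{C}[S_{i}(C_{i-1},C_{i})\tfrac{\partial\hat{y}(x)}{\partial\overline{w}_{i}(C_{i-1},C_{i})}]$ on the Hilbert space ${\cal H}=\bigoplus_{i}L^{2}(P_{i-1}\times P_{i})$. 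This immediately yields that $\overline{\frak{A}}$ is bounded, self-adjoint, negative semi-definite, and compact, exactly as $\frak{A}$ is handled in the proof of Theorem~\ref{thm:variance-global-opt-init}.

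The heart of the proof is the construction of a random $S\in{\cal H}$ such that $\overline{G}^{y}=TS$ almost surely with the quantitative bound $\mathbf{E}\|S\|^{2}\le K\,\mathbf{E}\|\overline{H}\|_{2}^{2}$. I would obtain this by revisiting the explicit Gaussian construction behind Theorem~\ref{thm:clt-G} (applied with $\overline{w}$ in place of $w(t)$): $\overline{G}^{y}$ unfolds recursively from the primitive Gaussian fields $\overline{G}_{i}^{H},\overline{G}_{i}^{\partial H}$, and the backward factors $\prod_{j>i}\overline{w}_{j}\varphi'_{j-1}$ appearing in the recursion for $\overline{G}^{\tilde{H}}$ can be regrouped via the identity $\tfrac{\partial\hat{y}(x)}{\partial\overline{w}_{i}(c_{i-1},c_{i})}=\tfrac{\partial\hat{y}(x)}{\partial H_{i}(c_{i})}\varphi_{i-1}(x,c_{i-1})$ into an explicit random coefficient $S_{i}(C_{i-1},C_{i})$ of $\tfrac{\partial\hat{y}(x)}{\partial\overline{w}_{i}(C_{i-1},C_{i})}$. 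The $L^{2}$-moment control on $S$ is then inherited from the uniform-in-$t$ moment estimates on $\overline{w}$ given by Lemma~\ref{lem:unif-bound-w}.

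Given this representation, the proof concludes cleanly. We have $\overline{H}=-cT^{*}\overline{G}^{y}=-cT^{*}TS=\overline{\frak{A}}S$, so Duhamel simplifies to
\[
\overline{R}(t)=\int_{0}^{t}e^{s\overline{\frak{A}}}\overline{\frak{A}}S\,ds=e^{t\overline{\frak{A}}}S-S.
\]
Because $\overline{\frak{A}}\preceq 0$ gives $\|e^{t\overline{\frak{A}}}\|_{\mathrm{op}}\le 1$, we obtain $\|\overline{R}(t)\|_{2}\le 2\|S\|$ uniformly in $t$. Combining with the moment bound on $S$ and dividing by $\|\overline{H}\|_{2}^{2}$ yields $\sup_{t}\|\overline{R}(t)\|_{2}^{2}/\|\overline{H}\|_{2}^{2}<\infty$.

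The main obstacle is the construction step. Proposition~\ref{prop:variance-decomp}, applied with $\overline{w}$, is suggestive in that it asserts $\overline{G}^{y}\in\overline{\mathrm{range}(T)}$ almost surely (via the orthogonality $\overline{G}^{y}\perp\ker(T^{*})$), but this alone cannot deliver a pathwise representation $\overline{G}^{y}=TS$ with a quantitatively controlled $S$: since $T$ is compact, its range is generally not closed and the pseudo-inverse is unbounded, so the abstract closure statement is not enough. The quantitative representation must instead be drawn from the explicit Gaussian construction of $\overline{G}^{y}$ in the CLT, and additional care is needed to propagate the $L^{p}$-bounds on $\overline{w}$ through the layerwise formula for $S_{i}$.
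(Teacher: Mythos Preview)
Your route is substantially more careful than the paper's. The paper's entire proof reads: since $\overline{\frak{A}}$ is self-adjoint with nonpositive eigenvalues, ``$\overline{R}/\|\overline{H}\|_{2}=\exp(t\overline{\frak{A}})\overline{H}/\|\overline{H}\|_{2}$ is bounded uniformly in time.'' The displayed identity is a slip: the actual solution of $\partial_{t}\overline{R}=\overline{\frak{A}}\overline{R}+\overline{H}$ with $\overline{R}(0)=0$ is the Duhamel integral $\overline{R}(t)=\int_{0}^{t}e^{s\overline{\frak{A}}}\overline{H}\,ds$, not $e^{t\overline{\frak{A}}}\overline{H}$. And from the contraction bound $\|e^{s\overline{\frak{A}}}\|_{\mathrm{op}}\le 1$ alone one only gets $\|\overline{R}(t)\|_{2}\le t\|\overline{H}\|_{2}$, which is not uniform. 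You have correctly put your finger on the missing ingredient: one needs $\overline{H}\in\mathrm{range}(\overline{\frak{A}})$ (not merely its closure) with a controlled preimage, and your device $\overline{H}=\overline{\frak{A}}S\Rightarrow\overline{R}(t)=e^{t\overline{\frak{A}}}S-S$ is exactly what turns the Duhamel integral into a uniformly bounded expression.

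Two remarks on your execution. First, the quantitative claim $\mathbf{E}\|S\|^{2}\le K\,\mathbf{E}\|\overline{H}\|_{2}^{2}$ is not what your construction will deliver, nor is it needed: such a reverse bound would amount to bounded invertibility of $\overline{\frak{A}}$ on its range, which fails for compact operators. What the explicit layerwise construction (via Lemma~\ref{lem:unif-bound-w}) gives is an almost-sure finite $\|S\|$ with moments controlled by deterministic constants, hence $\sup_{t}\|\overline{R}(t)\|_{2}\le 2\|S\|<\infty$ a.s.; the ratio with $\|\overline{H}\|_{2}^{2}$ is then a $t$-independent random constant, which is the content of the lemma. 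Second, the construction of $S\in{\cal H}$ with $\overline{G}^{y}=TS$ is slightly more delicate than your sketch suggests: the natural candidate coming from the covariance formula in Proposition~\ref{prop:variance-decomp} would write $S_{i}(c_{i-1},c_{i})$ as $\overline{w}_{i}(c_{i-1},c_{i})$ times a centered white noise on $\Omega_{i-1}$, and white noise is not an $L^{2}(P_{i-1})$ function. What Proposition~\ref{prop:variance-decomp} does supply is the operator inequality $\mathrm{Cov}(\overline{G}^{y})\preceq K\cdot TT^{*}$ on $L^{2}({\cal P})$; from this, a standard factorisation (Douglas' lemma) yields a bounded $B:L^{2}({\cal P})\to{\cal H}$ with $\overline{G}^{y}=TB\xi$ in distribution for a standard Gaussian $\xi$, giving $S=B\xi\in{\cal H}$ a.s. with the required moment control. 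This is the right way to close your step~2.
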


\begin{proof}
We have that $\overline{R}$ is the solution to 
\[
\partial_{t}\overline{R}=\overline{\frak{A}}\overline{R}+\overline{H}.
\]
In the proof of Theorem \ref{thm:variance-global-opt-init}, we observe
that $\overline{\frak{A}}$ is self-adjoint and has nonpositive eigenvalues.
Hence, 
\[
\overline{R}/\|\overline{H}\|_{2}=\exp(t\overline{\frak{A}})\overline{H}/\|\overline{H}\|_{2}
\]
is bounded uniformly in time.
\end{proof}
\begin{lem}
\label{lem:bound-AR}We have the following estimate for $\delta$
sufficiently small (depending on $\eta$): 
\[
{\bf E}\|(\frak{A}_{t}-\overline{\frak{A}})R\|_{2}^{2}\le K_{\delta}\int_{t}^{\infty}s^{1+\delta}\mathbb{E}_{Z}\left[\partial_{2}{\cal L}(Y,\hat{y}(s,X))^{2}\right]ds+K_{\delta}\mathbb{E}_{Z}\left[\partial_{2}{\cal L}(Y,\hat{y}(t,X))^{2}\right].
\]
\end{lem}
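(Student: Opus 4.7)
The plan is to decompose $(\frak{A}_{t}-\overline{\frak{A}})R$ according to the type of prefactor. The operator $\frak{A}_{t}$ contains one ``principal'' term (the first line of the ODE for $\partial_{t}R_{i}$) with prefactor $\partial_{2}^{2}\mathcal{L}(Y,\hat{y}(t,X))$, which survives in the infinite-time limit $\overline{\frak{A}}$ with $w\to\overline{w}$ and $\partial_{2}^{2}\mathcal{L}(Y,\hat{y}(t,X))$ replaced, after conditioning on $X$, by the constant $\partial_{2}^{2}\mathcal{L}$ from Assumption \ref{Assump:Assumption_2}; and four ``extra'' terms involving second derivatives of $\hat{y}(t,X)$ with prefactor $\partial_{2}\mathcal{L}(Y,\hat{y}(t,X))$, which have no counterpart in $\overline{\frak{A}}$ since $\partial_{2}\mathcal{L}$ vanishes at the global optimum. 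Each group contributes to one of the two summands on the right-hand side.

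For the extra terms, I would apply Cauchy--Schwarz in $Z$ to extract a factor $\mathbb{E}_{Z}[\partial_{2}\mathcal{L}(Y,\hat{y}(t,X))^{2}]^{1/2}$, leaving a remainder $\mathbb{E}_{Z}[F^{2}]^{1/2}$ where $F$ is linear in $R$ with coefficients given by bounded or polynomially growing functions of $\{w_{i}(t,\cdot,\cdot)\}$. Squaring, averaging over $(C,\underline{G})$, and invoking Lemma \ref{lem:unif-bound-w} for uniform-in-$t$ moment bounds on $w_{i}$ together with uniform-in-$t$ $L^{p}$ bounds on $R(\underline{G},t,\cdot,\cdot)$ bounds $\mathbf{E}\mathbb{E}_{C}\mathbb{E}_{Z}[F^{2}]$ by a finite constant, yielding the summand $K_{\delta}\mathbb{E}_{Z}[\partial_{2}\mathcal{L}(Y,\hat{y}(t,X))^{2}]$. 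For the principal difference
\[
\tfrac{\partial\hat{y}(t,X)}{\partial w_{i}}\,\partial_{2}^{2}\mathcal{L}(Y,\hat{y}(t,X))\,\mathbb{E}_{C}\bigl[R_{j}\tfrac{\partial\hat{y}(t,X)}{\partial w_{j}}\bigr]-\tfrac{\partial\hat{y}(X)}{\partial\overline{w}_{i}}\,\partial_{2}^{2}\mathcal{L}\,\mathbb{E}_{C}\bigl[R_{j}\tfrac{\partial\hat{y}(X)}{\partial\overline{w}_{j}}\bigr],
\]
I would telescope into three Lipschitz-type pieces: one in $\partial\hat{y}/\partial w_{i}$, one in $\partial_{2}^{2}\mathcal{L}(Y,\hat{y}(t,X))-\partial_{2}^{2}\mathcal{L}$ (handled via the conditional expectation over $Y$ given $X$), and one in $\partial\hat{y}/\partial w_{j}$. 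Each piece is pointwise dominated by $K\sum_{k}|w_{k}(t,c_{k-1},c_{k})-\overline{w}_{k}(c_{k-1},c_{k})|$ up to polynomial factors in the weights; taking squared expectations and invoking Corollary \ref{cor:conv-w} with $k=1$ and $\delta$ small gives the bound $K_{\delta}\int_{t}^{\infty}s^{1+\delta}\mathbb{E}_{Z}[\partial_{2}\mathcal{L}(Y,\hat{y}(s,X))^{2}]\,ds$, after one further Cauchy--Schwarz step that absorbs the uniformly bounded factors of $w$ and $R$.

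The main obstacle will be establishing the uniform-in-$t$ $L^{p}$ bounds on $R(\underline{G},t,\cdot,\cdot)$ that both halves of the argument rely on, since Theorem \ref{thm:well-posed} only yields bounds growing with $T$. My intended route is to exploit the identity $\partial_{t}(R-\overline{R})=\overline{\frak{A}}(R-\overline{R})+(\frak{A}_{t}-\overline{\frak{A}})R+(H_{t}-\overline{H})$ together with the non-positivity of the spectrum of $\overline{\frak{A}}$ (from the proof of Theorem \ref{thm:variance-global-opt-init}), the uniform bound on $\overline{R}$ from Lemma \ref{lem:unif-bound-Rbar}, and the decay of $H_{t}-\overline{H}$ from Lemma \ref{lem:bound-H}. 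This is formally circular with the present lemma, but can be closed by first using a coarse polynomial-in-$t$ a priori bound on $R$ (absorbed into $K_{\delta}$ by mildly increasing $\delta$ and using the extra margin from the hypothesis $\int_{0}^{\infty}s^{2+\eta}\mathbb{E}_{Z}[\partial_{2}\mathcal{L}^{2}]^{1/2}\,ds<\infty$) to obtain a preliminary form of the estimate, then closing a Gronwall-type bootstrap via $\exp(t\overline{\frak{A}})$ to promote the $\overline{R}$ bound into uniform $L^{p}$ control of $R$, and finally re-running the decomposition to recover the stated bound with $K_{\delta}$ independent of $t$.
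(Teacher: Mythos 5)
Your decomposition of $\frak{A}_{t}-\overline{\frak{A}}$ into the principal $\partial_{2}^{2}{\cal L}$-term and the four ``extra'' $\partial_{2}{\cal L}$-terms, followed by H\"older/Cauchy--Schwarz and Corollary \ref{cor:conv-w}, is exactly the paper's decomposition and gives the two summands on the right-hand side. The issue is in the step you yourself flag as the main obstacle: obtaining uniform-in-$t$ $L^{p}$ control of $R(\underline{G},t)$.

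Your proposed route has a genuine gap. First, the ``coarse polynomial-in-$t$ a priori bound on $R$'' is not available: Theorem \ref{thm:exist-R} only gives $\|R^{B}(G)\|_{T,2}\le\exp(K_{T}B)\|G\|_{T,2}$ with $B$ that must be taken large in $T$, so the a priori growth is at best exponential in $T$, and no amount of ``mildly increasing $\delta$'' absorbs an exponential factor into the tail $\int_{t}^{\infty}s^{2+\eta}\mathbb{E}_{Z}[\partial_{2}{\cal L}^{2}]\,ds$. Second, there is a norm mismatch in your bootstrap: the variation-of-constants argument via $\exp(t\overline{\frak{A}})$ gives Gronwall control in $L^{2}$, but the coefficient of $R$ in $\frak{A}_{t}-\overline{\frak{A}}$ involves unbounded weights, so the H\"older estimate one actually obtains is of the form $\|(\frak{A}_{t}-\overline{\frak{A}})R\|_{2}\le\epsilon(t)\|R\|_{2+\delta}$ with the higher norm on the right; the $L^{2}$ Gronwall loop does not close without first establishing the higher-moment control it is supposed to produce. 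Third, you do not address the truncation needed to handle the unboundedness of the weights in the first place.

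The paper sidesteps all of this by not going through $\overline{\frak{A}}$ at all for the uniform bound. Instead it differentiates $\|R^{B}(t)\|_{2}^{2}$ directly (with the $B$-truncated operator $\frak{A}_{t}^{B}$), writes the principal term at each fixed $t$ as $-\partial_{2}^{2}{\cal L}\cdot\bigl(\sum_{j}\mathbb{E}_{C'}\{R_{j}^{B}\tfrac{\partial\hat{y}}{\partial w_{j}}\}\bigr)^{2}\le 0$ plus an error controlled by $\|\partial_{2}^{2}{\cal L}(Y,\hat{y}(t,X))-\partial_{2}^{2}{\cal L}\|$, and bounds the ``extra'' and $H$ contributions by $(1+B)\|\partial_{2}{\cal L}\|\|R^{B}\|^{2}$ and $\|H\|^{2}$. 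Since all three error coefficients are integrable in $t$ under Assumption (\ref{eq:assump-fast-conv}) (using Lemma \ref{lem:unif-bound-w} for the weight moments), Gronwall gives $\sup_{t}\mathbf{E}\|R^{B}(t)\|_{2}^{2}\le\exp(K(1+B))$ uniformly in $t$, and then the $R^{B}$-vs.-$R^{B'}$ comparison (as in Theorem \ref{thm:exist-R}) lets $B\to\infty$ and lifts to $L^{2k}$ for $k$ near $1$. This exploits the sign of the quadratic form of $\frak{A}_{t}$ at each $t$, not just at $t=\infty$, and therefore avoids the circularity entirely. Your plan would need to replace the polynomial a priori bound and the $\exp(t\overline{\frak{A}})$ bootstrap with this direct dissipativity argument before the remaining decomposition steps can be run.
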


\begin{proof}
We first follow the proof of Theorem \ref{thm:exist-R} to give uniform-in-time
estimates for $R(t)$ under the assumption (\ref{eq:assump-fast-conv}).
For a threshold $B$, define the linear operator 
\begin{align*}
\frak{A}_{t}^{B}S(c_{i-1},c_{i}) & =-\mathbb{E}_{Z}\left[\frac{\partial\hat{y}(t,X)}{\partial w_{i}(c_{i-1},c_{i})}\cdot\partial_{2}^{2}{\cal L}(Y,\hat{y}(t,X))\cdot\left(\sum_{j=1}^{L}\mathbb{E}_{C_{j-1}',C_{j}'}\left\{ S_{j}(C_{j-1}',C_{j}')\frac{\partial\hat{y}(t,X)}{\partial w_{j}(C_{j-1}',C_{j}')}\right\} \right)\right]\\
 & \quad+\tilde{\frak{A}}_{t}^{B}S(c_{i-1},c_{i}),\\
\tilde{\frak{A}}_{t}^{B}S(c_{i-1},c_{i}) & =-\Bigg\{\mathbb{E}_{Z}\left[\partial_{2}{\cal L}(Y,\hat{y}(t,X))\cdot\left(\sum_{j=1}^{L}\mathbb{E}_{C_{j-1}',C_{j}'}\left\{ S_{j}(C_{j-1}',C_{j}')\frac{\partial^{2}\hat{y}(t,X)}{\partial w_{j}(C_{j-1}',C_{j}')\partial w_{i}(c_{i-1},c_{i})}\right\} \right)\right]\\
 & \quad-\mathbb{E}_{Z}\left[\partial_{2}{\cal L}(Y,\hat{y}(t,X))\cdot\mathbb{E}_{C_{i-1}'}\left\{ S_{i}(C_{i-1}',c_{i})\frac{\partial^{2}\hat{y}(t,X)}{\partial_{*}w_{i}(C_{i-1}',c_{i})\partial w_{i}(c_{i-1},c_{i})}\right\} \right]\\
 & \quad-\mathbb{E}_{Z}\left[\partial_{2}{\cal L}(Y,\hat{y}(t,X))\cdot\mathbb{E}_{C_{i+1}'}\left\{ S_{i}(c_{i},C_{i+1}')\frac{\partial^{2}\hat{y}(t,X)}{\partial_{*}w_{i+1}(c_{i},C_{i+1}')\partial w_{i}(c_{i-1},c_{i})}\right\} \right]\\
 & \quad-\mathbb{E}_{Z}\left[\partial_{2}{\cal L}(Y,\hat{y}(t,X))\cdot\mathbb{E}_{C_{i-2}'}\left\{ S_{i}(C_{i-2}',c_{i-1})\frac{\partial^{2}\hat{y}(t,X)}{\partial_{*}w_{i-1}(C_{i-2}',c_{i-1})\partial w_{i}(c_{i-1},c_{i})}\right\} \right]\Bigg\}\mathbb{B}^{B}(t,c_{i-1},c_{i}),
\end{align*}
and 
\begin{align*}
H_{t}(c_{i-1},c_{i}) & =-\mathbb{E}_{Z}\left[\frac{\partial\hat{y}(t,X)}{\partial w_{i}(c_{i-1},c_{i})}\cdot\partial_{2}^{2}{\cal L}(Y,\hat{y}(t,X))\cdot\underline{G}^{y}(t,X)\right]-\mathbb{E}_{Z}\left[\partial_{2}{\cal L}(Y,\hat{y}(t,X))\cdot\underline{G}_{i}^{w}(t,c_{i-1},c_{i},X)\right],\\
\partial_{t}R^{B}(\underline{G},t,\cdot,\cdot) & =\frak{A}_{t}^{B}(R^{B}(\underline{G},t))+H(\underline{G},t).
\end{align*}
Following Theorem \ref{thm:exist-R}, we have the bound 
\begin{align*}
\left|\tilde{\frak{A}}_{t}^{B}(R^{B}(\underline{G},t))(c_{i-1},c_{i})\right| & \le K\Big(\|R^{B}(t)\|\|w_{i}(t,c_{i})\|\|w_{i+1}(t,c_{i})\|+\|R_{i}^{B}(t,c_{i})\|\|w_{i+1}(t,c_{i})\|\\
 & \qquad+\|R_{i+1}^{B}(t,c_{i})\|+\|R_{i}^{B}(t,c_{i-1})\|\|w_{i}(t,c_{i})\|\Big)\cdot\|\partial_{2}{\cal L}(Y,\hat{y}(t,X))\|\mathbb{B}^{B}(t,c_{i-1},c_{i}).
\end{align*}
Thus 
\[
\mathbb{E}_{C_{i-1},C_{i}}\left[\left|\tilde{\frak{A}}_{t}^{B}(R^{B}(\underline{G},t))(C_{i-1},C_{i})\right|^{2}\right]\le K\|R^{B}(t)\|^{2}(1+B^{2})\|\partial_{2}{\cal L}(Y,\hat{y}(t,X))\|^{2},
\]
where we have used uniform-in-time boundedness of $W(t)$ from Lemma
\ref{lem:unif-bound-w}. We then have 
\begin{align*}
 & \partial_{t}\|R^{B}(t)\|_{2}^{2}\\
 & \le-\sum_{i=1}^{L}\mathbb{E}_{Z,C_{i-1},C_{i}}\left[R_{i}^{B}(\underline{G},t,C_{i-1},C_{i})\frac{\partial\hat{y}(t,X)}{\partial w_{i}(C_{i-1},C_{i})}\cdot\partial_{2}^{2}{\cal L}(Y,y)\cdot\left(\sum_{j=1}^{L}\mathbb{E}_{C_{j-1}',C_{j}'}\left\{ R_{j}^{B}(\underline{G},t,C_{j-1}',C_{j}')\frac{\partial\hat{y}(t,X)}{\partial w_{j}'(C_{j-1}',C_{j}')}\right\} \right)\right]\\
 & \quad+\|R^{B}(t)\|\cdot K\|R^{B}(t)\|(1+B)\|\partial_{2}{\cal L}(Y,\hat{y}(t,X))\|+\|H(\underline{G},t)\|^{2}\\
 & \le-\partial_{2}^{2}{\cal L}\left(\sum_{j=1}^{L}\mathbb{E}_{C_{j-1}',C_{j}'}\left\{ R_{j}^{B}(\underline{G},t,C_{j-1}',C_{j}')\frac{\partial\hat{y}(t,X)}{\partial w_{j}'(C_{j-1}',C_{j}')}\right\} \right)^{2}+K\|w(t)\|_{K}^{K_{L}}\|\partial_{2}^{2}{\cal L}(Y,\hat{y}(t,X))-\partial_{2}^{2}{\cal L}\|\|R^{B}(t)\|_{2}^{2}\\
 & \quad+K\|R^{B}(t)\|^{2}(1+B)\|\partial_{2}{\cal L}(Y,\hat{y}(t,X))\|+\|H(\underline{G},t)\|^{2}.
\end{align*}
Thus, 
\begin{align*}
\partial_{t}\|R^{B}(\underline{G},t)\|_{2}^{2} & \le K(1+B)\|R^{B}(t)\|^{2}\|\partial_{2}{\cal L}(Y,\hat{y}(t,X))\|+K\|w(t)\|_{K}^{K_{L}}\|\partial_{2}^{2}{\cal L}(Y,\hat{y}(t,X))-\partial_{2}^{2}{\cal L}\|_{2}^{2}+\|H(\underline{G},t)\|^{2}.
\end{align*}
Using that 
\[
\int_{0}^{\infty}\|\partial_{2}^{2}{\cal L}(Y,\hat{y}(t,X))-\partial_{2}^{2}{\cal L}\|dt<\infty,
\]
\[
{\bf E}\int_{0}^{\infty}\|H(\underline{G},t)\|^{2}dt<\infty,
\]
\[
\int_{0}^{\infty}\|\partial_{2}{\cal L}(Y,\hat{y}(t,X))\|dt<\infty,
\]
as well as Lemma \ref{lem:unif-bound-w}, we can then conclude 
\[
\sup_{t}{\bf E}\|R^{B}(\underline{G},t)\|_{2}^{2}\le\exp(K(1+B)).
\]

Next, we compare $R^{B}$ and $R^{B'}$ as in Theorem \ref{thm:exist-R}
for $B'>B$. We have the bound, recalling the definition of $\mathbb{D}_{B,B'}$
in Theorem \ref{thm:exist-R}, 
\begin{align*}
 & \partial_{t}{\bf E}\|(R^{B}-R^{B'})(t)\|_{2}^{2}\\
 & \le-\sum_{i=1}^{L}{\bf E}\mathbb{E}_{Z,C_{i-1},C_{i}}\Bigg[(R_{i}^{B}-R_{i}^{B'})(\underline{G},t,C_{i-1},C_{i})\frac{\partial\hat{y}(t,X)}{\partial w_{i}(C_{i-1},C_{i})}\cdot\partial_{2}^{2}{\cal L}(Y,y)\\
 & \qquad\qquad\times\left(\sum_{j=1}^{L}\mathbb{E}_{C_{j-1}',C_{j}'}\left\{ (R_{j}^{B}-R_{j}^{B'})(\underline{G},t,C_{j-1}',C_{j}')\frac{\partial\hat{y}(t,X)}{\partial w_{j}'(C_{j-1}',C_{j}')}\right\} \right)\Bigg]\\
 & \quad+K{\bf E}\|R^{B}(t)\|^{2}(1+B)\|\partial_{2}{\cal L}(Y,\hat{y}(t,X))\|\\
 & \quad+K_{\delta}\sum_{i=1}^{L}{\bf E}\mathbb{E}_{C_{i-1},C_{i}}\left[\mathbb{D}_{B,B'}(t,C_{i-1},C_{i})\exp(K(1+B'))\int_{0}^{t}(\|\underline{G}_{i+1}(s,C_{i})\|^{2}+\|\underline{G}_{i}(s,C_{i})\|^{2})ds\right]\|\partial_{2}{\cal L}(Y,\hat{y}(t,X))\|\\
 & \le-\partial_{2}^{2}{\cal L}{\bf E}\left[\left(\sum_{j=1}^{L}\mathbb{E}_{C_{j-1}',C_{j}'}\left\{ (R_{j}^{B}-R_{j}^{B'})(\underline{G},t,C_{j-1}',C_{j}')\frac{\partial\hat{y}(t,X)}{\partial w_{j}'(C_{j-1}',C_{j}')}\right\} \right)^{2}\right]\\
 & \quad+K\|w(t)\|_{K}^{K_{L}}\|\partial_{2}^{2}{\cal L}(Y,\hat{y}(t,X))-\partial_{2}^{2}{\cal L}\|_{2}{\bf E}\|R^{B}-R^{B'}\|^{2}\\
 & \quad+K{\bf E}\|(R^{B}-R^{B'})(t)\|^{2}(1+B)\|\partial_{2}{\cal L}(Y,\hat{y}(t,X))\|+\exp(K(1+B')-cB^{2})\|\partial_{2}{\cal L}(Y,\hat{y}(t,X))\|.
\end{align*}
Again using our convergence assumption, we can conclude that 
\[
\sup_{t}{\bf E}\|(R^{B}-R^{B'})(t)\|^{2}\le\exp(K(1+B))\exp(K(1+B')-cB^{2}).
\]
In particular, $R^{B}$ converges (uniformly in time $t$) in $L^{2}$
to the limit $R$, whose norm is uniformly bounded in time.

An identical argument shows that upon having finite moment $\|H_{t}\|^{2k+\delta}$,
we have that $R^{B}$ converges (uniformly in time) in $L^{2k}$ to
a limit $R$, whose $L^{2k}$ norm is uniformly bounded in time. This
is guaranteed for all $k$ sufficiently close to $1$ (in terms of
$\eta$) by the remark following Lemma \ref{lem:bound-H}. 

From Holder's Inequality, we have the following estimate for sufficiently
small $\delta$:
\[
\|\tilde{\frak{A}}_{t}R(t)\|_{2}^{2}\le K_{\delta}\|R(t)\|_{2+\delta}^{2}\|\partial_{2}{\cal L}(Y,\hat{y}(t,X))\|^{2},
\]
and 
\[
\|((\frak{A}_{t}-\tilde{A}_{t})-\overline{\frak{A}})R(t)\|_{2}^{2}\le K_{\delta}\|R(t)\|_{2}^{2}\int_{t}^{\infty}s^{1+\delta}\mathbb{E}_{Z}\left[\partial_{2}{\cal L}(Y,\hat{y}(s,X))^{2}\right]ds.
\]
Thus, 
\[
{\bf E}\|(\frak{A}_{t}-\overline{\frak{A}})R(t)\|_{2}^{2}\le K_{\delta}\int_{t}^{\infty}s^{1+\delta}\mathbb{E}_{Z}\left[\partial_{2}{\cal L}(Y,\hat{y}(s,X))^{2}\right]ds+\mathbb{E}_{Z}\left[\partial_{2}{\cal L}(Y,\hat{y}(t,X))^{2}\right].
\]
\end{proof}
\begin{lem}
We have for $\delta$ sufficiently small in $\eta$ that 
\[
{\bf E}\|R(t)-\overline{R}\|_{2}^{2}\le K_{\delta}\left(\int_{t}^{\infty}(s^{2+\delta}-t^{2+\delta}+1)\mathbb{E}_{Z}\left[\partial_{2}{\cal L}(Y,\hat{y}(s,X))^{2}\right]ds\right).
\]
\end{lem}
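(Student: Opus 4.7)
I would start from the backward integral representation already signaled in the preceding discussion, namely
\[
(R-\overline{R})(s) \;=\; -\!\int_s^\infty \exp\!\big(\overline{\frak{A}}(s-t)\big)\, F_t\, dt,
\qquad F_t := (\frak{A}_t-\overline{\frak{A}})\,R(t) + (H_t-\overline{H}),
\]
which follows from $\partial_t(R-\overline{R}) = \overline{\frak{A}}(R-\overline{R}) + F_t$, the initial condition $(R-\overline{R})(0)=0$, and -- crucially -- from the vanishing $(R-\overline{R})(t)\to 0$ in $L^2$ as $t\to\infty$. The vanishing combines $\overline{\frak{A}}$ being self-adjoint and non-positive (from the proof of Theorem~\ref{thm:variance-global-opt-init}) with Proposition~\ref{prop:variance-decomp}, which places $\overline{H}$, and up to remainders controlled by Lemma~\ref{lem:bound-H} also $H_t$ and $(\frak{A}_t-\overline{\frak{A}})R(t)$, inside $\ker(\overline{\frak{A}})^\perp$. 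On this subspace $\exp(\overline{\frak{A}}\tau)$ is an $L^2$-contraction for $\tau\geq 0$, which justifies the representation above.

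Granting the contraction, I would take $L^2$-norms and apply Cauchy--Schwarz against the weight $w(s) = (1+s)^{-(1+\delta/2)}$, whose tail $\int_t^\infty w(s)\,ds$ is uniformly bounded by $K_\delta$:
\[
\|R(t)-\overline{R}\|_2^2 \;\leq\; K_\delta \int_t^\infty (1+s)^{1+\delta/2}\, \|F_s\|_2^2\, ds.
\]
Taking expectation and inserting
\[
\mathbf{E}\|F_s\|_2^2 \;\leq\; K_\delta \int_s^\infty u^{1+\delta'}\mathbb{E}_Z[\partial_2\mathcal{L}(Y,\hat y(u,X))^2]\,du \;+\; K_\delta\, \mathbb{E}_Z[\partial_2\mathcal{L}(Y,\hat y(s,X))^2],
\]
from Lemmas~\ref{lem:bound-H} and~\ref{lem:bound-AR} (with $\delta'>0$ taken small in $\eta$), I would handle the two pieces separately. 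The pointwise piece contributes $K_\delta \int_t^\infty (1+s)^{1+\delta/2}\mathbb{E}_Z[\partial_2\mathcal{L}(s)^2]\,ds$, which is dominated by $K_\delta\int_t^\infty \mathbb{E}_Z[\partial_2\mathcal{L}(s)^2]\,ds$ uniformly in $t$ after absorbing $(1+s)^{1+\delta/2} \le 1 + s^{2+\delta}$ into the two terms $+1$ and $s^{2+\delta}$ of the target integrand. The nested piece is handled by Fubini: exchanging the $s$ and $u$ integrals yields an inner factor $\int_t^u (1+s)^{1+\delta/2}\,ds \lesssim u^{2+\delta/2} - t^{2+\delta/2}$, which, after relabeling $\delta$, produces the form $K_\delta \int_t^\infty (s^{2+\delta}-t^{2+\delta})\mathbb{E}_Z[\partial_2\mathcal{L}(s)^2]\,ds$ claimed in the conclusion.

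\textbf{Main obstacle.} The delicate step is the first paragraph, namely justifying the backward representation, equivalently showing $\|(R-\overline{R})(t)\|_2\to 0$. Since the compact self-adjoint operator $\overline{\frak{A}}$ has spectrum accumulating at $0$ and therefore lacks a strict spectral gap, a direct energy inequality $\partial_t\|U\|_2^2 \leq 2\|U\|_2\|F\|_2$ only gives uniform-in-time boundedness of $\|R-\overline{R}\|_2$, not decay. The structural input of Proposition~\ref{prop:variance-decomp} at frozen time $t=\infty$ puts $\overline{H}$ in $\ker(\overline{\frak{A}})^\perp$; propagating this property to $H_t$ and $(\frak{A}_t-\overline{\frak{A}})R(t)$ requires a time-dependent variant that uses the uniform-in-time weight bound of Lemma~\ref{lem:unif-bound-w} together with the $L^{2k}$-convergence $w_i(t)\to\overline{w}_i$ from Corollary~\ref{cor:conv-w}, as well as the uniform-in-time bound on $\|R(t)\|_2$ obtained by reproducing the truncation scheme of Theorem~\ref{thm:exist-R} under assumption~(\ref{eq:assump-fast-conv}), which is in fact performed already inside the proof of Lemma~\ref{lem:bound-AR}. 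Once this structural step is in place, the remainder of the proof is the clean Cauchy--Schwarz and Fubini bookkeeping sketched above.
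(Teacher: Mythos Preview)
Your proposal has a genuine gap at the very first step. In the backward representation
\[
(R-\overline{R})(s)\;=\;-\int_s^\infty \exp\!\big(\overline{\frak{A}}(s-t)\big)\,F_t\,dt
\]
the time argument of the semigroup is $\tau=s-t\le 0$, not $\tau\ge 0$. Since $\overline{\frak{A}}$ is compact, self-adjoint and non-positive, on $\ker(\overline{\frak{A}})^{\perp}$ its eigenvalues $\lambda_j$ are strictly negative with some smallest one $\lambda_{\min}<0$; consequently $\big\|\exp(\overline{\frak{A}}\tau)\big\|_{\mathrm{op}}=e^{|\lambda_{\min}|\,|\tau|}$ for $\tau<0$, which is an \emph{expansion}, not a contraction. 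Under assumption~(\ref{eq:assump-fast-conv}) the forcing $F_t$ decays only polynomially (Lemmas~\ref{lem:bound-H} and~\ref{lem:bound-AR}), so the integrand $\exp(\overline{\frak{A}}(s-t))F_t$ is not integrable on $[s,\infty)$ and the backward formula is not even defined. Placing $F_t$ in $\ker(\overline{\frak{A}})^{\perp}$, which you work hard to argue, does not help: it is exactly on that subspace that the backward semigroup blows up. Your contraction statement ``$\exp(\overline{\frak{A}}\tau)$ is an $L^2$-contraction for $\tau\ge 0$'' is correct but is being applied with the wrong sign of~$\tau$.

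The paper does not use any semigroup representation here. It differentiates the energy $\mathbf{E}\|R(t)-\overline{R}(t)\|_2^2$, uses non-positivity of $\overline{\frak{A}}$ only to \emph{drop} the sign-definite term $\langle R-\overline{R},\overline{\frak{A}}(R-\overline{R})\rangle\le 0$, and then applies Cauchy--Schwarz to the cross terms $\langle R-\overline{R},\,H_t-\overline{H}\rangle$ and $\langle R-\overline{R},\,(\frak{A}_t-\overline{\frak{A}})R\rangle$, feeding in the estimates of Lemmas~\ref{lem:bound-H} and~\ref{lem:bound-AR}. This yields a differential inequality for $(\mathbf{E}\|R(t)-\overline{R}\|_2^2)^{1/2}$, from which the stated bound is read off; Proposition~\ref{prop:variance-decomp} plays no role in this lemma. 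If you want to repair your outline, the natural move is to replace the backward Duhamel by either the \emph{forward} variation-of-constants formula $(R-\overline{R})(t)=\int_0^t e^{\overline{\frak{A}}(t-s)}F_s\,ds$ (where $t-s\ge 0$ and the semigroup genuinely contracts), or---more simply---by the energy inequality the paper uses.
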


\begin{proof}
We have 
\[
\partial_{t}{\bf E}\|R(t)-\overline{R}\|_{2}^{2}=2{\bf E}\langle R(t)-\overline{R},(\frak{A}_{t}-\overline{\frak{A}})R(t)+(H_{t}-\overline{H})+\overline{\frak{A}}(R(t)-\overline{R})\rangle.
\]
We now bound 
\[
{\bf E}\langle R(t)-\overline{R},H_{t}-\overline{H}\rangle\le K_{\delta}\left(\int_{t}^{\infty}s^{1+\delta}\mathbb{E}_{Z}\left[\partial_{2}{\cal L}(Y,\hat{y}(s,X))^{2}\right]ds\right)^{1/2}\cdot({\bf E}\|R(t)-\overline{R}\|_{2}^{2})^{1/2},
\]
We also have 
\[
\langle R(t)-\overline{R},\overline{\frak{A}}(R(t)-\overline{R})\rangle\le0.
\]
From Lemma \ref{lem:bound-AR},

\begin{align*}
{\bf E}\left[\langle R(t)-\overline{R},(\frak{A}_{t}-\overline{\frak{A}})\overline{R}\rangle\right] & \le K{\bf E}\left[\|R(t)-\overline{R}\|_{2}^{2}\right]^{1/2}{\bf E}\left[\|(\frak{A}_{t}-\overline{\frak{A}})\overline{R}\|_{2}^{2}\right]^{1/2}\\
 & \le K{\bf E}\left[\|R(t)-\overline{R}\|_{2}^{2}\right]^{1/2}\left(K_{\delta}\int_{t}^{\infty}s^{1+\delta}\mathbb{E}_{Z}\left[\partial_{2}{\cal L}(Y,\hat{y}(s,X))^{2}\right]ds+K_{\delta}\mathbb{E}_{Z}\left[\partial_{2}{\cal L}(Y,\hat{y}(t,X))^{2}\right]\right)^{1/2}.
\end{align*}
Thus, 
\[
\partial_{t}{\bf E}\|R(t)-\overline{R}\|_{2}^{2}\le K_{\delta}\left(\mathbb{E}_{Z}\left[\partial_{2}{\cal L}(Y,\hat{y}(t,X))^{2}\right]+\int_{t}^{\infty}s^{1+\delta}\mathbb{E}_{Z}\left[\partial_{2}{\cal L}(Y,\hat{y}(s,X))^{2}\right]ds\right)^{1/2}({\bf E}\|R(t)-\overline{R}\|_{2}^{2})^{1/2}.
\]
In particular, 
\[
\partial_{t}({\bf E}\|R(t)-\overline{R}\|_{2}^{2})^{1/2}\le K_{\delta}\left(\mathbb{E}_{Z}\left[\partial_{2}{\cal L}(Y,\hat{y}(t,X))^{2}\right]+\int_{t}^{\infty}s^{1+\delta}\mathbb{E}_{Z}\left[\partial_{2}{\cal L}(Y,\hat{y}(s,X))^{2}\right]ds\right).
\]
Hence, 
\[
{\bf E}\|R(t)-\overline{R}\|_{2}^{2}\le K_{\delta}\left(\int_{t}^{\infty}(s^{2+\delta}-t^{2+\delta}+1)\mathbb{E}\left[\partial_{2}{\cal L}(Y,\hat{y}(s,X))^{2}\right]ds\right).
\]
\end{proof}
We can now finish the proof of Theorem \ref{thm:variance-global-opt-fast}.
\begin{proof}[Proof of Theorem \ref{thm:variance-global-opt-fast}]
We have from Theorem \ref{thm:CLT}:
\[
\lim_{N\to\infty}{\bf E}\mathbb{E}_{X}\left[\left(\sqrt{N}(\hat{{\bf y}}(t,X)-\hat{y}(t,X))\right)^{2}\right]={\bf E}\mathbb{E}_{Z}\left[\left(\sum_{i=1}^{L}\mathbb{E}_{C}\left[R_{i}(\underline{G},t,C_{i-1},C_{i})\frac{\partial\hat{y}(t,X)}{\partial w_{i}(C_{i-1},C_{i})}\right]+\underline{G}^{y}(t,X)\right)^{2}\right].
\]
For $\delta>0$ sufficiently small in $\eta$, 
\begin{align*}
 & \Bigg|{\bf E}\mathbb{E}_{Z}\left[\left(\sum_{i=1}^{L}\mathbb{E}_{C}\left[R_{i}(\underline{G},t,C_{i-1},C_{i})\frac{\partial\hat{y}(t,X)}{\partial w_{i}(C_{i-1},C_{i})}\right]+\underline{G}^{y}(t,X)\right)^{2}\right]\\
 & \qquad-{\bf E}\mathbb{E}_{Z}\left[\left(\sum_{i=1}^{L}\mathbb{E}_{C}\left[\overline{R}_{i}(\overline{G},C_{i-1},C_{i})\frac{\partial\overline{\hat{y}}(X)}{\partial w_{i}(C_{i-1},C_{i})}\right]+\overline{G}^{y}(X)\right)^{2}\right]\Bigg|\\
 & \le K{\bf E}\|R(t)-\overline{R}\|_{2}^{2}+K\|w(t)-\overline{w}\|_{2}^{2}\\
 & \le K_{\delta}\left(\int_{t}^{\infty}(s^{2+\delta}-t^{2+\delta}+1)\mathbb{E}_{Z}\left[\partial_{2}{\cal L}(Y,\hat{y}(s,X))^{2}\right]ds\right).
\end{align*}
The conclusion immediately follows from the assumption 
\[
\int_{0}^{\infty}s^{2+\eta}\mathbb{E}[\partial_{2}{\cal L}(Y,\hat{y}(s,X))^{2}]ds<\infty,
\]
together with Theorem \ref{thm:variance-global-opt-init}.
\end{proof}

\section{Experimental details\label{Appendix:experiments}}

We give a full description of the experimental setup in Section \ref{sec:Numerical-illustration}.

\paragraph*{Overall setup.}

We consider a training set of 100 randomly chosen MNIST images of
digits 0, 4, 5 and 9, where we encode the label $y=+1$ if the image
is digit 5 or 9 and $y=-1$ otherwise. This small scale allows us
to run full-batch GD in reasonable time and avoid stochastic algorithms
which create extra fluctuations that are not the focus of our study,
as mentioned in Section \ref{sec:Introduction}.

We use a neural network of 3 layers and $\tanh$ activations in the
hidden layers (i.e. $\varphi_{1}=\varphi_{2}=\tanh$ and $\varphi_{3}={\rm identity}$).
We include the bias in the first layer's weight, and there is no bias
elsewhere. The network has constant widths $N$ which we vary in the
plots of Fig. \ref{fig:illustration}. We train it with full-batch
(discrete-time) GD, a Huber loss ${\cal L}(y,y')={\rm Huber}(y-y')$
and a constant learning rate $7\times10^{-3}$. This learning rate
is sufficiently small so that we obtain smooth evolution plots in
Fig. \ref{fig:illustration}(a), suggesting a behavior close to the
continuous-time GD. We also define 
\[
{\rm Huber}(u)=\begin{cases}
u^{2}/2, & |u|\leq1,\\
|u|-1/2, & {\rm otherwise}.
\end{cases}
\]
All the training is run with one NVIDIA Tesla P100 GPU.

\paragraph*{Initialization.}

In Fig. \ref{fig:illustration}(a)-(c), we initialize the network
randomly as follows. We generate $\mathbf{w}_{1}(0,\cdot,\cdot)\sim\mathcal{N}(0,1/785)$,
$\mathbf{w}_{2}(0,\cdot,\cdot)\sim\mathcal{N}(0.1,1)$ and $\mathbf{w}_{3}(0,\cdot,\cdot)\sim\mathcal{N}(0.1,1)$
all independently. In Fig. \ref{fig:illustration}(e), we first train
a big network which has width $N^{*}=5000$ and has the same configuration
as described for Fig. \ref{fig:illustration}(a)-(c). We stop the
training of this width-$N^{*}$ network at iteration $10^{5}$, which
ensures that it has found a global optimum. Then we consider a network
which has width $N\in\{50,100,200,400\}$ and is initialized as follows:
\begin{itemize}
\item Let $\mathbf{w}_{i}^{*}(T_{e},\cdot,\cdot)$ be the $i$-th layer's
weight of the width-$N^{*}$ network at $T_{e}$ corresponding to
iteration $10^{5}$.
\item For each $i=1,2$, we draw at random $\ell_{i,1},...,\ell_{i,N}$
from $[N^{*}]$ independently. Let $\ell_{0,\cdot}=\ell_{3,\cdot}=1$.
\item We set the initial weight $\mathbf{w}_{i}(0,j_{i-1},j_{i})=\mathbf{w}_{i}^{*}(T_{e},\ell_{i-1,j_{i-1}},\ell_{i,j_{i}})$
for the width-$N$ network.
\end{itemize}
In other words, the width-$N$ network at initialization is a result
of subsampling the neurons of the width-$N^{*}$ network at time $T_{e}$.
The width-$N^{*}$ network is essentially an approximation of the
MF limit. Ideally when $N^{*}\to\infty$ then $N\to\infty$, following
\cite{nguyen2020rigorous}, we have that the width-$N$ network is
initialized at a global optimum. Here this holds approximately at
finite widths, but is sufficient for our purpose.

\paragraph*{Estimation for the plots.}

For Fig. \ref{fig:illustration}(a), we run once for each $N$ and
plot the training loss over time.

For Fig. \ref{fig:illustration}(b), we draw at random a single test
image $x_{{\rm test}}$ (which is not among the 100 training images
and happens to be a digit 5 in this plot). For each $N$, we train
the network till time $T_{b}$, which corresponds to iteration $10^{3}$,
and we repeat $M=1000$ times with $M$ different randomization seeds.
Let $\hat{\mathbf{y}}_{N}^{k}(T_{b},x_{{\rm test}})$ be the prediction
of $x_{{\rm test}}$ of the $k$-th repeat, for $k\in[M]$. Let 
\[
\hat{\mathbf{r}}_{N}^{k}(T_{b},x_{{\rm test}})=\sqrt{N}\bigg(\hat{\mathbf{y}}_{N}^{k}(T_{b},x_{{\rm test}})-\frac{1}{M}\sum_{k'=1}^{M}\hat{\mathbf{y}}_{N}^{k'}(T_{b},x_{{\rm test}})\bigg).
\]
Then we plot the histogram of $\big\{\hat{\mathbf{r}}_{N}^{k}(T_{b},x_{{\rm test}})\big\}_{k\in[M]}$.

For Fig. \ref{fig:illustration}(c), we follow a procedure similar
to that of Fig. \ref{fig:illustration}(b), but with $M=10$. For
each $N$ and each $x$ from the training image set, we obtain $\hat{\mathbf{y}}_{N}^{k}(t,x)$
for $k\in[M]$. Then we compute:
\[
v_{N}(t)=\frac{1}{100M}\sum_{k=1}^{M}\sum_{x\in{\rm training\,set}}|\hat{\mathbf{r}}_{N}^{k}(t,x)|^{2}.
\]
We take $v_{N}(t)$ as an estimate for the output variance $\mathbf{E}\mathbb{E}_{Z}\big[\big|\sqrt{N}(\hat{{\bf y}}(t,X)-\hat{y}(t,X))\big|^{2}\big]$.

For Fig. \ref{fig:illustration}(e), we follow the same procedure
as Fig. \ref{fig:illustration}(c), except that we train a single
width-$N^{*}$ network that is used for all $M$ repeats and all $N\in\{50,100,200,400\}$.

For Fig. \ref{fig:illustration}(d), we recall the initialization
procedure for Fig. \ref{fig:illustration}(e). Here instead of using
one value of the terminal time $T_{e}$, we shall let it vary. Following
this procedure, for each $T_{e}$ and each $N$, we obtain an untrained
width-$N$ network. For each $T_{e}$, we repeat the procedure for
$M=10$ times, but we fix the same width-$N^{*}$ network (terminated
at time $T_{e}$) for all $M$ repeats. Let $\tilde{\mathbf{y}}_{N}^{k}(T_{e},x)$
be the prediction of the untrained width-$N$ network at the training
image $x$, in the $k$-th repeat with the terminal time $T_{e}$.
Using these predictions, one computes:
\begin{align*}
\tilde{v}_{N}(T_{e}) & =\frac{1}{100M}\sum_{k=1}^{M}\sum_{x\in{\rm training\,set}}|\tilde{\mathbf{r}}_{N}^{k}(T_{e},x)|^{2},\\
\tilde{\mathbf{r}}_{N}^{k}(T_{e},x) & =\sqrt{N}\bigg(\tilde{\mathbf{y}}_{N}^{k}(T_{e},x)-\frac{1}{M}\sum_{k'=1}^{M}\tilde{\mathbf{y}}_{N}^{k'}(T_{e},x)\bigg).
\end{align*}
We take $\tilde{v}_{N}(T_{e})$ as an estimate for $\mathbf{E}\mathbb{E}_{Z}[|\tilde{G}^{y}\left(T_{e},X\right)|^{2}]$.

\paragraph*{Reproducibility.}

The experiments can be repeated using the codes in this link: \url{https://github.com/npminh12/nn-clt}
\end{document}